\newcommand{\revise}[1]{\textcolor{black}{#1}}
\def\r#1{\textcolor{red}{#1}}
\def\bbeta{\boldsymbol \beta}
\def\bgamma{\boldsymbol \gamma}             
\def\bdelta{\boldsymbol {\delta}}        
\begin{document}

\title{\huge Understanding Implicit Regularization in Over-Parameterized Single Index Model}

\author{Jianqing Fan\thanks{Department of Operations Research and Financial Engineering, Princeton University; email: \texttt{\{jqfan, zy6,  mengxiny\}@princeton.edu}. Research supported by the NSF grant DMS-1662139 and DMS-1712591, the ONR grant N00014-19-1-2120, and the NIH grant 2R01-GM072611-16.}\quad\quad\quad Zhuoran Yang$^*$\quad\quad\quad Mengxin Yu$^*$}


\maketitle

	\begin{abstract}
		\revise{In this paper, we leverage over-parameterization to design  regularization-free algorithms for the high-dimensional single index model and provide theoretical guarantees for the induced implicit regularization phenomenon.} Specifically, we study both vector and matrix single index models where the link function is nonlinear and unknown, the signal parameter is either a sparse vector or a low-rank symmetric matrix, and the response variable can be heavy-tailed. To gain a better understanding of  the role played by  implicit regularization without excess technicality, we assume that the distribution of the covariates is known  a priori. For both the vector and matrix settings, we construct an  over-parameterized  least-squares loss function  by employing
			the score function transform  and a robust  truncation step designed specifically for heavy-tailed data. We propose to estimate the true parameter by applying regularization-free  gradient descent to the loss function.
			When the  initialization is   close to the origin and the stepsize is sufficiently small, we prove that the obtained solution achieves  minimax optimal statistical rates  of convergence in both the  vector and matrix cases. \revise{In addition, our experimental results support our theoretical findings and also demonstrate that our methods empirically outperform classical  methods with explicit regularization in terms of both $\ell_2$-statistical rate and variable selection consistency.}
	\end{abstract}

\section{Introduction}

With the astonishing empirical success in  various application domains such as computer vision \citep{voulodimos2018deep}, natural language processing \citep{otter2018survey, torfi2020natural}, and reinforcement learning \citep{arulkumaran2017deep,li2017deep}, deep learning \citep{lecun2015deep,goodfellow2016deep, fan2021selective} has become one of the most prevalent classes of machine learning methods.
When applying deep learning to supervised learning tasks such as regression and classification,
the regression function or classifier
is represented by a deep neural network,
which is learned by minimizing a  loss function of the network weights.
Here
the
loss function is
defined as the empirical risk function computed based on the training data and
the optimization problem is usually solved by gradient-based optimization methods.
Due to the nonlinearity of the  activation function and the multi-layer functional composition, the landscape of the  loss function is highly nonconvex, with many saddle points and local minima  \citep{dauphin2014identifying, swirszcz2016local, yun2019small}.
Moreover, oftentimes the neural network is over-parameterized in the sense that the total number of network weights exceeds the number of training data, making the regression or classification problem ill-posed
from a  statistical perspective.
Surprisingly, however,
it is often observed empirically that simple algorithms such as (stochastic) gradient descent tend to find the global minimum of the loss function despite  nonconvexity.
Moreover,
the obtained solution also generalizes well to unseen data with small test error
\citep{neyshabur2014search,  zhang2017understanding}.
These mysterious observations
cannot be fully explained by the classical theory of nonconvex optimization and generalization bounds via uniform convergence.

To understand such an intriguing phenomenon,
\cite{neyshabur2014search,  zhang2017understanding}
show empirically that
the generalization stems from an ``implicit regularization'' of the optimization algorithm. Specifically, they observe that, in over-parametrized statistical models,
although the optimization problems consist of bad local minima with large generalization error, the choice of optimization algorithm, usually a variant of gradient descent algorithm, usually guard the iterates from bad local minima and prefers the solution that generalizes well.
Thus, without adding any regularization term in the optimization objective, the implicit preference of the optimization algorithm itself plays the role of regularization.
Implicit regularization has been shown indispensable in training deep learning models \citep{neyshabur2014search,neyshabur2017geometry,  zhang2017understanding, keskar2017large, poggio2017theory,wilson2017marginal}.

\revise{With properly designed algorithm,}
\cite{gunasekar2017implicit} and \cite{Li2017AlgorithmicRI} provide empirical evidence and theoretical guarantees for the implicit regularization of gradient descent for least-squares regression with a two-layer linear neural network, i.e., low-rank matrix sensing.
They show that gradient descent biases towards the minimum nuclear norm solution when the initialization is close to the origin, stepsizes are sufficiently small, and no explicit regularization is imposed.
More specifically, when the true parameter is a rank $r$ positive-semidefinite matrix in $\RR^{d\times d}$, they rewrite the parameter as $\Ub\Ub^\top$, where $\Ub \in \RR^{d\times d}$, and propose to estimate the true parameter by updating $\Ub$ via gradient descent. \cite{Li2017AlgorithmicRI} proves that,
with $\tilde \cO(r^2 d)$ i.i.d. observations of the model,
gradient descent provably recovers the true parameter with accuracy, where $\tilde \cO(\cdot)$ hides absolute constants and poly-logarithmic terms.
Thus, in over-parametrized matrix sensing problems,  the implicit regularization of gradient descent can be viewed as equivalent to adding a nuclear norm penalty explicitly.  See also \cite{arora2019implicit} for a related topic on deep linear network.

Moreover,
\cite{zhao2019implicit,Vaskevicius2019} recently \revise{design a noval regularization-free algorithm and} study the
implicit regularization of gradient descent for  high-dimensional linear regression with a sparse signal parameter, which is a vector in $\RR^p$ with $s$ nonzero entries.
They propose to re-parametrize the parameter using two vectors in $\RR^p$ via the Hadamard product and estimate the true parameter via un-regularized gradient descent with proper initialization, stepsizes, and the number of iterations.
They prove independently that,  with  $ n =   \cO(s^2 \log p )$ i.i.d. observations, gradient descent yields an estimator of the true parameter with the optimal statistical accuracy.
More interestingly, when the nonzero entries of the true parameter all have sufficiently large magnitude, the proposed estimator attains the oracle $\cO(\sqrt{ s\log s /n})$ rate that is independent of the ambient dimension $p$.
Hence, for sparse linear regression, the implicit regularization of  gradient descent has the same effect as the folded concave penalties \citep{fan2014strong} such as   smoothly clipped absolute deviation
(SCAD) \citep{fan2001variable} and minimax concave penalty (MCP) \citep{zhang2010nearly}.

\revise{The aforementioned works all design algorithms and establish theoretical results for linear statistical models with light-tailed noise, which is slightly restricted since linear models with sub-Gaussian noise only comprise a small proportion of the models of interest in statistics. For example, in the field of finance, linear models only bring limited contributions and the datasets are always corrupted by heavy-tailed noise. Thus, one questions is left open:}  
\revise{
\begin{center}
	Can we leverage over-parameterization and  implicit regularization  to establish statistically accurate estimation procedures for a more general class of  high-dimensional statistical models with possibly heavy-tailed data?
	\end{center}}

In this work, we focus on the single index model, where the response variable $Y$ and the covariate $ X $ satisfy $Y= f( \langle X, \beta^* \rangle) + \epsilon$,  with  $\beta^*$ being the true parameter, $\epsilon  $ being the random noise, and $f \colon \RR \rightarrow \RR$ being an unknown (nonlinear) link function.
Here $\beta^*$ is either a $s$-sparse vector in $\RR^p$ or a rank $r$ matrix in $\RR^{d\times d}$.
Since $f$ is unknown,
the norm of $\beta $ is not identifiable.
Thus, for the vector and matrix cases respectively,
we further assume that the $\ell_2$- or Frobenius norms of $\beta^*$ are  equal to one.
Our goal is to recover the true parameter $\beta^*$ given $n$ i.i.d. observations of the model. {\revise{Such a  model can be viewed as the misspecified version of the compressed sensing \citep{donoho2006compressed, candes2008restricted} and phase retrieval \citep{shechtman2015phase,candes2015phase} models, which corresponds to the identical and quadratic link functions respectively.}}

In a single index model, due to the unknown link function, it is infeasible to directly estimate  $\beta^*$ via nonlinear least-squares. Moreover, jointly minimizing the least-squares loss function with respect to $\beta^*$ and $f$ is computationally intractable.
To overcome these challenges,
a recent line of research  proposes to estimate $\beta^*$ by the method of moments when the distribution of $X$ is known. This helps us provide a deep understanding on the implicit regularization induced by over-parameterization in the nonlinear models without excessive technicality and eliminate other complicated factors that convolve insights. Specifically, when $X$ is a standard Gaussian random variable, Stein's identity \citep{stein1972bound}  implies that the expectation of  $Y \cdot X $ is proportional to $\beta^*$.
Thus, despite the nonlinear link function,  $\beta^*$ can be accurately estimated by neglecting $f$ and fitting a regularized   least-squares regression. In particular, when $\beta^*$ is a sparse vector, \cite{plan2016generalized,plan2017high} prove that the Lasso estimator achieves the optimal statistical rate of convergence.
Subsequently, such an approach has been extended to the cases beyond Gaussian covariates.  In particular,
\cite{goldstein2018structured,wei2018structured, goldstein2019non} allow the
 covariates to  follow an elliptically symmetric distribution that can be heavy-tailed.
 In addition, utilizing a generalized version of  Stein's identity \citep{stein2004use},
 \cite{yang2017high} extends the Lasso approach to the setting where the covariate $X$ has a known density $p_0$. Specifically, when $p_0$ is known, we can define the score function $S_{p_0}(\cdot)$ as $S_{p_0}(\cdot) =- \nabla  \log p_0(\cdot)$, which enjoys the property that $\mathbb{E}[ Y \cdot S_{p_0} (X)]$ identifies the direction of $\beta^*$. Thus, the true parameter can be estimated by via an $M$-estimation problem with $S_{p_0} (X)$ served as the covariate.

\revise{To answer the question given above, in this work, we leverage over-parameterization to design regularization-free algorithms for single index model and provide theoretical guarantees for the induced implicit regularization phenomenon.} 
\revise{To be more specific, }we first adopt the quadratic loss function in \cite{yang2017high} and  rewrite the parameter of interest by over-parameterization. When $\beta^*$ is a sparse vector in $\RR^p$, we adopt a Hadamard product parameterization \citep{hoff2017lasso,zhao2019implicit,Vaskevicius2019} and write $\beta^*  $ as $\wb\odot \wb-\vb\odot \vb$, where both $\wb$ and $\vb $ are vectors in $\RR^p$.
We propose to minimize the loss function as a function of the new parameters via gradient descent, where both $\wb$ and $\vb$ are initialized near an all-zero vector and the stepsizes are fixed to be a sufficiently small constant $\eta > 0$.
Furthermore, when $\beta^*$ is a low-rank matrix, we similarly represent  $\beta^*$ as $\Wb \Wb^\top - \Vb \Vb^\top$ and propose to recover $\beta^*$ by applying the gradient descent algorithm to the quadratic loss function under the new parameterization.

Furthermore, the analysis of our algorithm faces the following two challenges. First, due to over-parameterization, there exist exponentially many stationary points of the population loss function that are far from the true parameter.
Thus, it seems that the gradient descent algorithm would be likely to return a stationary point that incurs a large error.
Second, both the response $Y$ and the score $S_{p_0}(X)$ can be heavy-tailed random variables.
Thus, the gradient of the empirical loss function can deviate significantly from its expectation, which poses an additional challenge to establishing the statistical error of the proposed estimator.

To overcome these difficulties, in our algorithm, instead of estimating $\mathbb{E}[ Y \cdot S_{p_0} (X)]$ by its empirical counterpart, we construct  robust estimators via proper truncation techniques, which have been widely applied in high-dimensional $M$-estimation problems with heavy-tailed data \citep{fan2016shrinkage, zhu2017taming, wei2017estimation, Minsker2018, fan2018robust, ke2019user, minsker2020robust}.
These robust estimators are then employed to compute the update directions of the gradient descent algorithm.
Moreover, despite the seemingly perilous loss surface, we prove that, when initialized near the origin and sufficiently small stepsizes, the gradient descent algorithm guard the iterates from bad stationary points.
More importantly, when the number of iterations is properly chosen, the obtained estimator provably enjoys (near-)optimal $\cO( \sqrt{s \log p / n} )$ and $\cO(\sqrt{r d \log d/ n})$  $\ell_2$-statistical rates under the sparse and low-rank settings, respectively.
Moreover, for sparse $\beta^*$, when the magnitude of the nonzero entries is sufficiently large, we prove that our estimator enjoys an oracle  $\cO(\sqrt{ s\log n/n})$ $\ell_2$-statistical rate, which   is independent of the dimensionality $p$.
Our proof is based on a jointly statistical and computational analysis of the gradient descent dynamics.
Specifically,
we decompose the iterates into  a signal part and a noise part, where the signal part share the same sparse or low-rank structures as the true signal and the noise part are orthogonal to the true signal.
We prove that the signal part converges to the true parameter  efficiently  whereas the  noise part accumulates at a rather slow rate and thus remains small for a sufficiently large number of iterations. Such a dichotomy between the signal and noise parts characterizes the implicit regularization of the gradient descent algorithm and enables us to establish the statistical error of the final estimator.

\revise{Furthermore, our method has several merits compared with  classical regularized methods. From the theoretical perspective, our strengths are two-fold. First, as we mentioned in the last paragraph, under mild conditions, our estimator enjoys oracle statistical rate whereas the most commonly used $\ell_1$-regularized method always results in large bias. In this case, our method is equivalent with adding folded-concave regularizers (e.g. SCAD, MCP) to the loss function. Second, for all estimators inside the wide optimal time interval, our range of choosing the truncating parameter to achieve variable selection consistency (rank consistency) is much wider than classical regularized methods. Thus, our method is more robust than all regularized methods in terms of selecting the truncating parameter. Meanwhile, from the aspect of applications, our strengths are three-fold. First, in terms of $\ell_2$-statistical rate, numerical studies show that our method generalizes even better than adding folded-concave penalties. Second, from the aspect of variable selection, experimental results also show that the robustness of our method helps reduce false positive rates greatly. Last but not least, as we only need to run gradient descent and the gradient information is able to be efficiently transferred among different machines, our method is easier to be paralleled and generalized to large-scale problems. Thus, our method can be applied to  modern machine learning applications such as federated learning.}

To summarize, our contribution is \revise{several-fold}.
First, for sparse and low-rank single index models where the random noise is possible heavy-tailed, we employ a
quadratic loss function based on a robust estimator of $\mathbb{E}[ Y \cdot S_{p_0} (X)]$ and propose to estimate $\beta^*$ by combining over-parameterization and regularization-free gradient descent.
Second, we prove that, when the initialization, stepsizes, and stopping time of the gradient descent algorithm are properly chosen, the proposed estimator achieves optimal statistical rates of convergence up to logarithm terms under both the sparse and low-rank settings. \revise{This captures the implicit regularization phenomenon induced by our algorithm. Third, in order to corroborate our theories, we did extensive numerical studies. The experimental results support our theoretical findings and also show that our method outperforms classical regularized methods in terms of both $\ell_2$-statistical rates and variable selection consistency.}  

\subsection{Related Works}

Our work belongs to the recent line of research on understanding the implicit regularization of
gradient-based optimization methods in various statistical models.
For over-parameterized logistic regression with separable data, \cite{soudry2018implicit} proves that the iterates of the  gradient descent algorithm converge to the max-margin solution. This work is  extended by
\cite{ji2018risk,ji2018gradient, gunasekar2018implicit,nacson2019convergence, ji2019refined} for studying
linear classification problems with other loss functions, parameterization, or training algorithms.
 \cite{montanari2019generalization,deng2019model} study the asymptotic generalization error of the max-margin classifier under the over-parameterized regime. Recently, for neural network classifiers, \cite{xu2018will, lyu2019gradient, chizat2020implicit} prove that gradient descent converges to the max-margin classifier under certain conditions.
In addition, various works have established the implicit regularization phenomenon for regression.
For example,
for low-rank matrix sensing, \cite{Li2017AlgorithmicRI,gunasekar2017implicit} show that, with over-parameterization, unregularized  gradient descent finds the optimal solution efficiently.
For various models including matrix factorization,
\cite{ma2019implicit} proves that the iterates of gradient descent stays in a benign region that enjoys linear convergence.
\cite{arora2019implicit, gidel2019implicit} characterize the  implicit regularization of gradient descent in deep matrix factorization.
For sparse linear regression, \cite{zhao2019implicit, Vaskevicius2019} prove  that, with re-parameterization,  gradient descent finds an estimator which attains the optimal statistical rate of convergence.
\cite{gunasekar2018characterizing} studies the implicit regularization of generic optimization methods in over-parameterized  linear regression and classification.
Furthermore, for nonlinear regression models,
\cite{du2018algorithmic} proves that, for neural networks with homogeneous action functions,  gradient descent automatically balances the weights across different layers.
 \cite{oymak2018overparameterized, azizan2019stochastic} show  that, in over-parameterized models, when the loss function satisfies certain conditions,  both gradient descent and mirror descent algorithms converge  to one of the global minima which is the closest to  the initial point.

Moreover, in linear regression, when initialized from the origin, gradient descent converges to the minimum $\ell_2$-norm (min-norm) solution. Besides, as  shown in \cite{soudry2018implicit}, gradient descent converges to the max-margin classifier in over-parameterized logistic regression.
There is a recent line of works on characterizing the risk of the min-norm and max-margin estimators under the over-parametrized setting where $p$ is larger than $n$.
See, e.g, \cite{belkin2018reconciling, belkin2019two, liang2018just,bartlett2020benign,hastie2019surprises, derezinski2019exact, ma2019generalization, mei2019generalization, montanari2019generalization, kini2020analytic,muthukumar2020harmless} and the references therein.
These works prove that, as $p$ grows to be larger than $n$, the risk first increases and then magically decreases after a certain threshold. Thus, there exists another bias-variance tradeoff in the over-parameterization regime. Such a mysterious phenomenon is coined by \cite{belkin2018reconciling} as the  ``double-descent'' phenomenon, which is conceived as an outcome of   implicit regularization and over-parameterization.

Furthermore,
there exists a large body of literature on the optimization and generalization of training over-parameterized neural works.
In a line of research, using mean-field approximation,
\cite{chizat2018global, rotskoff2018neural, sirignano2018mean,mei2018mean, mei2019mean,wei2019regularization} propose various optimization approaches with provable convergence  to the global optima of the training loss.
Besides, with different scaling, another line of works study the convergence and generalization of gradient-based methods for over-parameterized neural networks under the framework of the  neural tangent kernel (NTK) \citep{jacot2018neural}.
See, e.g.,
\cite{du2018gradient2, du2018gradient, zou2018stochastic, chizat2018note, allen2018learning, allen2018convergence,  jacot2018neural, cao2019generalization, arora2019fine,  lee2019wide, ma2019comparative,  yehudai2019power,  bai2019beyond, huang2020deep} and the references therein.
Their theory shows that a sufficiently wide neural network can be well approximated by the random feature model \citep{rahimi2008random}. Then, with sufficiently small stepsizes, (stochastic) gradient descent algorithm implicitly forces the network weights to stay in a neighborhood of the initial value. Such an implicit regularization phenomenon enables these papers to establish convergence rates and generalization errors for neural network training.

Furthermore, our work is also closely related to the large body of literature on single index models.
Single index model has been
 extensively studied in the low-dimensional setting.  See, e.g.,   \cite{ han1987non,mccullagh1989generalized,hardle1993optimal, carroll1997generalized, xia1999extended, horowitz2009semiparametric} and the references therein.
Most of these works propose to jointly estimate $\beta^*$ and $f$ based on solving the global optimum of  nonconvex $M$-estimation  problems. Thus, these methods can be computationally intractable in the worst case.
Under the Gaussian or elliptical assumption on the covariates, a more related line of research proposes efficient estimators of the direction of $\beta^*$ based on factorizing a set of moments involving $X$ and $Y$. See, e.g.,
\cite{brillinger1982generalized,li1989regression, li1991sliced, li1992principal, duan1991slicing,cook1998principal, cook1999dimension, cook2005sufficient} and the references therein.
Furthermore, for single index models in the high-dimensional setting, \cite{thrampoulidis2015lasso, genzel2016high, plan2016generalized, plan2017high, neykov2016l1, zhang2016consistency, yang2017high,goldstein2018structured,wei2018structured, goldstein2019non,na2019high}
propose to estimate the direction of $\bbeta^*$ via $\ell_1$-regularized regression.
Most of these works impose moment conditions inspired by
 \cite{brillinger1982generalized},
which ensures that the direction of $\beta^*$ can be recovered from the covariance of $Y$ and a transformation of $X$.
Among these papers, our work is closely related to  \cite{yang2017high} in that we adopt the same loss function based on generalized Stein's identity \citep{stein2004use}.
That work only studies the statistical error of the $\ell_1$-regularized estimator, which is a solution to a convex optimization problem.
In comparison, without any regularization, we construct estimators based on over-parameterization and gradient descent.
We provide both statistical and computational errors of the proposed algorithm and establish a similar statistical rate of convergence as in \cite{yang2017high}.
Moreover, when each nonzero entry of $\beta^*$ is sufficiently large, we further obtain an oracle statistical rate which cannot be obtained by the $\ell_1$-regularized estimator.
Furthermore, \cite{jiang2014variable, neykov2016agnostic, yang2017estimating, tan2018convex, lin2018consistency, yang2019misspecified, balasubramanian2018tensor, babichev2018slice, qian2019sparse,lin2019sparse} generalize models such as misspecified phase retrieval \citep{candes2015phase}, slice inverse regression \citep{li1991sliced}, and multiple index model \citep{xia2008multiple} to the high-dimensional setting.
The estimators proposed in these works are based on second-order moments involving $Y$ and $X$ and require $\ell_1$-regularization, hence are not directly comparable with our estimator.

\subsection{Notation}
In this subsection, we give an introduction to our notations. Throughout this work, we use $[n]$ to denote the set $\{1,2,\dots,n\}$. For a subset $S$ in $[n]$ and a vector $\ub$, we use $\ub_{S}$ to denote the vector whose $i$-th entry is $u_i$ if $i\in S$ and $0$ otherwise. For any vector $\ub$ and $q\ge 0$, we use $\|\ub\|_{\ell_q}$ to represent the vector $\ell_q$ norm. In addition, the inner product $\langle \ub, \vb \rangle $ between any pair of vectors $\ub,\vb$ is defined as the Euclidean inner product $\ub^\top\vb$. Moreover, we define $\ub\odot\vb$ as the Hadamard product of vectors $\ub,\vb$.
For any given matrix $\Xb\in\RR^{d_1\times d_2}$, we use $\|\Xb\|_{\oper}$, $\|\Xb\|_{F}$ and $\|\Xb\|_{*}$ to represent the operator norm, Frobenius norm and nuclear norm of matrix $\Xb$ respectively. In addition, for any two matrices $\Xb,\Yb\in\RR^{d_1\times d_2}$, we define their inner product $\langle \Xb,\Yb\rangle$ as $\langle \Xb,\Yb\rangle=\textrm{tr}(\Xb^\top\Yb)$. Moreover, if we write  $\Xb\succcurlyeq 0$ or $\Xb\preccurlyeq 0$, then the matrix $\Xb$ is meant to be positive semidefinite or negative semidefinite.
We let $\{a_n,b_n\}_{n\ge 1}$ be any two positive series.  We write $a_n\lesssim b_n$ if there exists a universal constant $C$ such that $a_n\le C\cdot b_n$ and we write $a_n\ll b_n$ if $a_n/b_n\rightarrow 0$. In addition, we write $a_n\asymp b_n$, if we have $a_n\lesssim b_n$ and $b_n\lesssim a_n$ and  the notations of $a_n=\cO(b_n)$ and $a_n=o(b_n)$ share the same meaning with $a_n\lesssim b_n$ and $a_n\ll b_n$. Moreover, $a_n=\tilde{\cO}(b_n)$ means $a_n\le Cb_n$ up to some logarithm terms.   Finally, we use $a_n=\Omega(b_n)$ if there exists a universal constant $c>0$ such that $a_n/b_n\ge c$ and we use $a_n=\Theta(b_n)$ if $c\le a_n/b_n \le C$ where $c, C>0$ are universal constants.

\subsection{Roadmap}
The organization of our paper is as follows. We introduce the background knowledge in \S\ref{prelimary}.
In
 \S\ref{sectionvec} and
 \S\ref{sectionmat} we investigate the implicit regularization effect of gradient descent
 in over-parameterized SIM under the vector and matrix settings, respectively.
  Extensive simulation studies are presented in  \S\ref{sectsimu} to corroborate  our theory.


\section{Preliminaries}\label{prelimary}
In this section, we introduce the phenomenon of implicit regularization via over-parameterization,  high dimensional single index model,
and  generalized  Stein's identity \citep{stein2004use}.

\subsection{Related Works on Implicit Regularization}\label{relatedwork}

Both \cite{gunasekar2017implicit} and \cite{Li2017AlgorithmicRI} have studied least squares objectives over positive semidefinite matrices $\beta\in \RR^{d\times d}$ of the following form
\begin{align}\label{loss1guna2017}
\min_{\beta\succcurlyeq 0}F(\beta)=\frac{1}{n}\sum_{i=1}^{n}\left(y_i-\langle \Xb_i, \beta\rangle \right)^2,
\end{align}
where the labels $\{y_i\}_{i=1}^{n}$ are generated from linear measurements $y_i=\langle\Xb_i,\beta^{*} \rangle,i\in[n],$ with $\beta^{*}\in\RR^{d\times d}$ being positive semidefinite and low rank. Here $\beta^*$ is of rank $r$ where $r$ is much smaller than $d$.
Instead of working on parameter $\beta$ directly, they  write  $\beta$ as $ \beta = \Ub\Ub ^\top$ where $\Ub \in \RR^{d\times d}$,  and study the optimization problem related to $\Ub$,
\begin{align}\label{loss2guan2017}
\min_{\Ub\in \RR ^{d\times d}}f(\Ub)=\frac{1}{2n}\sum_{i=1}^{n}\left(y_i-\langle \Xb_i, \Ub\Ub^\top\rangle \right)^2.
\end{align}
The least-squares problem in \eqref{loss2guan2017} is over-parameterized because
here $\beta$ is parameterized by $\Ub$, which has $d^2$ degrees  of freedom, whereas $\beta^*$, being a rank-$r$ matrix, has $\cO(rd)$ degrees of freedom.
 \cite{gunasekar2017implicit} proves that when  $\{\Xb_i\}_{i=1}^{m}$ are commutative and $\Ub$  is properly   initialized, if
 the gradient flow of \eqref{loss2guan2017} converges to a  solution $\hat \Ub$ such that $\hat \beta =    \hat \Ub \hat \Ub^\top$ is a globally optimal solution of \eqref{loss1guna2017},
 then $\hat \Ub$
 has the minimum nuclear norm over all global optima.   Namely,
\begin{align*}
&\hat{\beta}\in\argmin_{\beta\succcurlyeq 0}\|\beta\|_{*},
\\\text{subject  to~~}&\,\langle\Xb_i,\hat\beta\rangle=y_i,\quad  \forall i\in[n].
\end{align*}
\revise{Subsequently, \cite{Li2017AlgorithmicRI} assumes
$\{\Xb_{i}\}_{i=1}^{n}$ satisfy the restricted isometry property (RIP) condition \citep{candes2008restricted} and proves that by applying gradient descent to  \eqref{loss2guan2017}
with the initialization close to zero and sufficiently small fixed stepsizes, the near exact recovery of $\beta^*$ is achieved.} 

\revise{Recently, \cite{li2021towards} proves that the algorithm of gradient flow with infinitesimal initialization on the general covariate of \eqref{loss2guan2017} tends to be equivalent to the Greedy Low-Rank Learning (GLRL) algorithm, which is a greedy rank minimization algorithm. Results in \cite{gunasekar2017implicit} with commutable $\{\Xb_i\}_{i=1}^{m}$ serves as a special case to \cite{li2021towards}.  }

As for noisy statistical model, both \cite{zhao2019implicit} and \cite{Vaskevicius2019} study over-parameterized high dimensional noisy linear regression problem independently. Specifically,  here the  response variables $\{y_i\}_{i=1}^{n}$ are generated from a linear model
\begin{align}\label{linearreg}
y_i=\xb_i^\top\beta^{*}+\epsilon_i,\,i\in[n],
\end{align}
where $\beta^{*} \in \RR^p$  and $\{\epsilon_i\}_{i=1}^{n}$ are i.i.d.\,\,sub-Gaussian random variables that are independent with the  covariates $\{\xb_i\}_{i=1}^{n}$. Moreover,
here $\beta^*$ has only $s$ nonzero entries where $s \ll p$.
Instead of adding sparsity-enforcing penalties,   they propose to estimate $\beta^*$ via  gradient descent with respect to $\wb,\vb$ on  a  loss function $L$, 
\begin{align}\label{linearloss1}
\min_{\wb\in \RR^p,\, \vb\in\RR^p}L(\wb,\vb)=\frac{1}{2n}\sum_{i=1}^{n}[\xb_i^\top(\wb\odot\wb-\vb\odot\vb)-y_i]^2,
\end{align}
where the parameter $\beta$   is over-parameterized as  $ \beta = \wb \odot\wb-\vb\odot\vb$.
 Under the  restricted isometry property (RIP) condition on the covariates,
 these works prove that, when the hyperparameters   is proper selected,
 gradient descent on \eqref{linearloss1} finds an estimator of $\beta^*$ with optimal statistical rate of convergence.



\subsection{High Dimensional Single Index Model}
In this subsection, we  first  introduce  the score functions associated with random vectors and matrices,
which are utilized in our algorithms.
Then  we formally define the high dimensional single index model (SIM) in both the vector and matrix settings.

\begin{definition}\label{defiscore}
	Let $\xb\in \RR^{p}$ be a random vector with density function $p_0(\xb):\RR^{p}\rightarrow\RR .$ The score function $S_{p_0}(\xb):\RR^p\rightarrow \RR^p$  associated with $\xb$ is defined as
	\begin{align*}
	S_{p_0}(\xb):=-\nabla_{\xb}\log p_0(\xb)=-\nabla_{\xb} p_0(\xb)/p_0(\xb).
	\end{align*}
	\end{definition}
Here the score function $S_{p_0}(\xb)$ relies on the density function $p_0(\xb)$ of the covariate $\xb$. In order to simplify the notations, in the rest of the paper, we omit the subscript $p_0$ from $S_{p_0}$ when the underlying distribution of $\xb $ is clear to us.

\textbf{Remark:} If the  covariate $\Xb\in\RR^{d\times d}$ is a random matrix whose entries are i.i.d. with a  univariate  density $p_0(x)\colon \RR\rightarrow \RR $, we then define the  score function $S(\Xb)\in\RR^{d\times d}$ entrywisely. In other words, for any $\{i,j\}\in[d]\times[d]$, we obtain
\begin{align}\label{scoremat}
S(\Xb)_{i,j}:=-p_0'(\Xb_{i,j})/p_0(\Xb_{i,j}).
\end{align}
Next, we introduce the first-order general Stein's identity.
\begin{lemma}(First-Order General Stein's Identity, \citep{stein2004use})\label{genstein}
	We assume  that the  covariate $\mathbf{x}\in \RR^{p}$ follows a distribution with density function $p_0(\xb): \RR^{p} \rightarrow \RR$ which is differentiable and satisfies the condition that $|p_0(\xb)| $ converges to zero  as $\|\xb\|_2 $ goes to infinity. Then for any differentiable function $f(\xb)$ with $\EE[|f(\xb)S(\xb)|]\vee\EE[ \|\nabla_x f(\xb)\|_2 ]< \infty$, it holds that,
	\begin{align*}
	\EE[f(\xb)S(\xb)]=\EE[\nabla_{\xb} f(\xb)],
	\end{align*}
where  $S(\xb)=-\nabla_{\xb} p_0(\xb)/p_0(\xb)$ is the score function with respect to $\xb$ defined in  Definition~\ref{defiscore}.
\end{lemma}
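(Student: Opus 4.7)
The plan is to prove this component-wise via integration by parts, since the statement is a multivariate generalization of the one-dimensional Stein identity. I will fix a coordinate $i \in [p]$ and show that
\[
\EE\bigl[f(\xb)\, S(\xb)_i\bigr] \;=\; \EE\bigl[\partial_{x_i} f(\xb)\bigr],
\]
which, upon stacking the coordinates, yields the claimed vector identity.

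First, I would expand the left-hand side using Definition~\ref{defiscore}:
\[
\EE\bigl[f(\xb)\, S(\xb)_i\bigr] \;=\; -\int_{\RR^{p}} f(\xb)\, \frac{\partial p_0(\xb)}{\partial x_i}\, d\xb.
\]
By Fubini's theorem (whose use is justified by the assumption $\EE[|f(\xb) S(\xb)|] < \infty$), I would split the $p$-dimensional integral into an inner one-dimensional integral over $x_i$ and an outer integral over the remaining coordinates $\xb_{-i} := (x_1,\dots,x_{i-1},x_{i+1},\dots,x_p)$. On the inner integral I would apply one-dimensional integration by parts:
\[
-\int_{-\infty}^{\infty} f(\xb)\, \frac{\partial p_0(\xb)}{\partial x_i}\, dx_i \;=\; \Bigl[-f(\xb)\, p_0(\xb)\Bigr]_{x_i = -\infty}^{x_i = +\infty} + \int_{-\infty}^{\infty} \partial_{x_i} f(\xb)\, p_0(\xb)\, dx_i.
\]
The boundary term is where the decay hypothesis $|p_0(\xb)| \to 0$ as $\|\xb\|_2 \to \infty$ enters: for almost every $\xb_{-i}$, sending $x_i \to \pm\infty$ forces $\|\xb\|_2 \to \infty$, so $p_0(\xb) \to 0$, and the boundary contribution vanishes provided $f(\xb)\, p_0(\xb) \to 0$ along those slices. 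Reassembling the outer integral then gives
\[
\EE\bigl[f(\xb)\, S(\xb)_i\bigr] \;=\; \int_{\RR^{p}} \partial_{x_i} f(\xb)\, p_0(\xb)\, d\xb \;=\; \EE\bigl[\partial_{x_i} f(\xb)\bigr],
\]
with absolute convergence of the right-hand integral guaranteed by the assumption $\EE[\|\nabla_{\xb} f(\xb)\|_2] < \infty$ together with Cauchy-Schwarz.

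The main obstacle is the boundary term: the bare hypothesis $|p_0(\xb)| \to 0$ at infinity is not by itself strong enough to kill $f(\xb)\, p_0(\xb)$ if $f$ grows too quickly. The standard way to close this gap is to use the two finiteness hypotheses to show that the product $f(\xb)\, p_0(\xb)$, viewed as a function of $x_i$ for almost every $\xb_{-i}$, is absolutely continuous with an integrable derivative, hence must tend to zero along some sequence $x_i \to \pm\infty$; combined with the pointwise decay of $p_0$ this forces the limit to equal zero. I would make this argument precise by writing $f(\xb)\, p_0(\xb) = \int_0^{x_i} \partial_{x_i}[f p_0]\, dt + f\bigl(\xb|_{x_i = 0}\bigr)\, p_0\bigl(\xb|_{x_i=0}\bigr)$, applying the integrability of $\partial_{x_i} f \cdot p_0$ and $f \cdot \partial_{x_i} p_0 = -f \cdot S_i \cdot p_0$ coming directly from the stated moment conditions, and invoking dominated convergence on the outer integration in $\xb_{-i}$. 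Once the boundary term is handled, the rest of the proof is a mechanical consequence of Fubini and one-dimensional integration by parts.
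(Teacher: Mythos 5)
The paper does not prove this lemma at all --- it cites it directly from \cite{stein2004use} as a known result --- so there is no in-paper argument to compare against; your proof must be judged on its own terms.

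Your overall strategy (component-wise integration by parts) is the natural one, and most of the proposal is fine. The difficulty you flag --- that $|p_0(\xb)| \to 0$ alone does not kill $f\,p_0$ at infinity --- is real, but the fix you propose does not actually close it. The step ``absolutely continuous with an integrable derivative, hence must tend to zero along some sequence $x_i \to \pm\infty$'' is a false implication: the constant function $g \equiv 1$ is absolutely continuous with integrable (identically zero) derivative and tends to zero along no sequence. What integrability of $\partial_{x_i}(f p_0)$ actually buys, via $g(x_i) = g(0) + \int_0^{x_i} g'(t)\,dt$, is the \emph{existence} of limits $L_\pm(\xb_{-i})$ at $x_i \to \pm\infty$, not that these limits vanish. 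Your follow-up, that the pointwise decay of $p_0$ ``forces the limit to equal zero,'' also does not go through on its own: if $L_+ \neq 0$ one merely deduces that $f$ must blow up along that slice to compensate, which nothing in the hypotheses forbids. To genuinely kill the boundary term you need one more ingredient --- e.g.\ that $f\,p_0$ is jointly integrable, or a sequence along which you know $f\,p_0 \to 0$ --- which the stated hypotheses do not supply; this is a well-known looseness in how Stein-type identities are often stated.

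A cleaner route that sidesteps the boundary term entirely is the Fubini argument that underlies the classical Gaussian Stein lemma. Using $\partial_{x_i} p_0(\xb) = -S_i(\xb)\, p_0(\xb)$ and the decay of $p_0$ at infinity, write, for fixed $\xb_{-i}$,
\begin{align*}
p_0(\xb) \;=\; \int_{x_i}^{\infty} S_i(x_1,\dots,t,\dots,x_p)\, p_0(x_1,\dots,t,\dots,x_p)\, dt,
\end{align*}
(and its analogue from $-\infty$), plug this into $\EE[\partial_{x_i} f(\xb)]$, and swap the order of integration; the moment conditions $\EE[|f(\xb)S(\xb)|] < \infty$ and $\EE[\|\nabla f(\xb)\|_2] < \infty$ are precisely what licenses Fubini, and the identity $\EE[\partial_{x_i} f(\xb)] = \EE[f(\xb)\, S_i(\xb)]$ drops out without any appeal to boundary behavior of the product $f\,p_0$. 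This approach still needs one-sided integrability of $\partial_{x_i}p_0$ in $x_i$ for a.e.\ $\xb_{-i}$, but that is supplied by the same Fubini decomposition of the finite quantity $\EE[|f\,S|]$ once you handle the specific $f$ at hand. If you stick with integration by parts, the honest thing to do is to state the additional hypothesis that $f(\xb)\, p_0(\xb) \to 0$ as $\|\xb\|_2 \to \infty$ (or that $f\,p_0$ is integrable), which is how rigorous treatments of the general identity usually proceed.
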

\textbf{Remark:} In the case of having matrix covariate, we are able to achieve the same conclusion by simply replacing $\xb\in\RR^{p}$ by $\Xb\in\RR^{d\times d}$ in Lemma \ref{genstein} with the  definition of matrix score function in \eqref{scoremat}.


In the sequel, we  introduce   the  single index models considered in this work. We first define sparse vector single index models as follows.

\begin{definition}(Sparse Vector SIM)\label{vectorSIM}
	We assume the  response $Y\in \RR$ is generated from model
	\begin{align}\label{vecSIM}
	Y=f(\langle \mathbf{x},\beta^{*}\rangle)+\epsilon,
	\end{align}
	with unknown link $f: \RR\rightarrow \RR $, $p$-dimensional covariate $\xb$ as well as signal $\beta^{*}$ which is the parameter of interest. 
	Here, we let $\epsilon\in \RR$ be  an exogenous random noise with mean zero. 
	In addition, if not particularly indicated, we assume the  entries of $\xb$ are i.i.d. random variables with a known univariate  density $p_0(x).$
	As for the  underlying true signal $\beta^{*},$  it is assumed to be $s$-sparse with $s\ll p.$ Note that the length of $\beta^{*}$ can be absorbed by the  unknown link $f$, we then let $\|\beta^{*}\|_2=1$ for model identifiability. 
\end{definition}
By the definition of sparse vector SIM, we notice that many well-known models are included in this category, such as linear regression $y_i=\mathbf{x}_i^\top\beta^{*}+\epsilon$, phase retrieval $y_i=(\mathbf{x}_i^\top\beta^{*})^2+\epsilon$,  as well as one-bit compressed sensing $y=\sign(\xb_i^\top \beta^{*})+\epsilon$. 


Finally, we define the  low rank   matrix   SIM as follows.

\begin{definition}(Symmetric Low Rank Matrix SIM)\label{matSIM}
	For the low rank matrix SIM, we assume the  response $Y\in \RR$ is generated from
	\begin{align}\label{matrixSIM}
	Y=f(\langle\mathbf{X},\beta^{*}\rangle)+\epsilon,
	\end{align}
	in which $\beta^{*}\in \mathbb{R}^{d\times d}$ is a low rank symmetric matrix with rank $r\ll d$ and the link function $f$ is unknown. For the  covariate $\Xb\in \mathbb{R}^{d\times d}$, we assume the  entries of $\Xb$  are  i.i.d.  with a  known density $p_0(x)$.  Besides, since   $\|\beta^{*}\|_{F}$ can  be absorbed in the  unknown link function $f$, we further assume  $\|\beta^{*}\|_{F}=1 $ for model identifiability. In addition, the noise term $\epsilon$ is also assumed   additive and mean zero.
\end{definition}

As we have discussed in the  introduction, \revise{almost all existing literature designs algorithms and studies the corresponding implicit regularization phenomenon  in linear models with sub-Gaussian data. 
The scope of this work is to leverage over-parameterization to design regularization-free algorithms and delineate the induced implicit regularization phenomenon for a more general class of statistics models with possibly heavy-tailed  data.}
Specifically, in \S\ref{sectionvec} and \S\ref{sectionmat}, we design algorithms and capture the implicit regularization induced by the gradient descent algorithm for over-parameterized vector and matrix SIMs, respectively.



\section{Main Results for  Over-Parameterized Vector SIM} \label{sectionvec}
Leveraging  our conclusion from Lemma \ref{genstein} as well as our definition of sparse vector SIM in Definitions \ref{vectorSIM}, we have
\begin{align*}
\EE[Y\cdot S(\xb)]=\EE[f(\langle \xb,\beta^{*} \rangle)\cdot S(\xb)]=\EE[f'(\langle \xb,\beta^{*} \rangle)]\cdot\beta^{*}:=\mu^{*}\beta^{*},
\end{align*}
which recovers our true signal $\beta^{*}$ up to scaling.
Here we define $\mu^* = \EE[f'(\langle \xb,\beta^{*} \rangle)]$, which is assumed nonzero throughout this work.
Hence,  $Y\cdot S(\xb)$ serves as an unbiased estimator of $\mu^* \beta^*$, and we can correctly identify the direction of $\beta^*$ by solving a population level optimization problem:
\begin{align*}
\min_{\beta}\, L(\beta):= \,\langle \beta, \beta \rangle-2\langle \beta, \EE[Y\cdot S(\xb)] \rangle.
\end{align*}
Since we only have access to finite data,  we replace $\EE[Y\cdot S(\xb)]$  by its sample version estimator $\frac{1}{n}\sum_{i=1}^n y_iS(\xb_i)$, and plug the sample-based estimator into the loss function.
In a  high dimensional SIM given in Definition \ref{vectorSIM}, where the  true signal $\beta^{*}$ is assumed to be    sparse,
various works \citep{plan2016generalized,plan2017high, yang2017high}  have shown that the
 $\ell_1$-regularized estimator $\hat \beta$ given by
 \begin{align}\label{reguvec}
\hat \beta \in \underset{\beta}{\argmin}\, L(\beta):= \, \langle \beta, \beta \rangle-2\Big\langle \beta, \frac{1}{n}\sum_{i=1}^n y_iS(\xb_i) \Big\rangle+\lambda \|\beta\|_1
\end{align}
attains the optimal  statistical    rate of convergence rate to $\mu^{*}\beta^{*}$.

In contrast, instead of imposing an $\ell_1$-norm regularization term,
we propose to obtain an estimator by minimizing the loss function $L$ directly, with $\beta$ re-parameterized using two vectors $\wb$ and $\vb$ in $\RR^p$.
Specifically,
  we write $\beta$ as $\beta=\wb\odot \wb-\vb\odot \vb$
  and thus equivalently write the loss function
  $L(\beta)$ as  $L(\wb,\vb)$, which is given by
 \begin{align}\label{loss1bk}
 L(\wb,\vb)=\langle \wb\odot \wb-\vb\odot \vb, \wb\odot \wb-\vb\odot \vb \rangle -2\Big\langle \wb\odot \wb-\vb\odot \vb, \frac{1}{n}\sum_{i=1}^{n}y_iS(\xb_i) \Big\rangle.
 \end{align}
Note that the way of writing  $\beta$ in terms of  $\wb$ and $\vb$ is not unique.
In particular, $\beta$ has $p$ degrees of freedom but we use $2p$ parameters to represent $\beta$. Thus,
by using $\wb$ and $\vb$ instead of $\beta$,   we employ over-parameterization in \eqref{loss1bk}.
\revise{We briefly describe our motivation on over-parameterizing $\beta$ by $\wb\odot \wb-\vb\odot\vb$. Suppose that $\beta$ is sparse, an explicit regularization is to use $\ell_1$-penalty.  Note that $\|\beta\|_1 = \argmin_{\bgamma \odot \bdelta = \beta } \{ \|\bgamma\|^2 + \|\bdelta\|^2\}/2$, where $\odot$ denotes the Hadamard (componentwise) product. Thus, an explicit regularization is to $\min_{\bgamma, \bdelta} \sum_{i=1}^n \{Y_i - f(\xb_i^T \bgamma \odot \bdelta) \}^2 + \lambda \{ \|\bgamma\|^2 + \|\bdelta\|^2\}$ for a penalty parameter $\lambda$, following the method in \cite{hoff2017lasso}.
To gain understanding on implicit regularization by over parametrization,  we let $\wb = (\bgamma + \bdelta)/2$ and $\vb = (\bgamma -\bdelta)/2$.  Then
$\beta =\bgamma \odot \bdelta =\wb\odot \wb-\vb\odot \vb$ with $2p$ new parameters $\wb$ and $\vb$ that over parameterize the problem.  This leads to the empirical loss $L(\wb,\vb)= \sum_{i=1}^n \{Y_i - f(\xb_i^T (\wb\odot \wb-\vb\odot \vb )) \}^2$. Following the neural network training, we drop the explicit penalty and run the gradient decent to minimize $L(\wb, \vb)$.} 


To be more specific, for the sparse SIM, we propose to construct an estimator of $\beta^*$ by applying gradient descent to $L$ in \eqref{loss1bk} with respect to $\wb$ and $\vb$, without any explicit regularization.
Such an estimator, if achieves desired statistical accuracy, demonstrates the efficacy of implicit  regularization of gradient descent in over-parameterized  sparse SIM.
Specifically,
the gradient updates for the vector $(\wb^\top, \vb^\top)^\top$  for solving
\eqref{loss1bk} are given~by
\begin{align}\label{vectorup1}
\wb_{t+1}&=\wb_t-\eta\nabla_{\wb} L(\wb_t,\vb_t)=\wb_t-\eta\Big(\wb_t\odot \wb_t-\vb_t\odot \vb_t-\frac{1}{n}\sum_{i=1}^{n}S(\xb_i)y_i\Big)\odot \wb_t,\\
\vb_{t+1}&\,=\vb_t\,+\eta\nabla_{\vb}L(\wb_t,\vb_t)=\vb_t\,+\eta\Big(\wb_t\odot \wb_t-\vb_t\odot \vb_t-\frac{1}{n}\sum_{i=1}^{n}S(\xb_i)y_i\Big)\odot \vb_t.\label{vectorup2}
\end{align}
Here $\eta >0 $ is the stepsize.
By the parameterization of $\beta$, $\{ \wb_{t}, \vb_t\}_{t\geq 0}$ leads to a sequence of estimators $ \{ \beta_t\}_{t\geq 0}$ given by
\begin{align}
	\beta_{t+1}&=\wb_{t+1}\odot \wb_{t+1}-\vb_{t+1}\odot\vb_{t+1}.\label{vectorup3}
\end{align}

Meanwhile, \revise{in terms of chooisng initial values,} since the zero vector  is a stationary point of the algorithm,
we cannot set the initial values of $\wb$ and $\vb$ to the zero vector.
To utilize the   structure of $\beta^*$,
ideally we would like to initialize  $\wb$ and $\vb$ such that they share the same sparsity pattern as $\beta^*$.
That is,
we would like to set the entries in the support of $\beta^*$ to nonzero values,  and set those outside of the support to zero.
  However, such an initialization scheme is  infeasible since  the support of $\beta^{*}$ is unknown.
	Instead, we initialize $\wb_0$ and $\vb_0$ as $\wb_0=\vb_0=\alpha\cdot \mathbf{1}_{p\times 1}$, where $\alpha > 0 $ is a small constant and $\mathbf{1}_{p\times 1}$ is an all-one vector in $\RR^p$.
	By setting $\wb_0 = \vb_0$, we equivalently set $\beta_0 $ to the zero vector.
	And more importantly, such a construction provides a good compromise:  zero components get nearly zero initializations, which are the majority under the sparsity assumption, and nonzero components get nonzero initializations.
	Even though we initialize every component at the same value, the nonzero components move quickly to their stationary component, while zero components remain small.  This is how over-parameterization differentiate active components from inactive components. 
	We illustrate this by a  simulation experiment.

\vspace{5pt}
{\noindent{\bf A simulation study.}
	In this simulation, we fix sample size $n=1000$, dimension $p=2000$, number  of non-zero entries $s=5$. Let $S:=\{i:\left|\beta_i^{*}\right|>0   \}$.  The responses $\{y_i\}_{i=1}^{n}$ are generated from $y_i=f(\langle \xb,\beta^{*}\rangle)+\epsilon_i,\, i\in[n]$ with link functions $f_1(x)=x$ (linear regression) and $f_2(x)=\sin(x).$ Here we assume $\beta^{*}$ is  $s$-sparse with $\beta_i=1/\sqrt{s},i\in S$,  and $\{\xb_i\}_{i=1}^{n}$ are standard Gaussian random vectors. We  over-parameterize  $\beta$ as $\wb\odot \wb-\vb\odot \vb$ and set  $\wb_0=\vb_0=10^{-5}\cdot \mathbf{1}_{p\times 1}.$
Then we update $\wb$, $\vb$ and $\beta$ regarding equations \eqref{vectorup1}, \eqref{vectorup2},  and \eqref{vectorup3} with stepsize $\eta=0.01$. The evolution of the distance between our  unnormalized iterates $\beta_t$ and $\mu^{*}\beta^{*}$, trajectories of $\beta_{j,t}$ for $j \in S$ and $\max_{j \in S^c} |\beta_{j,t}|$ are depicted in Figures~\ref{examfig1} and \ref{examfig2}.

	 \begin{figure}[htpb]
		\centering
		\begin{tabular}{ccc}
			\hskip-30pt\includegraphics[width=0.33\textwidth]{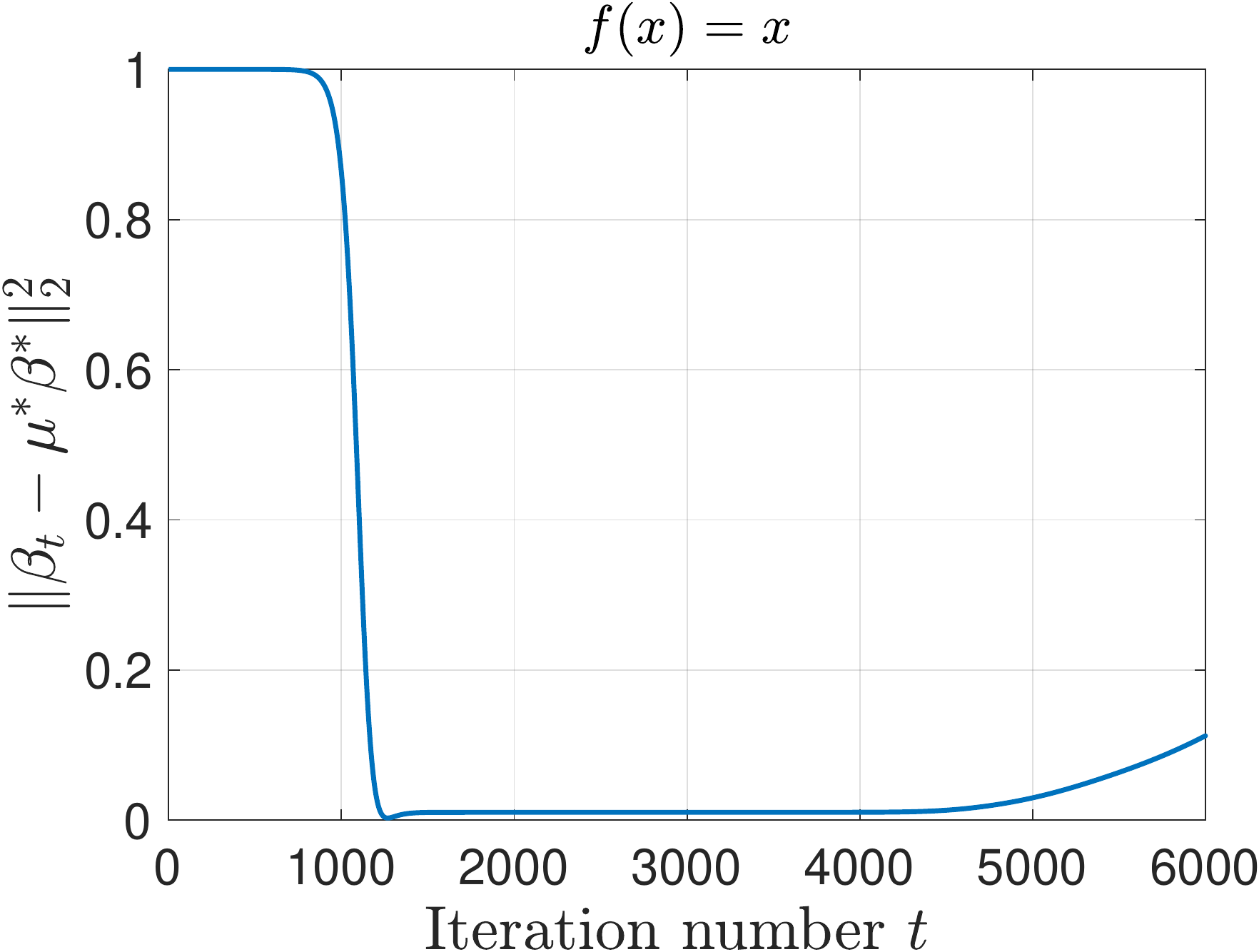}
			&
			\hskip-6pt\includegraphics[width=0.33\textwidth]{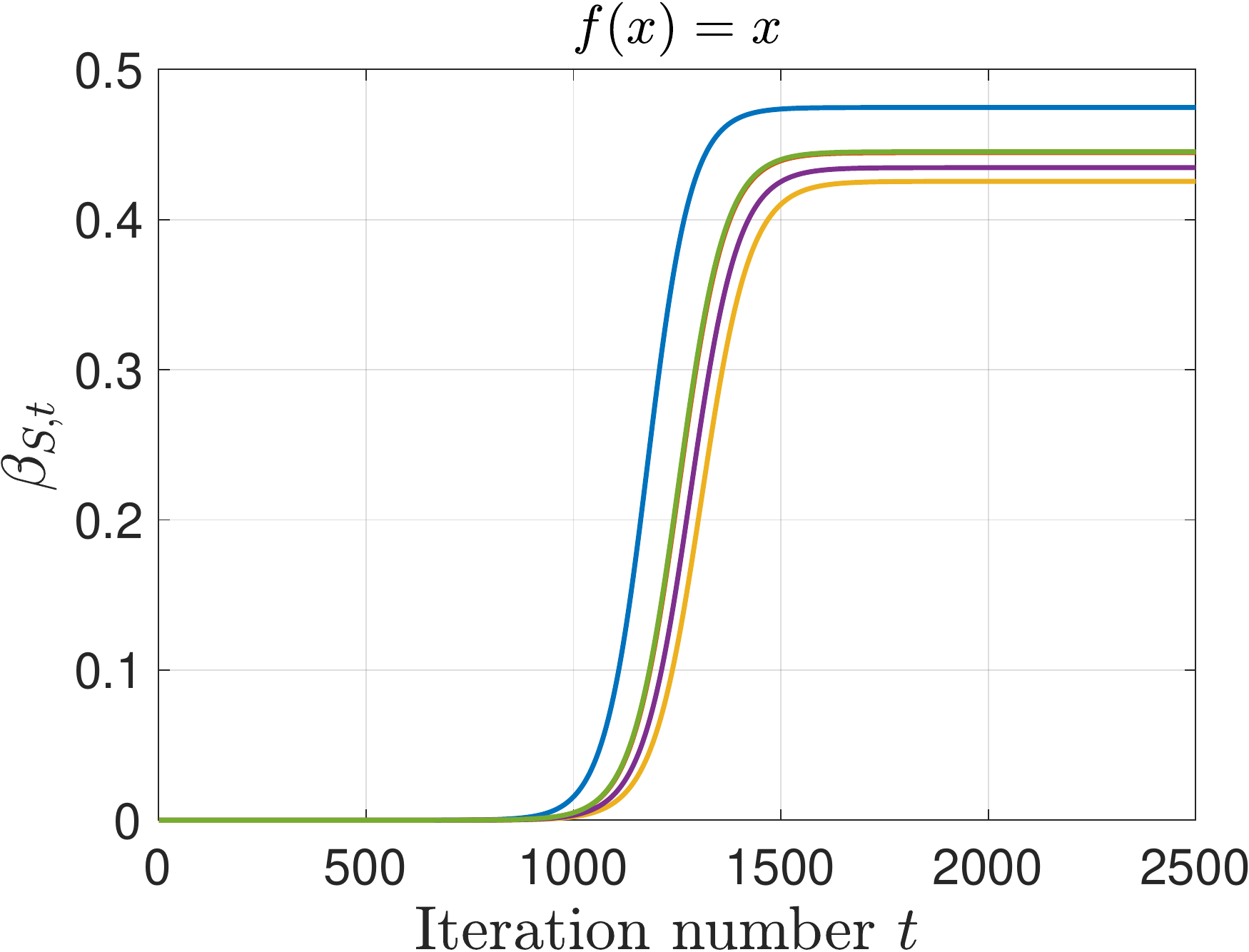}
			&
			\hskip-5pt\includegraphics[width=0.33\textwidth]{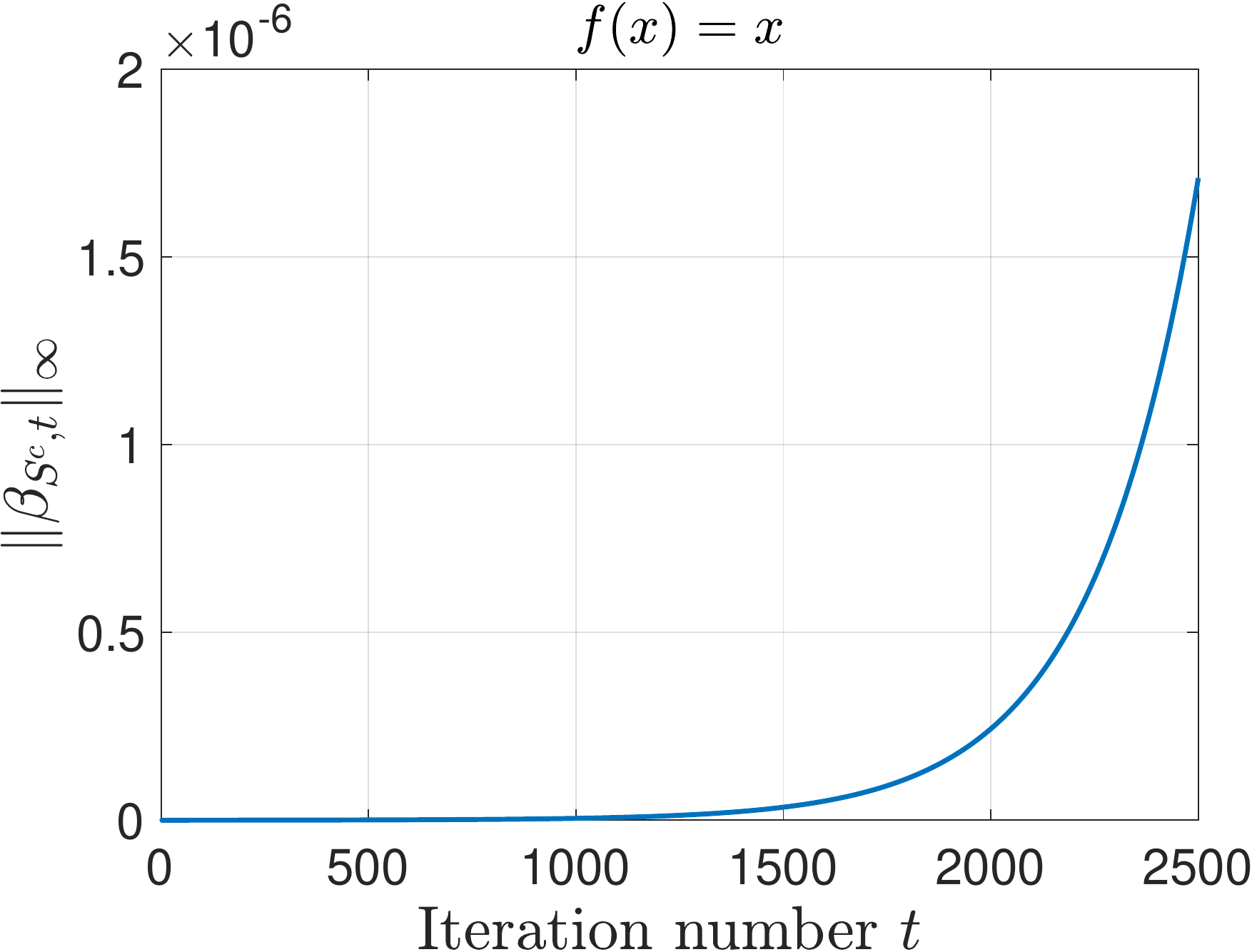} \\
				  (a) & (b) & (c)
		\end{tabular}\\
	\caption{With link function $f(x)=x$, (a) characterizes the evolution of distance $\|\beta_t-\mu^{*} \beta^{*}\|^2_2$ against iteration number $t$; (b) depicts the trajectories $\beta_{j, t}$ ($j \in S$) for five nonzero components, and (c) presents the trajectory $\max_{j \in S^c}|\beta_{j,t}|$.}
	\label{examfig1}
	\end{figure}
	\begin{figure}[htpb]
		\centering
		\begin{tabular}{ccc}
			\hskip-30pt\includegraphics[width=0.33\textwidth]{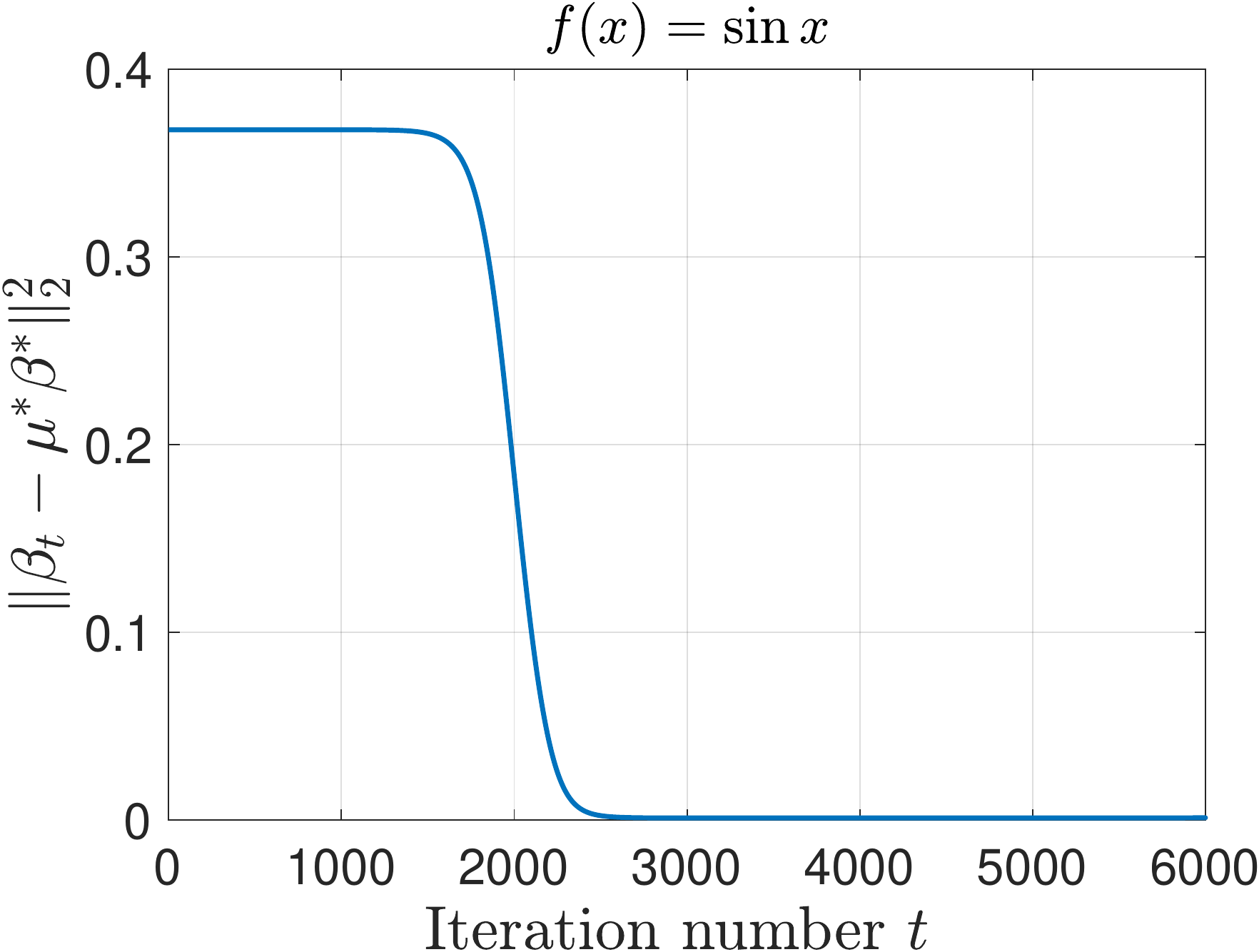}
			&
			\hskip-6pt\includegraphics[width=0.33\textwidth]{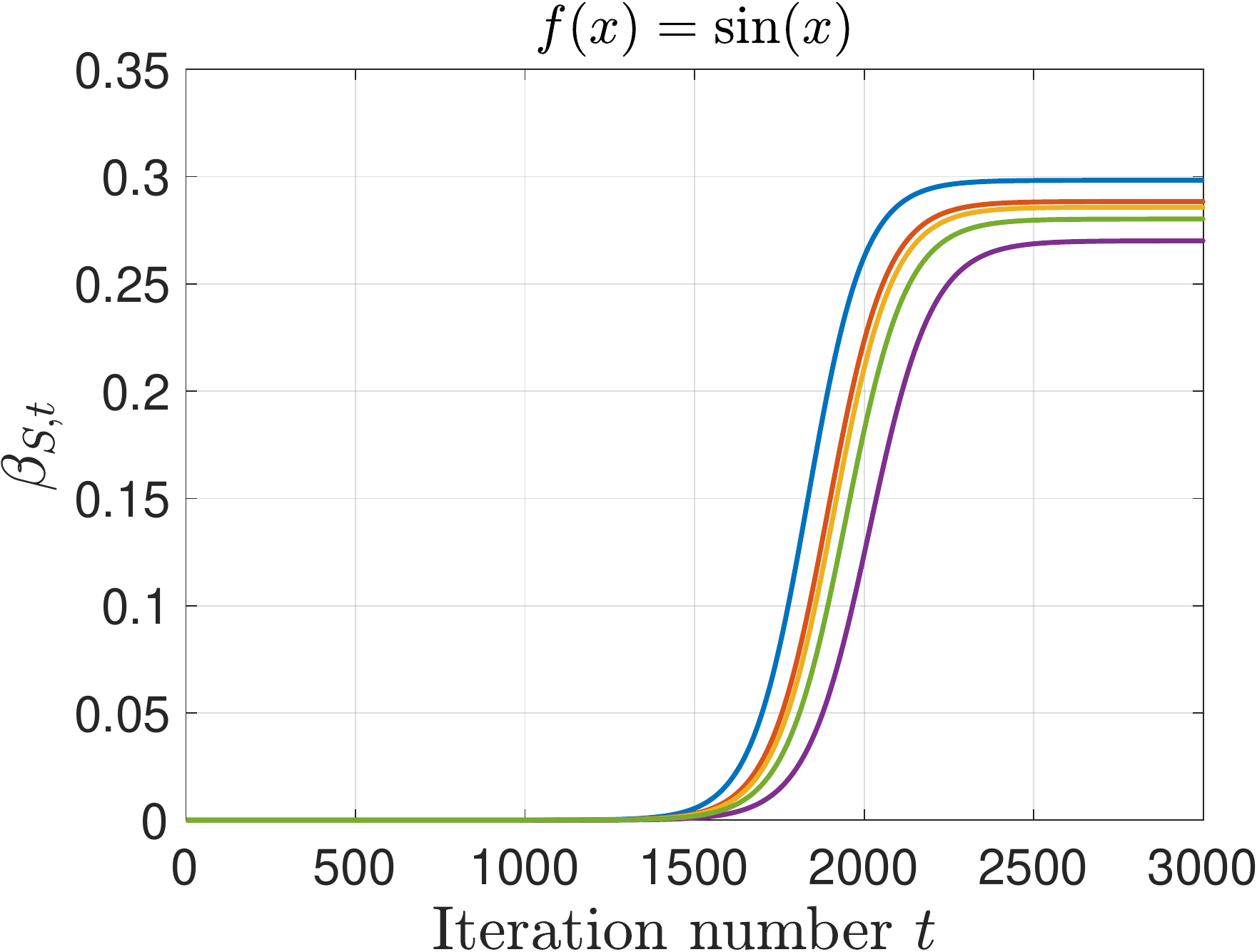}
			&
			\hskip-5pt\includegraphics[width=0.33\textwidth]{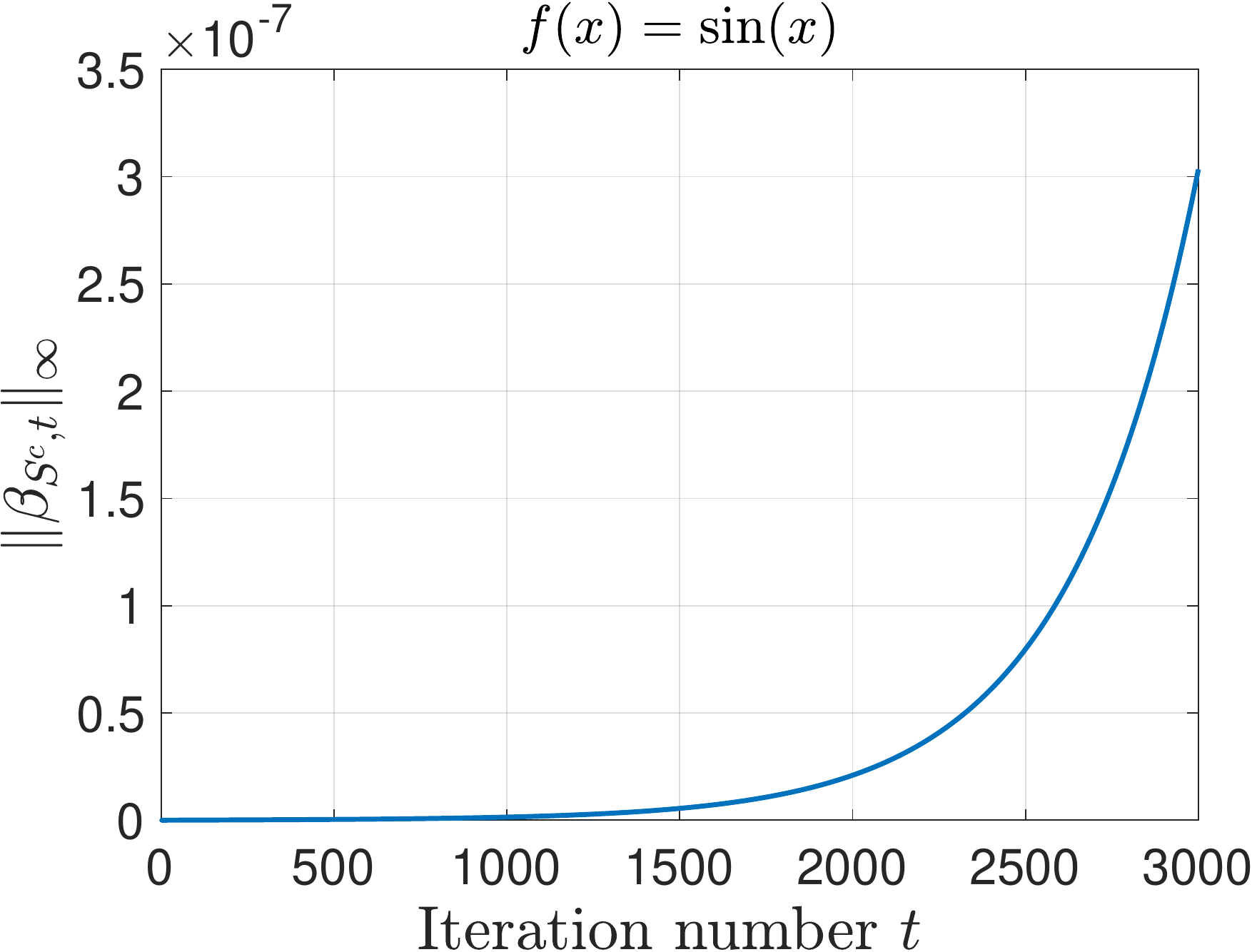} \\
		(a)& (b) & (c)
		\end{tabular}\\
		\caption{With link function $f(x)=\sin(x) $,  similar to Figure \ref{examfig1}, here (a) characterizes the evolution of distance $\|\beta_t-\mu^{*} \beta^{*}\|^2_2$ against iteration number $t$; (b) depicts the trajectories $\beta_{j, t}$ ($j \in S$) for five nonzero components, and (c) presents the trajectory $\max_{j \in S^c}|\beta_{j,t}|$.}
	\label{examfig2}
	\end{figure}
	
	From the simulation results given in Figure \ref{examfig1}-(a) and Figure \ref{examfig2}-(a), we notice that there exists a time interval,  where we can nearly recover $\mu^{*}\beta^{*}$. From plots (b) in Figures \ref{examfig1} and \ref{examfig2}, we can see with over-parameterization, five nonzero components all increase rapidly and converge quickly to their stationary points.   Meanwhile, the maximum estimation error for inactive component, represented by $\|\beta_{S^{c},t}\|_{\infty}$, still remains small, as shown in Figure \ref{examfig1}-(c) and Figure \ref{examfig2}-(c). In other words,
running gradient descent with respect to over-parameterized parameters  helps us distinguish non-zero components from zero components, while applying  gradient descent to  the ordinary loss can not.

It is worth noting that, with over-parameterization, there are $\Omega(2^p)$ stationary points of $L$ satisfying $\nabla_{\wb} L(\wb, \vb)  = \nabla_{\vb } L(\wb, \vb) = {\bf 0}_{p\times 1}$, where ${\bf 0}_{p\times 1}$ is the zero vector.
To see this,
for any subset $ I \subseteq [p]$,  we define vectors $\overbar\wb$ and $\overbar \vb$ as follows.
For any $j \notin I $, we set the $j$-th entries of $\overbar \wb $ and $\overbar \vb$ to zero.
Meanwhile, for any $j \in I $,  we choose $\overbar \wb_j$ and $\bar \vb_j$ such that  $
\overbar \wb_j^2 - \overbar \vb_j^2 = n^{-1}  \sum_{i=1}^n S (\xb_i) _j y_i,
$
where $\overbar \wb_j$, $\bar \vb_j$, and $S (\xb_i) _j$ are the $j$-th entries of $\overbar \wb$, $\overbar \vb$, and $S(\xb_i)$, respectively.
By direct computation, it can be shown that $(\overbar \wb, \overbar \vb)$ is a stationary point of $L$, and thus there are at least $2^p$ stationary points.
However, our numerical results demonstrate that not all of these stationary points are  likely to be found by the gradient descent algorithm ---  gradient descent favors the stationary points that correctly recover  $\mu^* \beta^*$.
Such an intriguing observation captures the implicit regularization induced by the optimization algorithm and over-parameterization.

\subsection{Gaussian Design}\label{gaussdesign}
In this subsection, we  discuss over-parameterized SIM with Gaussian covariates.
In this subsection, we assume  the  distribution  of  $\xb$ in \eqref{vecSIM} is
$  N(\mu, \Sigma)$,
where both $\mu$ and $\Sigma$ are assumed known.
Moreover, only in this subsection, we slightly modify  the identifiability condition in Definition \ref{vectorSIM} from assuming $\|\beta^{*}\|_2=1$ to $\| \Sigma^{1/2}\beta^{*} \|_2  = 1$.

\subsubsection{ Theoretical Results for Gaussian Covariates }\label{algstdgauss}

We first introduce an structural assumption on the SIM.

\begin{assumption} \label{ass1}
	{Assume that $\mu^* = \EE [ f'( \la \xb, \beta^* \ra )] \neq 0$ is a constant and the following} two conditions hold.
\begin{itemize}
	\item [(a).]  Covariance matrix $\Sigma$ is positive-definite and has bounded spectral norm. To be more specific, there exist constants $C_{\min}$ and $C_{\max}$ such that $C_{\min}\II_{p\times p}\preccurlyeq \Sigma\preccurlyeq C_{\max}\II_{p\times p}$ holds, where $\II_{p\times p}$ is the identity matrix.
 \item [(b).] Both $\{f(\langle\xb_i,\beta^{*} \rangle)\}_{i=1}^{n}$ and $\{\epsilon_i\}_{i=1}^{n}$  are i.i.d. sub-Gaussian random variables, with the sub-Gaussian norms  denoted by $\|f\|_{\psi_2}=\cO(1)$ and $\sigma=\cO(1)$ respectively.
 Here we let $\|f\|_{\psi_2}$ denote the sub-Gaussian norm of $f(\langle\xb_i,\beta^{*} \rangle)$.
In addition, we further assume that  $|\mu^{*}|/\|f\|_{\psi_2}=\Theta(1),|\mu^{*}|/\sigma=\Omega(1)$.
\end{itemize}
\end{assumption}
The score function for the Gaussian distribution $ N(\mu,\Sigma)$  is $S(\xb)=\Sigma^{-1}(\xb-\mu)$ and Assumption \ref{ass1}-(a) makes the Gaussian distributed covariates non-degenerate.     Assumption \ref{ass1}-(b)  enables the the empirical  estimator $n^{-1} \sum_{i=1}^{n}y_iS(\xb_i)$ to concentrate to its  expectation   $\mu^{*}\beta^{*}$, and  also sets a lower bound to the signal noise ratio $|\mu^{*}|/\sigma$.  Note that this assumption is quite standard and easy to be satisfied by a broad class of models as long as there exists a lower bound on the signal noise ratio, which include models with link functions $f(x)=x,\sin x, \tanh(x)$, and etc.
In addition, in \S\ref{secgenvec}, the  assumption that both $ f(\langle x, \beta ^*\rangle ) $ and the noise $\epsilon$ are sub-Gaussian random variables   will be further   relaxed to simply
assuming they have bounded finite moments with perhaps
heavy-tailed distributions.

We   present the details of the proposed method   for the Gaussian case  in Algorithm \ref{alg1}.
In the following, we
  present the  statistical rates of convergence for the estimator constructed by Algorithm~\ref{alg1}.  Let us divide the support set $ S=\{i:\left|\beta_i^{*}\right|>0   \}$
 into  $S_0=\{i: |\beta_i|\ge C_s\sqrt{\log p/n} \}$ and  $S_1=\{i: 0<|\beta^{*}|< C_s\sqrt{ \log p/n }  \}$, which correspond to the sets of strong and weak signals, respectively.
 Here $C_s$ is an absolute constant.
 We let $s_0$ and $s_1$ be the cardinalities of $S_0$ and $S_1$, respectively. In addition, we let $s_m = \min_{i \in S_0} |\beta^{*}_i|$ be the smallest value  of strong signals.


\begin{algorithm}[htpb]
	\KwData{Training covariates $\{\xb_{i}\}_{i=1}^{n},$  response variables $\{y_i\}_{i=1}^{n}$, initial value $\alpha$, step size $\eta$;}
	Initialize variables $\wb_0=\alpha\cdot\mathbf{1}_{p\times 1}$, $\vb_0=\alpha\cdot\mathbf{1}_{p\times 1}$ and set iteration number $t=0$;\\
	\While{$t<T_1$}{
		$\mathbf{w}_{t+1}=\mathbf{w}_{t}-\eta\big[\mathbf{w}_t\odot \mathbf{w}_t-\mathbf{v}_t\odot \mathbf{v}_t-\frac{1}{n}\sum_{i=1}^{n}\Sigma^{-1}(\xb_i-\mu)y_i\big]\odot \mathbf{w}_t$;\\
		$\, \mathbf{{v}}_{t+1}=\mathbf{v}_{t}\,+\eta\big[\mathbf{w}_t\odot \mathbf{w}_t-\mathbf{v}_t\odot \mathbf{v}_t-\frac{1}{n}\sum_{i=1}^{n}\Sigma^{-1}(\xb_i-\mu)y_i\big]\odot \mathbf{v}_t$; \\
		$\beta_{t+1}=\mathbf{w}_t\odot \mathbf{w}_t-\mathbf{v}_t\odot \mathbf{v}_t;$\\
		$\, t=t+1$;\\
	}
	\KwResult{ 
		Output the final estimate $\widehat \beta^{*}=\beta_{T_1}$.
	}\label{alg1}
	\caption{ Algorithm for  Vector SIM with Gaussian Design}
\end{algorithm}

\begin{theorem}\label{thmvec1}
	Apart from Assumption \ref{ass1}, if we further let our initial value $\alpha$ satisfy $0<\alpha\le {M_0^2}/{p}$ and set stepsize $\eta$ as  $0<\eta \le {1}/ ( 12(|\mu^{*}|+M_0) ) $ in   Algorithm \ref{alg1} with $M_0$ being a constant proportional to $\max\{\|f\|_{\psi_2},\sigma\}$, 
	there   exist absolute constants $a_1,a_2>0$ such that,
	with   probability at least $1-2p^{-1}-2n^{-2} $,
	we have
	\begin{align*}
		\left\|\beta_{T_1}-\mu^{*}\beta^{*}\right\|^2_2\lesssim \frac{s_0\log n}{n}+\frac{s_1\log p}{n},
		\end{align*}
	for all   $T_1\in[a_1{\log(1/\alpha)}/ ( \eta(|\mu^{*}|s_m-M_0\sqrt{\log p/n}) ) ,a_2\log(1/\alpha)\sqrt{n/\log p}/(\eta M_0)]$.
  Meanwhile,   the statistical  rate
 of convergence
  for the normalized   iterates are given by
	\begin{align*}
	\left\|\frac{\beta_{T_1}}{\left\|\Sigma^{1/2}\beta_{T_1}\right\|_2}-\frac{\mu^{*}\beta^{*}}{\left|\mu^{*}\right|}\right\|^2_2 & \lesssim \frac{s_0\log n}{n}+\frac{s_1\log p}{n}. 
	\end{align*}
\end{theorem}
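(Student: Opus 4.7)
\medskip
\noindent\textbf{Proof plan for Theorem~\ref{thmvec1}.} The whole argument separates cleanly into a \emph{statistical} step (concentration of the empirical score moment) and a \emph{dynamical} step (coordinatewise analysis of the gradient flow). The key observation enabling the dynamical step is that, for Gaussian covariates with known $(\mu,\Sigma)$, the updates \eqref{vectorup1}--\eqref{vectorup2} are \emph{componentwise multiplicative}: if we define the empirical target $\widehat p := n^{-1}\sum_{i=1}^n y_i\,\Sigma^{-1}(\xb_i-\mu)$, then for each coordinate $j$,
\begin{align*}
w_{j,t+1}=w_{j,t}\bigl[1-\eta(\beta_{j,t}-\widehat p_{j})\bigr],\qquad v_{j,t+1}=v_{j,t}\bigl[1+\eta(\beta_{j,t}-\widehat p_{j})\bigr],
\end{align*}
so each coordinate evolves only through the scalar $\widehat p_j$ and its own $(w_{j,t},v_{j,t})$. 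Thus, once $\widehat p$ is controlled, the problem decouples into $p$ independent one-dimensional dynamical systems.

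\medskip
\noindent\emph{Step 1 (concentration).} Each $y_i\,\Sigma^{-1}(\xb_i-\mu)$ is coordinatewise sub-exponential under Assumption~\ref{ass1}, with mean $\mu^{*}\beta^{*}$ by the generalized Stein identity (Lemma~\ref{genstein}). A standard Bernstein bound plus union bound over the $p$ coordinates yields, with probability at least $1-2p^{-1}$,
\begin{align*}
\|\widehat p-\mu^{*}\beta^{*}\|_{\infty}\le C_0\,M_0\sqrt{\log p/n}.
\end{align*}
A second application, union bounded only over the $s_0\le n$ strong coordinates, gives with probability at least $1-2n^{-2}$ the refined bound $\max_{j\in S_0}|\widehat p_j-\mu^{*}\beta^{*}_j|\le C_1 M_0\sqrt{\log n/n}$. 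I condition on both events for the rest of the proof.

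\medskip
\noindent\emph{Step 2 (coordinatewise dynamics).} For each $j$, set $\phi_j:=\widehat p_j$ and assume WLOG $\phi_j\ge 0$. I would prove by induction on $t$ the following invariant for the time window of the theorem: (i) $v_{j,t}\le\alpha$ always and decays when $\phi_j>0$; (ii) $w_{j,t}^2\le 2(|\phi_j|+\alpha^2)$ always. For \textbf{strong-signal coordinates} $j\in S_0$, $\phi_j$ has the sign of $\beta^{*}_j$ and magnitude $\gtrsim |\mu^{*}|s_m$, so $w_{j,t}$ grows geometrically at rate $1+\eta|\phi_j|/2$ until $w_{j,t}^2$ enters the basin of $\phi_j$; after the entry time $t_j^{*}\asymp \log(1/\alpha)/(\eta|\mu^{*}|s_m)$, a contraction argument for the one-dimensional map shows $|\beta_{j,t}-\phi_j|\lesssim \alpha^2$ for all $t\ge t_j^{*}$. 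Combined with Step~1, $|\beta_{j,t}-\mu^{*}\beta^{*}_j|\lesssim \alpha^2+M_0\sqrt{\log n/n}$. For \textbf{weak and null coordinates} $j\in S_1\cup S^c$, $|\phi_j|\lesssim M_0\sqrt{\log p/n}$ by Step~1, so the multiplicative factor $1+\eta|\phi_j|$ is close to one and both $w_{j,t}^2,v_{j,t}^2$ are controlled by $\alpha^2\exp(2\eta t|\phi_j|)$. For $t\le T_1\le a_2\log(1/\alpha)\sqrt{n/\log p}/(\eta M_0)$, choosing $a_2$ small yields $w_{j,t}^2+v_{j,t}^2\le \alpha^{2-2a_2'}$, hence $|\beta_{j,t}|\le \alpha^{2-2a_2'}$. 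Since $\alpha\le M_0^2/p$, the contribution $p\cdot \alpha^{4-4a_2'}$ is negligible compared with $\log p/n$.

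\medskip
\noindent\emph{Step 3 (assembling the rate and normalization).} Summing the coordinatewise bounds,
\begin{align*}
\|\beta_{T_1}-\mu^{*}\beta^{*}\|_2^2 \lesssim s_0\!\left(\alpha^4+\frac{\log n}{n}\right)+s_1\!\left(\frac{\log p}{n}+\alpha^{4-4a_2'}\right)+p\,\alpha^{4-4a_2'}\lesssim \frac{s_0\log n}{n}+\frac{s_1\log p}{n},
\end{align*}
which proves the first claim, and the lower bound on $T_1$ is precisely $\max_{j\in S_0} t_j^{*}$ while the upper bound is the saturation threshold for weak/null coordinates. For the normalized bound, observe that the first bound and $\|\Sigma^{1/2}\beta^{*}\|_2=1$ imply $\|\Sigma^{1/2}\beta_{T_1}\|_2=|\mu^{*}|+o(1)$; a one-line perturbation argument (write $\beta_{T_1}/\|\Sigma^{1/2}\beta_{T_1}\|_2-\beta^{*}\mathrm{sgn}(\mu^{*}) = (\beta_{T_1}-\mu^{*}\beta^{*})/\|\Sigma^{1/2}\beta_{T_1}\|_2 + \beta^{*}(\mu^{*}-\|\Sigma^{1/2}\beta_{T_1}\|_2\,\mathrm{sgn}(\mu^{*}))/\|\Sigma^{1/2}\beta_{T_1}\|_2$ and use $C_{\min}\le \|\Sigma\|\le C_{\max}$) transfers the rate.

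\medskip
\noindent\emph{Main obstacle.} The technical core is the dynamical invariant in Step~2, specifically proving that \emph{throughout} the long window $[t_j^{*},T_1]$ the strong-signal coordinates neither overshoot $\phi_j$ nor allow $v_{j,t}$ to reawaken, even though the ``good'' window extends up to $\widetilde{\cO}(\sqrt{n/\log p})$ iterations. This requires a careful two-sided contraction argument in $(w_{j,t},v_{j,t})$ that exploits $v_{j,t}\ll w_{j,t}$ to reduce to a scalar contraction near the fixed point $w_j^2=\phi_j$, and simultaneously guarantees that the coupling across coordinates induced by $\beta_{j,t}$ appearing in each coordinate's update does not destabilize any other coordinate---a point made tractable precisely because the \emph{per-coordinate} decoupling shown above holds exactly.
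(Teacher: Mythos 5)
Your proposal follows the same overall architecture as the paper's proof: concentration of $\Phi_n = n^{-1}\sum_i y_i S(\xb_i)$ around $\mu^{*}\beta^{*}$ in $\ell_\infty$ (plus a refined bound over $S_0$), then a fully decoupled coordinatewise analysis of the gradient dynamics, splitting into strong ($S_0$), weak ($S_1$), and null ($S^c$) coordinates, and finally the perturbation argument for the normalized iterate. That matches the paper's Lemmas on error, strong-signal, and weak-signal dynamics plus the Bernstein concentration lemma, so the decomposition is the same.

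Where you diverge is in the technical argument for strong-signal convergence. The paper's proof of its strong-signal lemma runs in three explicit stages: geometric growth to $\phi_j/2$, then a dyadic escalation through levels $(1-2^{-m})\phi_j$ up to $\phi_j-\epsilon$ with $\epsilon\asymp M_0\sqrt{\log n/n}$, and finally a separate monotonicity-and-no-overshoot lemma guaranteeing $|\beta_{j,t}|\le|\phi_j|$ for all $t$, which is what keeps the iterate pinned in $[\phi_j-\epsilon,\phi_j]$ for the entire (long) admissible window. You instead invoke a contraction-to-fixed-point argument after the entry time, targeting $|\beta_{j,t}-\phi_j|\lesssim\alpha^{2}$. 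The claim is plausible — the map $x\mapsto x(1-\eta(x-\phi_j))^{2}$ has multiplier $1-2\eta\phi_j$ at the fixed point, and your constraint $\eta\le 1/(12(|\mu^{*}|+M_0))$ keeps it a contraction in a neighborhood — and $\alpha^{2}$ is strictly smaller than the paper's $\epsilon$, so the resulting rate is the same. But two things need to be made precise to close the gap. First, your invariant (ii), $w_{j,t}^{2}\le 2(|\phi_j|+\alpha^{2})$, tolerates a factor-of-two overshoot, whereas the contraction estimate requires the iterate to stay in a region where the map is actually contractive; you either need to rule out the overshoot (as the paper does via its monotonicity lemma) or verify contractivity uniformly on $[\phi_j/2,\,2\phi_j]$ under the given stepsize bound. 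Second, your statement that $|\beta_{j,t}-\phi_j|\lesssim\alpha^{2}$ holds "for all $t\ge t_j^{*}$" conflates the entry time (when $\beta_{j,t}\approx\phi_j/2$, so the error is still $\asymp\phi_j$) with the time the contraction actually achieves $\alpha^{2}$ accuracy; this costs another $\asymp\log(1/\alpha)/(\eta\phi_j)$ iterations, so the lower endpoint of the admissible window for $T_1$ must absorb that extra factor — which is exactly what the paper's constant $a_1$ is set to do. With those two points spelled out, your route is a valid alternative to the paper's dyadic-staging argument.
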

\begin{theorem}\label{signconsist}(Variable Selection Consistency) Under the setting of Theorem \ref{thmvec1},
		for all   $$T_1\in[a_1{\log(1/\alpha)}/ ( \eta(|\mu^{*}|s_m-M_0\sqrt{\log p/n}) ) ,a_2\log(1/\alpha)\sqrt{n/\log p}/(\eta M_0)],$$ we let $[\tilde{\beta}_{T_1}]_i=[\beta_{T_1}]_i\cdot \mathbb{I}_{|[\beta_{T_1}]_i|\ge \lambda}$, for all $i\in [p]$. Then, with   probability at least $1-2p^{-1}-2n^{-2},$ for all $\lambda\in [\alpha,(C_s|\mu^{*}|-2M_0)\sqrt{\log p/n}]$, we have supp$(\tilde{\beta}_{T_1})\subset \textrm{supp}(\beta^{*})$. Moreover, when there only exists strong signals in $S_0$. We further have supp$(\tilde{\beta}_{T_1})= \textrm{supp}(\beta^{*})$ and $\textrm{sign}(\tilde{\beta}_{T_1})=\textrm{sign}(\beta^{*})$.
	\end{theorem}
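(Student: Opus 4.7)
The plan is to reduce Theorem~\ref{signconsist} to two coordinate-wise ($\ell_\infty$) bounds on the final iterate $\beta_{T_1}$, both of which emerge from the same signal/noise decomposition that drives the proof of Theorem~\ref{thmvec1}. Concretely, the high-probability event produced there will be shown to imply: (a) $|[\beta_{T_1}]_i|\le \alpha$ for every $i\notin \textrm{supp}(\beta^*)$; and (b) $|[\beta_{T_1}]_i-\mu^*\beta^*_i|\le 2M_0\sqrt{\log p/n}$ for every $i\in S_0$. Given (a) and (b), thresholding at any $\lambda\in[\alpha,(C_s|\mu^*|-2M_0)\sqrt{\log p/n}]$ zeros out every off-support coordinate, giving $\textrm{supp}(\tilde\beta_{T_1})\subset \textrm{supp}(\beta^*)$. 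If in addition $S_1=\emptyset$, combining (b) with the strong-signal lower bound $|\mu^*\beta^*_i|\ge C_s|\mu^*|\sqrt{\log p/n}$ yields $|[\beta_{T_1}]_i|\ge (C_s|\mu^*|-2M_0)\sqrt{\log p/n}\ge \lambda$ and, since the coordinate error is strictly smaller than the signal, $\textrm{sign}([\beta_{T_1}]_i)=\textrm{sign}(\mu^*\beta^*_i)=\textrm{sign}(\beta^*_i)$, yielding both support and sign recovery.

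To establish (a), I would exploit that $\wb_{0,i}=\vb_{0,i}=\alpha$ implies $[\beta_0]_i=0$, so at an off-support coordinate $i$ the only driving term in the updates \eqref{vectorup1}--\eqref{vectorup2} is the noise $b_i:=n^{-1}\sum_{j=1}^n y_j[\Sigma^{-1}(\xb_j-\mu)]_i$. The concentration inequality underlying Theorem~\ref{thmvec1} gives $\max_i|b_i|\lesssim M_0\sqrt{\log p/n}$ on the same high-probability event. A direct induction on the coordinate updates then yields
\begin{equation*}
\max\{\wb_{t,i},\vb_{t,i}\}\,\le\,\alpha\,(1+\eta|b_i|)^t\,\le\,\alpha\exp(\eta|b_i|\cdot t).
\end{equation*}
Plugging in the upper endpoint $T_1\le a_2\log(1/\alpha)\sqrt{n/\log p}/(\eta M_0)$ yields $\max\{\wb_{T_1,i}^2,\vb_{T_1,i}^2\}\le \alpha^{2-2a_2}$; choosing $a_2\le 1/2$ then forces $|[\beta_{T_1}]_i|\le \max\{\wb_{T_1,i}^2,\vb_{T_1,i}^2\}\le \alpha^{2-2a_2}\le \alpha$, as claimed.

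To establish (b), I would use the standard decoupling of the coordinate-wise dynamics at a strong-signal index into a ``large'' branch (say $\wb_{t,i}$, when $\mu^*\beta^*_i>0$) that grows geometrically until it saturates at roughly the empirical signal $\mu^*\beta^*_i+(b_i-\mu^*\beta^*_i)$, and a ``small'' branch ($\vb_{t,i}$) that decays and stays of order $\alpha$. The coordinate error at saturation is therefore dominated by $|b_i-\mu^*\beta^*_i|\lesssim M_0\sqrt{\log p/n}$. This is precisely the coordinate-wise bookkeeping required to prove Theorem~\ref{thmvec1}, since $\|\beta_{T_1}-\mu^*\beta^*\|_2^2$ is obtained by summing these squared coordinate errors. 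The lower endpoint $T_1\ge a_1\log(1/\alpha)/(\eta(|\mu^*|s_m-M_0\sqrt{\log p/n}))$ of the admissible window is exactly the time needed for the slowest strong-signal coordinate (the one with $|\beta^*_i|=s_m$) to saturate, so (b) holds for every $i\in S_0$ uniformly over the stated interval.

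The main obstacle is making the constant ``$2$'' in (b) rigorous: the deviation bound $|[\beta_{T_1}]_i-\mu^*\beta^*_i|\le 2M_0\sqrt{\log p/n}$ must be tight enough to match the threshold $(C_s|\mu^*|-2M_0)\sqrt{\log p/n}$, which requires careful tracking of both the saturation error of the large branch and the residual $\vb_{t,i}^2$ throughout the entire iteration window, rather than just at the saturation time. By contrast, (a) is essentially a multiplicative-growth estimate and is comparatively routine. Once both coordinate-wise bounds are in place, Theorem~\ref{signconsist} follows purely by the thresholding argument sketched in the first paragraph.
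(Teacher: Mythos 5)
Your proposal is correct and follows essentially the same route as the paper: the paper's own proof of Theorem~\ref{signconsist} is a one-line reduction to Lemmas~\ref{prop1.3}, \ref{prop1.4}, and \ref{prop1.5}, and your items (a) and (b) are precisely the coordinate-wise $\ell_\infty$ bounds those lemmas supply (with Lemma~\ref{prop1.5} vacuous once $S_1=\emptyset$). One small bookkeeping caveat in your derivation of (a): the coordinate update at $i\in S^c$ multiplies $\wb_{t,i}$ by $1-\eta([\beta_t]_i-b_i)$, so the induction must also carry the self-interaction term $|[\beta_t]_i|\le 2\alpha$ (as Lemma~\ref{prop1.3} does via the factor $3\eta M_0\sqrt{\log p/n}$), which slightly weakens your exponent from $\alpha^{2-2a_2}$ to $\alpha^{2-6a_2}=\alpha$ with $a_2=1/6$, but this does not change the conclusion $|[\beta_{T_1}]_i|\le\alpha$.
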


 Theorem \ref{thmvec1} shows that if we just have strong signals, then with high probability, for any  $T_1 \in [a_1{\log(1/\alpha)}/ (\eta(|\mu^{*}|s_m-M_0\sqrt{\log p/n}) ) ,a_2\log(1/\alpha)\sqrt{n/\log p}/(\eta M_0)]$, we get  the oracle statistical rate  $\cO  ( \sqrt{s\log n / n})$  in terms of the $\ell_2$-norm, which is  independent of the ambient dimension $p$.
Besides, when  $\beta^*$
also consists of weak signals, we achieve $\cO(\sqrt{ s \log p / n})$  statistical rate in terms of the $\ell_2$-norm, where $s $ is the sparsity  of $\beta^*$.
Such a statistical rate  matches the minimax rate  of sparse linear regression \citep{raskutti2011minimax} and is  thus minimax optimal.
Notice that  the oracle rate is  achievable via  explicit regularization using folded concave penalties  \citep{fan2014strong} such as
 SCAD  \citep{fan2001variable} and  MCP  \citep{zhang2010nearly}. Thus, Theorem  \ref{thmvec1} shows that, with over-parameterization, the implicit regularization of gradient descent has the same effect as adding a folded concave penalty function to the loss function in \eqref{loss1bk} explicitly.

Furthermore, comparing our work to   \cite{plan2016generalized,plan2017high}, which study  high dimensional SIM with    $\ell_1$-regularization,
thanks to the implicit regularization phenomenon, we  avoid bias brought by the $\ell_1$-penalty and attain the  oracle statistical  rate.  \revise{ Moreover, our another advantage over  regularized methods is shown in Theorem \ref{signconsist}. It shows that by properly truncating $\beta_{T_1}$ when $T_1$ falls in the optimal time interval, we are able to recover the support of $\beta^{*}$ with high probability. Comparing to existing literatures on support recovery via using explicit regularization on single index model \citep{neykov2016l1}, our method offers a wider range for choosing tuning parameter $\lambda$ with a known left boundary $\alpha$, instead of only using $\lambda=\Theta(\sqrt{\log p/n})$. This efficiently reduces false discovery rate, see \S\ref{comparison} for more details. Last but not least, as we only need to run gradient descent, comparing to regularized methods, it is easier to parallel our algorithm since the gradient information is able to be efficiently transferred among different machines.  The use of implicit regularization allows our methodology to be generalized to large-scale problems easily \citep{MMRHA2017,Richards2020,Richards2020D}. The detailed discussions are given in \S\ref{scalable}.}

Theorem \ref{thmvec1} \revise{and Theorem \ref{signconsist}} generalizes the results in
  \cite{zhao2019implicit} and \cite{Vaskevicius2019} for the  linear model to high-dimensional SIMs.
  In addition,
	to satisfy the RIP condition, their sample complexity is  at least $\mathcal{O}(s^2\log p)$ if their covariate $\xb$ follows the Gaussian distribution.
	Whereas, by using the loss function in \eqref{loss1bk} motivated by the Stein's identity \citep{stein1972bound, stein2004use},
	the RIP condition is unnecessary in our analysis.
	Instead, our theory only requires that $n^{-1} \sum _{i=1}^n S(\xb_i) \cdot y_i$ concentrates at a fast rate.
	As a result,
	 our sample complexity is  $\mathcal{O}(s\log p)$ for    $\ell_2$-norm consistency,
	 which is better than $\mathcal{O}(s^2\log p)$.

 The ideas of proof behind Theorem \ref{thmvec1} \revise{and Theorem \ref{signconsist}} are as follows. First, we are able to control the strengths of error component, denoted by $\|\beta_t\odot \mathbf{1}_{S^{c}}\|_{\infty}$,  at the same order with the square root of their initial values until $\mathcal{O}(\log(1/\alpha)\cdot \sqrt{n/\log p}/(\eta M_0))$ steps. \revise{This gives us the right boundary of the stopping time $T_1.$}
Meanwhile, every entry of strong signal part $\beta_t\odot \mathbf{1}_{S_0}$ grows at exponential rates to $\epsilon=\cO(\sqrt{\log n/n})$ accuracy around $\mu^{*}\beta^{*}\odot \mathbf{1}_{S_0}$ within $\mathcal{O}({\log(1/\alpha)}/ ( \eta(|\mu^{*}|s_m-M_0\sqrt{\log p/n})) )$ steps, \revise{ which offers us the left boundary of the stopping time $T_1$}. Finally, we prove for weak signals, their strengths will not exceed $\cO(\sqrt{\log p/n})$ for all steps as long as we properly choose the stepsize. \revise{Thus, by letting the stopping time $T_1$ be in the interval given in Theorem \ref{thmvec1}, we obtain converged signal component and well controlled  error component.}  The final statistical rates are obtained by combining
the results on the active and inactive components together. \revise{Moreover, the conclusion of Theorem \ref{signconsist} holds by truncating the $\beta_t$ properly, since we are able to control the error component of $\beta_t$ uniformly as mentioned above.}
See Appendix \S\ref{gensig} for the detail. \revise{As shown in the proof, we observe that with small initialization and over-parameterized loss function, the signal component converges rapidly to the true signal, while the  the error component grows in a relatively slow pace.
Thus, gradient descent rapidly isolates the signal components from the noise, and with a proper stopping time, finds a near-sparse solution with high statistical accuracy. Thus, with proper initialization, over-parameterization plays the role of an implicit regularization by favoring approximately sparse saddle points of the loss  function in \eqref{loss1bk}.}

Finally,  we remark that Theorem \ref{thmvec1} establishes optimal statistical rates for the estimator $\beta_{T_1}$, where $T_1$ is any stopping time that belongs to the interval given in Theorem \ref{thmvec1}.
However, in practice,
such an interval is infeasible to compute as it depends on unknown constants.
To make the proposed method practical,
in the following, we introduce a method for  selecting a proper stopping time $T_1$.



\subsubsection{Choosing the  Stopping Time $T_1$}\label{stoptime}
We split the  dataset into training data and testing data. We utilize the training data  to implement Algorithm \ref{alg1} and get the  estimator $\beta_t$ as well as the value of the training loss \eqref{loss1bk} at step $t$. We notice $\beta_t$ varies slowly inside the optimal time interval specified in  Theorem \ref{thmvec1}, so that the fluctuation of the  training loss \eqref{loss1bk} can be smaller than a threshold.
Based on that, we  choose $m$ testing points on the flatted curve of the training loss \eqref{loss1bk} and denote their corresponding number of iterations as $\{t_j\},j\in[m]$.
For each $j\in[m]$, we then reuse the training data and normalized estimator $\beta_{t_j}/\|\Sigma^{1/2}\beta_{t_j}\|_2,j\in[m]$ to fit the link function $f$. Let the obtained estimator be  $\hat f_j$. 
For the testing dataset, we perform out-of-sample prediction and get $m$ prediction losses: 
\begin{align*}
l_{j}=\frac{1}{n_{\textrm{test}}}\sum_{i=1}^{n_{\textrm{test}}} \big [Y_{i}-\hat{f}_j(\langle \xb_{i},\beta_{t_j}/\|\Sigma^{1/2}\beta_{t_j}\|_2\rangle)\bigr ]^2, \qquad \forall j\in[m].
\end{align*}
Next, we choose $T_1$  as $t_{j^*}$ where we define  $j^* = \argmin_{j \in [m] } l_{j}$.

We remark that each $\hat f_j$ can be obtained by any nonparametric regression methods.
To show case our method, in the following, we apply univariate kernel regression to obtain  each $\hat f_j$ and  establish its theoretical guarantee.


\subsubsection{Prediction Risk}\label{predriskvec}

We now consider estimating the nonparametric component and the prediction risk.
Suppose we are given an estimator $\hat \beta$ of $\beta$ and $n$ i.i.d.\,\,observations $\{ y_i, \xb_i\}_{i=1}^{n}$ of the model.
For simplicity of the technical analysis, we assume that $\hat \beta $  is   independent   of  $\{y_i,\mathbf{x}_i\}_{i=1}^{n}$, which can be achieved by data-splitting.
Moreover, we assume that $\hat \beta$ is an estimator of $\beta^*$ such that
\#\label{eq:assumption}
\big \| \hat \beta - \beta ^* \big \|_2 =o({n^{-1/3}}), \quad \big \| \Sigma^{1/2}\hat \beta \big \|_2 = 1,\quad \textrm{and} \quad \big \| \Sigma^{1/2} \beta^{*} \big \|_2 = 1.
\#
 Our goal is to construct an estimate the regression function $f(\la \cdot, \beta^*\ra)$ based on $\hat \beta$ and $\{y_i,\mathbf{x}_i\}_{i=1}^{n}$.

Note that, when $\beta^*$ is known, we can directly estimate $f$ based on $y_i$ and  $Z_i^*: = \xb_i^\top \beta^*, i\in[n]$ via standard non-parametric regression.
When $\hat \beta$ is accurate, a direct idea is to
replace $Z_i^*$ by $Z_i:=\xb_i^\top\hat\beta $ and follow the similar route. For a new observation $\xb$, we define $Z$ as $Z:=\xb^\top\hat\beta$ and $Z^{*}$ as $Z^{*}:=\xb^\top\beta^{*}$ respectively.

To predict $Y$, we estimate function $g(z)$ using kernel regression with data $\{ ( y_i, \mathbf{x}_i^\top \hat \beta )\}_{i=1}^{n}$. 
Specifically, we
let the function $K_h(u)$ be $K_h(u) := 1/h \cdot K(u/h)$, in which $K\colon \RR \rightarrow \RR$ is a kernel function with $K(u)=\II_{\{|u|\le 1\}}$  and $h$ is a bandwidth.
By the definitions of $Z^{*},Z,$ and $Z_i,i\in[n]$ given above, 
 the prediction function $\hat{g}(Z)$ is defined as
\begin{equation}  \label{predfuncthm}
\hat{g}(Z) =
\begin{cases}
 \frac{\sum_{i=1}^{n}y_iK_h(Z-Z_i)}{\sum_{i=1}^{n}K_h(Z-Z_i)}, & |Z-\mu^\top\hat\beta|\le R,  \\
  0,  &\text{otherwise},
\end{cases}
\end{equation}
where we follow the convention that $0/0=0$. In what follows, we consider the $\ell_2$-prediction risk of $\hat g$, which is given by
\$
\EE \left [  \left\{ \hat g\big ( \langle\xb, \hat \beta \rangle \big ) - f \big ( \langle \xb, \beta ^*\rangle \big ) \right\}^2\right ],
\$
where the expectation is taken with respect to $\xb$ and $\{\xb_i,y_i\}_{i=1}^{n}$. Before proceeding to the theoretical guarantees, we make the following assumption on the regularity of $f$.

\begin{assumption}\label{assume:nonparam}
 There exists an $\alpha_1>0$ and a constant $C>0$ such that $|f(x)|,\,|f'(x)|\le C+|x|^{\alpha_1}.$
\end{assumption}
For the rationality of the Assumption \ref{assume:nonparam}, we note that  the constraint on $f'(x)$ and $f(x)$ given above is weaker than assuming $f'(x)$ and $f(x)$ are bounded functions directly. 
Next, we present   Theorem \ref{thmpred} which characterizes the convergence rate of mean integrated error of our prediction function $\hat{g}(Z).$ 

\begin{theorem}\label{thmpred}
	If we set $R=2\sqrt{\log(n)}$ and $h\asymp n^{-1/3}$ in \eqref{predfuncthm}, under Assumption \ref{assume:nonparam}, the $\ell_2$-prediction risk of $\hat{g}$ defined in \eqref{predfuncthm} is given by
	\$
	\EE \left[  \left\{\hat g\big (\langle \xb, \hat \beta \big \rangle) - f \big (\langle \xb,\beta ^*   \rangle \big) \right\}^2\right]\lesssim\frac{\textrm{polylog}(n) }{n^{2/3}},
	\$
	where $\hat\beta$ is any vector that satisfies \eqref{eq:assumption} and $\textrm{polylog}(n)$ contains terms that are polynomials of $\log n$.
\end{theorem}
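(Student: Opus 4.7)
The plan is to decompose the prediction risk into three pieces: (i) a truncation error on the event $\{|Z-\mu^\top\hat\beta|>R\}$ where $\hat g$ is set to zero by construction, (ii) a plug-in error coming from using $\hat\beta$ in place of $\beta^*$ inside the kernel estimator, i.e. the gap between $g(z):=\EE[Y\,|\,Z=z]$ and $f(Z^*)$, and (iii) the genuine nonparametric estimation error $\hat g(Z)-g(Z)$ on the truncation region. Since $\hat\beta$ is independent of the training data, throughout we may condition on $\hat\beta$ and work with i.i.d. observations $(\xb_i,Y_i)$ whose induced covariates $Z_i=\langle\xb_i,\hat\beta\rangle$ are $N(\mu^\top\hat\beta,\,1)$, using $\|\Sigma^{1/2}\hat\beta\|_2=1$.

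For (i), a Gaussian tail bound gives $\Pr(|Z-\mu^\top\hat\beta|>R)\le 2e^{-R^2/2}=2n^{-2}$. On this event $\hat g(Z)=0$, so the contribution is $\EE[f(Z^*)^2\cdot \II\{|Z-\mu^\top\hat\beta|>R\}]$. By Assumption~\ref{assume:nonparam}, $|f(Z^*)|\le C+|Z^*|^{\alpha_1}$, and $Z^*$ is Gaussian with all moments finite; hence Cauchy--Schwarz yields $\sqrt{\EE[f(Z^*)^4]}\cdot\sqrt{2n^{-2}}\lesssim n^{-1}=o(n^{-2/3})$.

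For (ii), the key observation is that $(Z,Z^*)$ is jointly Gaussian. A direct computation gives $\rho:=\hat\beta^\top\Sigma\beta^*$ and $2(1-\rho)=\|\Sigma^{1/2}(\hat\beta-\beta^*)\|_2^2\le C_{\max}\|\hat\beta-\beta^*\|_2^2=o(n^{-2/3})$, so $1-\rho^2=o(n^{-2/3})$. Writing $Z^*=\tilde Z+\xi$ with $\tilde Z=\mu^\top\beta^*+\rho(Z-\mu^\top\hat\beta)$ measurable w.r.t. $Z$ and $\xi\sim N(0,1-\rho^2)$ independent of $Z$, the conditional-Jensen inequality gives
\[
\EE\bigl[(g(Z)-f(Z^*))^2\bigr]=\EE\bigl[\Var(f(Z^*)\mid Z)\bigr]\le\EE\bigl[(f(Z^*)-f(\tilde Z))^2\bigr].
\]
Applying the mean-value theorem, the polynomial bound on $|f'|$, independence of $\xi$ and $Z$, and moment bounds on $|Z|$ and $|Z^*|$ yield $\EE[(f(Z^*)-f(\tilde Z))^2]\lesssim(1-\rho^2)=o(n^{-2/3})$.

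For (iii), I apply a standard bias-variance analysis of the Nadaraya--Watson estimator using the representation $\hat g(z)=\hat m(z)/\hat p(z)$ with $\hat p(z)=n^{-1}\sum K_h(z-Z_i)$ and $\hat m(z)=n^{-1}\sum Y_i K_h(z-Z_i)$. Because $(Z, Z^*)$ is jointly Gaussian, $g(z)=\EE[f(Z^*)\mid Z=z]$ inherits Lipschitz-type smoothness from $f$ via the Gaussian convolution, and the growth of $|g|$, $|g'|$ is still polynomial in $|z|$. On the truncation region $[\mu^\top\hat\beta-R,\mu^\top\hat\beta+R]$ the density $p_Z$ is bounded away from zero by $\phi(R)\ge n^{-2}/\sqrt{2\pi}$; the squared bias is $\cO(h^2)$ uniformly after integrating against $p_Z$, and the variance integrates to $\cO(R\cdot\sigma_Y^2/(nh))=\cO(\sqrt{\log n}/(nh))$ because the $p_Z(z)^{-1}$ factor cancels against the outer $p_Z(z)\,dz$. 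With $h\asymp n^{-1/3}$ both terms balance at $\cO(\polylog(n)\cdot n^{-2/3})$. The main technical obstacle will be a careful treatment of the event where $\hat p(z)$ is atypically small, which I would handle either by truncating $\hat g$ on the low-density event (standard Nadaraya--Watson trick, controlled by a Bernstein/Hoeffding bound applied to $\hat p(z)$) or by splitting the integration region into a ``bulk'' where $p_Z\gtrsim 1$ and a ``boundary'' of width $\cO(\sqrt{\log n})$ handled separately via the polynomial tails of $f(Z^*)$. Combining all three pieces yields the claimed $\polylog(n)/n^{2/3}$ bound.
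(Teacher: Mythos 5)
Your decomposition (truncation tail + approximation error $g(Z)-f(Z^*)$ + nonparametric estimation error $\hat g(Z)-g(Z)$) is the same as the paper's, and steps (i) and (ii) are correct. For (ii) your conditional-Jensen observation $\EE[(g(Z)-f(Z^*))^2]=\EE[\Var(f(Z^*)\mid Z)]\le\EE[(f(Z^*)-f(\tilde Z))^2]$ is a modest simplification of what the paper does (the paper instead writes $(f(Z^*)-g(Z))^2\le 2(f(Z^*)-f(\tilde Z))^2+2(f(\tilde Z)-g(Z))^2$ and bounds both terms by the same $\sin^2\alpha$ factor), but it lands in the same place: $\|\hat\beta-\beta^*\|_2=o(n^{-1/3})$ forces $1-\rho^2=\sin^2\alpha=o(n^{-2/3})$.

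The one place where you diverge in substance is step (iii), and this is also where you leave a genuine gap. You set up the classical Nadaraya--Watson split $\hat g=\hat m/\hat p$ and then flag ``the event where $\hat p(z)$ is atypically small'' as the main obstacle, offering two sketches (truncating on the low-density event via a concentration bound for $\hat p(z)$, or splitting bulk vs.\ boundary) without carrying either through. The paper avoids this issue entirely by introducing the intermediary $g_0(Z)=\sum_i g(Z_i)K_h(Z-Z_i)/\sum_i K_h(Z-Z_i)$, so that $\hat g-g_0$ is a zero-mean weighted average of $Y_i-g(Z_i)$ with random weights. Conditioning on $Z_1,\dots,Z_n$, its conditional variance is bounded by $\sigma^2\,\II_{B_n(Z)}/(n\PP_n(B(Z,h)))$, and the unconditional expectation of $\II_{\{n\PP_n>0\}}/(n\PP_n)$ is controlled by the Binomial expectation lemma (Lemma 4.1 of Gy\"orfi et al.), giving $2\sigma^2/(nq)$ with no deterministic lower bound on $\hat p$ ever required. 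The bias $g_0-g$ is then handled by the Lipschitz bound on $g$ plus the exponentially small probability of $B_n(Z)^c$. Your $\hat m/\hat p$ route can also be made rigorous (it is a standard alternative), but as written the low-density event is exactly the step that needs an argument and you have not supplied one; you should either adopt the $g_0$ device or carry out the concentration bound for $\hat p(z)$ uniformly over the region $|z-\mu^\top\hat\beta|\le R$ and show the residual contribution from the bad event is $o(n^{-2/3})$ using the polynomial growth of $g$ and the Gaussian tails of $Z$.
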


It is worth noting that the estimator $\hat\beta=\beta_{T_1}/\|\Sigma^{1/2}\beta_{T_1}\|_2$ constructed in Theorem \ref{thmvec1} with any $T_1$ belongs to the optimal time interval given in Theorem \ref{thmvec1} satisfy \eqref{eq:assumption}. Thus, under such regimes, Theorem \ref{thmpred} also holds.  The proof of Theorem \ref{thmpred} is given in  \S\ref{MSEproof}. Note that it is possible to refine the  analysis on the prediction risk for $f$ with higher order derivatives by utilizing higher order kernels (see \cite{nonparamtsyb2008} therein) this is not the key message of our paper.

\subsection{General Design}\label{secgenvec}
In this subsection, we extend our methodology to  the setting with covariates generated from a general distribution.
Following our discussions at the beginning of \S\ref{sectionvec}, ideally we aim at solving the loss function with over-parameterized variable given in \eqref{loss1bk}.
However, when the distribution of $\xb $ has density $p_0$, the score
$S(\xb)$
can be  heavy-tailed  such that
$\mathbb{E}[Y\cdot S(\mathbf{x})]$ and its empirical counterpart may not be  sufficiently close.

To remedy this issue, we modify the loss function in
  \eqref{loss1bk}
 by replacing  $y_i$ and $S(\xb_i)$ by their truncated (Winsorized) version $\widecheck{y_i}$ and $\widecheck{S}(\xb_i) $, respectively.
 Specifically, we  propose to apply
   gradient descent to  the following modified loss function  with respect to $\mathbf{u}$ and $\mathbf{v}$:  
\begin{align}\label{lossgenvec}
\min_{\mathbf{w},\mathbf{v}} L(\mathbf{w},\mathbf{v}):=\langle \mathbf{w}\odot\mathbf{w}-\mathbf{v}\odot\mathbf{v},\mathbf{w}\odot\mathbf{w}-\mathbf{v}\odot\mathbf{v} \rangle-\frac{2}{n}\sum_{i=1}^{n}\widecheck{y_i}\big\langle \mathbf{w}\odot \mathbf{w}-\mathbf{v}\odot\mathbf{v},\widecheck{S}(\mathbf{x}_i)  \big\rangle.
\end{align}
Let  $\widecheck{\mathbf{a}}\in \mathbb{R}^d$ denote  the truncated version of vector $\mathbf{a}\in \mathbb{R}^{d}$ based on a parameter $\tau$ \citep{fan2018robust}.
That is, its entries are given by $[\widecheck{\mathbf{a}}]_j=[\mathbf{a}]_j$ if $|\mathbf{a}_i|\le\tau$ and $\tau$ otherwise.
 Applying elementwise truncation to  $\{y_i\}_{i=1}^{n}$ and $\{S(\xb_i)\}_{i=1}^{n}$ in  \eqref{lossgenvec},
  we allow the  score $S(x)$ and the response $Y$ to both have heavy-tailed distributions.
 By choosing a proper threshold $\tau$, such a truncation step ensures
  $n^{-1} \sum_{i=1}^{n}\widecheck{y}_i\widecheck{S}(\xb_i)$ converge  to $\EE[Y\cdot S(\xb)]$ with a desired rate in $\ell_{\infty}$-norm.
Compared with Algorithm \ref{alg1}, here we only modify the definition of the loss function. Thus, we defer the
details of the proposed algorithm for this setting to
Algorithm \ref{alg5} in \S\ref{alggenvec}.

Before stating our main theorem,  we first present an assumption on the distributions of the covariate and the response variables.

\begin{assumption} \label{genass1}
Assume there exists a   constant $M$ such that
	\begin{align*}
	\mathbb{E}\left[Y^4 \right]\le M, \qquad \mathbb{E}\left[S(\mathbf{x})_j^4\right]\le M,\qquad \forall j\in[p].
	\end{align*}
	Here $S(\xb)_j$ is the $j$-th entry of $S(\xb)$.
	Moreover, recall that we denote  $\mu^* = \EE [ f' (\langle \xb, \beta^*\rangle )]$. We assume that $\mu^*$ is a nonzero constant such that
  $M/|\mu^{*}|=\Theta(1)$.
\end{assumption}

Assuming the fourth moments exist and are bounded is significant weaker  than the sub-Gaussian assumption. Moreover, such an  assumption is  prevalent in  robust statistics literature  \citep{fan2016shrinkage,fan2018elliptical, fan2019robustfactor}.
Now we are ready to introduce the
theoretical results for the setting with
general design.

  \begin{theorem}\label{thmvec2}
	Under  Assumption \ref{genass1}, we set the  thresholding  parameter $\tau=( M\cdot n /\log p)^{1/4}/2$, let  the initialization parameter  $\alpha$ satisfy $0<\alpha\le {M_g^2}/{p}$,   and set the  stepsize $\eta$ such that  $0<\eta \le {1}/ ( 12(|\mu^{*}|+M_g) ) $ in Algorithm \ref{alg5} given in \S\ref{alggenvec} where $M_g$ is a constant proportional to $M$. There exist absolute constants $a_3,\,a_4$, such that, with  probability at least  $1-2p^{-2}$,
	\begin{align*}
	\left\|\beta_{T_1}-\mu^{*}\beta^{*}\right\|^2_2\lesssim \frac{s\log p}{n}
	\end{align*}
	 holds for all  $T_1\in[a_3{\log(1/\alpha)}/ ( \eta(|\mu^{*}|s_m-M_g\sqrt{\log p/n})) ,a_4\log(1/\alpha)\sqrt{n/\log p}/(\eta M_g)]$. 
	  Here  $s$ is the cardinality of the support set $S$ and $s_m = \min_{i\in S_0} | \beta_j^*| $, where $S_0 = \{ j \in  i \colon | \beta_ i | \geq C_s \sqrt{\log p /n}\}$ is the  set of strong signals.
	In addition,    for the normalized iterates, we further have
	\begin{align*}
	\left\|\frac{\beta_{T_1}}{\|\beta_{T_1}\|_2}-\frac{\mu^{*}\beta^{*}}{|\mu^{*}|}\right\|^2_2\lesssim \frac{s\log p}{n},
	\end{align*}
with  probability at least  $1-2p^{-2}.$
\end{theorem}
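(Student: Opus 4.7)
The plan is to mirror the trajectory analysis that underlies Theorem \ref{thmvec1} but with the empirical gradient estimator $n^{-1}\sum_{i=1}^n y_i S(\xb_i)$ replaced by its Winsorized analogue $\widehat{\mathbf{m}}:=n^{-1}\sum_{i=1}^n \widecheck{y}_i\widecheck{S}(\xb_i)$. Once a good $\ell_\infty$ concentration bound for $\widehat{\mathbf{m}}$ around $\mu^*\beta^*$ is in hand, the coordinatewise dynamics of $(\wb_t,\vb_t)$ under Algorithm \ref{alg5} are essentially identical to those in the Gaussian case, and we can reuse the signal/noise dichotomy to bound $\|\beta_{T_1}-\mu^*\beta^*\|_2$.

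The crucial first step is a robust concentration lemma. Under Assumption \ref{genass1}, $Y$ and the coordinates of $S(\xb)$ have bounded fourth moments, so $YS(\xb)_j$ has bounded second moment. The standard bias/variance trade-off for Winsorized mean estimators (see, e.g., the analysis in \cite{fan2018robust}) shows that with the choice $\tau=(Mn/\log p)^{1/4}/2$ we obtain, for some constant $C>0$ and with probability at least $1-2p^{-2}$,
\begin{equation*}
\big\|\widehat{\mathbf{m}}-\mu^*\beta^*\big\|_\infty\le C\sqrt{\log p/n}.
\end{equation*}
The truncation on $y_i$ contributes $O(\tau^{-3})$ bias that is dominated by $\sqrt{\log p/n}$ with this choice of $\tau$, while Bernstein's inequality on the bounded truncated products controls the stochastic fluctuation. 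I would carry out a union bound over the $p$ coordinates of $S(\xb)$, which produces the $\log p$ factor.

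With this uniform control, write $\widehat{\mathbf{m}}=\mu^*\beta^*+\bdelta$ with $\|\bdelta\|_\infty\lesssim\sqrt{\log p/n}$ and rewrite the updates of Algorithm \ref{alg5} componentwise: for each $j\in[p]$,
\begin{align*}
w_{j,t+1}&=w_{j,t}\big[1-\eta(w_{j,t}^2-v_{j,t}^2-\widehat m_j)\big],\\
v_{j,t+1}&=v_{j,t}\big[1+\eta(w_{j,t}^2-v_{j,t}^2-\widehat m_j)\big].
\end{align*}
Following the three-regime analysis of Theorem \ref{thmvec1}, I would (i) show that for inactive coordinates $j\in S^c$ one has $|\widehat m_j|\lesssim\sqrt{\log p/n}$, so $\max\{w_{j,t}^2,v_{j,t}^2\}$ can grow by at most a factor $(1+\eta M_g\sqrt{\log p/n})^{t}$ per step, which, combined with $\alpha\le M_g^2/p$ and the right endpoint $T_1\le a_4\log(1/\alpha)\sqrt{n/\log p}/(\eta M_g)$, keeps $\|\beta_{T_1}\odot\mathbf 1_{S^c}\|_\infty\lesssim\sqrt{\log p/n}$; (ii) show that for strong-signal coordinates $j\in S_0$, the factor $1-\eta(w_{j,t}^2-v_{j,t}^2-\widehat m_j)$ drives either $w_{j,t}^2$ or $v_{j,t}^2$ to grow geometrically with rate $1+\eta(|\mu^*|s_m-C\sqrt{\log p/n})$ until $\beta_{j,t}$ lies within $O(\sqrt{\log p/n})$ of $\mu^*\beta^*_j$; this determines the left endpoint of the feasible $T_1$ interval; and (iii) show that for weak-signal coordinates $j\in S_1$, since $|\mu^*\beta_j^*|+|\delta_j|\lesssim\sqrt{\log p/n}$, the same inactive-coordinate argument keeps $|\beta_{j,t}|\lesssim\sqrt{\log p/n}$ throughout. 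Summing squared errors over the $s$ coordinates of $S=S_0\cup S_1$ and over $S^c$ yields $\|\beta_{T_1}-\mu^*\beta^*\|_2^2\lesssim s\log p/n$, and the normalized statement follows from $\|\Sigma^{1/2}\beta^*\|_2=1$ together with $\|\beta_{T_1}\|_2\to|\mu^*|$ on the feasible interval.

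The main obstacle, compared with the Gaussian proof, is not the concentration rate itself but the fact that truncation breaks the clean inner-product identity exploited via Stein's lemma: $\mathbb{E}[\widecheck{y}\widecheck{S}(\xb)]$ no longer equals $\mu^*\beta^*$ exactly, only up to a bias vector of order $\tau^{-3}$. Hence I must argue that the trajectory analysis is robust to a small uniform perturbation in every coordinate of the gradient, which amounts to showing that the inductive invariants governing $w_{j,t},v_{j,t}$ are stable under $\|\bdelta\|_\infty$-sized perturbations. Once this robustness is established, the rest of the proof reduces to the bookkeeping already developed for Theorem \ref{thmvec1}; the absence of a separate $s_0\log n/n$ oracle piece in the conclusion simply reflects that under fourth-moment tails we cannot replace $\log p$ by $\log n$ in the concentration bound on strong coordinates.
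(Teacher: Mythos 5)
Your proposal follows the same route as the paper: establish an $\ell_\infty$-concentration bound for the Winsorized estimator $\widehat{\mathbf m}$ around $\mu^*\beta^*$ at rate $\sqrt{\log p/n}$, then reuse verbatim the strong-signal/weak-signal/noise trajectory analysis from Theorem~\ref{thmvec1} (Lemmas~\ref{prop1.3}--\ref{prop1.5}) with $\Phi_n$ replaced by $\widehat{\mathbf m}$, which is exactly what the paper does via Lemma~\ref{lemgenconc}. One small inaccuracy: the truncation bias for the \emph{product} $\widecheck{y}_i\widecheck{S}(\xb_i)_j$ is $O(\tau^{-2})$, not $O(\tau^{-3})$ --- the paper obtains $\sqrt{2}M/\tau^2$ via Cauchy--Schwarz on the joint truncation event $\{|y_1|>\tau\}\cup\{|S(\xb_1)_j|>\tau\}$ combined with Chebyshev --- and it is this $\tau^{-2}$ rate that makes $\tau\asymp(Mn/\log p)^{1/4}$ the tight choice balancing bias against the Bernstein-controlled variance; your $\tau^{-3}$ estimate (valid for truncating a single fourth-moment variable) would not survive the coupling between $y$ and $S(\xb)_j$, but since it is an understatement of the bias the conclusion is unaffected.
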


Compared with Theorem \ref{thmvec1}
for the Gaussian design,
here we achieve the $\cO(\sqrt{ s \log p /n})$ statistical rate of convergence in terms of the $\ell_2$-norm.
These rates are the same of those achieved by adding an $\ell_1$-norm regularization explicitly \citep{plan2016generalized, plan2017high,yang2017high} and are   minimax optimal \citep{raskutti2011minimax}.
Moreover, we note that here $S(\xb)$ and $Y$ can be both heavy-tailed and our truncation procedure successfully tackles such a challenge without sacrificing the statistical rates.
Moreover, similar to the Gaussian case, here $C_s$ can be set as a sufficiently large absolute constant, and the statistical rates established in Theorem \ref{thmvec2} holds for all choices of  $C_s$. \revise{In addition, for heavy-tailed case, we also let $[\tilde{\beta}_{T_1}]_i=[\beta_{T_1}]_i\cdot \mathbb{I}_{|[\beta_{T_1}]_i|\ge \lambda}$, for all $i\in [p]$. Then for all $\lambda\in [\alpha,(C_s|\mu^{*}|-2M_g)\sqrt{\log p/n}],$ we obtain similar theoretical guarantees as in  Theorem \ref{signconsist}. }

\section{Main Results for  Over-Parametrized Low Rank SIM}\label{sectionmat}
In this section, we   present the results for  over-parameterized low rank matrix SIM
introduced in  Definition \ref{matSIM}
with both standard Gaussian and generally distributed covariates.
Similar to the results in \S\ref{sectionvec},
here we also focus on matrix SIM with first-order links, i.e., we assume that $\mu^* = \EE[f'(\langle \Xb,\beta^{*} \rangle)] \neq 0 $, where $\beta^*$ is a low rank matrix with rank $r$.
Note that we assume that the entries of covariate $\Xb \in \RR^{d\times d}$ are i.i.d. with a univariate  density $p_0$.
Also recall that we define the score function $S(\Xb) \in \RR^{d\times d}$ in \eqref{scoremat}.
Then, similar to the loss function in \eqref{loss1bk},
we consider the loss function
$$
L(\beta):= \, \langle \beta, \beta \rangle-2\Big\langle \beta, \frac{1}{n}\sum_{i=1}^n y_iS(\Xb_i) \Big\rangle,
$$
where $\beta \in \RR^{d\times d}$ is a symmetric matrix.
Hereafter, we rewrite  $\beta $ as $ \Wb \Wb^\top - \Vb \Vb^\top$,
where both $\Wb$ and $\Vb$ are matrices in $\RR^{d\times d}$. \revise{The intuitions of re-parameterizing $\beta=\Wb\Wb^\top-\Vb\Vb^\top$ are as follows. Any (low rank) symmetric matrix is able to be written as the difference of  two positive semidefinite matrices, namely $\Wb\Wb^\top-\Vb\Vb^\top$ with $\Wb,\Vb\in \RR^{d\times d}$. Re-parameterizing the symmetric matrix this way is a generalization of re-parameterizing its eigenvalues by the Hadamard products. Thus this can be regarded as an extension of  the re-parameterization mechanism from the vector case to the spectral domain.}
With such an  over-parameterization, we propose to estimate $\beta^*$ by applying  gradient descent to the loss function
\begin{align}\label{matoverpara}
	L(\Wb,\Vb):=\, \langle \Wb\Wb^\top-\Vb\Vb^\top,\Wb\Wb^\top-\Vb\Vb^\top\rangle-2\Big\langle\Wb\Wb^\top-\Vb\Vb^\top,\frac{1}{n}\sum_{i=1}^{n}y_iS(\Xb_i) \Big\rangle.
\end{align}
Since the rank of $\beta^{*}$ is unknown, we initialization
$\Wb_0$ and $\Vb_0$ as $\Wb_0=\Vb_0=\alpha \cdot\II_{d\times d}$ for a small $\alpha > 0$ and construct a sequence of iterates $\{ \Wb_t, \Vb_t, \beta_t\}_{t\geq 0 }$
via the gradient decent method as follows:
\begin{align}\label{gradupw}
	\mathbf{W}_{t+1}&=\mathbf{W}_{t}-\eta\Big(\mathbf{W}_t \mathbf{W}_t^\top-\mathbf{V}_t \mathbf{V}_t^\top-\frac{1}{2n}\sum_{i=1}^{n}S(\mathbf{X}_i)y_i-\frac{1}{2n}\sum_{i=1}^{n}S(\mathbf{X}_i)^\top y_i\Big) \mathbf{W}_t,\\
	\mathbf{V}_{t+1}&=\mathbf{V}_{t}\,+\,\eta\Big(\mathbf{W}_t \mathbf{W}_t^\top-\mathbf{V}_t \mathbf{V}_t^\top-\frac{1}{2n}\sum_{i=1}^{n}S(\mathbf{X}_i)y_i-\frac{1}{2n}\sum_{i=1}^{n}S(\mathbf{X}_i)^\top y_i\Big) \mathbf{V}_t,\label{gradupv} \\
	\beta_{t+1}&=\mathbf{W}_t \mathbf{W}_t^\top-\mathbf{V}_t\Vb_t^\top,\nonumber
	\end{align}
	where $\eta $ in \eqref{gradupw} and \eqref{gradupv}  is the stepsize.
Note that here the algorithm does not impose any explicit regularization. 
In the rest of this section, we show that
such a procedure yields an
estimator of the true parameter $  \beta^*$ with near-optimal statistical rates of convergence.

Similar to  the vector  case, for theoretical analysis, here we also divide eigenvalues of $\beta^{*}$ into different groups by their strengths.  We let $r_i^{*},i\in[d]$ be the $i$-th eigenvalue of $\beta^{*}$. The support set $R$ of the eigenvalues is defined as $R:=\{i:\left|r_i^{*}\right|>0   \}$, whose cardinality is    $r$.  We then divide the support set $ R$
	into  $R_0:=\{i: |r_i^{*}|\ge C_{ms}\sqrt{{d\log d}/{n}} \}$ and  $R_1:=\{i: 0<|r_i^{*}|< C_{ms}\sqrt{ {d\log d}/{n}  }  \}$, which correspond to collections  of  strong and weak signals with cardinality denoting by $r_0$ and $r_1$, respectively.
	Here $C_{ms} > 0$ is an absolute constant and we have
	$R = R_0 \cup R_1$.
	Moreover, we use $r_m$ to denote the minimum strong eigenvalue in magnitude, i.e. $r_m=\min_{i\in R_0}|r_i^{*}|$.

\subsection{Gaussian Design}\label{secgaussmat}
In this subsection, we focus on the model in   \eqref{matrixSIM} with the  entries of covariate $\Xb$ being  i.i.d.  $N(0,1)$ random variables.  In this case,
$S(\Xb_i) = \Xb_i$.  This leads to Algorithm \ref{alg2} given in \S\ref{gauss_mat_alg}, where we place $S(\Xb_i)$ by $\Xb_i$ in  \eqref{matoverpara}-\eqref{gradupv}. 


Similar to the case in \S\ref{gaussdesign}, here we also impose the following assumption for the function class of the low rank SIM. 

\begin{assumption} \label{ass1mat}
We assume that $\mu ^* = \EE [ f' (\langle  \Xb, \beta^* \rangle ) ]$ is a nonzero constant. Moreover, we assume that
 both $\{f(\langle\Xb_i,\beta^{*} \rangle)\}_{i=1}^{n}$ and $\{\epsilon_i\}_{i=1}^{n}$  are i.i.d. sub-Gaussian random variables, with sub-Gaussian norm denoted by $\|f\|_{\psi_2}=\cO(1)$ and $\sigma=\cO(1)$ respectively.
 Here we let $\| f\|_{\psi_2} $ denote the sub-Gaussian norm of $f(\langle \Xb, \beta^*\rangle)$.
  In addition, we further assume $|\mu^{*}|/\|f\|_{\psi_2}=\Theta(1),|\mu^{*}|/\sigma=\Omega(1)$.
\end{assumption}

The following theorem establishes the statistical rates of convergence for the estimator constructed by Algorithm \ref{alg2}.

  \begin{theorem} \label{thmmat1}
	We set $0<\alpha\le M_{m}^2/d$ and stepsize $0<\eta\le1/[12(|\mu^{*}|+M_m)]$ in  Algorithm \ref{alg2}, where $M_m$ is a constant proportional to $\max\{\|f\|_{\psi_2},\sigma\}$.
	Under Assumption \ref{ass1mat},  there   exist constants $a_5,\,a_6$ such that, with  probability at least  $1-1/(2d)-3/n^2,$ we have
		\begin{align*}
	\big\|\beta_{T_1}-\mu^{*}\beta^{*}\big\|^2_{F}\lesssim \frac{rd\log d}{n} 
	\end{align*}
	for all
	 $T_1\in[a_5{\log(1/\alpha)}/ (\eta(|\mu^{*}|r_m-M_m\sqrt{d\log d/n})),a_6\log(1/\alpha)\sqrt{n/(d\log d)}/(\eta M_m)].$
	Moreover, for the normalized iterates $\beta_t / \| \beta _t \|_{F}$, we have
	\begin{align*}
	\left\|\frac{\beta_{T_1}}{\|\beta_{T_1}\|_{F}}-\frac{\mu^{*}\beta^{*}}{|\mu^{*}|}\right\|^2_{F}\lesssim  \frac{r d\log d}{n}.
	\end{align*}
\end{theorem}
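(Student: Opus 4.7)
The plan is to mirror the vector-case proof of Theorem~\ref{thmvec1}, adapted to the spectral geometry of symmetric matrices. The key idea is to decompose the iterate $\beta_t = \Wb_t\Wb_t^\top - \Vb_t\Vb_t^\top$ into a signal component aligned with the eigenspace of $\beta^*$ and an error component on its orthogonal complement, show that the first converges rapidly to $\mu^*\beta^*$ while the second stays negligibly small, and then choose $T_1$ inside the interval where both bounds coexist. As a preliminary step, I would define the symmetrized surrogate $\Mb := (2n)^{-1}\sum_{i=1}^n y_i(\Xb_i + \Xb_i^\top)$, which is exactly what drives the updates \eqref{gradupw}--\eqref{gradupv}. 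By Lemma~\ref{genstein} we have $\EE[\Mb] = \mu^*\beta^*$, and an $\varepsilon$-net argument on the unit sphere of symmetric $d\times d$ matrices combined with the sub-Gaussianity in Assumption~\ref{ass1mat} would yield $\|\Eb\|_{\oper}\lesssim M_m\sqrt{d\log d/n}$ on the stated high-probability event, where $\Eb := \Mb - \mu^*\beta^*$.

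Let $\beta^* = \Ub^*\Lambda^*(\Ub^*)^\top$ be the eigendecomposition and $\Pb$ the orthogonal projection onto the eigenvectors of nonzero eigenvalues. Decompose $\Wb_t = \Pb\Wb_t + (\II - \Pb)\Wb_t =: \Wb_t^{\mathrm{sig}} + \Wb_t^{\mathrm{err}}$ and similarly for $\Vb_t$. Because the initialization $\Wb_0 = \Vb_0 = \alpha\II$ commutes with $\Pb$ and the idealized driver $\mu^*\beta^*$ is simultaneously diagonalizable with $\Pb$, the leading-order dynamics on the signal subspace reduces, eigenvalue by eigenvalue, to the scalar Hadamard recursion analyzed in the vector case. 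I would then port the signal analysis of Theorem~\ref{thmvec1}: each strong eigenvalue $r_i^* \in R_0$ grows exponentially from $\alpha^2$ to within $O(\sqrt{d\log d/n})$ of $\mu^*r_i^*$ in $O(\log(1/\alpha)/[\eta(|\mu^*|r_m - M_m\sqrt{d\log d/n})])$ steps, supplying the left endpoint; weak eigenvalues in $R_1$ stay of order $\sqrt{d\log d/n}$ throughout.

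For the error component, the recursion $\Wb_{t+1}^{\mathrm{err}} = (\II - \eta(\beta_t - \Mb))\Wb_t^{\mathrm{err}}$ together with the signal bound yields $\|\Wb_t^{\mathrm{err}}\|_{\oper}\lesssim \alpha\cdot\exp(c\eta\|\Eb\|_{\oper} t)$, which remains polynomially small in $\alpha$ for all $t\lesssim \log(1/\alpha)\sqrt{n/(d\log d)}/(\eta M_m)$, producing the right endpoint. Assembling: inside the allowed interval, $\beta_{T_1} - \mu^*\beta^*$ has rank at most $r$ on the signal subspace with each eigenvalue within $O(\sqrt{d\log d/n})$ of its target, plus cross and error blocks whose Frobenius norms are dominated by $\sqrt{r}\,\|\Pb\Wb_{T_1}\|_{\oper}\|\Wb_{T_1}^{\mathrm{err}}\|_{\oper}$ and $\sqrt{d}\,\|\Wb_{T_1}^{\mathrm{err}}\|_{\oper}^2$, respectively. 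Summing gives $\|\beta_{T_1} - \mu^*\beta^*\|_F^2 \lesssim rd\log d/n$. The normalized bound then follows by Lipschitz continuity of $\beta\mapsto\beta/\|\beta\|_F$ near $\mu^*\beta^*$, using $\|\beta^*\|_F = 1$.

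The main obstacle, absent in the vector case, is non-commutativity: the true update operator $\II - \eta(\beta_t - \Mb)$ does not commute with $\Pb$ because $\Eb$ is a generic symmetric matrix, so the signal and error blocks of $\Wb_t$ couple through the off-diagonal blocks of $\Eb$. I would handle this by a joint induction: maintain the invariant that the diagonal blocks of $\Wb_t$ approximately track their scalar-recursion counterparts while the off-diagonal blocks stay at the scale of $\|\Wb_t^{\mathrm{err}}\|_{\oper}$, so that each step introduces coupling of order $\eta\|\Eb\|_{\oper}\|\Wb_t\|_{\oper}$. Accumulating this over $T_1 = \tilde O(\log(1/\alpha)/\eta)$ iterations keeps the deviation from the idealized scalar recursion inside the slack $M_m\sqrt{d\log d/n}$ already absorbed into the signal-convergence left endpoint, and the argument closes.
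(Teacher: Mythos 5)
There is a genuine gap, and it sits exactly where you flagged the difficulty. You decompose $\Wb_t$, $\Vb_t$ with respect to the eigenspace of the \emph{true} parameter $\beta^*$, so the update operator $\II-\eta(\beta_t-\Mb)$ does not commute with $\Pb$ (because $\Eb=\Mb-\mu^*\beta^*$ does not), and the off-block of $\Wb_t$ picks up a contribution of order $\eta\|\Eb\|_{\oper}\|\Pb\Wb_t\|_{\oper}$ at each step. Your proposed ``joint induction'' does not actually close: accumulating this over the roughly $T_1\asymp\log(1/\alpha)/\bigl(\eta(|\mu^*|r_m-M_m\sqrt{d\log d/n})\bigr)$ iterations required for the signal block to converge gives a cross-block of order $\log(1/\alpha)\cdot\|\Eb\|_{\oper}/r_m$ (up to constants). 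When $r_m$ is near its allowed floor $C_{ms}\sqrt{d\log d/n}$, the ratio $\|\Eb\|_{\oper}/r_m$ is $\Theta(1)$ and the factor $\log(1/\alpha)\gtrsim\log d$ pushes the cross-block well beyond the $O(\sqrt{d\log d/n})$ scale that the target $\sqrt{rd\log d/n}$ Frobenius rate can absorb. So the claim that the deviation stays ``inside the slack'' is not established by the argument you give, and it is not obvious it can be salvaged at this granularity.

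The paper's proof avoids the coupling problem altogether by rotating into the eigenbasis of the \emph{empirical} matrix $\Mb^*=\frac{1}{2n}\sum_i y_i\Xb_i+\frac{1}{2n}\sum_i y_i\Xb_i^\top$ rather than of $\beta^*$. Writing $\Mb^*=\Qb^*\Sigma^*\Qb^{*\top}$ and $\Wb_{1,t}=\Qb^{*\top}\Wb_t\Qb^*$, $\Vb_{1,t}=\Qb^{*\top}\Vb_t\Qb^*$, the rotated dynamics are driven by the diagonal matrix $\Sigma^*$, and since $\Wb_{1,0}=\Vb_{1,0}=\alpha\II$ are diagonal, every iterate $\Wb_{1,t},\Vb_{1,t}$ stays exactly diagonal. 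There is no signal/error cross-block at all, and the matrix recursion collapses to $d$ independent scalar recursions that are then handled verbatim by the vector-case Lemmas \ref{prop1.3}--\ref{prop1.5}, with the concentration $\|\Mb^*-\mu^*\beta^*\|_{\oper}\lesssim M_m\sqrt{d\log d/n}$ (Lemma \ref{propgauss}) and Weyl's inequality used only at the end to compare the empirical eigenvalues $\Sigma^*$ to the targets $\mu^*r_i^*$. If you want to repair your argument, the cleanest fix is to replace your decomposition basis by that of $\Mb^*$; your $\varepsilon$-net concentration step and the scalar signal/error analysis you outlined then go through essentially unchanged.
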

  Similar to the vector case given in \S\ref{gaussdesign},
  as shown in the proof in Appendix \S\ref{proofmat},
  here we require  $C_{ms}$ to satisfy  $C_{ms}\ge\max\{(a_5/a_6+1)M_m/|\mu^{*}|,2M_m/|\mu^{*}|\}$
  in order to let the strong signals in $R_0$ dominate the noise and let the interval for $T_1$ to exist.
  The statistical rates hold for all such a $C_{ms}$.
As shown in
 Theorem \ref{thmmat1},
 with the  proper choices of  initialization parameter $\alpha$,  stepsize $\eta$, and the stopping time $T_1$,
 Algorithm \ref{alg2} constructs an estimator that achieves
 near-optimal statistical rates of convergence (up to logarithmic factors compared to minimax lower bound \citep{rohde2011}).
 Notice that the statistical rates established in Theorem \ref{thmmat1} are also enjoyed by the $M$-estimator
 based on the least-squares  loss function with
 nuclear norm penalty \citep{plan2016generalized,plan2017high}.
Thus, in terms of statistical estimation, applying
gradient descent to  the over-parameterized loss function in \eqref{matoverpara} is equivalent to adding a nuclear norm penalty explicitly, hence demonstrating the implicit regularization effect. \revise{Except for obtaining the optimal $\ell_2$-statistical rate, we are able to recover the true rank with high-probability by properly truncating the eigenvalues of $\beta_{T_1}$ for all $T_1\in[a_5{\log(1/\alpha)}/ (\eta(|\mu^{*}|r_m-M_m\sqrt{d\log d/n})),a_6\log(1/\alpha)\sqrt{n/(d\log d)}/(\eta M_m)]$. Comparing with the literature \cite{Lee_consistent} which studies the rank consistency via $\ell_1$-regularization, we offer a wider range for choosing the tuning parameter with known left boundary $\alpha$, instead of only setting the nuclear tuning parameter $\lambda=\tilde{\Theta}(\sqrt{rd/n})$.
\begin{theorem}\label{rankconsist}(Rank Consistency) Under the setting of Theorem \ref{thmmat1},
	for all   $$T_1\in[a_5{\log(1/\alpha)}/ (\eta(|\mu^{*}|r_m-M_m\sqrt{d\log d/n})),a_6\log(1/\alpha)\sqrt{n/(d\log d)}/(\eta M_m)],$$ we let $\tilde{\beta}_{T_1}=\sum_{i=1}^{d}\ub_i\ub_i^\top\lambda_i(\beta_{T_1})\cdot \mathbb{I}_{\{|\lambda_i(\beta_{T_1})|\ge \lambda\}}$, for all $i\in [d]$. Here $\ub_k,k\in [d]$ are eigenvectors of $\beta_{T_1}$.  Then, with   probability at least $1-2d^{-1}-3n^{-2},$  for all $\lambda\in [\alpha,(C_{ms}|\mu^{*}|-2M_m)\sqrt{d\log d/n}]$, we have $\tilde{\beta}_{T_1}$ enjoys the conclusion of Theorem \ref{thmmat1}, and rank$(\tilde{\beta}_{T_1})\le \textrm{rank}(\beta^{*})$. Moreover, when there only exists strong signals in $R_0$, we further have rank$(\tilde{\beta}_{T_1})= \textrm{rank}(\beta^{*})$. 
\end{theorem}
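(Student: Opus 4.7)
The plan is to leverage the signal/noise decomposition that underlies the proof of Theorem~\ref{thmmat1} and reduce the rank consistency statement to two eigenvalue bounds combined via Weyl's inequality. Specifically, we would first revisit the analysis of Algorithm~\ref{alg2} to extract a decomposition $\beta_{T_1} = \beta_{T_1}^{\mathsf{sig}} + \beta_{T_1}^{\mathsf{noi}}$, where the ``signal'' term $\beta_{T_1}^{\mathsf{sig}}$ lives in the eigenspace of $\beta^{*}$ (so $\mathrm{rank}(\beta_{T_1}^{\mathsf{sig}}) \le r$) and the ``noise'' term $\beta_{T_1}^{\mathsf{noi}}$ lies in the orthogonal complement. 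From the dichotomy used to prove Theorem~\ref{thmmat1} we expect, on the high-probability event of that theorem and uniformly over $T_1$ in the stated interval, (i) each nonzero eigenvalue of $\beta_{T_1}^{\mathsf{sig}}$ that corresponds to a strong direction $i\in R_0$ satisfies $|\lambda_i(\beta_{T_1}^{\mathsf{sig}}) - \mu^{*}r_i^{*}| \lesssim M_m\sqrt{d\log d/n}$, and (ii) $\|\beta_{T_1}^{\mathsf{noi}}\|_{\oper}$ is controlled at the scale of $\alpha$ (the entries outside the signal subspace grow only mildly from their near-zero initialization, exactly the mechanism already established in the analogous vector analysis for Theorem~\ref{signconsist}).

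Given these two inputs, the next step is a Weyl-type comparison. Since $\beta_{T_1}$ and $\beta_{T_1}^{\mathsf{sig}}$ are symmetric,
\begin{align*}
\bigl|\lambda_i(\beta_{T_1}) - \lambda_i(\beta_{T_1}^{\mathsf{sig}})\bigr| \le \|\beta_{T_1}^{\mathsf{noi}}\|_{\oper}, \qquad i\in[d].
\end{align*}
For any index $i$ outside $R$, we have $\lambda_i(\beta_{T_1}^{\mathsf{sig}}) = 0$, so $|\lambda_i(\beta_{T_1})| \le \|\beta_{T_1}^{\mathsf{noi}}\|_{\oper} \lesssim \alpha \le \lambda$, and these eigenvalues are annihilated by the truncation. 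For $i\in R_0$, the lower bound on $|\lambda_i(\beta_{T_1}^{\mathsf{sig}})|$ yields $|\lambda_i(\beta_{T_1})| \ge |\mu^{*}|\,C_{ms}\sqrt{d\log d/n} - M_m\sqrt{d\log d/n} - \|\beta_{T_1}^{\mathsf{noi}}\|_{\oper} \ge (C_{ms}|\mu^{*}| - 2M_m)\sqrt{d\log d/n} \ge \lambda$, so every strong eigencomponent is retained. Together these give $\mathrm{rank}(\tilde{\beta}_{T_1}) \le r$ in general, and equality when $R_1=\emptyset$ because then every kept direction matches one of the $r$ strong eigendirections of $\beta^{*}$; eigenvector alignment (again via a Davis--Kahan-type argument applied on the eigenspace of $\beta_{T_1}^{\mathsf{sig}}$) ensures that the row/column space of $\tilde{\beta}_{T_1}$ coincides with that of $\beta^{*}$.

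To transfer the Frobenius-norm conclusion of Theorem~\ref{thmmat1} to $\tilde{\beta}_{T_1}$, we write
\begin{align*}
\|\tilde{\beta}_{T_1} - \mu^{*}\beta^{*}\|_{F}^2 \le 2\,\|\beta_{T_1} - \mu^{*}\beta^{*}\|_{F}^2 + 2\,\|\beta_{T_1} - \tilde{\beta}_{T_1}\|_{F}^2 .
\end{align*}
The first term is bounded by $rd\log d / n$ by Theorem~\ref{thmmat1}. For the second, $\beta_{T_1} - \tilde{\beta}_{T_1}$ collects precisely those eigenpairs of $\beta_{T_1}$ whose eigenvalues are below $\lambda$; by the argument above these can only correspond to noise directions or (at most) to weak signals $i\in R_1$, each carrying eigenvalue magnitude $O(\sqrt{d\log d/n})$. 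Since there are at most $d$ noise directions but their contribution is bounded entry-wise via $\|\beta_{T_1}^{\mathsf{noi}}\|_{F}^2$ (already controlled at rate $rd\log d/n$ in the Theorem~\ref{thmmat1} analysis), and the at most $r_1$ weak-signal contributions together add $O(r_1\cdot d\log d/n)$, the truncation error is absorbed into the $rd\log d/n$ rate.

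The main obstacle, and the step that will need the most care, is establishing the uniform operator-norm bound $\|\beta_{T_1}^{\mathsf{noi}}\|_{\oper} \le \lambda$ (and in particular $\lesssim \alpha$ at the lower end of the allowed range). The coordinate-wise version of this bound is what drives Theorem~\ref{signconsist}; in the matrix setting one needs the analogous statement for eigenvalues in the subspace orthogonal to $\beta^{*}$, which requires tracking the matrix gradient descent dynamics \eqref{gradupw}--\eqref{gradupv} restricted to this orthogonal subspace and showing that its spectral norm grows only polynomially slowly from the $\alpha\II_{d\times d}$ initialization, over the full length of the stopping-time interval. Once this spectral control is in place (which is essentially contained in the proof of Theorem~\ref{thmmat1} but needs to be stated uniformly in $T_1$ and in the relevant subspace), the remaining steps are the straightforward Weyl and Frobenius calculations sketched above.
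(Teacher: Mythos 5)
Your plan correctly identifies the three ingredients (small noise eigenvalues, large strong-signal eigenvalues, modest weak-signal eigenvalues) and the conclusion does indeed rest on exactly these. However, you reach for a decomposition of $\beta_{T_1}$ into a part in the eigenspace of $\beta^*$ plus an orthogonal remainder, and then propose Weyl's inequality to transfer eigenvalue bounds. That decomposition is not the one the analysis of Theorem~\ref{thmmat1} actually produces, and the gap is not cosmetic: $\beta_{T_1}$ is \emph{not} block-diagonal in the $\beta^*$-eigenbasis $\Pb^*$, so the sum $\beta_{T_1}^{\mathsf{sig}}+\beta_{T_1}^{\mathsf{noi}}$ you write down has cross terms between the two subspaces, and these cross terms are never controlled in your sketch. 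You would need an explicit Davis--Kahan-type argument for the alignment of the top-$r$ eigenspace of $\beta_{T_1}$ with that of $\beta^*$ before your Weyl step can be run, and that step is left implicit.

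The paper sidesteps this entirely by exploiting the structural fact underlying Lemmas~\ref{matrixerror}--\ref{matrixsignal_w}: after conjugating by $\Qb^*$ (the eigenbasis of the data matrix $\Mb^*=\tfrac{1}{2n}\sum_i y_i(\Xb_i+\Xb_i^\top)$, not of $\beta^*$), the iterates $\Wb_{1,t},\Vb_{1,t}$ are diagonal for every $t$, since they start at $\alpha\II$ and the update rule preserves diagonality. Hence the eigenvectors of $\beta_{T_1}$ are exactly the columns of $\Qb^*$ for every $T_1$, and the eigenvalues of $\beta_{T_1}$ are literally the diagonal entries of $\beta_{1,T_1}$. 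The three lemmas then cleanly partition these diagonal entries into error indices ($\le O(\alpha)$ throughout the interval), strong indices (which have converged to within $O(\sqrt{d\log d/n})$ of $\mu^* r_i^*$, hence exceed $(C_{ms}|\mu^*|-2M_m)\sqrt{d\log d/n}$), and weak indices ($\lesssim \sqrt{d\log d/n}$). Thresholding at any $\lambda$ in the stated range therefore kills precisely the error directions and keeps all strong ones — no perturbation inequality required. The rank statement and the preservation of the Frobenius rate then follow immediately, which is why the paper's proof is a one-line reduction to the three lemmas. In short: your high-level reading of what needs to hold is correct, but you should replace your $\beta^*$-eigenspace decomposition by the $\Mb^*$-eigenbasis diagonalization already established in \S\ref{proofmat}, after which the Weyl step becomes superfluous and the cross-term issue disappears.
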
  }



Furthermore,
  our method extends the existing works that focus on designing algorithms and studying implicit regularization phenomenon in noiseless linear matrix sensing  models with positive semidefinite signal matrices \citep{gunasekar2017implicit,Li2017AlgorithmicRI,arora2019implicit,gidel2019implicit}.
  Specifically,
  we allow  a more general class of (noisy) models and symmetric signal matrices. \revise{ Compared with \cite{Li2017AlgorithmicRI}, our methodology possesses several strengths, which include achieving low sample complexity ($\tilde{\cO}(rd)$ instead of $\tilde{\cO}(r^2d)$), allowing weak signals ($\min_{i\in R}|r_i^{*}|\gtrsim \cO((1/n)^{1/2})$ instead of $\min_{i\in R}|r_i^{*}|\gtrsim \cO((1/n)^{1/6})$), getting tighter statistical rate under noisy models ($\tilde{\cO}(dr/n)$ instead of $\tilde{\cO}(\kappa rd/n)$), and applying to a more general class of noisy statistical models. These strengths are achieved by the use of score transformation together with a refined trajectory analysis, which involves studying the dynamics of eigenvalues inside the strong signal set elementwisely with multiple stages instead of only studying the dynamics of the minimum eigenvalue with two stages.}

The way of choosing stopping time $T_1$ in the case of matrix SIM is almost the same with our method in \S\ref{stoptime}.
The only difference between them is that here we replace $\xb^\top\beta^{*}$ by {$ \tr(\Xb^\top\beta^{*})$ }
 Indeed, as we  assume  $\|\Sigma^{1/2}\beta^{*}\|_2=1$ in vector SIM and $\|\beta^{*}\|_F=1$ in matrix version for model identifiability, both $\xb^\top\beta_t$ and $\tr(\Xb^\top\beta_t)$ follow the standard normal distribution. Thus,  our results on the prediction risk in \S\ref{predriskvec} can be applied here directly.

\subsection{General Design}\label{secgenmat}
In the rest of this section, we focus on the
 low rank matrix SIM beyond Gaussian covariates.
 Hereafter,  we assume the entries of  $\Xb$ are i.i.d. random variables with a  known density function $p_0 \colon \RR\rightarrow \RR$.
 Recall that, according to the remarks following Definition \ref{defiscore},  the   score function $S(\Xb) \in \RR^{d\times d}$   is defined as
\begin{align*}
S(\mathbf{X})_{j,k}:=S(\mathbf{X}_{j,k})={-p_0'(\mathbf{X}_{j,k})}/{p_0(\mathbf{X}_{j,k})},
\end{align*}
where $  S(\Xb)_{j,k}$ and $\Xb_{j,k} $ are   the
$(j,k)$-th entries  of $S(\Xb)$ and $\Xb$  for all $j,k \in [d]$.
However, similar to the results in \S\ref{secgenvec},
the entries of $S (\Xb)$ can have heavy-tailed distributions and thus $n^{-1} \sum_{i=1}^n y_i \cdot S(\Xb_i)$ may not converge its expectation $\EE[Y\cdot S(\Xb)]$ efficiently in terms of spectral norm.
Here $\Xb_i$ is the $i$-th observation of the covariate $\Xb$.
To tackle such a challenge,
we employ   a shrinkage approach
   \citep{Catoni12,fan2016shrinkage,Minsker2018} to construct a robust estimator of  $\EE[Y\cdot S(\Xb)]$.
Specifically, we let
\begin{equation*}  
\phi(x) =\left\{
\begin{aligned}
\log(1-x+x^2/2),  &\qquad  x\le 0, \\
\log(1+x+x^2/2), &  \qquad x> 0
\end{aligned}
\right.,
\end{equation*}
which is approximately $x$ when $x$ is small and grows at logarithmic rate for large $x$.  The rescaled version $\lambda^{-1} \phi(\lambda x)$ for $\lambda \to 0$ behaves like a soft-winsorizing function, which has been widely used in statistical mean estimation with finite bounded moments \citep{Catoni12,brownlees2015}.
  For any matrix $\mathbf{X}\in \mathbb{R}^{d\times d}$, we apply  spectral decomposition to  its Hermitian dilation and obtain
\begin{align*}\mathbf{X}^{*}:=
\left[
\begin{array}{cc}
\mathbf{0}& \mathbf{X} \\
\mathbf{X}^\mathsf{T}& \mathbf{0}
\end{array}
\right]=\mathbf{Q}\mathbf{\Sigma}^{*}\mathbf{Q}^\mathsf{T},
\end{align*}
where $\Sigma^* \in \RR^{2d\times 2d}$ is a diagonal matrix.
Based on such a decomposition, we define  $\tilde{\Xb}=\Qb\phi(\mathbf{\Sigma^{*}})\Qb^\mathsf{T}$,
where $\phi $ applies elementwisely to $\Sigma^*$.
Then we write $\tilde \Xb$ as a block matrix as
\begin{align*}\tilde{\Xb}:=
\left[
\begin{array}{cc}
\tilde{\Xb}_{11}& \tilde{\Xb}_{12} \\
\tilde{\Xb}_{21}& \tilde{\Xb}_{22}
\end{array}
\right],
\end{align*}
where each block of $\tilde \Xb $ is in $\RR^{d\times d}$.
We further define a mapping $\phi_1 \colon \RR^{d\times d} \rightarrow \RR^{d\times d}$ by letting  $\phi_1(\Xb):=\tilde{\Xb}_{12}$,  which is a regularized version of $\Xb$. Given data $y_1,\Xb_1$, we finally define $\mathcal{H}(\cdot)$ as
\begin{align} \label{eq:thresholding_oper}
\mathcal{H}(y_1S(\Xb_1),\kappa):=1/\kappa\cdot\phi_1(\kappa y_1\cdot S(\Xb_1)),
\qquad  \forall \kappa >0,
\end{align}
where $\kappa  $ is a thresholding parameter, converging to zero.  This method is in a similar spirit of robustifying the singular value of $\Xb$.  Based on the operator $\cH$ defined in \eqref{eq:thresholding_oper},
we define a loss function $L(\Wb, \Vb)$ as
\begin{align}\label{lossheavymat}
L(\Wb,\Vb):=\langle \Wb\Wb^\top-\Vb\Vb^\top,\Wb\Wb^\top-\Vb\Vb^\top \rangle-\frac{2}{n}\sum_{i=1}^{n} \bigl \langle \Wb\Wb^\top-\Vb\Vb^\top,\mathcal{H}(y_i S(\Xb_i),\kappa))  \bigr \rangle.
\end{align}
After over-parameterizing $\beta$ as  $\Wb\Wb^\top-\Vb\Vb^\top$,
we propose to construct an estimator of $\beta^*$
by applying
gradient descent on the following loss function   in \eqref{lossheavymat} with respect to $\Wb,\Vb$.
See
Algorithm~\ref{alg6} in \S\ref{alggenmat} for the details of the algorithm.

In the following, we present the statistical rates of convergence for the obtained estimator.
We first introduce the  assumption on $Y$ and $p_0$.

\begin{assumption} \label{genass1mat}
 We assume that both the  response variable $Y$ and entries of $S(\mathbf{X})$ have bounded fourth moments.
	 Specifically, there exists an absolute  constant $M$ such that
	\begin{align*}
	\mathbb{E}\left[Y^4 \right]\le M, \qquad \mathbb{E}\left[S(\mathbf{X})_{i,j}^4\right]\leq  M,\quad \forall\,(i,j)\in[d]\times [d].
	\end{align*}
	Moreover, we assume that $\mu^* = \EE[ f'( \langle \Xb, \beta^* \rangle )]$ is a nonzero constant such that
  $|\mu^{*}|/M=\Theta(1)$.
\end{assumption}

Next, we  present the main theorem for low rank matrix SIM.
  \begin{theorem}\label{genthmmat1}
	In Algorithm \ref{alg6}, we set  parameter $\kappa$ in \eqref{eq:thresholding_oper} as $\kappa=\sqrt{\log(4d)/(nd\cdot M)}$ and let the initialization parameter $\alpha$ and the stepsize $\eta$ satisfy $0<\alpha\le M_{mg}^2/d$ and {$0<\eta\le1/[12(|\mu^{*}|+M_{mg})]$}, where $M_{mg}$ is a constant proportional to $M$.
	Then,
	under Assumption \ref{genass1mat},  there exist absolute constants $a_7,\,a_8$ such that,
	with probability at least $1 - (4d)^{-2}$, we have
		\begin{align*}
	\big\|\beta_{T_1}-\mu^{*}\beta^{*}\big\|^2_{F}\lesssim \frac{r d\log d}{n},
	\end{align*}	
	 for all  $T_1\in[a_7{\log(1/\alpha)}/ ( \eta(|\mu^{*}|r_m-M_{mg}\sqrt{d\log d/n}) ) ,a_8\log(1/\alpha)\sqrt{n/(d\log d)}/(\eta M_{mg})]$,
	 Moreover, for the normalized   iterate $\beta_{t} / \| \beta_{t}  \|_{F}$, we have
	\begin{align*}
	\left\|\frac{\beta_{T_1}}{\|\beta_{T_1}\|_{F}}-\frac{\mu^{*}\beta^{*}}{|\mu^{*}|}\right\|^2_{F}\lesssim  \frac{r d\log d}{n}.
	\end{align*}
\end{theorem}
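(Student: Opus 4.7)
The proof closely parallels that of Theorem \ref{thmmat1}, with the main additional ingredient being a spectral-norm concentration bound for the robust surrogate $\widehat M := n^{-1}\sum_{i=1}^n \mathcal{H}(y_i S(\Xb_i),\kappa)$ around $\mu^{*}\beta^{*}$. The plan is to first establish this concentration, then recycle the entrywise/eigenvalue trajectory analysis developed in the Gaussian-design case after substituting $\widehat M$ for the empirical mean $n^{-1}\sum_i y_i\Xb_i$.

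First I would prove that, under Assumption \ref{genass1mat}, with the choice $\kappa=\sqrt{\log(4d)/(ndM)}$ we have
\begin{align*}
\bigl\|\widehat M-\mathbb{E}[Y\cdot S(\Xb)]\bigr\|_{\oper}=\bigl\|\widehat M-\mu^{*}\beta^{*}\bigr\|_{\oper}\lesssim \sqrt{d\log d/n}
\end{align*}
with probability at least $1-(4d)^{-2}$. This is the matrix analogue of the elementwise truncation step in the proof of Theorem \ref{thmvec2}, and follows from the Minsker-type robust matrix concentration argument for Hermitian dilations \citep{Minsker2018,fan2016shrinkage}: the operator $\mathcal{H}(\cdot,\kappa)$ in \eqref{eq:thresholding_oper} acts as a soft singular-value Winsorization, so bounding its second moment using $\mathbb{E}[Y^2 S(\Xb)S(\Xb)^\top]\preccurlyeq dM\cdot \II_{d\times d}$ (which follows from the fourth-moment conditions via Cauchy--Schwarz) and applying the matrix Bernstein-type inequality for $\phi$-transformed estimators yields the stated spectral-norm rate.

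Next, conditional on this concentration event, I would run the same trajectory analysis as in the proof of Theorem \ref{thmmat1}. Specifically, diagonalize $\widehat M=\Qb\widehat\Sigma\Qb^\top$, set $\Wb_{1,t}=\Qb^\top \Wb_t\Qb$, $\Vb_{1,t}=\Qb^\top \Vb_t\Qb$, so that the updates \eqref{gradupw}--\eqref{gradupv} decouple on the diagonal when $\Wb_{1,0}=\Vb_{1,0}=\alpha \II_{d\times d}$, making the iterates diagonal for all $t$. Let $\epsilon=\cO(\sqrt{d\log d/n})$ and $M_{mg}$ absorb the constants from the concentration bound. Then for each eigenvalue index $i$:
\begin{itemize}
\item For $i\in R_0$ (strong eigenvalues of $\beta^*$), a multi-stage induction shows that $|r_{t,i}|$ grows exponentially by factor $(1+\eta|\mu^{*}r_i^{*}|/2^k)$ through stages $k=1,2,\dots,\log_2(|r_i^{*}|/\epsilon)$, reaching the $\epsilon$-neighborhood of $\mu^{*}r_i^{*}$ after $\cO(\log(1/\alpha)/(\eta(|\mu^{*}|r_m-M_{mg}\sqrt{d\log d/n})))$ iterations, giving the lower boundary of $T_1$.
\item For $i\in R_1$ (weak eigenvalues) and $i\notin R$ (noise eigenvalues), the corresponding diagonal entries stay bounded by $\cO(\alpha)$, respectively $\cO(\epsilon^2/M_{mg})$, until $t=\cO(\log(1/\alpha)\sqrt{n/(d\log d)}/(\eta M_{mg}))$, providing the upper boundary of $T_1$.
\end{itemize}
Within this interval, summing the entrywise squared errors gives
\begin{align*}
\|\beta_{T_1}-\mu^{*}\beta^{*}\|_F^2 \lesssim r_0\epsilon^2 + r_1\cdot C_{ms}^2\, d\log d/n + \alpha^4 d \lesssim rd\log d/n,
\end{align*}
which is the first conclusion. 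The normalized-iterate bound follows by combining $\|\beta_{T_1}-\mu^{*}\beta^{*}\|_F^2\lesssim rd\log d/n$ with $\|\mu^{*}\beta^{*}\|_F=|\mu^{*}|$ and applying the standard perturbation inequality $\|\beta/\|\beta\|_F - \mu^{*}\beta^{*}/|\mu^{*}|\|_F \le 2\|\beta-\mu^{*}\beta^{*}\|_F/|\mu^{*}|$.

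The main obstacle, and the only truly new component relative to Theorem \ref{thmmat1}, is the robust spectral concentration in the first step: the heavy tails of $Y\cdot S(\Xb)$ preclude using a plain sample mean and the Hermitian-dilation plus $\phi$-transform device must be calibrated carefully so that the bias introduced by shrinkage is of the same order as (and not larger than) the stochastic fluctuation $\sqrt{d\log d/n}$. Once this bound is in place, the remaining trajectory analysis is structurally identical to the Gaussian case and requires no new ideas beyond keeping track of the constants $M_{mg}$ introduced by the robust estimator.
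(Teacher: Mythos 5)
Your proposal is correct and follows essentially the same route as the paper: establish a spectral-norm concentration bound for the $\phi$-transformed Hermitian-dilation estimator $\widehat M=n^{-1}\sum_i \mathcal{H}(y_iS(\Xb_i),\kappa)$ around $\mu^*\beta^*$ (the paper's Lemma \ref{matrixconc}, via Minsker's inequality with $\|\mathbb{E}[Y^2 S(\Xb)S(\Xb)^\top]\|_{\oper}\le dM$ and $\kappa=\sqrt{\log(4d)/(ndM)}$), then recycle verbatim the three trajectory lemmas (error, strong-eigenvalue, weak-eigenvalue dynamics) from the Gaussian-design proof with $M_{mg}$ in place of $M_m$. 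The only cosmetic slip is $\alpha^4 d$ in your residual term, which should read $\alpha^2 d$ (since $\|\beta_t\mathbb{I}_{R^c}\|_{\oper}\lesssim\alpha$ by the error lemma); it is negligible under $\alpha\le M_{mg}^2/d$ either way, so the conclusion is unaffected.
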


For low rank matrix SIM, when the hyperparameters of the gradient descent algorithm are properly chosen,
we also capture the implicit regularization phenomenon by applying a simple optimization procedure to  over-parameterized loss function with heavy-tailed measurements.
Here, applying the thresholding operator $\cH$ in \eqref{eq:thresholding_oper} can also be viewed as a data pre-processing step, which arises due to handling heavy-tailed observations.   Note that the way of choosing $C_{ms}$ here is similar with the way in Theorem \ref{thmmat1}, in order to ensure the convergence rate and existence of a time interval, so we omit the details.
Note that the $\ell_2$-statistical rate given in Theorem \ref{genthmmat1} are minimax optimal up to a logarithmic term \citep{rohde2011}.
Similar results were also obtained by \cite{plan2016generalized,yang2017high,goldstein2018structured,na2019high} via adding explicit nuclear norm regularization.
Thus, in terms of statistical recovery,
when employing  the thresholding in \eqref{eq:thresholding_oper} and over-parameterization, gradient descent enforces implicit regularization that has the same effect as the nuclear norm penalty. \revise{In addition, in terms of the rank consistency result for the heavy-tailed case, if we also let $\tilde{\beta}_{T_1}=\sum_{i=1}^{d}\ub_i\ub_i^\top\lambda_i(\beta_{T_1})\cdot \mathbb{I}_{\{|\lambda_i(\beta_{T_1})|\ge \lambda\}},$ 
 then for all $\lambda\in [\alpha,(C_s|\mu^{*}|-2M_{mg})\sqrt{d\log d/n}],$ we achieve the same results with Theorem \ref{rankconsist}. }







\section{Numerical Experiments}
\label{sectsimu}
In this section, we  illustrate the performance of the proposed estimator in different settings via simulation studies. We let $\epsilon \sim N(0,0.5^2)$ in our models defined in \eqref{vecSIM} and \eqref{matrixSIM} and choose the link function to be one of    $\{ f_j \}_{j=1}^8$, whose details are given in    Figures \ref{functions1} and \ref{functions2}.
 \begin{figure}[H]
	\centering
	\begin{tabular}{cccc}
		\hskip-30pt\includegraphics[width=0.25\textwidth]{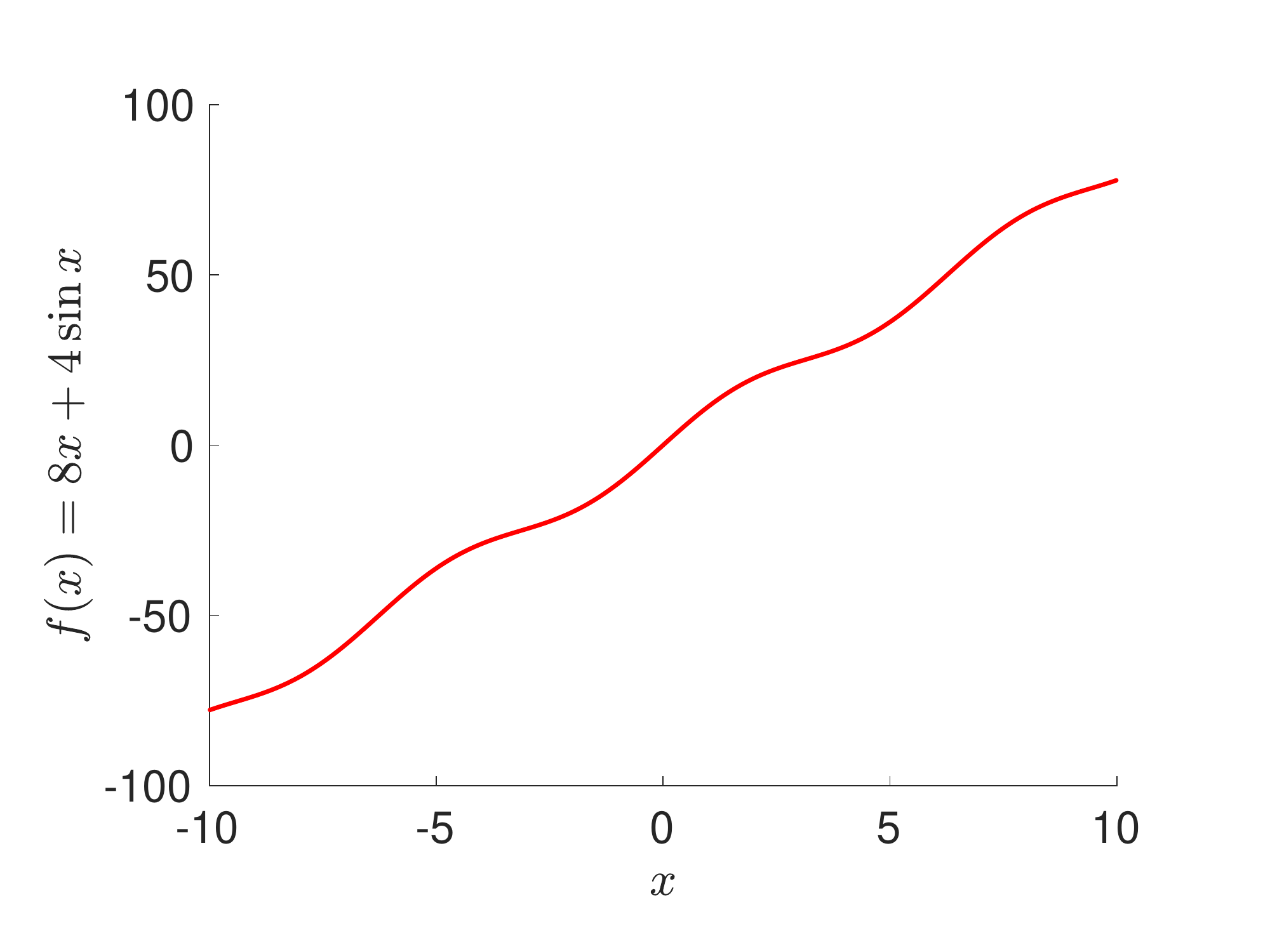}
		&
		\hskip-6pt\includegraphics[width=0.25\textwidth]{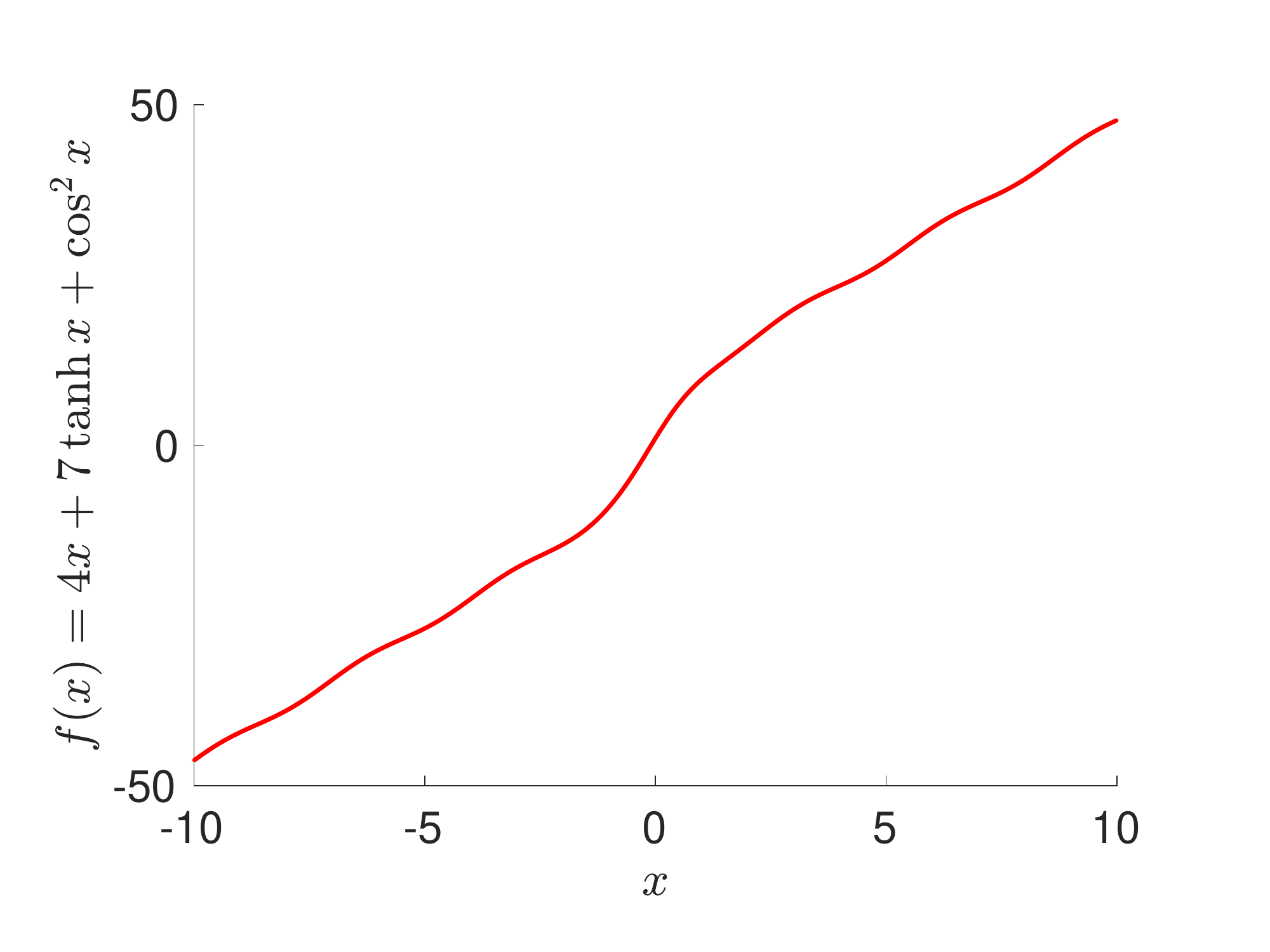}
		&
		\hskip-5pt\includegraphics[width=0.25\textwidth]{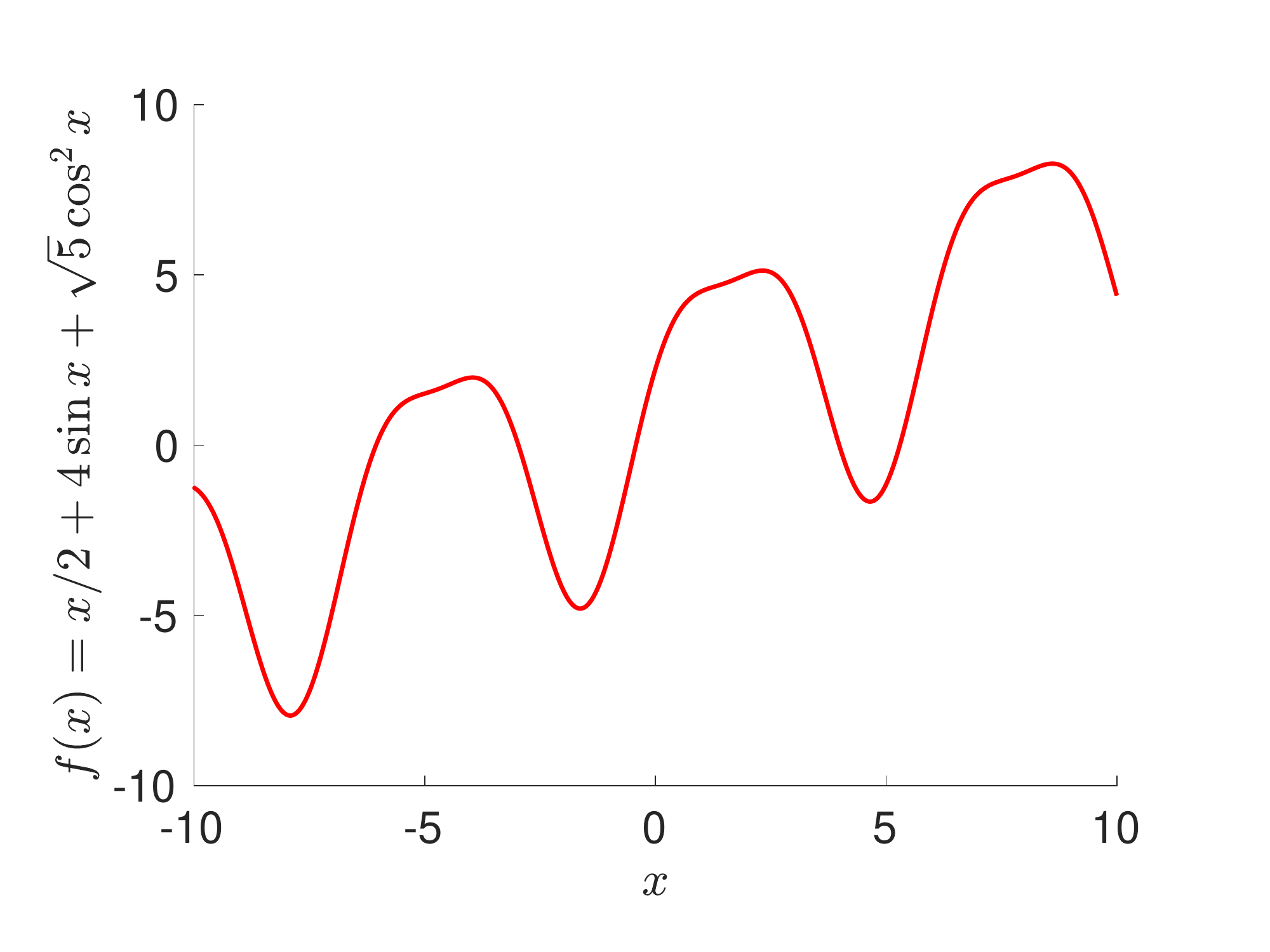}
		&
		\hskip-5pt\includegraphics[width=0.25\textwidth]{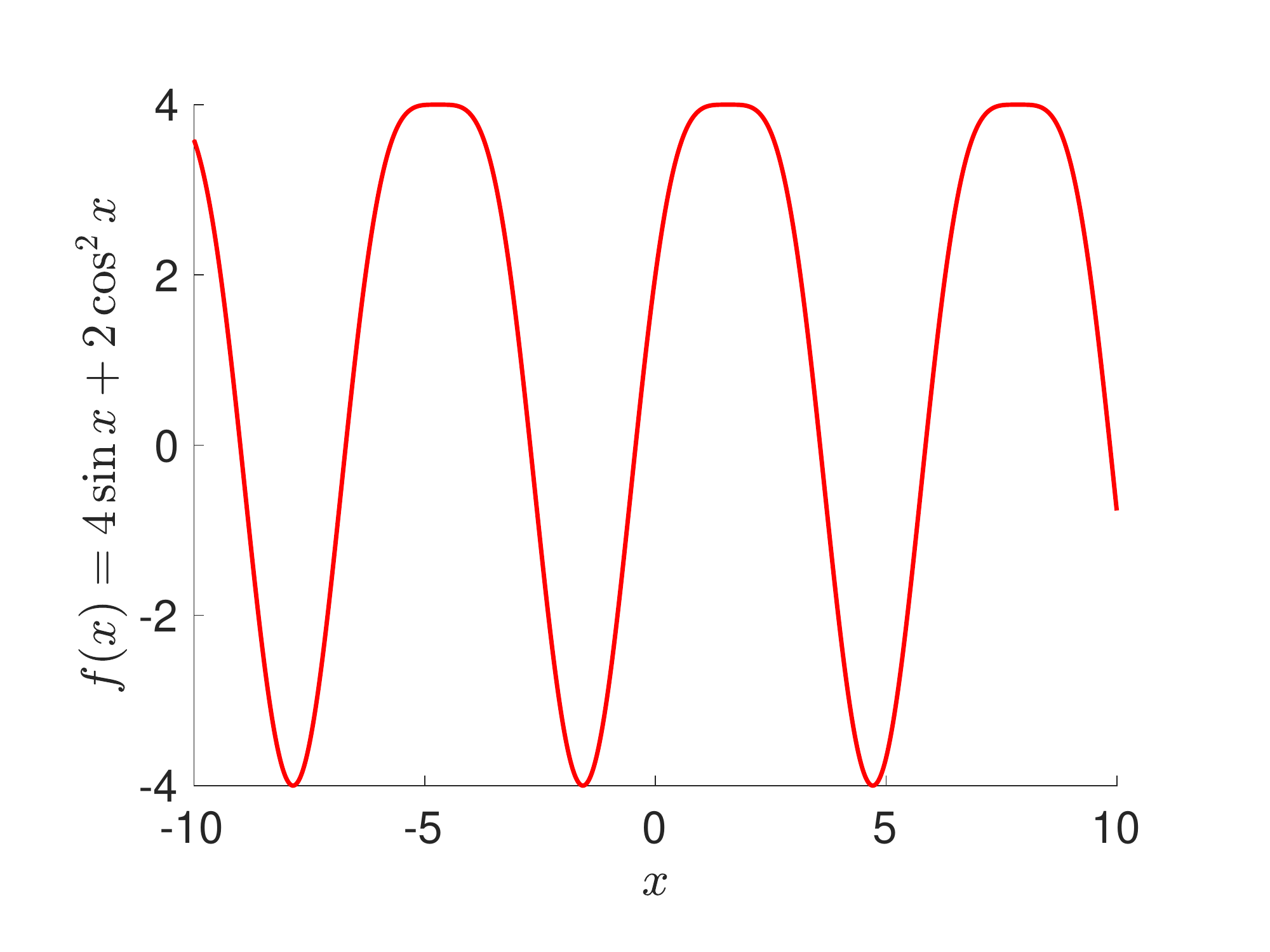}\\
		(a)  & (b) &(c)
	\end{tabular}\\
	\caption{Plot of link functions (a): $f_1(x)=8x+4\sin x$, (b): $f_2(x)=4x+7\tanh x+\cos^2 x$, (c): $f_3(x)=x/2+4\sin x+\sqrt{5}\cos^2 x$ and (d): $f_4(x)=4\sin x+2\cos^2 x$ }
	\label{functions1}
\end{figure}
 \begin{figure}[H]
	\centering
	\begin{tabular}{cccc}
		\hskip-30pt\includegraphics[width=0.25\textwidth]{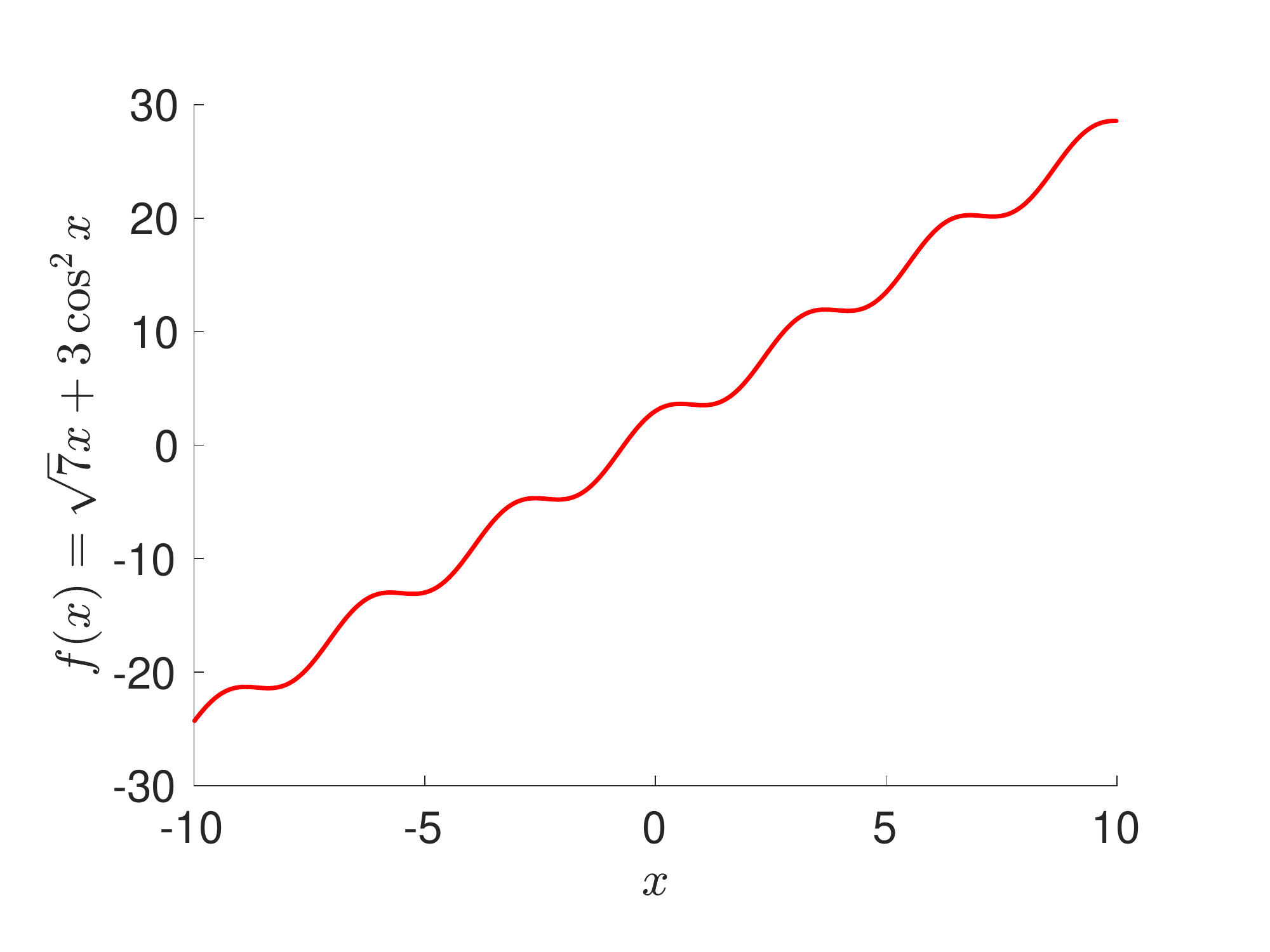}
			&\hskip-6pt\includegraphics[width=0.25\textwidth]{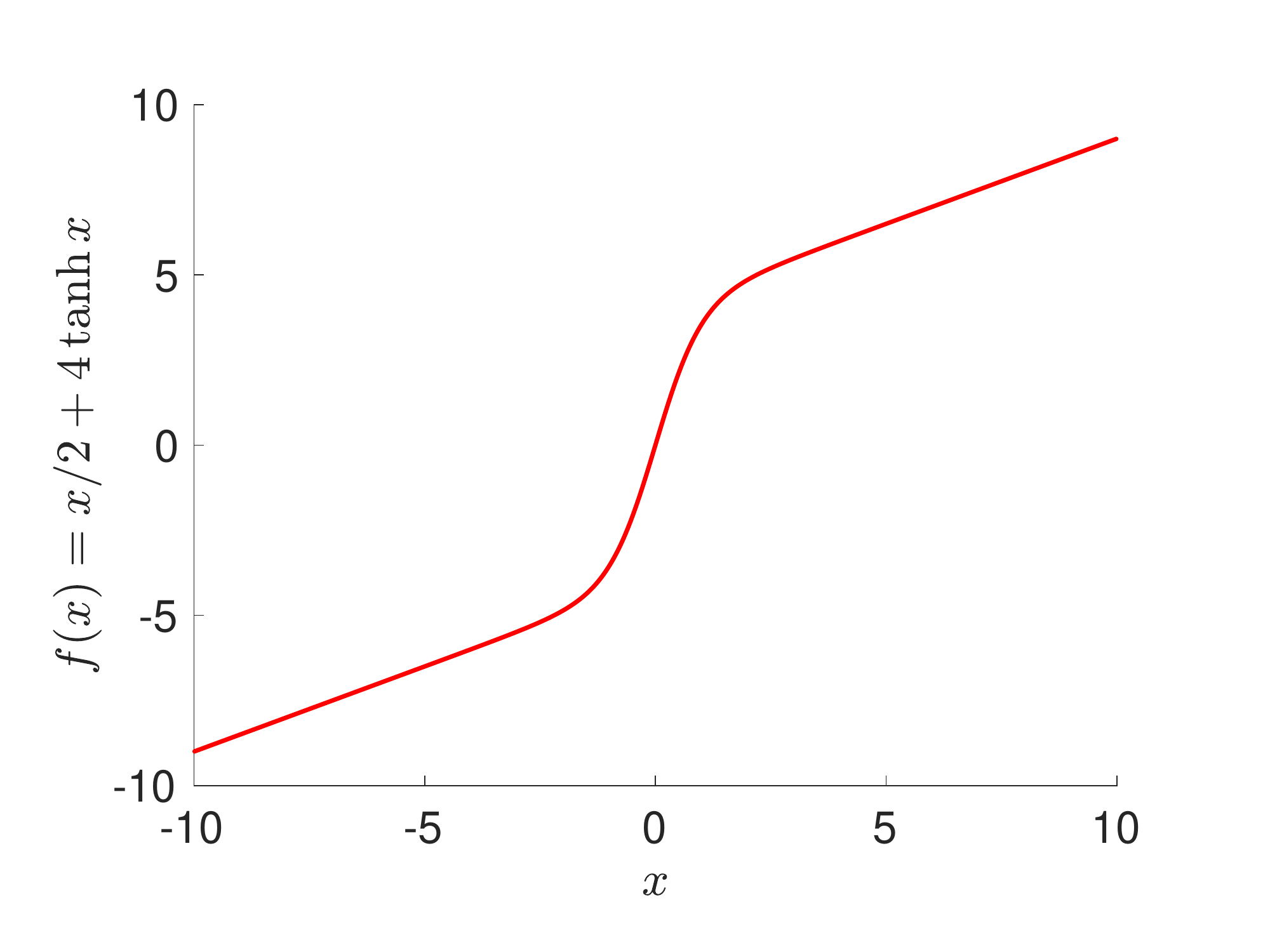}
	&	\hskip-5pt\includegraphics[width=0.25\textwidth]{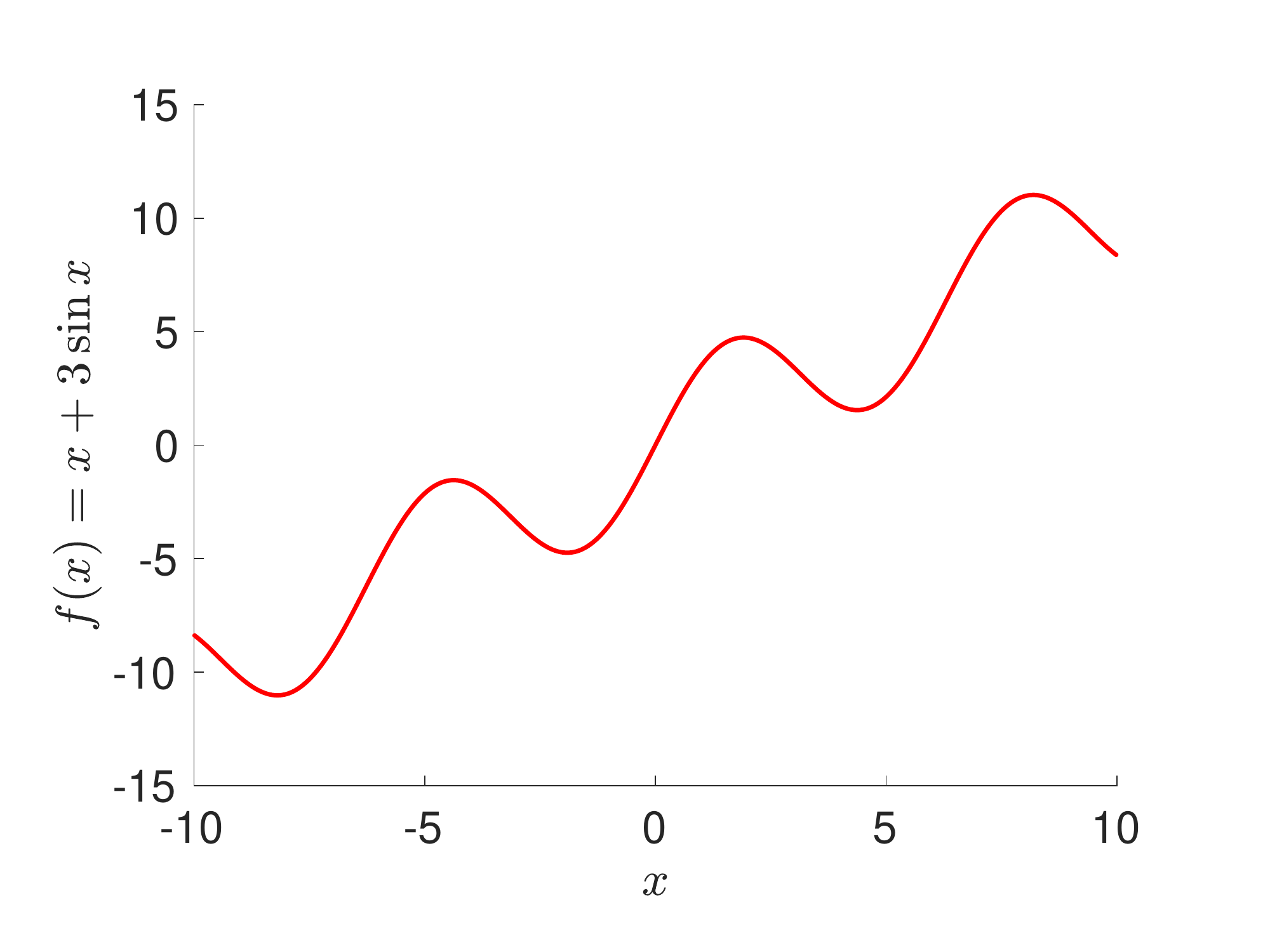}
		&\hskip-5pt\includegraphics[width=0.25\textwidth]{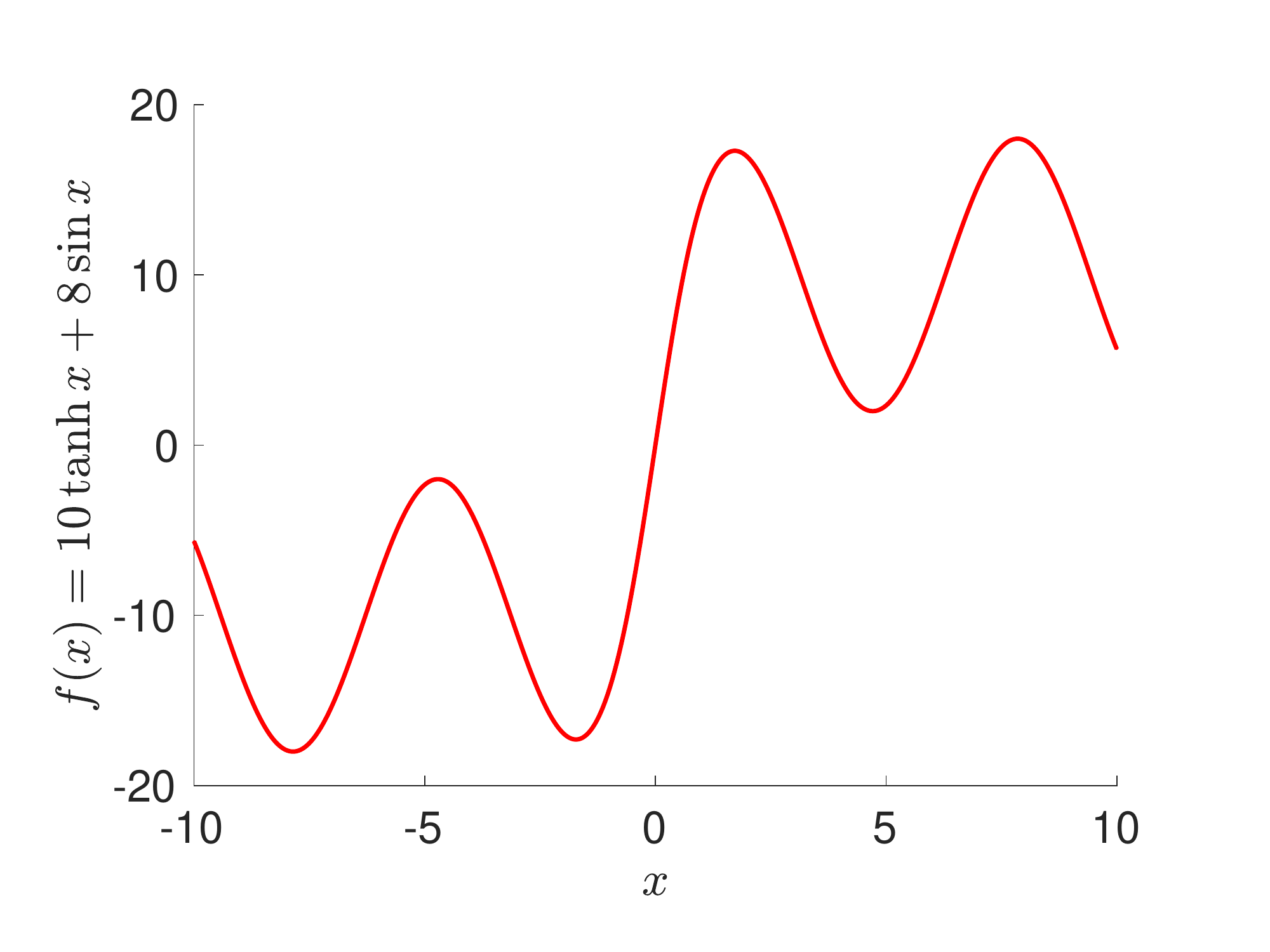}\\
		(a)  & (b) &(c)  &(d)
	\end{tabular}\\
	\caption{Plot of link functions  (a): $f_5(x)=\sqrt{7}x+3\cos^2 x$,  (b): $f_6(x)=x/2+4\tanh x$, (c): $f_7(x)=x+3\sin x$ and (d): $f_8(x)=10\tanh x+8\sin x$. }
	\label{functions2}
\end{figure}

To measure the estimation accuracy, we use $\text{dist}(\hat\beta,\beta^{*})=\min\{\|\hat\beta/\|\hat\beta\|_{F}-\beta^{*}\|_{F}, \|\hat\beta/\|\hat\beta\|_{F}+\beta^{*}\|_{F}\}$, where the subscript $F$ stands for Frobenius norm, which reduces to the Euclidean norm in the vector case.  The number of simulations is 100. 

\subsection{Simulations on Sparse Vectors}\label{simu_vec}
Recall that Theorems \ref{thmvec1} and \ref{thmvec2} establish the $\sqrt{s \log p/n}$ statistical rate of convergence in the  $\ell_2$-norm.
To vary this, we fix  $p=2000$, $s$ to be one of $ \{8,10,12\}$, and use the value of $\sqrt{ s \log p /n}$ to determine $n$. In addition,
we choose the support of $\beta^{*}$ randomly among all subsets of $\{ 1, \ldots, p\}$ with cardinality $s$.  For each $j\in\text{supp}(\beta^{*})$, we set $\beta_j^{*}=1/\sqrt{s}\cdot\text{Uniform}(\{-1,1\})$.
Besides, we let the entries of the covariate $\xb$ have i.i.d. distributions, which are either the standard Gaussian distribution,  Student's t-distribution with $5$ degrees of freedom, or the Gamma distribution with shape parameter $8$ and scale parameter 0.1.
Based on $\beta^*$, the distribution of $\xb$,  and one of the aforementioned univariate functions $\{ f_j\}_{j=1}^4$, we
generate $n$ i.i.d. samples $\{\xb_i,y_i\}_{i=1}^{n}$ from the vector SIM given in \eqref{vecSIM}. 
As for the  optimization procedure, throughout \S\ref{simu_vec}, we set the initialization parameter $\alpha=10^{-5}$, stepsize $\eta=0.005$ in Algorithms \ref{alg1} and \ref{alg5}.  Our estimator $\hat\beta$ is chosen by $\hat\beta=\argmin_{\beta_t}\text{dist}(\beta_t,\beta^{*}),$ where $\beta_t$ is the  $t$-th iterate of  Algorithm \ref{alg1} and Algorithm \ref{alg5}. The choice of stoping time is ideal but serves purposes.  As shown in our asymptotic results, there is an intervals of sweet stopping time.  By using the data driven choice, we get similar results, but take much longer time.

With the standard Gaussian distributed covariates, we plot the average distance dist$(\hat\beta,\beta^{*})$ against $\sqrt{s\log p/n}$ in Figure \ref{gaussvec} for $f_1$ and $f_2$ respectively, based on $100$ independent trails for each $n$. The results show that the estimation error is bounded effectively by a linear function of signal strength $\sqrt{s\log p/n}$.  Indeed, the linearity holds surprisingly well, which corroborates our theory.

 \begin{figure}[H]
	\centering
	\begin{tabular}{cc}
		\hskip-30pt\includegraphics[width=0.4\textwidth]{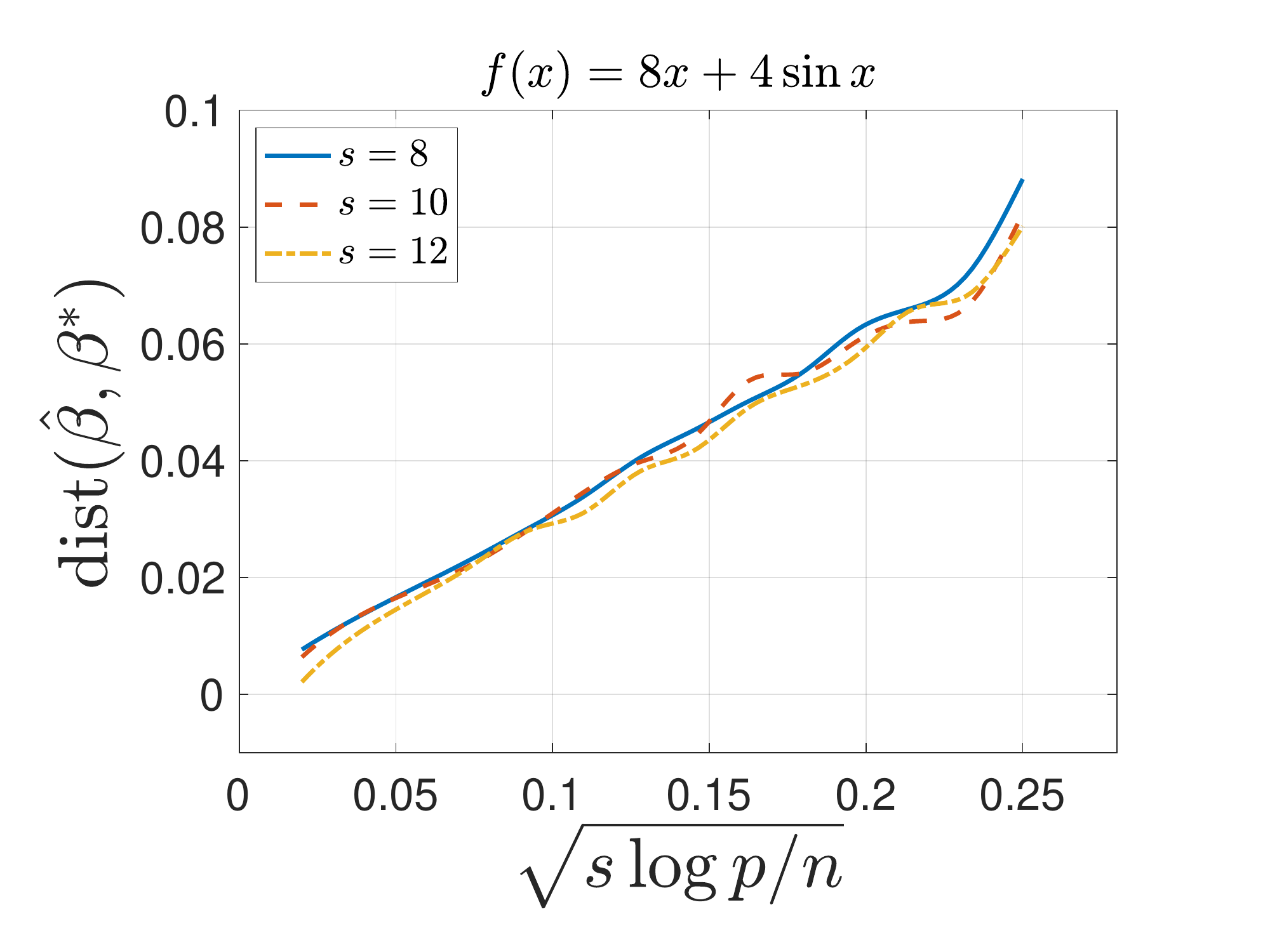}
		&\hskip-6pt\includegraphics[width=0.4\textwidth]{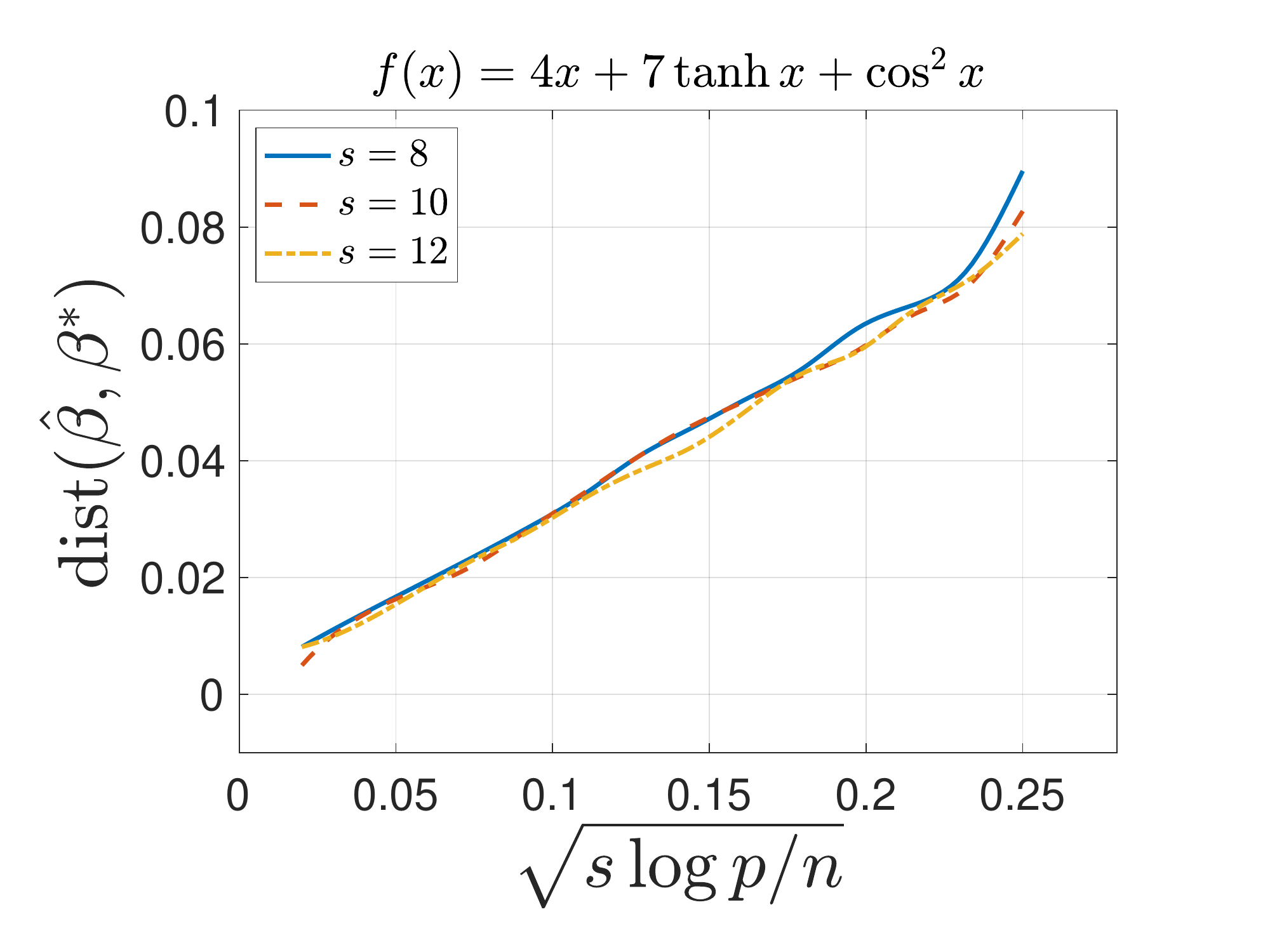}\\
		(a) &(b)
		
			\end{tabular}\\
	\caption{The average $\ell_2$-distances between the true  parameters $\beta^{*}$ and estimated parameters $\hat\beta$ in vector SIM with standard Gaussian distributed covariates and (a) link function $f_1$ and (b) link function $f_2$.}
\label{gaussvec}
\end{figure}
 As for generally distributed covariates, we set $p_0(x)$ given in Definition \ref{vectorSIM} to be one of the following distributions:  (i)  Student's t-distribution with 5 degrees of freedom and (ii) Gamma distribution with shape parameter $8$ and scale parameter 0.1.   The score functions of these two distributions are given by  $S(x)=6x/(5+x^2)$ and $S(x)=10-7/x$, respectively. In addition, the truncating parameter $\tau$ in Algorithm \ref{alg5} is taken as $\tau=2(n/\log p)^{1/4}$. We then plot distance dist$(\hat\beta,\beta^{*})$ against $\sqrt{s\log p/n}$ in Figure \ref{t_gammavec} for link functions $f_3$ and $f_4$ with t$(5)$ and Gamma$(8,0.1)$ distributed covariates respectively, based on 100 independent experiments. It also worths noting that the estimation errors align well with a linear function of $\sqrt{s\log p/n}$.
  \begin{figure}[H]
 	\centering
 	\begin{tabular}{cc}
 		\hskip-30pt\includegraphics[width=0.4\textwidth]{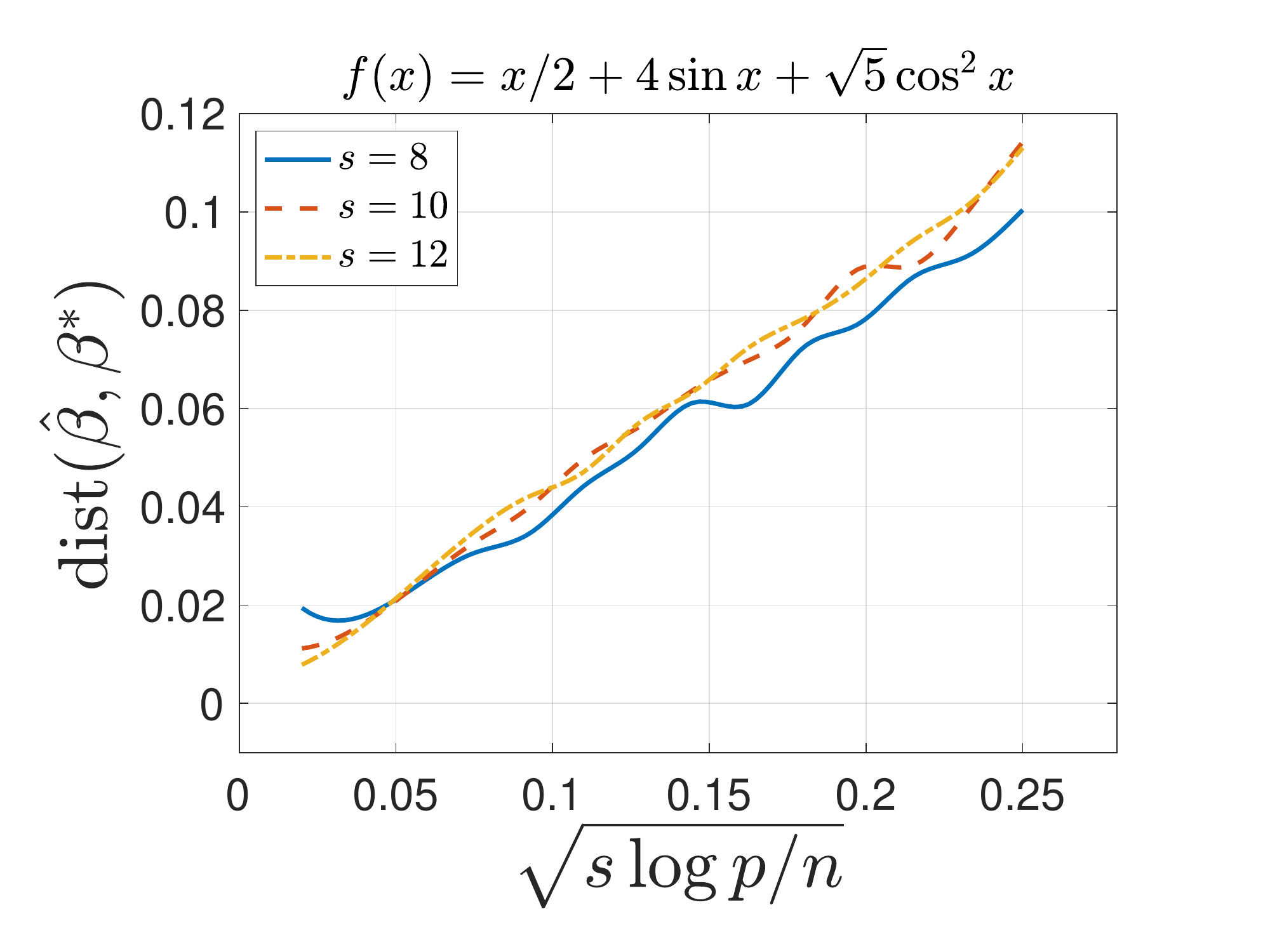}
 		&\hskip-6pt\includegraphics[width=0.4\textwidth]{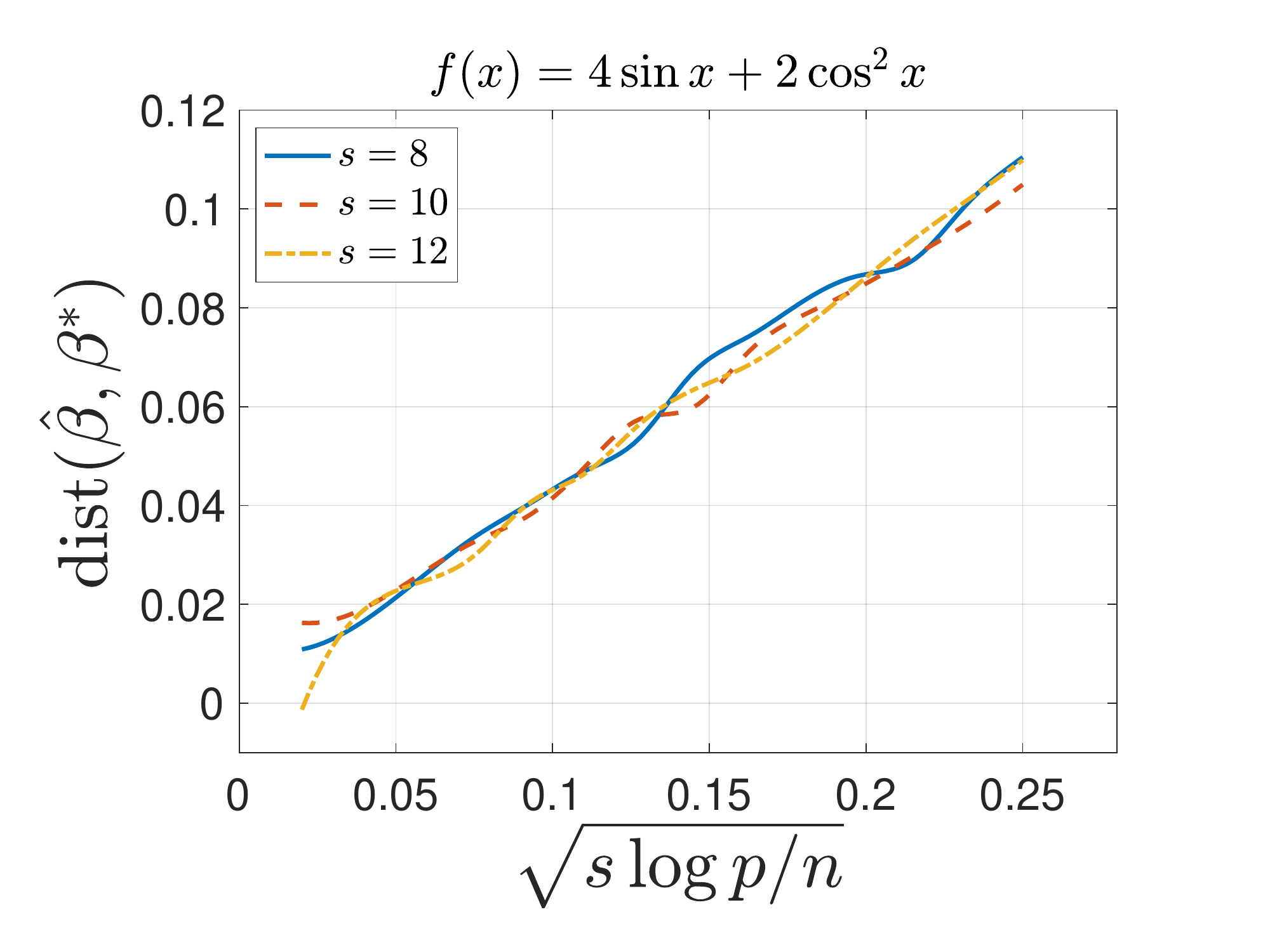}\\
 		(a) &(b)
 	\end{tabular}\\
 	\caption{The averaged $\ell_2$-distances between the true parameter and estimated parameters in vector SIM for
 (a) $t(5)$ distributed covariates with the link function $f_3$ and (b) Gamma$(8,0.1)$ distributed covariates and the link function $f_4$.}
 	\label{t_gammavec}
 \end{figure}

\subsection{Simulations on Low Rank Matrices}\label{simu_mat}
In the scenario of low rank matrix, statistical rate in Frobenius norm is $\sqrt{rd\log d/n}$, according to Theorems \ref{thmmat1} and \ref{genthmmat1}. Throughout \S\ref{simu_mat}, we fix dimension $d=25$, and for each $r\in\{1,3,5\}$, we use $\sqrt{rd\log d/n}$ to determine $n$. The true parameter matrix $\beta^{*}$ is set to be $\Ub \mathbf{S} \Ub^\top$, where $\Ub\in\RR^{d\times d}$ is any random orthogonal matrix and $\mathbf{S}$ is a diagonal matrix with $r$ nonzero entries chosen randomly among the index set  $\{1,\dots, d\}$. Moreover, we set the nonzero diagonal entries of $\mathbf{S}$ as $1/\sqrt{r}\cdot\text{Uniform}(\{-1,1\})$. Besides, we also let every entry of the covariate $\Xb$ have i.i.d. distribution, which is one of the same three distributions in \S\ref{simu_vec}. Finally, we utilize our true parameter $\beta^{*}$, the distribution of $\Xb$ and one of $\{f_j\}_{j=5}^{8}$ to generate $n$ i.i.d. data $\{\Xb_i,y_i\}_{i=1}^{n}$ based on \eqref{matrixSIM}. As for the optimization procedure, throughout \S\ref{simu_mat}, we set the initialization parameter $\alpha=10^{-3}$, stepsize $\eta=0.005$ and implement the Algorithm \ref{alg2} and Algorithm \ref{alg6} for Gaussian and general design respectively. Our estimator $\hat\beta$ is also chosen by $\hat\beta=\argmin_{\beta_t}\text{dist}(\beta_t,\beta^{*})$, where $\beta_t$ is the the $t$-th iterate given  in the Algorithm \ref{alg2} and Algorithm \ref{alg6}.  Again, this is the ideal choice of stopping time, but serves the purpose as the result does not depend very much on the proper choice of stopping time.

With the standard Gaussian distributed covariates, we plot the averaged distance dist$(\hat\beta,\beta^{*})$ against $\sqrt{rd\log d/n}$ in Figure \ref{gaussmat} for $f_5$ and $f_6$ respectively, based on $100$ independent trails for each case. The estimation error again follows linearly on $\sqrt{rd\log d/n}$.   The simulation results are consistent what is predicted by the theory.
  \begin{figure}[H]
	\centering
	\begin{tabular}{cc}
		\hskip-30pt\includegraphics[width=0.40\textwidth]{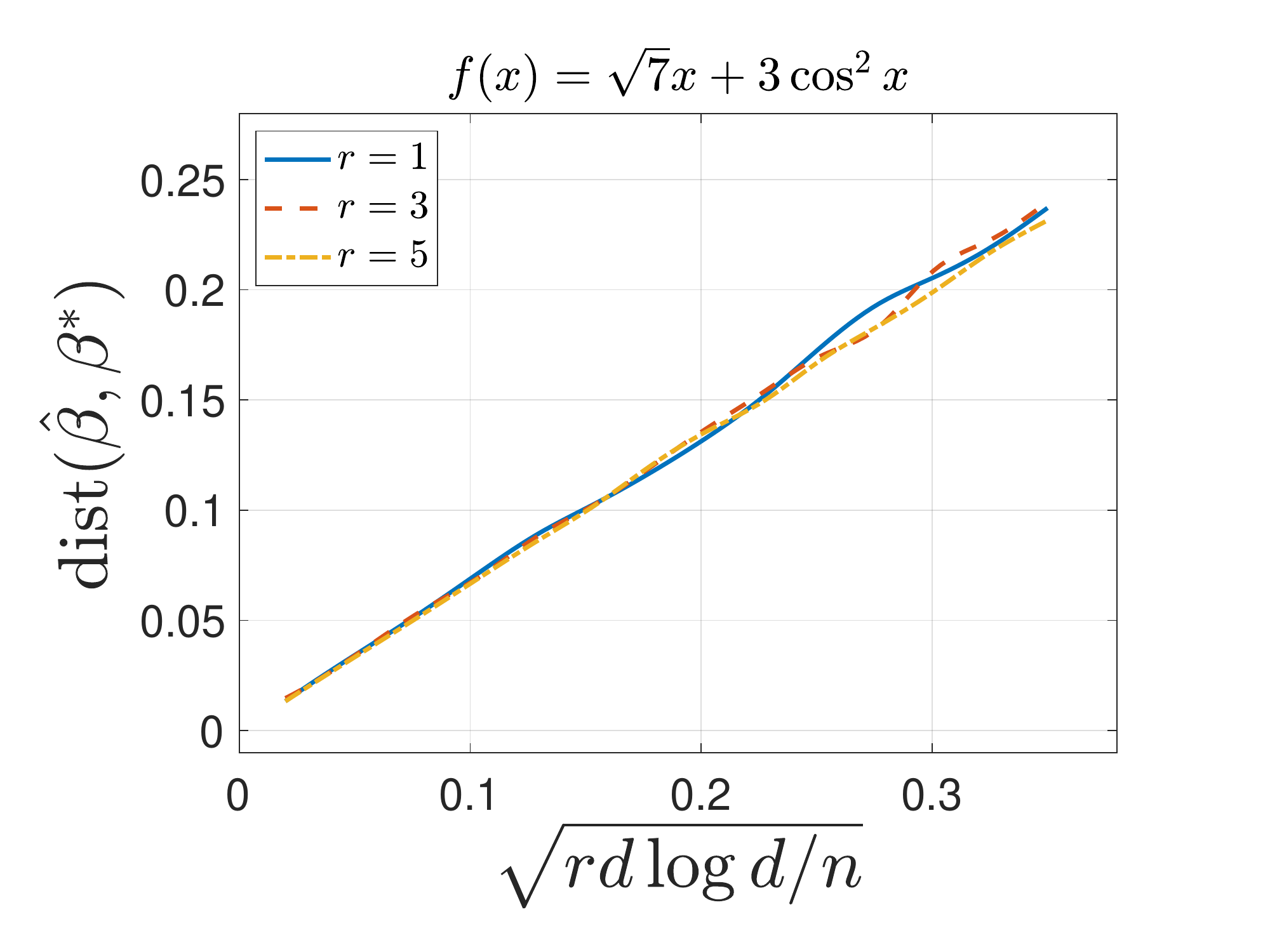}
		&\hskip-6pt\includegraphics[width=0.40\textwidth]{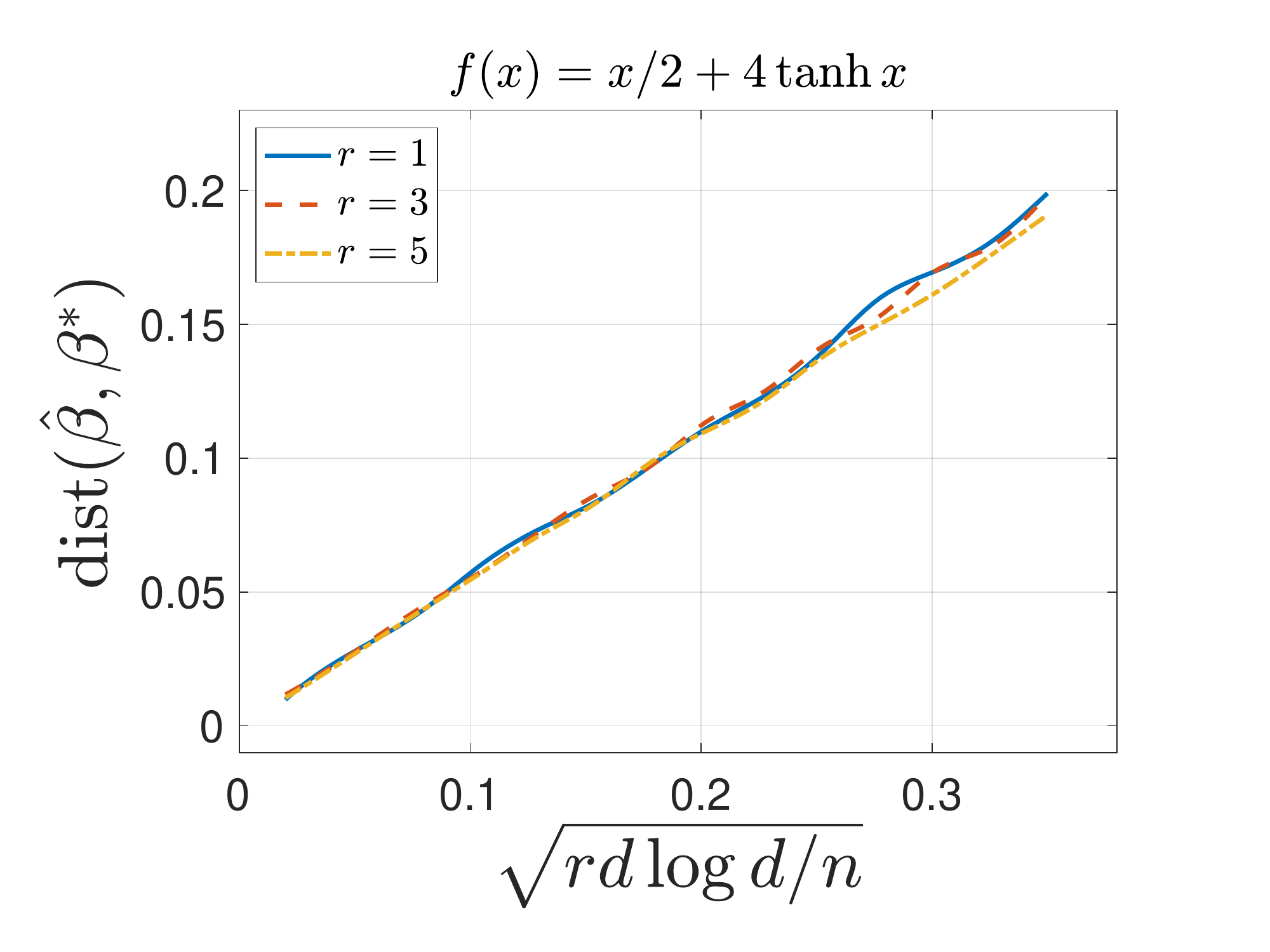}\\
		(a) &(b)
	\end{tabular}\\
	\caption{The averaged $\ell_2$-distances between the true parameter $\beta^{*}$ and estimated parameter matrices $\hat\beta$ in SIM with standard Gaussian distributed covariates and (a) the link function  $f_5$  and (b) the link function $f_6$.}
\label{gaussmat}
\end{figure}
We also show distance dist$(\hat\beta,\beta^{*})$ against $\sqrt{rd\log d/n}$ in Figure \ref{t_gammamat} for $f_7$ and $f_8$ with t$(5)$ and Gamma$(8,0.1)$ distributed covariates respectively, based on 100 independent experiments, which is in line with the theory.  Here the shrinkage parameter $\kappa$ in Algorithm \ref{alg6} is set to be $\kappa=2\sqrt{\log(4d)/(nd)}$.

  \begin{figure}[H]
	\centering
	\begin{tabular}{cc}
		\hskip-30pt\includegraphics[width=0.40\textwidth]{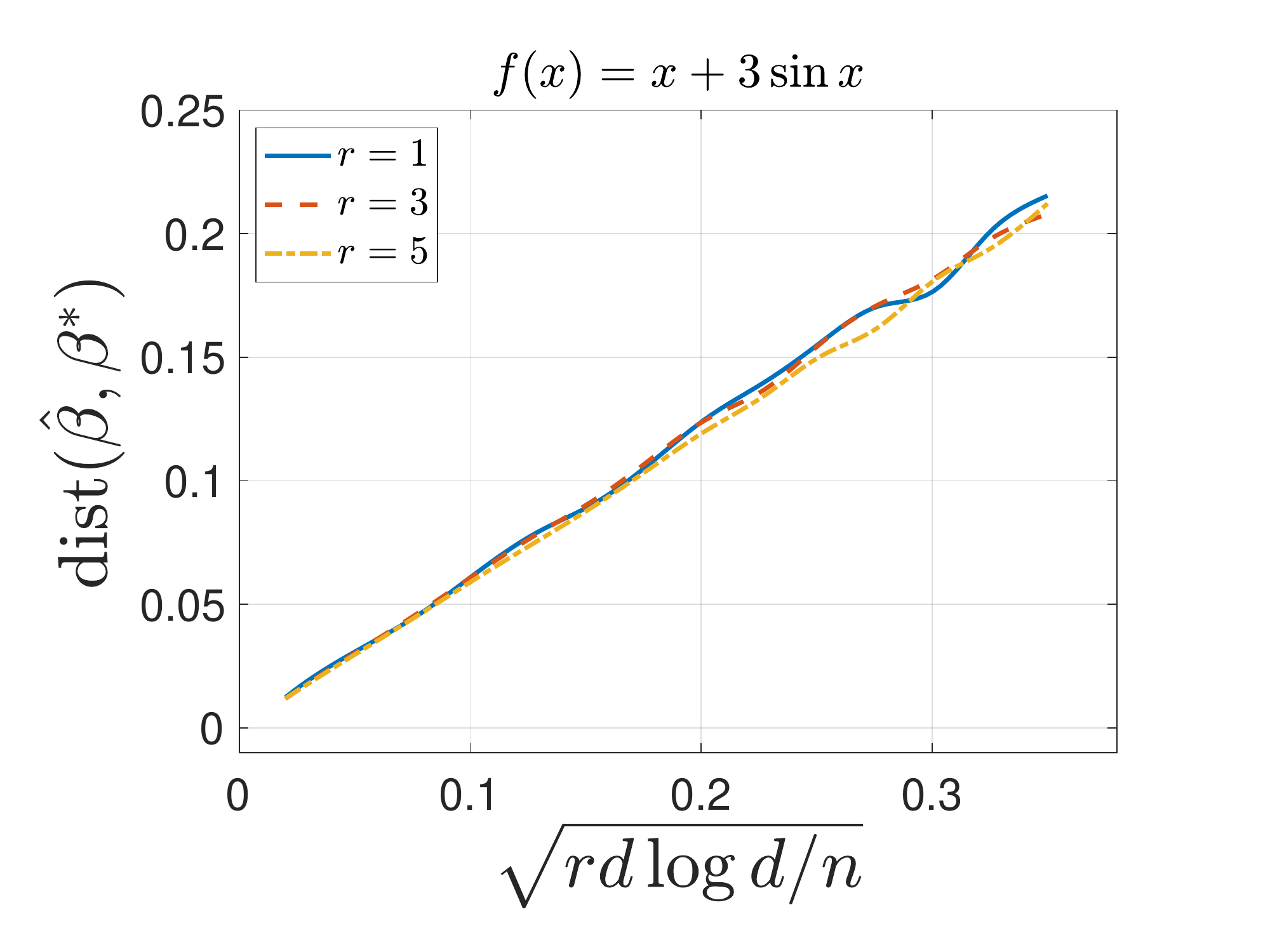}
		&\hskip-6pt\includegraphics[width=0.40\textwidth]{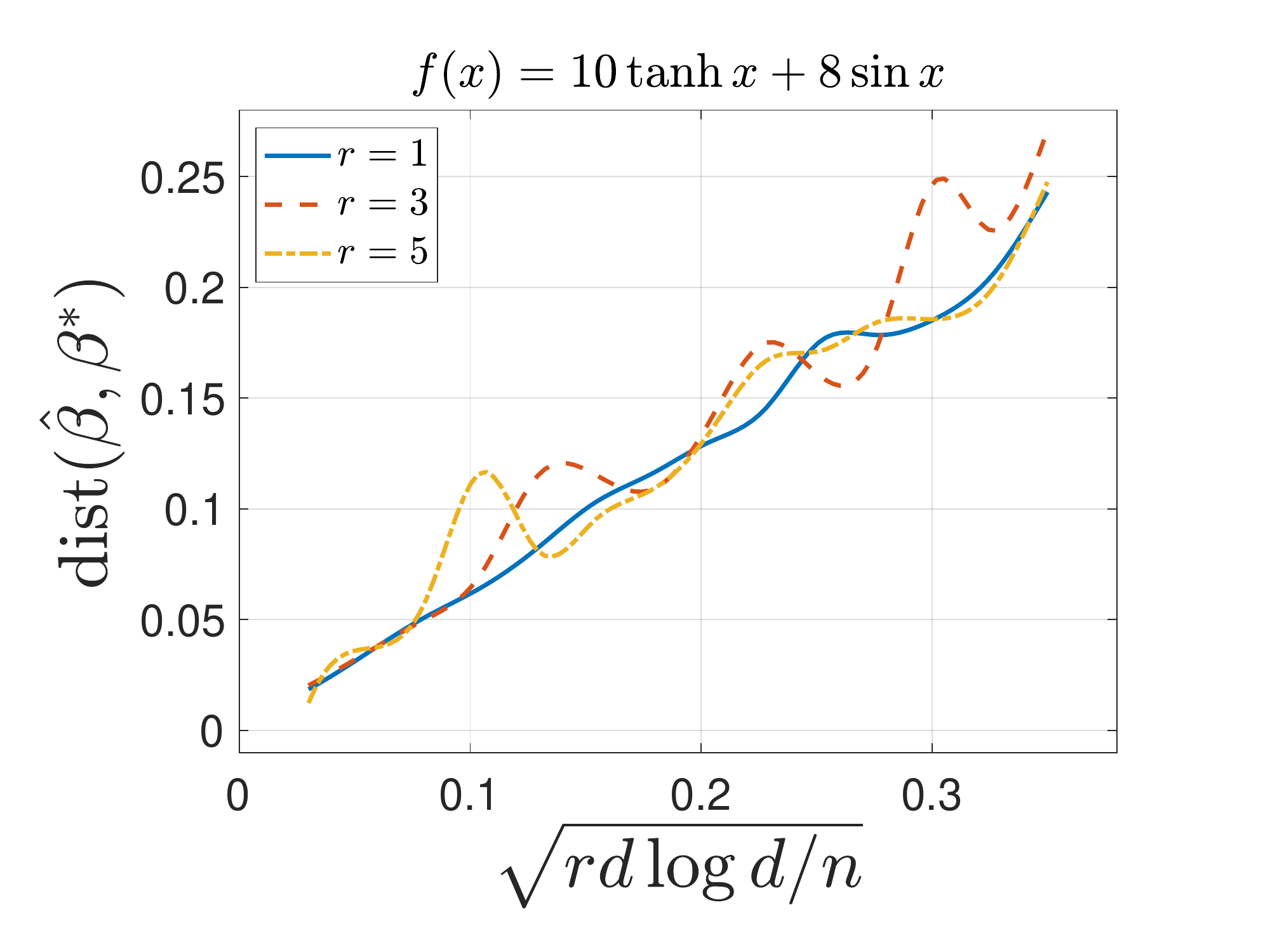}\\
		(a) &(b)
	\end{tabular}\\
	\caption{ The averaged $\ell_2$-distances between true parameter $\beta^{*}$ and estimated parameter matrices $\hat\beta$ for (a) $t(5)$ distributed covariates with link function $f_7$ and (b) Gamma$(8,0.1)$ distributed covariates with the link function $f_8$.}
	\label{t_gammamat}
\end{figure}

\section{Conclusion}
	\revise{In this paper, we leverage over-parameterization to design regularization-free algorithms for single index model and provide theoretical guarantees for the induced implicit regularization phenomenon.}
	We consider the case where the link function is unknown, the distribution of the covariates is known  as a prior,
	and the signal parameter is either a $s$-sparse vector in $\RR^p$ or a rank-$r$ matrix in $\RR^{d\times d}$.
Using the score function and the Stein's identity, we propose an over-parameterized nonlinear least-squares loss function.
To handle the possibly heavy-tailed distributions of the score functions and the response variables, we adopt additional truncation techniques that robustify the loss function.
For both the vector and matrix SIMs, we construct an estimator of the signal parameter by applying gradient descent to the proposed loss function, without any explicit regularization.
We prove that, when initialized near the origin, gradient descent with a small   stepsize finds an estimator that enjoys minimax-optimal statistical rates of convergence.
Moreover, for vector SIM with Gaussian design, we further obtain the oracle statistical rates that are independent of the ambient dimension. \revise{Furthermore, our experimental results support our theoretical findings and also demonstrate that our methods empirically outperform classical methods with explicit regularization in terms of both $\ell_2$-statistical rate and variable selection consistency.}

\newpage
\appendix{}

\section{Discussion}
\revise{In this section, we add more discussions on several key points in this paper, namely stopping time, stepsize, intialization, extension and scalability.}
\subsection{Stopping time $T_1$}
\revise{In this subsection, we discuss the behavior of our algorithm when we let $T_1\rightarrow\infty$ and the reason why we adopt early stopping. \\We prove that if we let $T_1\rightarrow\infty$ in our algorithm, we are able to achieve the vanilla sample estimator $\sum_{i=1}^{n}y_iS(\xb_i)/n$, which is the global min of loss function  \eqref{loss1bk}: \begin{align*}
		L(\wb,\vb)=\langle \wb\odot \wb-\vb\odot \vb, \wb\odot \wb-\vb\odot \vb \rangle -2\Big\langle \wb\odot \wb-\vb\odot \vb, \frac{1}{n}\sum_{i=1}^{n}y_iS(\xb_i) \Big\rangle.
	\end{align*}
	This can be demonstrated by showing that the loss function \eqref{loss1bk} does not contain local maximum or non-strict stationary points. Thus gradient descent always tends to find the global minimum \citep{gradientmin_2016} if the stepsize is small enough. Similar situation also holds under matrix case. 
	The tradeoff that we do not let $T_1$ go to infinity is because the unregularized estimator $\sum_{i=1}^{n}y_iS(\xb_i)/n$ is only consistent to $\mu^{*}\beta^{*}$ in terms of $\ell_{\infty}$-norm or operator norm. In the high-dimensional regime, the
	$\ell_2$- ($\ell_{\fro}$-) statistical rate
	of such an estimator can be diverging.  However, we aim at getting the $\ell_2$-statistical rates in order to guarantee our estimators generalize well in terms of out of sample predictions.  Thus, we adopt early stopping in our Algorithm \ref{alg1}, Algorithm \ref{alg2} to prevent overfitting and to take advantage of sparsity.}
	\subsection{Initial value $\alpha$}
	\revise{In this subsection, we discuss what will happen if we choose other intial values (Recall, we set $\alpha=\cO(1/p)$ in our paper). We only discuss the vector case, the situation for the matrix case is similar. \\In terms of other initial values,
	our algorithm works as long as the strength of perturbation parameter $\alpha$ satisfies $p\cdot \alpha^2=\cO(s/n)$. However, if we make initialization with a larger order, the noise component will be overfitted easily before $\cO(\log(1/\alpha)/ ( \eta(|\mu^{*}|s_m-M_0\sqrt{\log p/n}) ))$ steps, especially when the minimal true signal in the strong signal set $s_m$ is close to the threshold $C_s\sqrt{\log p/n}$ (This is the threshold which distinguishes the strong signal and weak signal set). In this case, the optimal stopping time does not exist, as the error component is overfitted before the signal component converges. }
	\subsection{Stepsize $\eta$}
	\revise{In this subsection, we describe the reason we use constant stepsize, and also illustrate the pros and cons of using decreasing stepsize. \\
	First, as mentioned in the first point of our discussion, we are analyzing non-asymptotic results for the iterates $\beta_t$ (with finite $t$), as $t\rightarrow \infty$ will result in an overfitted estimator. 
	Second, from the theoretical perspective, the assumptions on the size of the learning rate is only required in proving the dynamics of strong signal and weak signal components. The dynamics of noise component is able to adaptive to the stepsize with any size.
	To be more specific, for strong signals (signals in $S_0$), we prove that as long as the stepsize is smaller than some fixed constant, it will keep increasing in absolute value first, i.e. $|\beta_{t+1,i}|\ge |\beta_{t,i}|$ for all $i\in S_0$. After it converges to the area around $\mu^{*}\beta_{S_0}^{*}$ ($\|\beta_{t,S_0}-\mu^{*}\beta_{S_0}^{*}\|_2\le \sqrt{|S_0|/n}$), our constant stepsize will guarantee that it will never leave that area. In terms of the weak signal component $\beta_{t,i},i\in S_1$, we prove that if the stepsize is smaller than some fixed constant, it will never exceed the order of $\cO(\sqrt{\log p/n})$ throughout the whole iterations. Thus, we use fixed constant stepsize in this paper since it is enough to guarantee the main theoretical results. For more details, please refer to our Lemma \ref{prop1.5} and Lemma \ref{monotone_lem} in \S\ref{sectappa}. \\ 
	In terms of decreasing our stepsize while iterating, it will help enlarge our optimal time interval for the stopping time $T_1$. To be more specific, if we choose $\eta=\cO(1/t^{\alpha})$, with $t$ being the $t$-th step and $0< \alpha<1$, the optimal stopping interval will become $$\Big[\cO\big(\log(1/\alpha)^{1/(1-\alpha)}/ (|\mu^{*}|s_m)^{1/(1-\alpha)}\big),\cO\big(\log(1/\alpha)^{1/(1-\alpha)}\sqrt{n/\log p}^{1/(1-\alpha)}\big)\Big],$$ by following similar theoretical analysis. The statistical rates remain the same with our current results inside the optimal time interval. Although the length of the optimal interval increases, we need more time to let strong signal converge (need $\cO(\log(1/\alpha)^{1/(1-\alpha)}/ (|\mu^{*}|s_m)^{1/(1-\alpha)})$ steps instead of only $\cO(\log(1/\alpha)/ (|\mu^{*}|s_m))$ steps). This involves a tradeoff between  the number of iterations of the algorithm and the flexibility of choosing stopping time. In this paper, we focus on the setting  with constant stepsize. }}
\subsection{Extension}	
\revise{Our algorithm also works under a more generalized setting. To be more specific, the Algorithm \ref{alg1} is fit for the following generalized optimization problem
	\begin{align*}
		\min_{\wb,\vb}{\langle \wb\odot \wb-\vb\odot \vb,\wb\odot \wb-\vb\odot \vb\rangle}-\frac{2}{n}\bigg\langle \wb\odot\wb-\vb\odot\vb, \sum_{i=1}^{n}\ba_i^{*}\bigg\rangle
	\end{align*}
	in which we have $\ba_i,i\in [n]$ are i.i.d. with $\EE[\ba_i^{*}]=c\beta^{*},$ for all $i\in [n]$ with bounded sub-exponential norm.
	Here $\beta^{*}$ is the unknown sparse vector parameter we aim at recovering and $c$ is a non-zero constant. The situation for the matrix case is similar, our conclusion also holds for the optimization problem
	\begin{align*}
		\min_{\Wb,\Vb}{\Big\langle \Wb \Wb^\top-\Vb \Vb^\top,\Wb \Wb^\top-\Vb \Vb^\top\Big\rangle}-\frac{2}{n}\bigg\langle \Wb\Wb^\top-\Vb\Vb^\top, \sum_{i=1}^{n}\Ab_i^{*}\bigg\rangle
	\end{align*}
	whenever $\Ab_i,i\in [n]$ are i.i.d. and possess bounded spectral norm with high-probability and $\EE[\Ab_i^{*}]=c\beta^{*}$. Here $\beta^{*}$ is the unknown low rank matrix we aim at recovering. Thus, as long as these two general frameworks are satisfied, our estimators will keep the same behaviors as given in Theorem \ref{thmvec1} and  \ref{thmmat1}. The key point for aforementioned assumptions on bounded sub-exponential and operator norm with high probability is to guarantee that the true $c\beta^{*}$ lies in the high-confidence set $\{\beta:\|\nabla L(\beta)\|_{\textrm{norm}}\le \lambda_n\}$ with norm being either $\ell_{\infty}$ norm or operator norm. The $\lambda_n$ is chosen to be equivalent to order of the maximum noise strength \citep{candes2008restricted}. In terms of the heavy-tailed case, with properly winsorized tail components, we get robust estimators whose tail distributions behave like sub-exponential tail distributions. Thus, similar conclusions also hold for heavy-tailed distributions.	}
\subsection{Scalability}	\label{scalable}
\revise{In this subsection, we discuss the scalability of our method.\\
To be more specific, our methodology is more scalable than regularized methods in terms of two commonly used settings of  distributed computing, namely \emph{centralized} and \emph{decentralized} settings. \\
Under the \emph{centralized} setting, we have a central controller (parameter server), which stores parameters, and $m$ local machines, which store distributed datasets with size $n_j,j\in [m]$ respectively. For each iteration, the local machines transmit their local gradients to the central controller and central controller sends back the updated parameters to every local machine after aggregating the information. As this procedure only involves transferring gradient or parameter information, our Algorithm \ref{alg1} is applicable to this \emph{centralized} setting. Moreover, if the central machine makes an update after collecting all gradient information from all local machines, this is equivalent with running our Algorithm \ref{alg1} with the full datasets. In this case, our Theorem \ref{thmvec1}  also holds. However, in terms of the distributed regularized methods in studying the high-dimensional sparse statistical models under this \emph{cenetalized} setting, one needs to solve every regularized problem on every local machine and then averages the outputs from all local machines \citep{Lee2017,Battey2018,Jordan2019,Fan2021}  to generate the next iterate.  This will put more burden on every local machine. Moreover, the aforementioned literatures only work with $\ell_1$-regularization, the literature that studies the distributed estimation via non-convex regularizers under the \emph{centralized} setting is sparse. Since we obtain oracle statistical rate in Theorem \ref{thmvec1} by only conducting gradient descent, our method is also able to achieve the oracle rate in a distributed manner, which is equivalent with adding folded-concave regularizers on every local machine.\\
Furthermore, under the \emph{decentralized} setting, we have $m$ machines connected via a communication network \citep{MMRHA2017,Richards2020,Richards2020D}, and each machine $j \in [m]$ stores $n_j$ i.i.d. observations of the single index model in \eqref{vecSIM}.  Algorithm \ref{alg1} can be easily modified for such a  decentralized setting by letting each machine send its local  parameter or local gradient to its neighbors. See, e.g., \cite{shi2015extra, yuan2016convergence} for more details of consensus-based first-order methods.
Besides, $\beta_{T_1}$ in Theorem \ref{thmvec1} corresponds to the centralized estimator obtained by aggregating all the data across the $m$ machines. Thus Theorem \ref{thmvec1} still holds with $n = \sum_{j \in [m]} n_j$.
Then, when the communication network is sufficiently well-conditioned, we can expect that the local parameters on the $m$ machines
reaches   consensus rapidly and are all close to $\beta_{T_1}$, thus achieving optimal statistical rates.
In contrast, explicit regularization such as $\ell_1$-norm or SCAD produces an  exactly sparse solution. In the decentralized setting, imposing explicit regularization to produce a shared sparse solution with statistical accuracy seems to require novel algorithm design and analysis. }
\section{Additional Simulations and Real Data}
\revise{In this section, we provide more numerical studies on the comparisons between our methodology and classical regularized methods.}
\subsection{Comparisons with Regularized Methods}\label{comparison}\revise{
In this section, we aim at comparing the $\ell_2$-statistical rates achieved by our methodology and classical regularized methods (Lasso and SCAD). We focus on the senario where we have fixed number of observations but increasing dimensionality of the covariates. To be more specific, we fix $n=300,s=4$ and choose $p$ such that $\sqrt{s\log p/n}$ ranges uniformly from $0.25\sim 0.4$
(corresponding to $\sqrt{\log p}$ ranges uniformly from $2.15\sim 3.47$.) In terms of $\beta^{*}$, we choose its support randomly among all subsets of $\{1,\dots,p\}$ with cardinality $s$ and let $\beta_{j}^{*}=1/\sqrt{s}\cdot \textrm{Uniform}(\{-1,1\}).$ We let every entry of $\xb\in\RR^{p}$ have i.i.d. distribution, which are either standard Gaussian, Student's t-distribution with $5$ degrees of freedom. Given $\beta^{*}$ and distribution of $\xb$, we generate $n=300$ i.i.d samples $\{\xb_i,y_i\}_{i=1}^{n}$ from the vector SIM with aforementioned link functions $\{f_j\}_{j=1}^4$. As for the optimization procedure, we  let $\alpha=10^{-3}$, stepsize $\eta=0.01$ in Algorithms \ref{alg1} and \ref{alg5}. The stopping time $T_1$ is chosen by following our methodology in \S\ref{stoptime}, where we take $T_1$ which minimizes the out-of-sample prediction risk. Then we use the whole samples to conduct Algorithms \ref{alg1}, and \ref{alg5}, and return $\beta_{T_1}/\|\beta_{T_1}\|_2$ with that pre-fixed $T_1$. Note that we are also able to use $k$-fold cross-validation to choose the stopping time $T_1$, however, since the stopping time interval is wide, for simplicity, we just use out-of-sample prediction and it already offers a good $T_1$. As for the regularized method (LASSO and SCAD), we first use 5-fold cross-validation to choose the tuning parameter $\lambda$ and then use the whole dataset together with the pre-selected $\lambda$ to report $\hat\beta_{\textrm{lasso}}/\|\hat\beta_{\textrm{lasso}}\|_2$ and $\hat\beta_{\textrm{scad}}/\|\hat\beta_{\textrm{scad}}\|_2$ by minimizing the emperical quadratic  loss function with extra regularizers \eqref{reguvec}  ($\ell_1,\textrm{ or folded concave regularizer}$). We summarized the performance of every method in the following two figures. We are able to see our methodology outperforms classical regularized methods, and achieve oracle convergence rates even when we have increasing dimensionality of covariates.
\begin{figure}[H]
	\centering
	\begin{tabular}{cc}
		\hskip-30pt\includegraphics[width=0.40\textwidth]{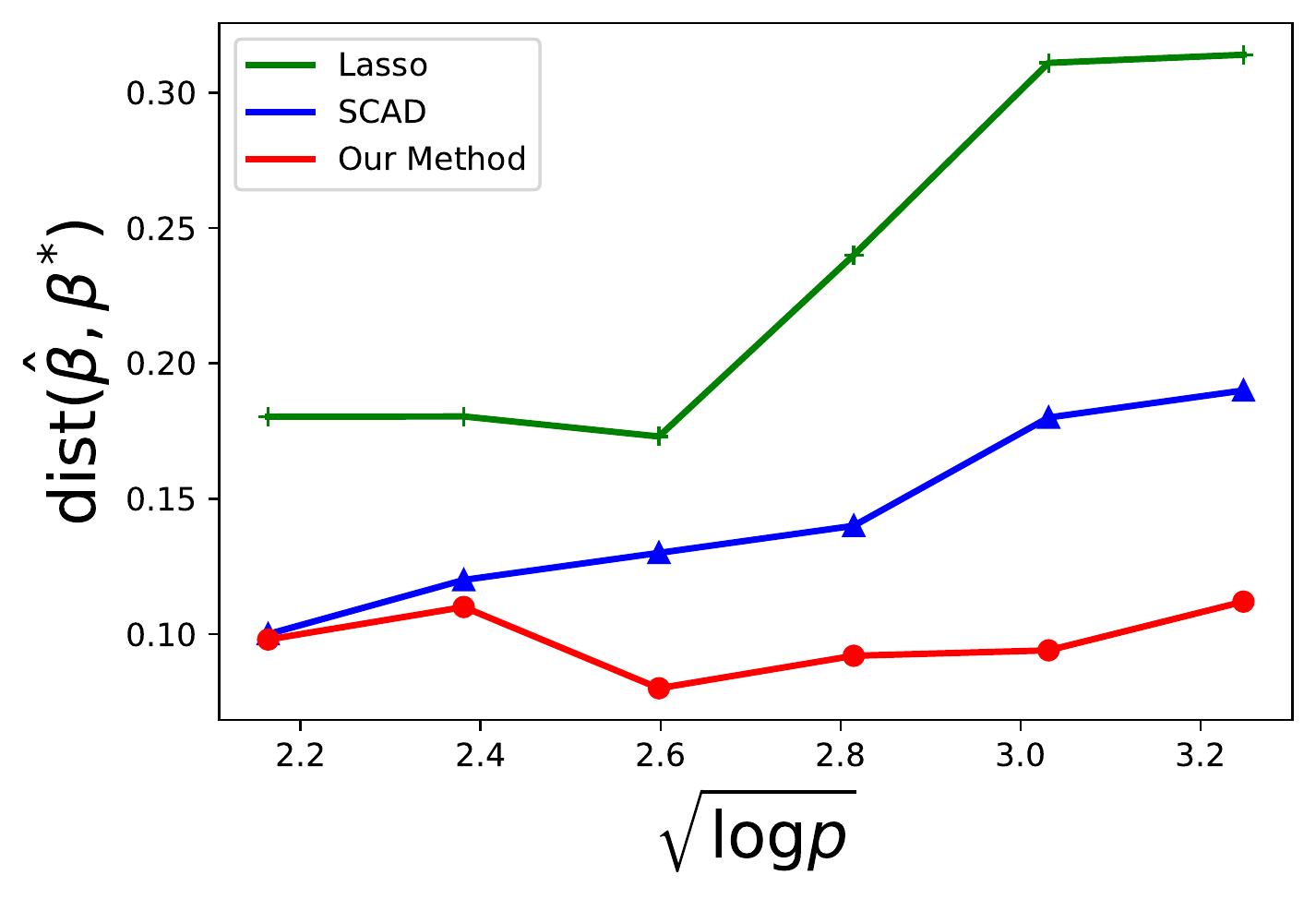}
		&\hskip-1pt\includegraphics[width=0.40\textwidth]{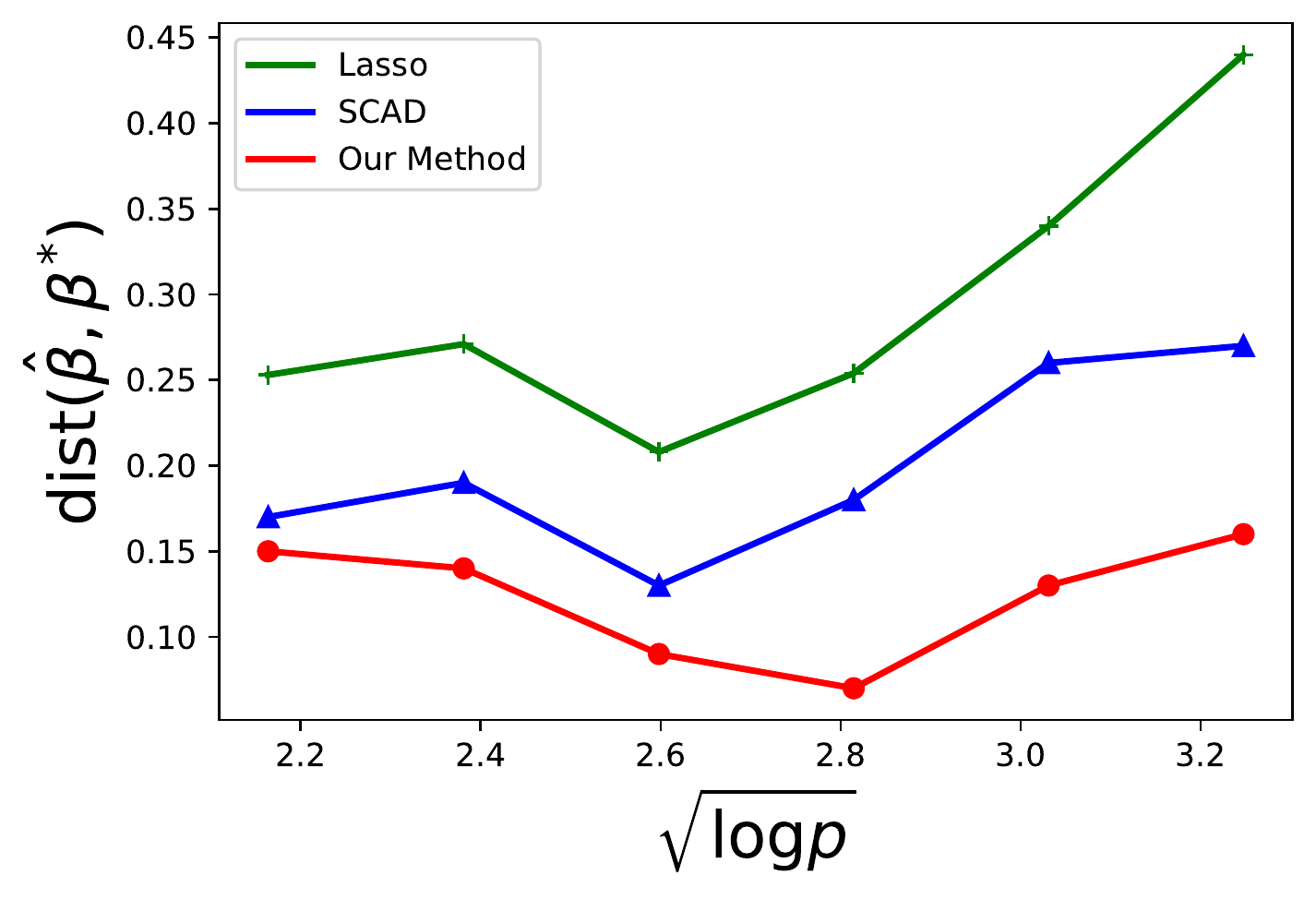}\\
		(a) &(b)
	\end{tabular}\\
	\caption{(a). $\ell_2$-convergence rates comparison between our method and Lasso, SCAD with Gaussian distributed covariate, the link function is set to be $f_1.$  (b). $\ell_2$-convergence rates comparison between our method and Lasso, SCAD with $t(5)$ distributed covariate and link function $f_3$. The truncating parameter $\tau$ is set to be $\tau=2(n/\log p)^{1/4}$ in Algorithm \ref{alg5}. In both figures, we repeat the aforementioned experiment for 50 independent trails and plot the averaged distance dist$(\hat\beta,\beta^{*})$ against the $\sqrt{\log p}$.}
	\label{compare_lasso_scad}
\end{figure}
In addition, we also compare the support recovery results achieved by our method and the regularized methods. The measures that we use to quantify the accuracy of the support recovery of a given estimator are False Discovery Rate (FDR) and  True Positive Rate (TPR). For a given estimator $\hat\beta$, they are defined as follows:
$$\textrm{FDR}=\frac{|\textrm{supp}(\hat\beta)\cap(S^{*})^{c}|}{\max\{|\textrm{supp}(\hat\beta)|,1\}},$$
$$\textrm{TPR}=\frac{|\textrm{supp}(\hat\beta)\cap S^{*} |}{|S^{*}|},$$
where $S^{*}$ denotes the true support. In terms of the experimental settings,  we fix $p=1000$, $s=\sqrt{p}$ and let $n/(s\log p)$ vary uniformly from $2.5\sim 25$. Moreover, we let the other settings be the same as the settings of the $\ell_2$-statistical rates comparison discussed above. We repeat the aforementioned experiment for 100 independent trails.  For every trail, we record the False Discovery Rate (FDR) and True Positive Rate (TPR) for our estimator $\tilde{\beta}_{T_1}$ defined in Theorem \ref{signconsist} and the regularized estimator $\beta_{\textrm{lasso}}$ and $\beta_{\textrm{scad}}$. The support recovery performances are illustrated in the following Figure \ref{compare_fdr_tpr}.
\begin{figure}[H]
	\centering
	\begin{tabular}{cc}
		\hskip-30pt\includegraphics[width=0.40\textwidth]{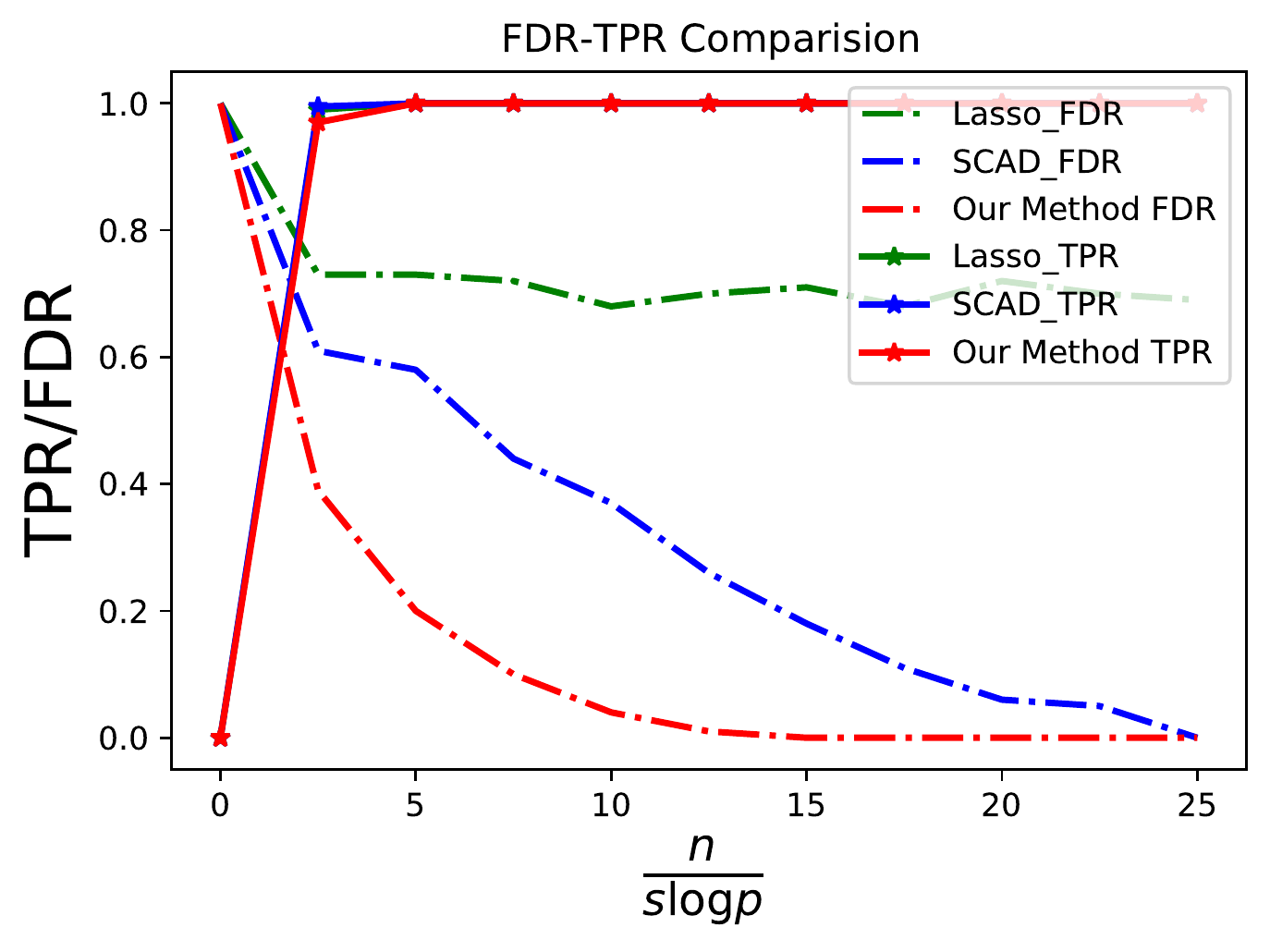}
		&\hskip-1pt\includegraphics[width=0.40\textwidth]{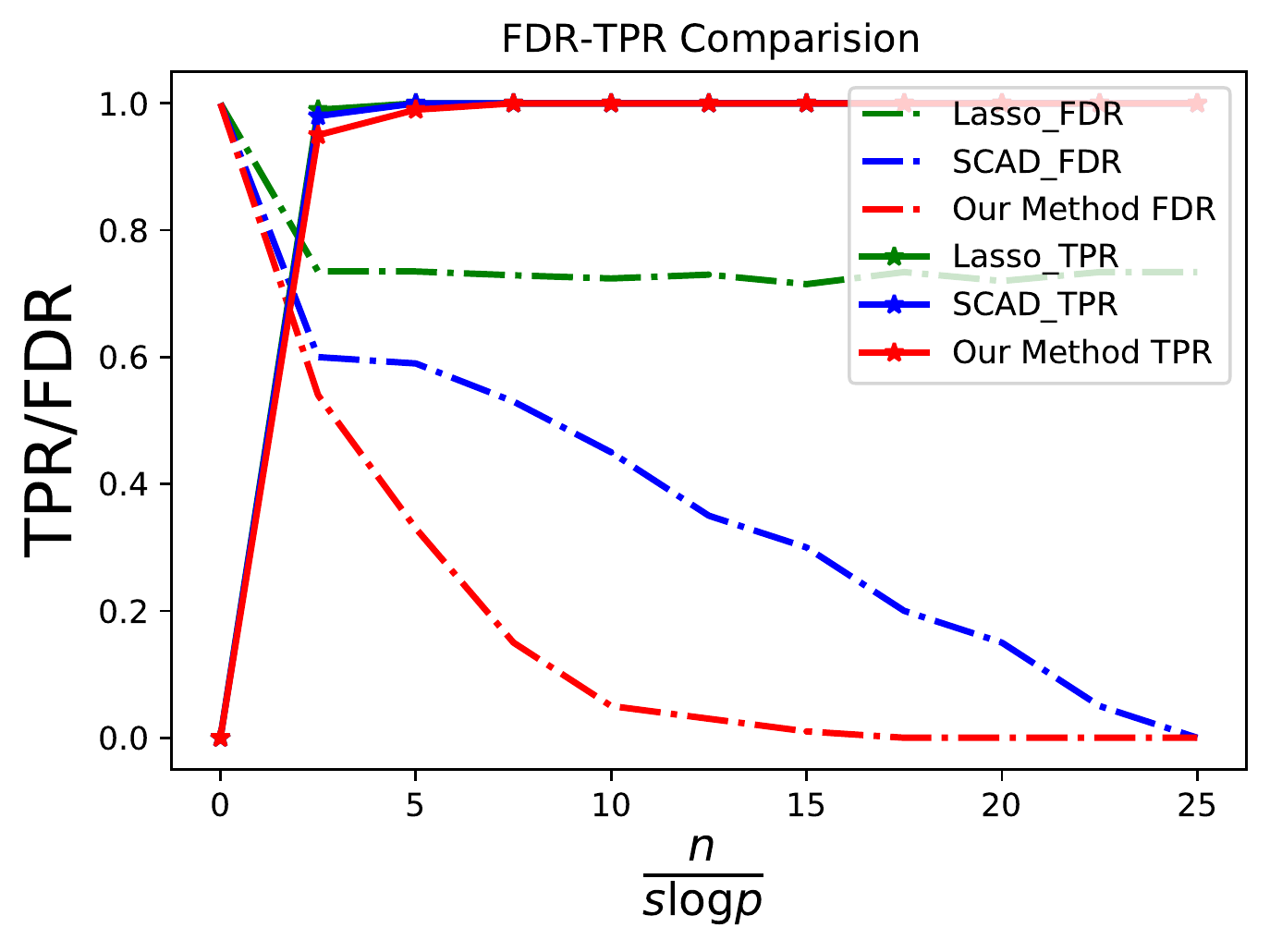}\\
		(a) &(b)
	\end{tabular}\\
	\caption{Support recovery comparison between our method and regularized methods. Figure (a) illustrates the results with standard Gaussian design and link function $f_2$. Figure (b) shows the results with $t(5)$ distributed covariate with link function $f_2$. We let the truncating parameter $\lambda=5\alpha$ given in Theorem \ref{signconsist} to construct $\tilde{\beta}_{T_1}$ in both cases. The tuning parameter $\lambda$ for Lasso or SCAD is selected via 5-fold cross-validation.}
	\label{compare_fdr_tpr}
\end{figure}
We tell from Figure \ref{compare_fdr_tpr}, our method achieves comparable TPR with the regularized methods and at the same has much lower FDR. The results above illustrate the robustness and efficiency of our methodology over the classical regularized methods in terms of support recovery.}
\revise{
\subsection{Application to Real Data}\label{real_data}
One important application of our methodology is image processing via compressed sensing especially under nonlinear links \citep{candes2008restricted,plan2013onebit,plan2012robust,goldstein2018structured,goldstein2019non}. In the following, we  extend our methodology to real-world data, where we consider the example of one-bit compressed sensing with sparse image recovery \citep{Jacques2013onebit,plan2013onebit}.
To be more specific, the response variables and the covariates collected by us satisfy
\begin{align*}
	y_i=\text{sign}(\langle \mathbf{x}_i, \beta^{*}\rangle)+\epsilon,\qquad \forall i\in[n],
\end{align*}
where $\text{sign}(x)=1$ for all  $x\ge 0$ and  $\text{sign}(x)=-1,\text{ for } x<0$,  and $n$ is the number of our observations. We summarized its corresponding theoretical results in Appendix \S\ref{secexamp}.\\ We let $\Mb\in \RR^{H\times W}$ be a sparse image, in which $H,W$ denote the high and width of the given matrix. We vectorize the matrix $\Mb$, and denote the new vector as $\beta^{*}$.
The original image given in Figure \ref{true_figure} is a image for stars with $H=375$ by $W=500$. After vectorizing this matrix we have a $\beta^{*}\in \RR^{187500}$. Due to the size of the image, we decompose the vector into $L=146$ disjoint parts, with  $p=1290$ for each part. To be more clear, we have $\textrm{vec}(\Mb)=(\Mb_1,\Mb_2,\dots,\Mb_L)$, with $\Mb_{\ell}\in \RR^{p}$. We denote the sparsity of $\Mb_{\ell}$ as $s_{\ell}$. For every $\ell\in [L]$, we set the link function as $\textrm{sign}(\cdot)$ and sample $n_l=5\cdot s_l\log p$ observations using standard Gaussian covariate. We then run Algorithm \ref{alg3} given in Appendix \S\ref{secexamp}  with initial value $\alpha=0.001$ and stepsize $\eta=0.1$ to get the estimator $\hat\beta_{\ell}$ of $\beta_{\ell}^{*}/\|\beta_{\ell}^{*}\|_2.$ Since we only obtain the sign information, we are only able to recover the direction of $\beta^{*}$, and without loss of generality, we assume we know the length of $\|\beta_{\ell}^{*}\|_2$ beforehand. Thus,  our finally estimator for $\beta_{\ell}^{*}$ is $\hat\beta_{\ell}\cdot \|\beta_{\ell}^{*}\|_2$.
\begin{figure}[H]
	\centering
	\begin{tabular}{cc}
		\hskip-30pt\includegraphics[width=0.45\textwidth]{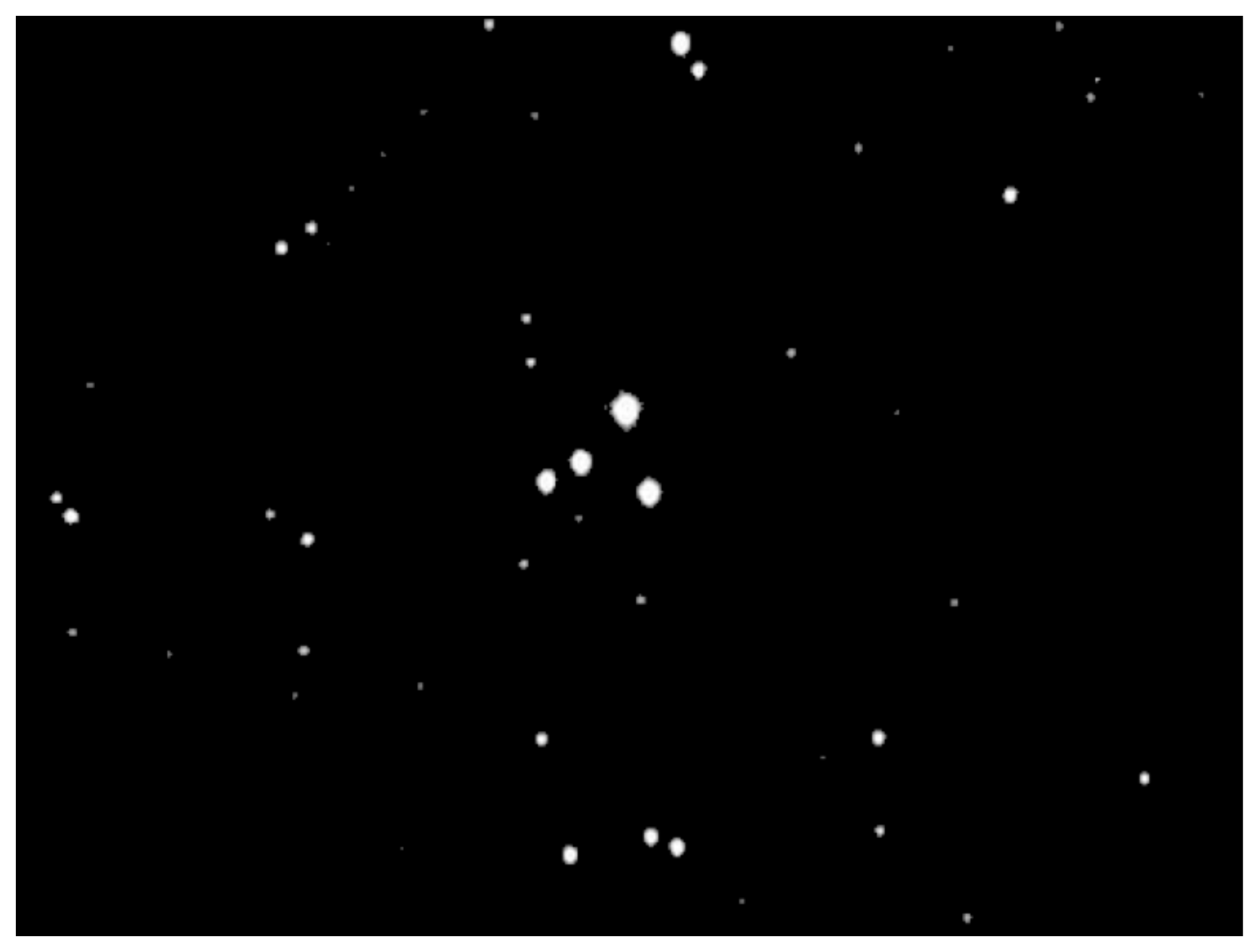}\\
		(a)
	\end{tabular}\\
	\caption{The true figure.}
	\label{true_figure}
\end{figure}
The image recovered by our method and Lasso method are shown in the following Figure \ref{lasso_over} respectively. And the error to the true image for both methods are shown in Figure \ref{error_overp_lasso}.  The cumulative errors to the true figure in $\ell_1$-norm are $449.27$ and $533.89$ respectively.  Observe that in terms of the error, our methodology nearly recovered the true image and outperforms the Lasso method.
\begin{figure}[H]
	\centering
	\begin{tabular}{cc}
		\hskip-30pt\includegraphics[width=0.45\textwidth]{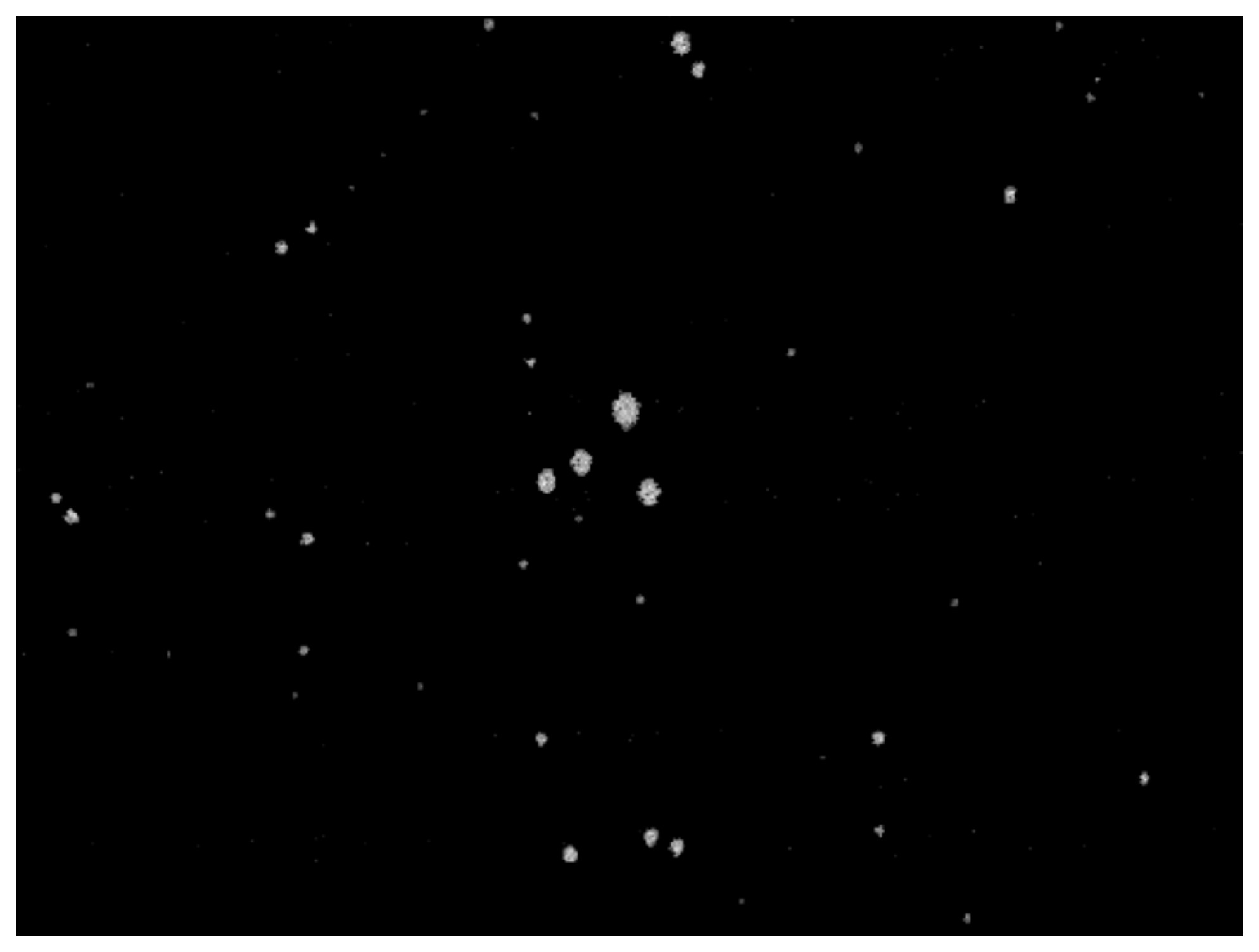}
		&\hskip-1pt\includegraphics[width=0.45\textwidth]{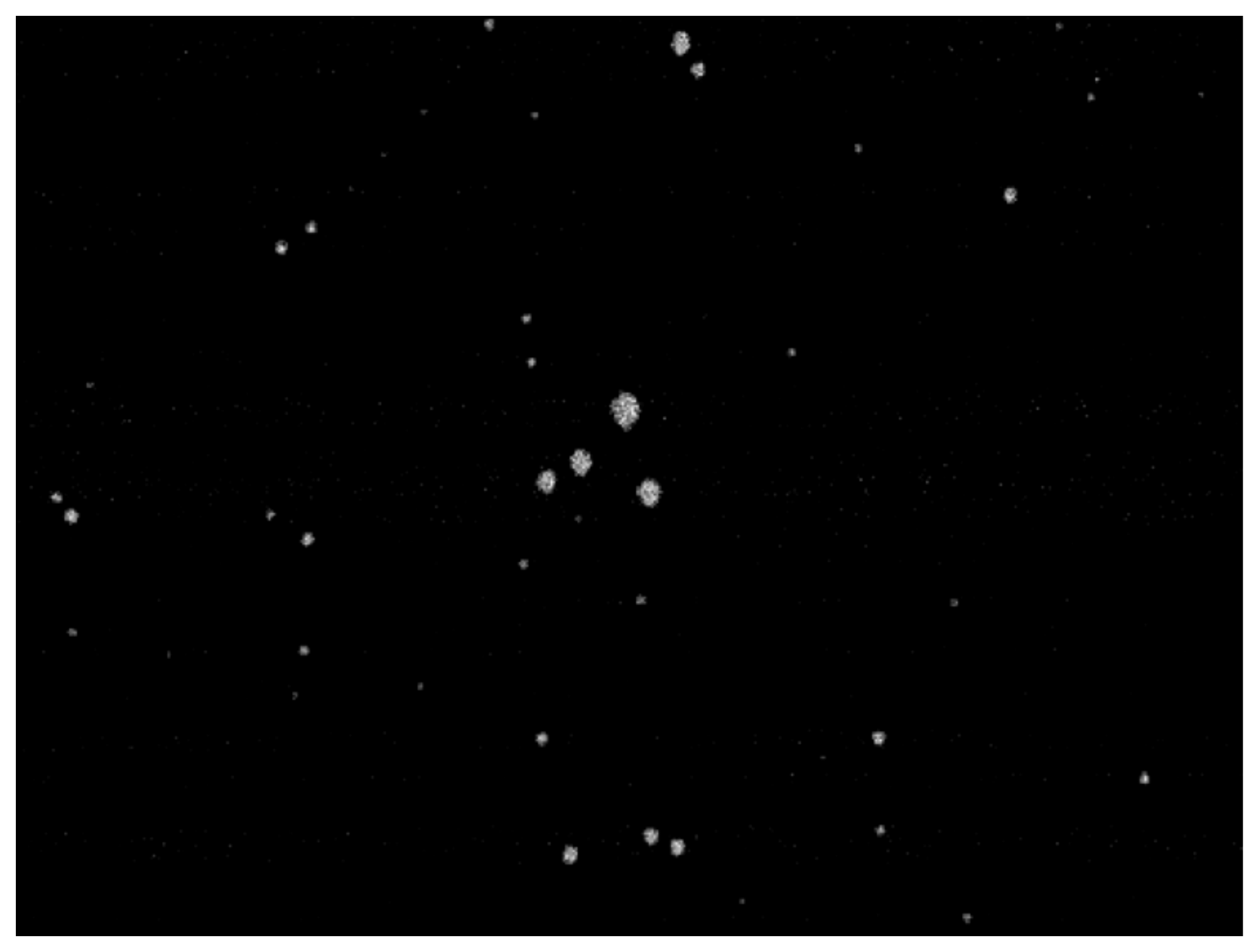}\\
		(a) &(b)
	\end{tabular}\\
	\caption{(a). Figure recoverd by our proposed method. (b) Figure recoverd by Lasso.  The tuning parameter is selected via $5$-fold cross validation.}
	\label{lasso_over}
\end{figure}
\begin{figure}[H]
	\centering
	\begin{tabular}{cc}
		\hskip-30pt\includegraphics[width=0.45\textwidth]{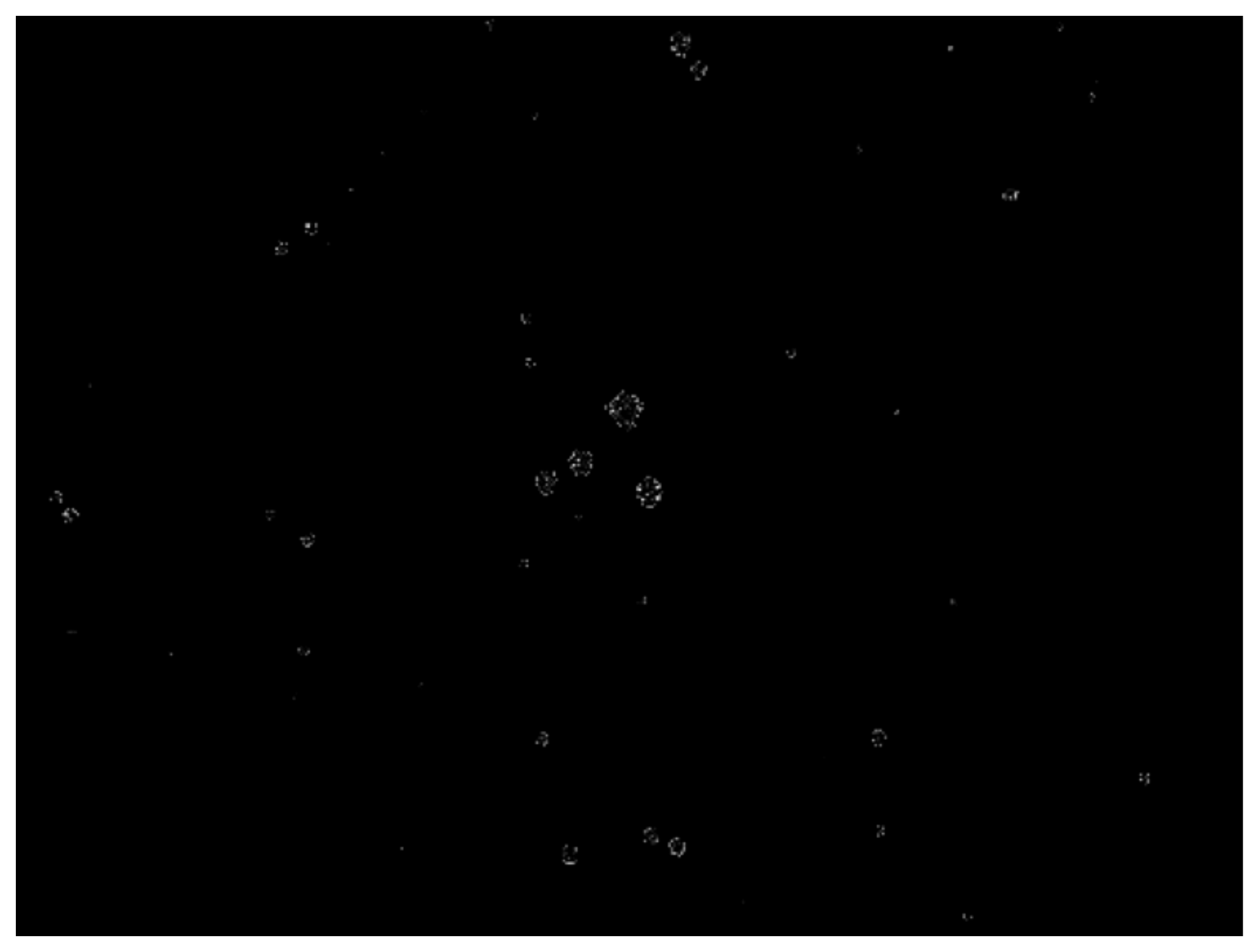}
		&\hskip-1pt\includegraphics[width=0.45\textwidth]{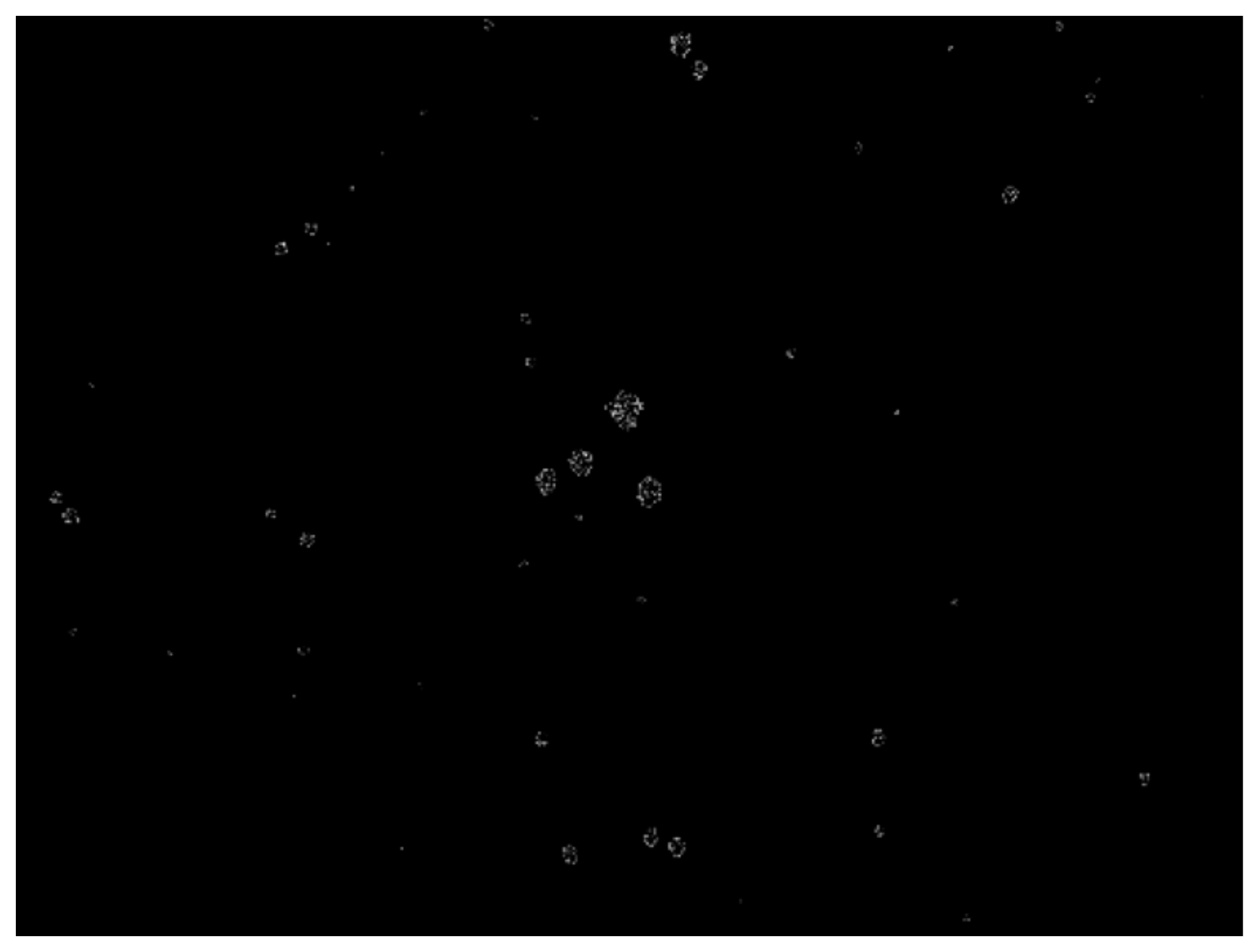}\\
		(a) &(b)
	\end{tabular}\\
	\caption{(a). Error of the recovered figure by our method. (b) Error of the recovered figure by Lasso.  }
	\label{error_overp_lasso}
\end{figure}
}

\section{Proofs of  Theoretical Results in \S\ref{sectionvec}} \label{sectappa}

In this section, we prove the results presented in \S\ref{sectionvec}. Specifically, we prove Theorem \ref{thmvec1} in \S\ref{gensig}  and present the proofs of Theorems \ref{thmpred} and \ref{thmvec2}  in \S\ref{MSEproof} and \S\ref{secheavy}, respectively.

\subsection{Proof of Theorem \ref{thmvec1}}\label{gensig}

In the following, we
  over-parameterize $\mu^{*}\beta^{*}$ by writing it as $\mathbf{w}\odot \mathbf{w}-\mathbf{v}\odot \mathbf{v}$, where  $\mathbf{w}$ and $\mathbf{v}$ are vectors with size $p\times 1$. Then we apply  gradient descent to the following optimization problem,
    	\begin{align}\label{loss2}
\min_{\mathbf{w},\mathbf{v}}L(\wb,\vb)=\langle \wb\odot \wb-\vb\odot \vb, \wb\odot \wb-\vb\odot \vb \rangle -2\Big\langle \wb\odot \wb-\vb\odot \vb, \frac{1}{n}\sum_{i=1}^{n}y_iS(\xb_i) \Big\rangle.
\end{align}
 The  gradient descent updates  with respect  to $\mathbf{w}$ and $\mathbf{v}$ are given by
 	\begin{align}\label{generalupd1}
\mathbf{w}_{t+1}&=\mathbf{w}_t-\eta\big (\mathbf{w}_t\odot\mathbf{w}_t-\mathbf{v}_t\odot \mathbf{v}_t-\Phi_n\big)\odot \mathbf{w}_t,\\
\mathbf{v}_{t+1}&=\mathbf{v}_t+\eta\big (\mathbf{w}_t\odot \mathbf{w}_t-\mathbf{v}_t\odot \mathbf{v}_t-\Phi_n\big)\odot \mathbf{v}_t.\label{generalupd2}
 \end{align}
  We first  remind readers of the notations. We divide the entries of $\beta^{*}$ into three different groups in terms of their strengths. 
The support set $S$ of the  signal  is defined as $S =\{i:\left|\beta_i^{*}\right|>0   \}$.
The set $S_0$ which contains the strong signals is defined as   $S_0 =\{i: |\beta_i^{*}|\ge C_s\sqrt{\log p/n} \}$ with $C_s$ being an absolute  constant that will be specified later in the proof.
In addition,
 we define $S_1$ as  $S_1 =\{i: 0<|\beta^{*}_i|< C_s\sqrt{ \log p/n }  \}$, which contains all indices of the weak signals. By such a construction, we have $S = S_0 \cup S_1$. Moreover, the  complement of $S$, denoted by  $S^c$, corresponds to the pure error part.
 Furthermore, the
  pure error parts of $\mathbf{w}_t$ and $\mathbf{v}_t$ are denoted by $\mathbf{e}_{1,t}:=\mathbf{1}_{S^{c}}\odot\mathbf{w}_t$ and $\mathbf{e}_{2,t}:=\mathbf{1}_{S^{c}}\odot\mathbf{v}_t$ respectively. In addition, strong signal parts of $\mathbf{w}_t$ and $\mathbf{v}_t$ are denoted by $\mathbf{s}_{1,t}=\mathbf{1}_{S_0}\odot \mathbf{w}_t$ and $\mathbf{s}_{2,t}=\mathbf{1}_{S_0}\odot\mathbf{v}_t,$ meanwhile, weak signals parts are written as $\mathbf{u}_{1,t}:=\mathbf{1}_{S_1}\odot\mathbf{w}_t$ and $\mathbf{u}_{2,t}:=\mathbf{1}_{S_1}\odot \mathbf{v}_t$. Here, we denote $\sqrt{n/\log p}$ by $\gamma$ and $\frac{1}{n}\sum_{i=1}^{n}S(\xb_i)y_i$ by $\Phi_n$, and let $s_0$ and $s_1$ be the size of set $S_0$ and $S_1$ respectively.

\begin{proof} [Proof of Theorem \ref{thmvec1}]
We prove Theorem \ref{thmvec1}  by analyzing the dynamics of the pure error, strong signal and weak signal components separately.
We first utilize the following lemma to upper bound the pure error parts $\mathbf{e}_{1,t}$ and $\mathbf{e}_{2,t}$.

 \begin{lemma}\label{prop1.3}
 	Under the assumptions in Theorem~\ref{thmvec1}, with probability $1-2p^{-1}$, there exists a constant $a_2=1/6$ such that
 	\begin{align}
 		&\left\|\mathbf{e}_{1,t}\right\|_\infty\leq \sqrt{\alpha} \le \frac{M_0}{\sqrt{p}},\qquad
 		\left\|\mathbf{e}_{2,t}\right\|_{\infty}\leq \sqrt{\alpha} \le \frac{M_0}{\sqrt{p}}\label{up_noise}, 
 	\end{align}
hold for all $t\le T=a_2\log(1/\alpha)\gamma/(\eta M_0)$,  where $M_0$  is an absolute constant that is  proportional to  the sub-Gaussian  constants $\max\{\sigma,\|f\|_{\psi_2}\}$.
 \end{lemma}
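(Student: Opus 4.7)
The plan is to prove $\|\mathbf{e}_{1,t}\|_\infty,\|\mathbf{e}_{2,t}\|_\infty \leq \sqrt{\alpha}$ by induction on $t$, exploiting the fact that on $S^c$ we have $\mu^*\beta^*_i = 0$, so the ``drift'' of the coordinate-wise dynamics is driven purely by sampling noise. The key quantity to control first is $\Phi_n - \mu^*\beta^*$ in $\ell_\infty$, since on $S^c$ this coincides with $\Phi_n$ itself.

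First I would establish the high-probability event. Under Assumption~\ref{ass1} both the response $y_i$ and each entry of $S(\xb_i) = \Sigma^{-1}(\xb_i - \mu)$ are sub-Gaussian with norms of order $\max\{\|f\|_{\psi_2},\sigma\}$, so each product $y_i \cdot S(\xb_i)_j$ is sub-exponential. A Bernstein inequality coordinate-wise followed by a union bound over $j\in[p]$ then yields, with probability at least $1-2p^{-1}$,
\begin{align*}
\|\Phi_n - \mu^*\beta^*\|_\infty \le (M_0/2)\sqrt{\log p/n} = (M_0/2)\gamma^{-1},
\end{align*}
for a constant $M_0$ proportional to $\max\{\|f\|_{\psi_2},\sigma\}$. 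In particular, for every $i\in S^c$, $|\Phi_{n,i}|\le M_0/\gamma$. All the remaining work is deterministic on this event.

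Next I would analyze the coordinate-wise recursion for $i\in S^c$. Because the updates \eqref{generalupd1}--\eqref{generalupd2} are Hadamard (pointwise), the dynamics on $S^c$ decouple into scalar recursions
\begin{align*}
e_{1,t+1,i} &= e_{1,t,i}\bigl[1-\eta\bigl(e_{1,t,i}^2-e_{2,t,i}^2-\Phi_{n,i}\bigr)\bigr], \\
e_{2,t+1,i} &= e_{2,t,i}\bigl[1+\eta\bigl(e_{1,t,i}^2-e_{2,t,i}^2-\Phi_{n,i}\bigr)\bigr].
\end{align*}
Under the inductive hypothesis $\max\{|e_{1,t,i}|,|e_{2,t,i}|\}\le \sqrt{\alpha}$, the bracketed quantity lies in $1\pm\eta(\alpha + M_0/\gamma)$. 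Since $\alpha\le M_0^2/p \ll M_0/\gamma$ in the regime of interest, and the stepsize $\eta\le 1/(12(|\mu^*|+M_0))$ keeps the bracket strictly positive, we obtain the clean multiplicative bound
\begin{align*}
\max\bigl\{|e_{1,t+1,i}|,|e_{2,t+1,i}|\bigr\} \le \bigl(1+2\eta M_0/\gamma\bigr)\max\bigl\{|e_{1,t,i}|,|e_{2,t,i}|\bigr\}.
\end{align*}
Iterating from the initialization $|e_{1,0,i}|=|e_{2,0,i}|=\alpha$ gives $|e_{k,t,i}|\le \alpha\exp(2\eta M_0 t/\gamma)$ for $k\in\{1,2\}$, and this is at most $\sqrt{\alpha}$ precisely as long as $t\le \gamma\log(1/\alpha)/(4\eta M_0)$. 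Taking $a_2 = 1/6 < 1/4$ completes the induction on the whole window $t\le T = a_2\log(1/\alpha)\gamma/(\eta M_0)$.

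The main obstacle is purely bookkeeping rather than conceptual: one must verify that the inductive hypothesis $\sqrt{\alpha}$ (as opposed to the trivial $\alpha$) is tight enough to keep the nonlinear term $e_{1,t,i}^2-e_{2,t,i}^2$ dominated by the noise level $M_0/\gamma$ throughout the window, and simultaneously loose enough to close the induction against the cumulative exponential growth driven by $|\Phi_{n,i}|$. The scaling $\alpha\le M_0^2/p$ in the theorem is precisely what guarantees $\alpha \ll M_0/\gamma$, so that the per-step multiplier is essentially $1+O(\eta/\gamma)$ and the target bound $\sqrt{\alpha}=\alpha\cdot\alpha^{-1/2}$ leaves exactly $\tfrac12\log(1/\alpha)$ worth of exponential budget, which matches the window length with constant $a_2=1/6$.
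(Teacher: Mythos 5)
Your proof is correct and follows essentially the same route as the paper: induction on $t$, reduce the error dynamics to a scalar multiplicative recursion per coordinate in $S^c$, bound the multiplier by $1+\eta\bigl(O(\alpha)+\|\Phi_n-\mu^*\beta^*\|_\infty\bigr)$ using the sub-exponential concentration of $\Phi_n$ (the paper's Lemma~\ref{lembernstein}), and then iterate to an exponential bound that stays below $\sqrt{\alpha}$ over the window $t\le a_2\log(1/\alpha)\gamma/(\eta M_0)$. The only cosmetic difference is that the paper bounds $\lvert 1\pm \eta x\rvert\le 1+\eta\lvert x\rvert$ directly via the triangle inequality without invoking positivity of the bracket, so your remark about keeping the bracket positive is unnecessary (though harmless); otherwise the constants differ only by bookkeeping.
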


\begin{proof}
	See \S\ref{genererr} for a detailed proof.
	\end{proof}

By  Lemma \ref{prop1.3}, there exists a constant $a_2$ such that we are able to control the pure error part at the same level with $\sqrt{\alpha}$ within the time horizon $0\le t\le T=a_2\log(1/\alpha)\sqrt{n/\log p}/(\eta M_0)$, where $M_0$ is an absolute constant.
Thus, by direct computation, we have
\#\label{eq:noise_2_norm}
\| {\bf 1}_{S^c} \odot ( \beta_t - \mu^* \beta^*) \|_2 ^2 = p \cdot \bigl \| \mathbf{e}_{1,t} \odot \mathbf{e}_{1,t} - \mathbf{e}_{2,t} \odot \mathbf{e}_{2 ,t} \bigr \|_{\infty } ^2  \leq 2M_0^4 / p,
\#
where $S^c$ is the complement of $S$.

As for the signal parts, recall that
we separate $S$ into $S_0$ and $S_1$, which corresponds to the supports of the strong and weak signals, respectively.
The following lemma characterizes the entrywise convergence of strong signal component $\beta_t\odot \mathbf{1}_{S_0}$.

 \begin{lemma}\label{prop1.4}
  	Under the assumptions given in Theorem~\ref{thmvec1},
  	when the absolute constant $C_s$ satisfies $C_s\ge 2M_0/|\mu^{*}|$,
  	 if we further choose $0<\eta \le{1}/[12(|\mu^{*}|+M_0)]$, we have
  	\begin{align}\label{up_signal}
  		\big\|\beta_{t}\odot\mathbf{1}_{S_0}-\mu^{*}\beta^{*}\odot\mathbf{1}_{S_0}\big\|_{\infty}\le 2M_0\sqrt{  \log n / n} 
  	\end{align}
  	holds with probability $1-2n^{-1}$ for all $t\ge a_1/[\eta(|\mu^{*}|s_m-M_0\sqrt{\log p/n})] \cdot \log(1/\alpha),$ where $a_1=21$ and $s_m = \min_{i \in S_0} |\beta^{*}_i| $.
  \end{lemma}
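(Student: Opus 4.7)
The plan is to do an entrywise trajectory analysis on $S_0$ and decompose the dynamics of each active coordinate into a growth phase and a convergence phase. Fix $i\in S_0$. Writing the updates componentwise gives
\[
w_{t+1,i}=w_{t,i}\bigl(1-\eta(\beta_{t,i}-\Phi_{n,i})\bigr),\qquad v_{t+1,i}=v_{t,i}\bigl(1+\eta(\beta_{t,i}-\Phi_{n,i})\bigr),
\]
and by symmetry between $w$ and $v$ we may assume $\Phi_{n,i}>0$ (the case $\Phi_{n,i}<0$ swaps the roles of $w$ and $v$). My first step is a concentration bound on $\Phi_n$: by the sub-Gaussian assumption in Assumption~\ref{ass1}(b) and Hoeffding's inequality combined with a union bound over the at most $s_0\le n$ indices of $S_0$, with probability at least $1-2n^{-1}$ we have $\max_{i\in S_0}|\Phi_{n,i}-\mu^{*}\beta_i^{*}|\le M_0\sqrt{\log n/n}$; the hypothesis $C_s\ge 2M_0/|\mu^{*}|$ then guarantees $\Phi_{n,i}\ge(|\mu^{*}|s_m-M_0\sqrt{\log p/n})>0$ uniformly over $S_0$. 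It is also convenient to obtain a uniform bound $|\Phi_{n,i}|\lesssim M_0$ on $S_0$ from the same concentration inequality.

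\textbf{Phase 1 (exponential growth).} Initially $w_{0,i}=v_{0,i}=\alpha$ so $\beta_{0,i}=0$. As long as $\beta_{t,i}\le \Phi_{n,i}/2$, the factor $(1-\eta(\beta_{t,i}-\Phi_{n,i}))\ge 1+\eta\Phi_{n,i}/2$, so $w_{t,i}$ grows geometrically, while the corresponding factor for $v_{t,i}$ lies in $(1-\eta\Phi_{n,i},1-\eta\Phi_{n,i}/2)$, which together with the stepsize bound $\eta\le 1/[12(|\mu^{*}|+M_0)]$ is a strict contraction. Hence $w_{t,i}^2\ge \alpha^2(1+\eta\Phi_{n,i}/2)^{2t}$ while $v_{t,i}^2$ stays bounded by $\alpha^2$. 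Solving for the exit time of the region $\{\beta_{t,i}\le\Phi_{n,i}/2\}$ yields
\[
t_1(i)\;\lesssim\;\frac{\log(1/\alpha^2)}{\eta\Phi_{n,i}}\;\le\;\frac{2\log(1/\alpha)}{\eta(|\mu^{*}|s_m-M_0\sqrt{\log p/n})}.
\]
I will also use the smallness of $\eta$ to rule out overshoot, i.e., to show $w_{t,i}^2\le 2\Phi_{n,i}$ throughout Phase~1.

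\textbf{Phase 2 (geometric contraction).} Let $\delta_{t,i}:=\beta_{t,i}-\Phi_{n,i}$. A direct expansion gives
\[
\delta_{t+1,i}=\delta_{t,i}\bigl[1-2\eta(w_{t,i}^2+v_{t,i}^2)\bigr]+\eta^2\delta_{t,i}^2\beta_{t,i}.
\]
Once $\beta_{t,i}\in[\Phi_{n,i}/2,2\Phi_{n,i}]$ and $v_{t,i}^2\ll \Phi_{n,i}$ (both maintained inductively using the $v$-factor contraction from Phase~1), we have $w_{t,i}^2+v_{t,i}^2\asymp \Phi_{n,i}$, and the stepsize condition makes the effective contraction factor $|1-2\eta(w_{t,i}^2+v_{t,i}^2)+\eta^2\delta_{t,i}\beta_{t,i}|\le 1-c\eta\Phi_{n,i}$ for some absolute $c>0$ whenever $|\delta_{t,i}|\le \Phi_{n,i}$. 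Iterating this recursion, $|\delta_{t,i}|$ decays geometrically with rate $1-c\eta\Phi_{n,i}$ from its Phase~1 value of order $\Phi_{n,i}$ until it hits the noise floor $M_0\sqrt{\log n/n}$, which takes an additional $O(\log(\Phi_{n,i}/(M_0\sqrt{\log n/n}))/(\eta\Phi_{n,i}))=O(\log(1/\alpha)/(\eta(|\mu^{*}|s_m-M_0\sqrt{\log p/n})))$ steps. Combining with $|\Phi_{n,i}-\mu^{*}\beta_i^{*}|\le M_0\sqrt{\log n/n}$ from the concentration step yields $|\beta_{t,i}-\mu^{*}\beta_i^{*}|\le 2M_0\sqrt{\log n/n}$ and taking the union over $i\in S_0$ gives the claim with $a_1$ chosen to absorb the constants from both phases.

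The main obstacle will be Phase~2: I need to show that once $\beta_{t,i}$ enters the neighborhood of $\Phi_{n,i}$ it cannot escape, which requires a simultaneous invariant on $w_{t,i}^2$ (two-sided, to get the contraction constant) and on $v_{t,i}^2$ (upper bound only, from the $v$-factor being a strict contraction). I plan to carry these invariants by induction together with the dynamics of $\delta_{t,i}$, using the stepsize bound $\eta\le 1/[12(|\mu^{*}|+M_0)]$ to absorb the quadratic error term $\eta^2\delta_{t,i}^2\beta_{t,i}$ and to prevent overshoot when $|\delta_{t,i}|$ is near its maximum.
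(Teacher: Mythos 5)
Your proposal is correct in outline and Phase~1 coincides with the paper's \textbf{Step~I}, but your Phase~2 takes a genuinely different route from the paper's \textbf{Steps II--III}. The paper never writes down the one-step error recursion you derive; instead it tracks $\beta_{t,i}$ directly through a dyadic scheme, proving that if $\beta_{t,i}\in\bigl[(1-2^{-m})\beta_i',(1-2^{-m-1})\beta_i'\bigr]$ then it takes at most $O(1/(\eta\beta_i'))$ steps to exceed $(1-2^{-m-1})\beta_i'$, summing over $m\le\lceil\log_2(\beta_i'/\epsilon)\rceil$ to reach $\beta_i'-\epsilon$. The "stay put" part is handled by a separate monotonicity lemma (Lemma~\ref{monotone_lem}) showing $|\beta_{t,i}|$ is nondecreasing and never exceeds $|\beta_i'|$; this one-sided invariant also gives $v_{t,i}^2\le\alpha^2$ for free, because $\delta_{t,i}<0$ always. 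Your contraction argument on $\delta_{t,i}=\beta_{t,i}-\Phi_{n,i}$ is sound and avoids the dyadic bookkeeping — after Phase~1 one has $|\delta_{t_1,i}|\le\Phi_{n,i}/2$, the one-step factor $1-2\eta(w_{t,i}^2+v_{t,i}^2)+\eta^2\delta_{t,i}\beta_{t,i}$ lies in $(-1,\,1-\eta\Phi_{n,i}/2]$ under the stated stepsize bound, and the possible overshoot-driven growth of $v_{t,i}^2$ can be bounded since $\sum_t|\delta_{t,i}|\lesssim|\delta_{t_1,i}|/(\eta\Phi_{n,i})$, so $v$ grows by at most a constant factor — but be aware that you lose the clean "no overshoot, $v$ monotone" structure, so you must carry simultaneous invariants on $\delta_{t,i}$, $\beta_{t,i}$, and $v_{t,i}^2$ by a joint induction, as you anticipate. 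Two smaller points: (i) the concentration on $\Phi_{n,i}$ over $S$ should be via Bernstein's inequality for sub-exponential variables (the product $y_i S(\xb_i)_j$ is sub-exponential, not bounded, so Hoeffding does not apply directly), as the paper does in Lemma~\ref{lembernstein}; and (ii) your Phase~2 iteration count $\log(\Phi_{n,i}/\epsilon)/(\eta\Phi_{n,i})$ dominates $\log(1/\alpha)/(\eta\Phi_{n,i})$ only if one checks $\sqrt{n/\log n}\lesssim 1/\alpha$, which holds under $\alpha\le M_0^2/p$ with $p\gtrsim\sqrt{n}$, matching the regime the paper works in.
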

\begin{proof}
	See \S\ref{genersig} for a detailed proof.
	\end{proof}

By  \eqref{up_signal},  with probability $1-2n^{-1}$,  we obtain
\begin{align}\label{sigconcen1}
\left\|\beta_t\odot\mathbf{1}_{S_0}-\mu^{*}\beta^{*}\odot\mathbf{1}_{S_0}\right\|_2^2\le M_1 \cdot s_0\log n / n ,
\end{align}
for  all $t\ge  a_1/[\eta(|\mu^{*}|s_m-M_0\sqrt{\log p/n})] \cdot \log(1/\alpha)$, where we denote  $M_1=4M_0^2$.

Note that Lemma \ref{prop1.4} requires that  $C_s \ge 2 M_0 / | \mu^*|$, where both $M_0$ and $| \mu^*|$ are absolute constants.
Furthermore, we would like to have
\eqref{up_noise} and \eqref{up_signal} hold simultaneously.
To this end,
it suffices to choose $C_s$ such that
\#\label{eq:require_T}
a_1/[\eta   (|\mu^{*}|s_m-M_0\sqrt{\log p/n})] \cdot \log(1/\alpha) \leq a_2\log(1/\alpha) \cdot \gamma/(\eta M_0),
\#
where we let $\gamma$ denote $\sqrt{ n / \log p}$.
Since $s_m \geq C_s \sqrt{\log p/n}$,
we have
\$
 \eta \big (|\mu^{*}|s_m-M_0\sqrt{\log p/n} \big ) \geq ( | \mu^* | \cdot C_s - M_0) \cdot \sqrt{\log p/n} ff.
\$
Thus, \eqref{eq:require_T} holds as long as $C_s  \ge(a_1/a_2+1)M_0/|\mu^{*}|.$
In other words, by choosing $C_s$ to be a proper absolute constant,
there exists an interval such that for all $t$ belongs to such an interval, Lemmas \ref{prop1.3} and \ref{prop1.4} holds simultaneously.

Next, we have the following lemma for the  dynamics of weak signal component $\beta_t\odot \mathbf{1}_{S_1}$.

 \begin{lemma}\label{prop1.5}
	Under the assumptions given in Theorem~\ref{thmvec1}, if we further choose $0<\eta \le{1}/[12(|\mu^{*}|+M_0)]$, then with probability at least $1 - 2p^{-1}$, we have
	\begin{align}\label{up_weak}
		\big\|\beta_{t}\odot\mathbf{1}_{S_1}-\mu^{*}\beta^{*}\odot\mathbf{1}_{S_1}\big\|_{\infty}\le (M_0+2C_s|\mu^{*}|) \cdot \sqrt{\log p / n} 
	\end{align}
	  for all $t\ge 0$.
\end{lemma}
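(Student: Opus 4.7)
The plan is to analyze the gradient descent dynamics coordinatewise on $S_1$, exploiting the fact that for weak signals both $\mu^{*}\beta_i^{*}$ and $\Phi_{n,i}$ are simultaneously of order $\sqrt{\log p/n}$. First I would invoke the same concentration bound that was used in the proofs of Lemmas \ref{prop1.3} and \ref{prop1.4} to conclude that, with probability at least $1-2p^{-1}$, $\|\Phi_n-\mu^{*}\beta^{*}\|_\infty \le M_0\sqrt{\log p/n}$. Combined with the defining inequality $|\beta_i^{*}|<C_s\sqrt{\log p/n}$ for $i\in S_1$, this gives $|\Phi_{n,i}|\le (M_0+C_s|\mu^{*}|)\sqrt{\log p/n}$, so the effective per-coordinate fixed point is itself small.

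Next I would rewrite \eqref{generalupd1}--\eqref{generalupd2} on coordinate $i$. Setting $g_{t,i}:=\beta_{t,i}-\Phi_{n,i}$ and $h_{t,i}:=w_{t,i}^{2}+v_{t,i}^{2}$, a direct computation from $w_{t+1,i}=w_{t,i}(1-\eta g_{t,i})$ and $v_{t+1,i}=v_{t,i}(1+\eta g_{t,i})$ yields
\begin{align*}
\beta_{t+1,i} &= \beta_{t,i}(1+\eta^{2}g_{t,i}^{2}) - 2\eta g_{t,i}\,h_{t,i},\\
h_{t+1,i} &= h_{t,i}(1+\eta^{2}g_{t,i}^{2}) - 2\eta g_{t,i}\,\beta_{t,i},
\end{align*}
together with the multiplicative identity $w_{t,i}v_{t,i}=\alpha^{2}\prod_{k<t}(1-\eta^{2}g_{k,i}^{2})\ge 0$, which pins the sign of $\beta_{t,i}$ to that of $\Phi_{n,i}$. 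The target estimate $|\beta_{t,i}-\mu^{*}\beta_i^{*}|\le (M_0+2C_s|\mu^{*}|)\sqrt{\log p/n}$ is equivalent to $|\beta_{t,i}|\le (M_0+3C_s|\mu^{*}|)\sqrt{\log p/n}$, and I would prove it by a joint induction on $(\beta_{t,i},h_{t,i})$, carrying the companion invariant $h_{t,i}\le C_h\sqrt{\log p/n}$ for a suitable absolute constant $C_h$.

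The base case $t=0$ is immediate because $\beta_{0,i}=0$ and $h_{0,i}=2\alpha^{2}$, which is negligible under the assumption $\alpha\le M_0^{2}/p$. For the inductive step, WLOG assume $\Phi_{n,i}\ge 0$. In the monotone ascent regime $0\le\beta_{t,i}\le\Phi_{n,i}$, the factor $-2\eta g_{t,i}$ is nonnegative, so $\beta_{t,i}$ increases toward $\Phi_{n,i}$ while $v_{t,i}$ decays geometrically as $(1-\eta|g_{t,i}|)v_{t,i}$; hence $v_{t,i}\le\alpha$ throughout this phase and $h_{t,i}=\beta_{t,i}+2v_{t,i}^{2}\le \Phi_{n,i}+2\alpha^{2}$, delivering the companion bound. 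Once $\beta_{t,i}$ enters an $\cO(\sqrt{\log p/n})$ neighborhood of $\Phi_{n,i}$, the stepsize restriction $\eta\le 1/[12(|\mu^{*}|+M_0)]$ combined with the inductive estimate on $h_{t,i}$ yields $2\eta h_{t,i}<1$, which converts the $\beta$-recursion into the contraction
\[
|\beta_{t+1,i}-\Phi_{n,i}| \le (1-2\eta h_{t,i})|g_{t,i}| + \eta^{2}g_{t,i}^{2}|\beta_{t,i}|,
\]
so no overshoot beyond an $\cO(\sqrt{\log p/n})$ neighborhood of $\Phi_{n,i}$ is possible. Combining this with $|\Phi_{n,i}-\mu^{*}\beta_i^{*}|\le M_0\sqrt{\log p/n}$ then yields \eqref{up_weak}.

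The main obstacle is to propagate the two invariants on $\beta_{t,i}$ and $h_{t,i}$ \emph{simultaneously}, since each appears linearly in the right-hand side of the other's recursion and a careless bound would compound exponentially in $t$. I plan to handle this by splitting the trajectory into (i) the monotone-ascent phase, where the representation $h_{t,i}=\beta_{t,i}+2v_{t,i}^{2}$ together with the geometric decay of $v_{t,i}$ lets me dominate $h_{t,i}$ in terms of $\beta_{t,i}$ alone; and (ii) the contraction phase near $\Phi_{n,i}$, where the stepsize bound forces both $|\beta|$ and $h$ to stay within their inductive levels, and the $\eta^{2}g^{2}$ terms are absorbed into the slack between $(M_0+C_s|\mu^{*}|)$ and $(M_0+2C_s|\mu^{*}|)$. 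Because both regimes are closed under the iteration, the bound continues to hold for \emph{all} $t\ge 0$, as claimed.
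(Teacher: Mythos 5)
Your proposal takes a genuinely different route from the paper. The paper's proof of Lemma~\ref{prop1.5} is essentially a one-liner: it invokes the already-proved Lemma~\ref{monotone_lem} (whose proof, in \S\ref{pf:monot}, does the real work), which gives $|\beta_{t,i}|\le|\beta_i'|=|\Phi_{n,i}|$ for all $t\ge 0$ and $i\in S$, and then combines this with the definition of $S_1$ and the triangle inequality to obtain $(M_0+2C_s|\mu^*|)\sqrt{\log p/n}$. You instead re-derive the boundedness of $\beta_{t,i}$ from scratch via the coupled $(\beta_{t,i},h_{t,i})$ recursion, which is a different and more self-contained decomposition; the algebra for the two recursions and for the identity $w_{t,i}v_{t,i}=\alpha^2\prod_{k<t}(1-\eta^2g_{k,i}^2)$ is correct.

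There are two concrete problems. First, the claimed ``equivalence'' between $|\beta_{t,i}-\mu^*\beta_i^*|\le(M_0+2C_s|\mu^*|)\sqrt{\log p/n}$ and $|\beta_{t,i}|\le(M_0+3C_s|\mu^*|)\sqrt{\log p/n}$ is false: the triangle inequality gives only the forward implication, which is the useless direction. Establishing $|\beta_{t,i}|\le(M_0+3C_s|\mu^*|)\sqrt{\log p/n}$ would yield $|\beta_{t,i}-\mu^*\beta_i^*|\le(M_0+4C_s|\mu^*|)\sqrt{\log p/n}$, weaker than the lemma. What you actually need is $|\beta_{t,i}|\le(M_0+C_s|\mu^*|)\sqrt{\log p/n}$, i.e.\ $|\beta_{t,i}|\le|\Phi_{n,i}|$, which is precisely the statement of Lemma~\ref{monotone_lem}. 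Second, your contraction inequality controls only the magnitude $|g_{t,i}|$; starting from $|g_{0,i}|=|\Phi_{n,i}|$, the sharpest it can deliver is $|g_{t,i}|\le|\Phi_{n,i}|$, hence $|\beta_{t,i}|\le 2|\Phi_{n,i}|$, which is a factor of two too loose. The statement that ``no overshoot is possible'' cannot be extracted from the contraction bound alone --- it is a one-sided claim ($\beta_{t,i}\le\Phi_{n,i}$ when $\Phi_{n,i}>0$) that requires its own inductive argument; the paper proves it in Lemma~\ref{monotone_lem} via a dyadic decomposition of $[0,\beta_i']$ combined with the stepsize restriction and the companion invariant $v_{t,i}^2\le\alpha^2$. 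The cleanest repair is to replace your two-phase derivation with a direct citation of Lemma~\ref{monotone_lem} and then apply the triangle inequality with the correct constant.
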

\begin{proof}
See \S\ref{pf_weak} for a detailed proof.
\end{proof}


Finally, combining \eqref{eq:noise_2_norm}, \eqref{sigconcen1}, and  \eqref{up_weak},
with probability at least
$1 - 2n^{-1} - 2p^{-1}$,
for any $T_1$ belongs to the interval
\#\label{eq:interval}
\big[a_1\log(1/\alpha)/[\eta(|\mu^{*}|s_m-M_0\sqrt{\log p/n})],~ a_2\log(1/\alpha)\gamma/(\eta M_0) \big ],
\#
it holds that
\begin{align} \label{eq:rate_final}
	\left\|\beta_{T_1}-\mu^{*}\beta^{*}\right\|^2_2\le M_1\cdot \frac{s_0\log n}{n}+M_2 \cdot \frac{s_1\log p}{n}+ \frac{2M_0^4}{p} ,
\end{align}
where we define
  $M_2=(M_0+2C_s |\mu^{*} | )^2$.
  Since $p$ is much larger than $n$, the last term, $2M_0^4 / p$, is negligible.

Finally, it remains to
establish the  $\ell_2$-statistical  rate for the normalized   iterates.	
Note that  $\mu^*=\mathbb{E}[f'(\langle \mathbf{x},\beta^{*}\rangle)]$ is an absolute  constant.
 Without loss of generality, we assume $\mu^{*}>0$.
 Also recall that $\| \Sigma^{1/2} \beta^*\|_2  = 1$ and $\Sigma$ satisfies Assumption \ref{ass1}-(a).
 Thus,
 when $n$ is sufficiently large,
 with probability at least $1 - 2n^{-1} - 2p^{-1}$,
 for any  $T_1$ in the interval in \eqref{eq:interval}, we have
\begin{align*}
	\|\Sigma^{1/2}\beta_{T_1}\|_2& \ge\mu^{*} -\|\Sigma^{1/2}(\beta_{T_1}-\mu^{*}\beta^{*})\|_2
	\geq \mu^{*} - \| \Sigma^{1/2} \|_{\oper} \cdot \| \beta _{T_1} - \mu^* \beta ^* \|_2
	\notag \\
	& \ge \mu^{*}-\sqrt{C_{\max}}\cdot\sqrt{M_1\cdot  s_0\log n / n +M_2\cdot s_1\log p / n+2 M_0^4 / p}   \ge \frac{\mu^{*}}{2}.
\end{align*}
 Thus, by direct computation, we further obtain
\begin{align*}
	\left\|\frac{\beta_{T_1}}{\|\Sigma^{1/2}\beta_{T_1}\|_2}-\beta^{*}\right\|_2^2&=\frac{ \bigl\|\beta_{T_1}-  \|\Sigma^{1/2}\beta_{T_1}\|_2\cdot\beta^{*} \bigr \|_2^2 }{\|\Sigma^{1/2}\beta_{T_1}\|_2^2}\\&\le\frac{2\|\beta_{T_1}-\mu^{*}\beta^{*}\|_2^2+2\bigl\|\mu^{*}\beta^{*}- \|\Sigma^{1/2}\beta_{T_1}\|_2 \beta^{*} \bigr \|_2^2}{\|\Sigma^{1/2}\beta_{T_1}\|_2^2}\\
	&\le\frac{4}{\mu^{*2}} \cdot \|\beta_{T_1}-\mu^{*}\beta^{*}\|_2^2+\frac{4C_{\max}}{\mu^{*2}C_{\min}} \cdot \|\mu^{*}\beta^{*}-\beta_{T_1}\|_2^2\\&\le M_3\frac{s_0\log n}{n}+M_4\frac{s_1\log p}{n}+\frac{c_1}{p},
\end{align*}
where we define  $M_3=4M_1(1+C_{\max}/C_{\min})/\mu^{*2}$, $M_4=4M_2(1+C_{\max}/C_{\min})/\mu^{*2}$, and $c_1=8M_0^4(1+C_{\max}/C_{\min})/\mu^{*2},$
which are all absolute constants.
Here the second inequality follows from Assumption \ref{ass1}-(a).
Therefore, we conclude the proof
  of Theorem~\ref{thmvec1}.
\end{proof}
Next, in the following three subsections, namely  \S\ref{genererr}, \S\ref{genersig} and \S\ref{pf_weak}, we prove  Lemma \ref{prop1.3}, Lemma \ref{prop1.4} and Lemma \ref{prop1.5}, respectively.
\revise{
\subsection{Proof pf Theorem \ref{signconsist}}\label{pf_sign_vec}
\begin{proof}
	The proof of Theorem \ref{signconsist} follows directly follows from our results given in Lemma \ref{prop1.3}, Lemma \ref{prop1.4}, Lemma \ref{prop1.5}, so we omit the corresponding details.
	\end{proof}
}

\subsubsection{ Proof of Lemma \ref{prop1.3}} \label{genererr}
\begin{proof}
	 Here we prove  Lemma \ref{prop1.3} by induction hypothesis. It holds that our initializations $\|\eb_{1,0}\|_{\infty},\|\eb_{2,0}\|_{\infty}$ satisfy our conclusion given in Lemma \ref{prop1.3}.   As we initialize $\eb_{1,0}$ and $\eb_{2,0}$ with $\|\eb_{1,0}\|_{\infty}=\alpha\le \sqrt{\alpha}\le M_0/\sqrt{p}$ and $\eb_{2,0}$ with $\|\eb_{2,0}\|_{\infty}=\alpha\le \sqrt{\alpha}\le M_0/\sqrt{p}$,  Lemma \ref{prop1.3} holds when $t=0$. Next, for any $t^{*}$ with $0\le t^{*}< T=a_2\log(1/\alpha)\gamma/(\eta M_0)$, if the conclusion of Lemma \ref{prop1.3} holds for all  $t$ with $0\le t\le t^{*}$, we need to verify that it also holds at step $t^{*}+1$. 

	
	      From our gradient descent updates  given in \eqref{generalupd1}-\eqref{generalupd2}, the updates with respect to pure error parts $\eb_{1,t}$, $\eb_{2,t},\,t\ge 0$ are obtained as follows
	\begin{align}
	&\mathbf{e}_{1,t+1}=\mathbf{e}_{1,t}-\eta\big(\beta_t-\Phi_n\big)\odot \mathbf{e}_{1,t},\label{perr1}
	&\mathbf{e}_{2,t+1}=\mathbf{e}_{2,t}+\eta\big(\beta_t-\Phi_n\big)\odot \mathbf{e}_{2,t}.
	\end{align}
  As $\eb_{\ell,t}\odot\EE[\Phi_n]=\eb_{\ell,t}\odot\mu^{*}\beta^{*}=\mathbf{0}$ holds for any $l\in \{1,2\}$ by the definition of $\eb_{\ell,t}$, the following inequality always holds according to  $\beta_t\odot \mathbf{1}_{S^c}=\eb_{1,t}\odot \eb_{1,t}-\eb_{2,t}\odot\eb_{2,t}$ and the triangle inequality,
			\begin{align}				
	\left\|\mathbf{e}_{l,t+1}\right\|_{\infty}&\le\big[1+\eta\big(\left\|\mathbf{e}_{1,t}\odot \mathbf{e}_{1,t}\right\|_{\infty}+\left\|\mathbf{e}_{2,t}\odot \mathbf{e}_{2,t}\right\|_{\infty}  +\|\Phi_n-\EE[\Phi_n]\|_{\infty}\big)\big]\cdot\left\|\mathbf{e}_{l,t}\right\|_{\infty}.\label{egup1}
\end{align}
	According to our induction hypothesis, for all $l \in\{1,2\}$, we are able to bound $\|\mathbf{e}_{l,t}\|_{\infty},$  at the same order with $\sqrt{\alpha}$, when $t\le t^{*}$. Thus, we can replace $\|\mathbf{e}_{l,t}\odot \mathbf{e}_{l,t}\|_{\infty}$ by  $\alpha$ in \eqref{egup1}.
	Then
	we apply  the following lemma to obtain an upper bound on $\|\Phi_n-\EE[\Phi_n]\|_{\infty}$.

	\begin{lemma} \label{lembernstein}
		Under the assumptions given in Theorem \ref{thmvec1},  with probability $1-2p^{-1}$, we obtain
		\begin{align*}
			\big\|\Phi_n-\EE[\Phi_n] \big\|_{\infty}\le M_0\sqrt{\frac{\log p}{n}},
		\end{align*}  where  $M_0$ is an absolute  constant that  is proportional to $\max\{\|f\|_{\psi_2},\sigma\}$.
	\end{lemma}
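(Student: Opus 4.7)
\textbf{Proof plan for Lemma~\ref{lembernstein}.} The plan is to establish the $\ell_\infty$ concentration coordinatewise via a Bernstein-type inequality for sub-exponential random variables, followed by a union bound over the $p$ coordinates of $\Phi_n$. Fix $j \in [p]$ and consider the scalar random variables $Z_{i,j} := y_i \cdot S(\xb_i)_j - \EE[y_i \cdot S(\xb_i)_j]$ for $i \in [n]$, which are i.i.d.\ with mean zero. The $j$-th entry of $\Phi_n - \EE[\Phi_n]$ is simply $n^{-1}\sum_{i=1}^n Z_{i,j}$.

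First, I would control the sub-exponential norm of $Z_{i,j}$. Writing $y_i = f(\langle \xb_i,\beta^*\rangle) + \epsilon_i$, Assumption~\ref{ass1}-(b) gives $\|y_i\|_{\psi_2} \lesssim \|f\|_{\psi_2} + \sigma$. Under the Gaussian design of \S\ref{gaussdesign} we have $S(\xb_i) = \Sigma^{-1}(\xb_i - \mu) \sim N(0,\Sigma^{-1})$, so each coordinate $S(\xb_i)_j$ is Gaussian with variance at most $\|\Sigma^{-1}\|_{\oper} \le 1/C_{\min}$ by Assumption~\ref{ass1}-(a), hence sub-Gaussian with norm $\lesssim C_{\min}^{-1/2}$. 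Since the product of two sub-Gaussian variables is sub-exponential with norm bounded by the product of their sub-Gaussian norms (a standard fact; see, e.g., Lemma~2.7.7 of Vershynin), we obtain $\|y_i S(\xb_i)_j\|_{\psi_1} \lesssim (\|f\|_{\psi_2} + \sigma)/\sqrt{C_{\min}}$, and therefore $\|Z_{i,j}\|_{\psi_1}$ is bounded by the same quantity, which is an absolute constant of the order $\max\{\|f\|_{\psi_2},\sigma\}$ under our assumptions.

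Next, I would apply Bernstein's inequality for centered sub-exponential variables: for some absolute constant $c>0$,
\begin{align*}
\PP\!\left(\left|\frac{1}{n}\sum_{i=1}^n Z_{i,j}\right| \ge t\right) \le 2\exp\!\left(-c\, n \min\!\left\{\frac{t^2}{K^2},\frac{t}{K}\right\}\right),
\end{align*}
where $K$ is the uniform upper bound on $\|Z_{i,j}\|_{\psi_1}$. Setting $t = M_0 \sqrt{\log p / n}$ with $M_0$ a sufficiently large absolute multiple of $K$ makes the quadratic term dominate (since $\sqrt{\log p/n} \ll 1$ in the high-dimensional regime we work in), yielding a probability bound of at most $2 p^{-3}$ for each fixed $j$. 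A union bound over $j \in [p]$ then gives the stated $\|\Phi_n - \EE[\Phi_n]\|_\infty \le M_0 \sqrt{\log p / n}$ with probability at least $1 - 2p^{-2} \ge 1 - 2p^{-1}$ (adjusting the constant $M_0$ if desired to match the stated exponent).

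The only mild obstacle is bookkeeping of the constants so that $M_0$ depends only on $\max\{\|f\|_{\psi_2},\sigma\}$ and on the spectral bounds of $\Sigma$, and ensuring that the linear regime of Bernstein's inequality does not kick in; this is automatic since $\sqrt{\log p /n} = 1/\gamma \ll 1$ under the sample-complexity regime assumed throughout \S\ref{sectionvec}. No additional structural assumption on $\beta^*$ is needed in this step, since the bound is purely coordinatewise concentration of $\Phi_n$ around its mean $\mu^*\beta^*$.
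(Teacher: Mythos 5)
Your proposal is correct and follows essentially the same route as the paper: bound $\|S(\xb_i)_j\|_{\psi_2}$ via the spectral lower bound on $\Sigma$, use that the product of a sub-Gaussian $y_i$ with a sub-Gaussian coordinate $S(\xb_i)_j$ is sub-exponential, invoke Bernstein's inequality in the quadratic regime, and union bound over $j\in[p]$. The only cosmetic difference is that the paper first splits $y_i = f(\langle\xb_i,\beta^*\rangle) + \epsilon_i$ and runs the concentration argument separately for the $f$-part and the noise part (each contributing $p^{-1}$ to the failure probability, hence $1-2p^{-1}$), whereas you treat $y_i$ directly as a single sub-Gaussian with $\|y_i\|_{\psi_2}\lesssim\|f\|_{\psi_2}+\sigma$; both yield $M_0\propto\max\{\|f\|_{\psi_2},\sigma\}/\sqrt{C_{\min}}$ and the same statement.
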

	\begin{proof}
		See \S\ref{sect:lembernstein} for a detailed proof.
	\end{proof}
  Combining \eqref{egup1}, the induction hypothesis,  and  Lemma \ref{lembernstein},	we further obtain that
	\begin{align*}
		\left\|\mathbf{e}_{l,t+1}\right\|_{\infty}&\le\left[1+\eta\left(2\alpha+M_0\sqrt{{\log p}/{n}}\right)\right]\cdot\left\|\mathbf{e}_{l,t}\right\|_{\infty}
	\end{align*}
 holds with probability $1-2p^{-1}$ for all $t\le t^*$ and $l\in\{1,2\}$.
 By our assumption on $\alpha$ in Theorem \ref{thmvec1}, we obtain
 \begin{align*}
 	2\alpha=\frac{2M_0^2}{p} \le 2M_0\sqrt{\frac{\log p}{n}},
 \end{align*}
  since we have $M_0=\cO(1)$ and $M_0/p\le \sqrt{\log p/n}$ when $p\ge M_0\sqrt{n/\log p}$.
Now we define an absolute constant  $c_1'$ as $c_1'=1/(3M_0)$ and denote  $a_2=1/6$.
Let $T$ be defined as  $T=a_2\log(1/\alpha)\gamma/(\eta M_0)$.
By direct computation,  we then have
  \begin{align*}
 	 \left\|\mathbf{e}_{l,t^*+1}\right\|_{\infty}&\le\big[1+3\eta M_0\sqrt{\log p/n}\big]^{t^*+1}\cdot \left\|\mathbf{e}_{l,0}\right\|_{\infty}\\&= \big[1+\eta/(c_1'\gamma)\big]^{t^*+1}\cdot \left\|\mathbf{e}_{l,0}\right\|_{\infty}
 	\le\exp\big(T \cdot \log\big(1+{\eta}/{(c_1'\gamma)}\big)\big)\cdot\alpha\\&\le \exp(T\cdot \eta/(c_1'\gamma))\cdot\alpha=\exp((1/2)\cdot\log(1/\alpha))\cdot \alpha= \sqrt{\alpha}\le M_0/\sqrt{p},
 \end{align*}
for any $l\in\{1,2\}$.
  Here we denote
   $\gamma=\sqrt{n/\log p}$,
 the second inequality follows from $t^* + 1 \leq T$, and the third inequality follows from $\log (1+ x) <  x$ for all $x >0$.
   Thus, our induction hypothesis also holds for $t^{*}+1$. In addition, as $t^{*} < T$ is arbitrarily chosen, we conclude the proof of  Lemma \ref{prop1.3}.
\end{proof}
\subsubsection{Proof of Lemma \ref{prop1.4}} \label{genersig}
\begin{proof}
	Following  \eqref{generalupd1} and \eqref{generalupd2}, the dynamics of $\beta_{t}^{(1)}:=\mathbf{s}_{1,t}\odot \mathbf{s}_{1,t}=\mathbf{1}_{S_0}\odot \mathbf{w}_{t+1}\odot \mathbf{w}_{t+1},$  $\beta_{t}^{(2)}:=\mathbf{s}_{2,t}\odot \mathbf{s}_{2,t}=\mathbf{1}_{S_0}\odot \mathbf{v}_{t+1}\odot \mathbf{v}_{t+1}$ and $\beta_{t,S_0}:=\mathbf{1}_{S_0}\odot \beta_t$ are obtained as
	\begin{align}\label{upu_i2}
	\beta_{t+1}^{(1)}&=\big[\mathbf{1}-\eta\big(\beta_{t,S_0}-\mu^{*}\beta^{*}\odot\mathbf{1}_{S_0}+\mu^{*}\beta^{*}\odot\mathbf{1}_{S_0}-\Phi_n\odot\mathbf{1}_{S_0}\big)\big]^2\odot \beta_{t}^{(1)},\\\label{upv_i2}
		\quad\beta_{t+1}^{(2)}&=\big[\mathbf{1}+\eta\big(\beta_{t,S_0}-\mu^{*}\beta^{*}\odot\mathbf{1}_{S_0}+\mu^{*}\beta^{*}\odot\mathbf{1}_{S_0}-\Phi_n\odot\mathbf{1}_{S_0}\big)\big]^2\odot \beta_{t}^{(2)},\\
	   \beta_{t+1,S_0}&=\beta_{t+1}^{(1)}-\beta_{t+1}^{(2)}.\nonumber
	\end{align}
We denote the $i$-th entry of $  \Phi_n- \mu^{*}\beta^{*}$ as $\xi_i$ for all $i \in [p]$.
Applying the same proof as in Lemma \ref{lembernstein},
with probability at least $1-2n^{-1}$, we have
  $|\xi_i|\le M_0\sqrt{\log n/n}$   for all $ i\in S$.
  To simplify the notation, for any $i \in S_0$,
  we define $\beta_{i}^{'}:=\mu^{*}\beta_i^{*}+\xi_i$.
  Under the assumption that $ C_{s} \geq 2 M_0 / | \mu^*|$,
  for any $i \in S_0$,
  we have
  $$
  |\beta_i' | \geq | \mu^{*}\beta_i^{*}| - |\xi_i | \geq  C_s \cdot | \mu^*| \cdot \sqrt{\log p /n}  - M_0 \sqrt{\log n/n} \geq M_0\sqrt{\log p /n} >0.
  $$
Without loss of generality, here we just analyze entries $i\in S_0$ with $\beta_{i}^{'}:=\mu^{*}\beta_i^{*}+\xi_i> 0$.
 Analysis for  the case with $\beta_{i}^{'}<0,i\in S_0$ is almost the same and is thus omitted.  We  divide our analysis into three steps.

 In specific, in \textbf{Step I}, as we initialize $\beta_{0,i}=0$ for any fixed $i\in S_0$, we prove that after $4\log(\beta_i^{'}/\alpha^2)/(\eta\beta_i^{'})$ iterations, we have $\beta_{t,i}\ge \beta_{i}^{'}/2.$
Next, in \textbf{Step~II}, we quantify the number of iterations required for achieving $\beta_{t, i } \geq \beta_i' - \epsilon$, where $\epsilon=M_0\sqrt{\log n/n}$.
Finally, in {\bf Step III}, we show that, when $t$ is sufficiently large and the stepsize $\eta$ is sufficiently small,  $\beta_{t,i}$ will always stay in the interval
$[\beta_{i}^{'}-\epsilon,\beta_{i}^{'}]$, which enables us to conclude the proof.

\vspace{5pt}
{\noindent \textbf{Step I.}} For any fixed $\,\,i\in S_0,$, when we have $0\le\beta_{t,i}\leq {(\mu^{*}\beta^{*}_i+\xi_i)}/{2},$  by \eqref{upu_i2} and \eqref{upv_i2},     we   get geometric increment of $\beta_{t,i}^{(1)}$ and decrement of $\beta_{t,i}^{(2)}$ respectively
	\begin{align*}
	\beta_{t+1,i}^{(1)}=w_{t+1,i}^2 \ge \left[1+\frac{\eta(\mu^{*}\beta^{*}_i+\xi_i)}{2}\right]^2\cdot w_{t,i}^2, \qquad
	\beta_{t+1,i}^{(2)}=v_{t+1,i}^2 \le \left[1-\frac{\eta(\mu^{*}\beta_i^{*}+\xi_i)}{2}\right]^2\cdot v_{t,i}^2.
\end{align*}
This first stage ends when our $\beta_{t,i}$ exceeds $(\mu^{*}\beta^{*}_i +\xi_i)/2$, our goal in \textbf{Step I} is to  estimate    $t_{i,0}$ that satisfies
		\begin{align*}
\beta_{t,i}\ge \bigg[1+\frac{\eta(\mu^{*}\beta_i^{*}+\xi_i)}{2}\bigg]^{2t_{i,0}} \cdot \alpha^2-\left[1-\frac{\eta(\mu^{*}\beta_i^{*}+\xi_i)}{2}\right]^{2t_{i,0}} \cdot \alpha^2\ge \frac{\mu^{*}\beta^{*}_i+\xi_i}{2}.
	\end{align*}
	That is, $t_{i,0}$ is the time when $\beta_{t, i}$ first exceeds $(\mu^* \beta^* + \xi_i) / 2$.
It could be hard for us to pinpoint  $t_{i,0}$  exactly; instead, we find a sufficient condition for $t_{i,0}$, i.e. when $t\ge t_{i,0}$, we must have $\beta_{t,i}\ge(\mu^{*}\beta^{*}_i+\xi_i)/{2}.$
Observe that it is sufficient to solve the following inequality for $t_{i,0}$,
\begin{align*}
\left[1+\frac{\eta(\mu^{*}\beta_i^{*}+\xi_i)}{2}\right]^{2t_{i,0}} \cdot \alpha^2\ge\frac{\mu^{*}\beta^{*}+\xi_i}{2}+\alpha^2,
\end{align*}
which is equivalent to finding $t_{i,0}$ satisfying
\begin{align*}
t_{i,0}\ge T_{i,0}:=\frac{1}{2}\log\bigg(\frac{\mu^{*}\beta_i ^{*}+\xi_i}{2\alpha^2}+1\bigg)\Big{/}\log\bigg(1+\frac{\eta(\mu^{*}\beta_i ^{*}+\xi_i)}{2}\bigg).
\end{align*}
In the following, we use $\beta_i^{'}$ to represent $\mu^{*}\beta_i^{*}+\xi_i$ for simplicity and we obtain
\begin{align*}
T_{i,0} & =\frac{1}{2}\log\bigg(\frac{\beta_i^{'}}{2\alpha^2}+1\bigg)\Big{/}\log\bigg(1+\frac{\eta\beta_i^{'}}{2}\bigg)\\& \le 2\log\bigg(\frac{\beta_i^{'}}{\alpha^2}\bigg)\cdot\bigg(1+\frac{\eta\beta_i^{'}}{2}\bigg)\Big{/}\big(\eta\beta_i^{'}\big) \le 4\log\bigg(\frac{\beta_i^{'}}{\alpha^2}\bigg)\Big{/}\big(\eta\beta_i^{'}\big),
\end{align*}
where   the first inequality follows from $x\log(x)-x+1\ge 0,$ when $x\ge 0$ as well as our assumption on $\alpha$, and the second inequality holds due to the assumption  on $\eta$.
Thus, we set $t_{i,0}=4\log(\beta_i^{'}/{\alpha^2})/\eta\beta_i^{'}$ such that for all $t\ge t_{i,0}$ we get
$\beta_{t,i}\ge \beta_i^{'}/{2}$ for all  $i\in S_0$ and we complete the \textbf{Step I}.

\vspace{5pt}
{\noindent \textbf{Step II.}}  Recall that we denote $\beta_i' = \mu^* \beta_i^* + \xi _i$ for notational simplicity.
Besides, we define
$\epsilon=M_0\sqrt{\log n/n}$
and
$m_{i,1} = \lceil\log_2({\beta_i^{'}}/{\epsilon})\rceil$ for all $i \in S_0$.
For any fixed $i\in S_0$, if there exists some $m$ with  $1\le m\le  m_{i,1}$ such that   $(1-1/{2^{m}})\beta_i^{'}\le\beta_{t,i}\le (1-1/{2^{m+1}})\beta_i^{'}$,      according to  \eqref{upu_i2} and \eqref{upv_i2}, we obtain
 \begin{align*}
 w_{t+1,i}^2 \ge\bigg(1+\frac{\eta\beta_i^{'}}{2^{m+1}}\bigg)^2\cdot w_{t,i}^2, \qquad
 v_{t+1,i}^2 \le\bigg(1-\frac{\eta\beta_i^{'}}{2^{m+1}}\bigg)^2\cdot v_{t,i}^2.
 \end{align*}
 Note that {\bf Step I} shows that
 $\beta_{t,i} \geq (1- 1/2^m) \beta_i'$ for $m = 1$ when $t \geq t_{i, 0}$.
We define $t_{i, m}$ as the smallest $t$ such that
$\beta_{t + t_{i,m}, i} \geq (1- 1/2^{m+1})\beta_i'$.
Intuitively, given that the current iterate $\beta_{t,i}$ is sandwiched by $(1- 1/2^{m })\beta_i'$ and $(1- 1/2^{m+1})\beta_i'$,
$t_{i,m}$ characterizes the number of iterations required for the sequence $\{ \beta_{t, i}\}_{t\geq 0}$ to exceed $(1- 1/2^{m+1})\beta_i'$.

In the sequel,   we   first  aim to obtain a sufficient condition  for $t_{i,m}$. By   construction, $t_{i,m}$ satisfies
 \begin{align*}
 &\beta_{t+t_{i,m},i}\ge w_{t,i}^2\bigg(1+ \frac{\eta\beta_i^{'}}{2^{m+1}}\bigg)^{2t_{i,m}}-\bigg(1-\frac{\eta\beta_i^{'}}{2^{m+1}}\bigg)^{2t_{i,m}}v_{t,i}^2\ge\bigg(1-\frac{1}{2^{m+1}}\bigg)\beta_i^{'}.
 \end{align*}
We assume for now that $v_{t,i} \leq \alpha$.
As we will show in {\bf Step III},
 we have
$\beta_{t,i} \leq \beta_i'$ for all $t $.
Then, by \eqref{upv_i2},
  $\{ v_{t,i}^2\}_{t\geq 0} $ forms  a  decreasing sequence and thus is bounded by $\alpha^2$.
Hence,  it suffices to find a $t_{i,m}$ that satisfies
 \begin{align*}
t_{i,m}\ge T_{i,m}:=\frac{1}{2}\log\bigg(\frac{(1-1/2^{m+1})\beta_i^{'}+\alpha^2}{w_{t,i}^2}\bigg)\Big{/}\log\bigg(1+\frac{\eta\beta_i^{'}}{2^{m+1}}\bigg).
 \end{align*}
To get an upper bound of $T_{i,m}$, under the  assumption that $w_{t,i}^2\ge \beta_{t,i}\ge(1-{1}/{2^m})\beta_{i}^{'},$ we obtain
 \begin{align*}
 T_{i,m}&=\frac{1}{2}\log\bigg(\frac{(1-1/2^{m+1})\beta_i^{'}+\alpha^2}{w_{t,i}^2}\bigg)\Big{/}\log\bigg(1+\frac{\eta \beta_i^{'}}{2^{m+1}}\bigg)\\&\le\frac{1}{2}\log\bigg(\frac{(1-1/2^{m+1})\beta_i^{'}+\alpha^2}{(1-1/2^m)\beta_i^{'}}\bigg)\Big{/}\bigg(\frac{\eta\beta_i^{'}/2^{m+1}}{1+\eta\beta_i^{'}/2^{m+1}}\bigg)\\
 &=\frac{1}{2}\log\bigg(1+\frac{1/2^{m+1}}{1-1/2^{m}}+\frac{\alpha^2}{(1-1/2^m)\beta_i^{'}}\bigg)\cdot\bigg(1+\eta\frac{1}{2^{m+1}}\beta_i^{'}\bigg)\Big{/}\bigg(\frac{\eta\beta_i^{'}}{2^{m+1}}\bigg),
 \end{align*}
where the second inequality follows from $x\log(x)-x+1\ge 0,$ when $x\ge 0.$ By direct calculation, we further get
\begin{align}
T_{i,m}&\le\bigg(\frac{1/2^{m+1}}{1-1/2^{m}}+\frac{\alpha^2}{(1-1/2^m)\beta_i^{'}}\bigg)\Big{/}\bigg(\frac{\eta\beta_i^{'}}{2^{m+1}}\bigg) 
 \le \frac{2}{\eta\beta_i^{'}}+\frac{2^{m+2}\alpha^2}{\eta\beta_i^{'2}}.\label{eq:sigup1}
\end{align}
Recall that we assume $m \leq m_{i,1} = \lceil \log_2(\beta_i^{'}/\epsilon)\rceil$ with $\epsilon=M_0\sqrt{\log n/n}$.
Then we have $2^{m+2}\le 4{\beta_i^{'}}/{\epsilon}\le  \sqrt{{n}/{\log n}}\beta_i^{'}/M_0$.
Meanwhile, under the  assumption on the initial value $\alpha^2$ stated in Theorem \ref{thmvec1}, we have $\alpha^2\le {M_0^4}/{p^2}$.  Then we   bound $2^{m+2}{\alpha^2}/({\eta\beta_i^{'2}})$ given in \eqref{eq:sigup1} as
 \begin{align}\label{eq:sigup2}
  \frac{2^{m+2}\alpha^2}{\eta\beta_i^{'2}}\le \sqrt{\frac{n}{\log n}}\cdot\frac{M_0^3}{p^2\eta\beta_i^{'}}\le\frac{1}{\eta\beta_i^{'}},
 \end{align}
 where the second inequality holds when  $p\ge (nM_0^6/\log n)^{1/4}.$
Combining   \eqref{eq:sigup1} and \eqref{eq:sigup2},  we finally bound $T_{i,m}$ as
\begin{align*}
 T_{i,m}\le \frac{3}{\eta\beta_i^{'}},
 \end{align*}
for any $m\le m_{i,1}.$
Thus, if there exists an $m \leq m_{i,1}$ such that $\beta_{t,i} \in [ (1- 1/2^{m}) \beta_i',(1- 1/2^{m+1}) \beta_i' ]$,
when $t_{i,m}\ge 3/\eta\beta_i^{'}$, we have $\beta_{t+t_{i,m},i}\ge(1-{1}/{2^{m+1}})\beta_i^{'}$.

Furthermore,
by the definition of $m_{i,1}$, we have ${\beta_i^{'}}/{2^{m_1}}\le\epsilon$, where   $\epsilon= M_0\sqrt{{\log n}/{n}}$.
Therefore, with at most $ \sum_{m=0}^{m_{i,1}}  T_{i,m}$ iterations, we have $\beta_{t,i}\ge\beta_{i}^{'}-\epsilon$.
By the definition of $\alpha$ and $\epsilon$,
we have
\begin{align}\label{timax}
	\sum_{m=0}^{m_{i,1}}  T_{i,m} \leq 4\log\bigg(\frac{\beta_i^{'}}{\alpha^2}\bigg)\Big{/}\big(\eta\beta_i^{'}\big)+3\bigg\lceil\log_2\bigg(\frac{\beta_i^{'}}{\epsilon}\bigg)\bigg\rceil\Big{/}\big(\eta\beta_i^{'}\big) \leq
	 7\log\bigg(\frac{\beta_i^{'}}{\alpha^2}\bigg)\big{/}\big(\eta\beta_i^{'}\big).
	\end{align}
	Thus, when $t \geq 7\log ({\beta_i^{'}}/ {\alpha^2}  ) / (\eta\beta_i^{'} )$,
	we have $\beta_{t, i} \geq \beta_i' - \epsilon$.
	
	Now we conclude {\bf Step II}.
	It remains to characterize the
	   dynamics of $\beta_{t,i}$ when $\beta_{t,i}\ge (1-1/2^{m})\beta_i'$ with $m>m_1=\lceil\log_2({\beta_i^{'}}/{\epsilon})\rceil$, which is handled in the last step of the proof.


\vspace{5pt}
{\noindent \textbf{Step III.}}
In the following lemma,
we prove that, when the stepsize $\eta$ is sufficiently small,  for any $i\in S$,
absolute value of $\beta_{t,i}$  keeps increasing as $t$ grows but never exceed $|\beta_i'|$.

  \begin{lemma}\label{monotone_lem}
When  the stepsize $\eta $ satisfies  $\eta\le{1}/[12(|\mu^{*}|+M_0)]$, we have $|\beta_{t+1,i}|\ge|\beta_{t,i}|$ and $|\beta_{t,i}|\le|\beta_i^{'}|$  for all $t\ge 0$ and $i\in S$.
\end{lemma}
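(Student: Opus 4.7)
My plan is to prove Lemma \ref{monotone_lem} by induction on $t$, maintaining simultaneously the bound on $\beta_{t,i}$ and a control on the ``wrong-sign'' variable (either $v_{t,i}$ or $w_{t,i}$ depending on the sign of $\beta_i'$). Without loss of generality I would treat the case $\beta_i' := \mu^{*}\beta_i^{*}+\xi_i > 0$; the case $\beta_i'<0$ follows by interchanging the roles of $w$ and $v$. The coordinatewise updates on $S$ read
\begin{align*}
w_{t+1,i} &= w_{t,i}\bigl(1+\eta(\beta_i'-\beta_{t,i})\bigr), \qquad v_{t+1,i} = v_{t,i}\bigl(1-\eta(\beta_i'-\beta_{t,i})\bigr),
\end{align*}
after replacing $\Phi_{n,i}$ by $\beta_i'$ on $S$. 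Setting $\delta_t := \beta_i'-\beta_{t,i}$ and expanding the square, one obtains the key identity
\begin{align*}
\beta_{t+1,i} = \beta_{t,i}\bigl(1+\eta^{2}\delta_t^{2}\bigr) + 2\eta\delta_t\bigl(w_{t,i}^{2}+v_{t,i}^{2}\bigr),
\end{align*}
which will drive the entire argument.

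The inductive hypothesis I would carry along is $0\le\beta_{t,i}\le\beta_i'$ together with $v_{t,i}^{2}\le\alpha^{2}$, both of which hold at $t=0$ since $w_{0,i}=v_{0,i}=\alpha$ so $\beta_{0,i}=0$. For the monotonicity step, the identity above immediately gives
\begin{align*}
\beta_{t+1,i}-\beta_{t,i} = \beta_{t,i}\eta^{2}\delta_t^{2} + 2\eta\delta_t\bigl(w_{t,i}^{2}+v_{t,i}^{2}\bigr) \ge 0,
\end{align*}
because $\beta_{t,i}\ge 0$ and $\delta_t\ge 0$ by the induction hypothesis. The propagation $v_{t+1,i}^{2}\le\alpha^{2}$ is equally cheap: once $\eta\delta_t\le 1$ (which follows from $\delta_t\le\beta_i'\le|\mu^{*}|+M_0\sqrt{\log n /n}$ and the choice of $\eta$), we have $(1-\eta\delta_t)^{2}\le 1$, so $v_{t+1,i}^{2}\le v_{t,i}^{2}\le\alpha^{2}$.

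The main obstacle, and the step where the hypothesis on $\eta$ is actually used, is the upper bound $\beta_{t+1,i}\le\beta_i'$. Rewriting it as
\begin{align*}
\delta_{t+1} = \delta_t\Bigl(1 - \beta_{t,i}\eta^{2}\delta_t - 2\eta\bigl(w_{t,i}^{2}+v_{t,i}^{2}\bigr)\Bigr),
\end{align*}
it suffices to show that the bracketed factor is nonnegative. Using $w_{t,i}^{2}=\beta_{t,i}+v_{t,i}^{2}\le\beta_i'+\alpha^{2}$ and $v_{t,i}^{2}\le\alpha^{2}$ from the induction, we get $w_{t,i}^{2}+v_{t,i}^{2}\le\beta_i'+2\alpha^{2}\le|\mu^{*}|+M_0\sqrt{\log n/n}+2\alpha^{2}\le|\mu^{*}|+M_0$ for $n$ large. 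Combined with $\beta_{t,i}\eta\delta_t\le\eta(|\mu^{*}|+M_0)^{2}\le 1/12$, the choice $\eta\le 1/[12(|\mu^{*}|+M_0)]$ yields $\beta_{t,i}\eta^{2}\delta_t+2\eta(w_{t,i}^{2}+v_{t,i}^{2})\le 1/12+1/6<1$, closing the induction.

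Finally, I would remark that for $i\in S_1$ the same argument applies verbatim because only the identity and the uniform bound $|\beta_i'|\le|\mu^{*}|+M_0\sqrt{\log n /n}$ are used, not the size of $\beta_i^{*}$; and the case $\beta_i'<0$ is reduced to the positive case by swapping $(w,v)$, which shows $|\beta_{t+1,i}|\ge|\beta_{t,i}|$ and $|\beta_{t,i}|\le|\beta_i'|$ uniformly in $t$ and $i\in S$. The only delicate point of the proof is the coupled invariant $v_{t,i}^{2}\le\alpha^{2}$: without carrying it along we could not bound $w_{t,i}^{2}+v_{t,i}^{2}$, which is what lets the small-stepsize assumption close the upper bound.
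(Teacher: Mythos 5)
Your proof is correct, and it proves the same statement by the same global strategy (joint induction on $t$, carrying $0\le\beta_{t,i}\le\beta_i'$ together with $v_{t,i}^2\le\alpha^2$, with the stepsize hypothesis closing the upper bound), but the algebra at the heart is genuinely cleaner than what the paper does. The paper brackets $\beta_{t,i}$ in a dyadic interval $[(1-1/2^{m_2})\beta_i',(1-1/2^{m_2+1})\beta_i']$, sandwiches $\delta_t$ by $[\beta_i'/2^{m_2+1},\beta_i'/2^{m_2}]$, and expands the resulting sandwich bounds on $w_{t+1,i}^2$ and $v_{t+1,i}^2$ term by term; this lets it reuse the machinery from the rate analysis in Lemma~\ref{prop1.4}, but for this particular lemma it obscures what is actually going on. Your exact recursion $\beta_{t+1,i}=\beta_{t,i}(1+\eta^2\delta_t^2)+2\eta\delta_t(w_{t,i}^2+v_{t,i}^2)$, equivalently $\delta_{t+1}=\delta_t\bigl(1-\beta_{t,i}\eta^2\delta_t-2\eta(w_{t,i}^2+v_{t,i}^2)\bigr)$, makes the monotonicity step immediate with no stepsize condition at all (both correction terms are manifestly nonnegative once $0\le\beta_{t,i}\le\beta_i'$), and reduces the upper bound to verifying the bracketed factor lies in $[0,1]$; this bypasses the paper's case-split on $m_2$ entirely and is uniformly valid even for arbitrarily small $\beta_i'$ (e.g.\ weak signals in $S_1$), where the paper's dyadic bookkeeping becomes more delicate.

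One minor numerical slip in your write-up: the line ``$\beta_{t,i}\eta\delta_t\le\eta(|\mu^{*}|+M_0)^{2}\le 1/12$'' is not correct as stated, since $\eta(|\mu^*|+M_0)^2\le(|\mu^*|+M_0)/12$ need not be $\le 1/12$. The quantity you actually need to bound is $\beta_{t,i}\eta^2\delta_t\le\eta^2(|\mu^*|+M_0)^2\le(1/12)^2=1/144$, which together with $2\eta(w_{t,i}^2+v_{t,i}^2)\le 2\eta(|\mu^*|+M_0)+\mathcal{O}(\eta\alpha^2)\le 1/6+o(1)$ keeps the bracket strictly positive. The conclusion is unaffected.
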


\begin{proof}
	See \S\ref{pf:monot} for the detailed proof.
	\end{proof}

 By Lemma \ref{monotone_lem} and \eqref{timax}, by setting $\eta \leq {1}/[12(|\mu^{*}|+M_0)]$,
 for any $t \geq 7 \log ( \beta_{i} ' / \alpha^2 ) /  (\eta \beta_{i} ' ) $,
 we have
 $\beta_i ' - \epsilon \leq \beta_{t, i} \leq \beta _i'$,
 where we denote  $\epsilon= M_0\sqrt{{\log n}/{n}}$.
 Meanwhile,
 recall that $\beta_i ' = \mu^* \beta_i^* + \xi_i$ for all $i \in [p]$,
 where $| \xi_i | \leq M_2 \sqrt{\log p /n}$ by Lemma \ref{lembernstein}.
 Thus, by the construction of $S_0$,
 for any $i \in S_0$,
 we have $|\mu^{*}|s_m-M_0\sqrt{\log p/n}\le |\beta_{i}'|\le |\mu^{*}|+M_0$.
 Hence,
 for all $i \in S_0$, with probability at least $1 - 2n^{-1}$,  as long as
 \$
 t \geq 7/ \big( \eta \cdot (|\mu^{*}|s_m-M_0\sqrt{\log p/n}) \bigr )   \cdot  \log\big ( {(|\mu^{*}|+M_0)}/{\alpha^2} \bigr ),
 \$
 we have
 \begin{align*}
 0< \mu^{*}\beta_i^{*}-2\epsilon\le \beta_{t,i}\le\mu^{*}\beta_{i}^{*}+\epsilon,
 \end{align*}
 where $\epsilon= M_0\sqrt{{\log n}/{n}}$.
 Here, $\mu^{*}\beta_i^{*}-2\epsilon >0$ follows from the construction of $C_s$.
 Then by the
 definition of $\alpha$, when $p\ge M_0^2(|\mu^{*}|+M_0)$, we have $\alpha\le 1/(|\mu^{*}|+M_0)$ and
 $$ \log({(|\mu^{*}|+M_0)}/{\alpha^2})\le 3\log(1/\alpha).$$
 Then for all $t\ge 21/ [
 \eta(|\mu^{*}|s_m-M_0\sqrt{\log p/n}) ] \cdot  \log(1/\alpha) $, we have
 $ |\beta_{t, i} - \mu^*  \beta^*_i| \leq 2 \epsilon$ for all $i \in S_0$.
Thus, we conclude that,  with probability at least $1-2n^{-1}$, for all
$t$ satisfying
$$t\geq  21/ [
\eta(|\mu^{*}|s_m-M_0\sqrt{\log p/n}) ] \cdot  \log(1/\alpha), $$ we have
 \begin{align*}
 \left\|\beta_t\odot\mathbf{1}_{S_0}-\mu^{*}\beta^{*}\odot \mathbf{1}_{S_0}\right\|_{\infty}\le 2M_0\sqrt{ {\log n}/{n}}.
 \end{align*}
 Therefore, we  conclude the proof of  Lemma \ref{prop1.4}.
\end{proof}

\subsubsection{Proof of Lemma \ref{prop1.5}}\label{pf_weak}
\begin{proof}
The conclusion in Lemma \ref{prop1.5} follows directly from our conclusion in Lemma \ref{monotone_lem}, where we prove
\begin{align*}
\|\beta_t\odot \mathbf{1}_{S_1}\|_{\infty}\le\max_{i\in S_1}|\beta_i'|\le (C_s|\mu^{*}|+M_0) \cdot \sqrt{\frac{\log p}{n}}.
\end{align*}
Then combining the definition of $S_1:=\{ i: |\beta_i^{*}|\le C_s\sqrt{\log p/n}\}$, we finish the proof of  Lemma \ref{prop1.5}.
\end{proof}
\subsubsection{Proof of Lemma \ref{lembernstein}}\label{sect:lembernstein}
\begin{proof}
	By our definition of $\Phi_n$, we have
	\begin{align*}
		\big\|\Phi_n-\EE[\Phi_n] \big\|_{\infty}&=\bigg\|\frac{1}{n}\sum_{i=1}^{n}S(\mathbf{x}_i)y_i-\frac{1}{n}\sum_{i=1}^{n}\mathbb{E}\left[S(\mathbf{x}_i)y_i\right]\bigg\|_{\infty}\\&\le \bigg\|\frac{1}{n}\sum_{i=1}^{n}S(\mathbf{x}_i)f(\mathbf{x}_i^\mathsf{T}\beta^{*})-\frac{1}{n}\sum_{i=1}^{n}\mathbb{E}\big[S(\mathbf{x}_i)f(\mathbf{x}_i^\mathsf{T}\beta^{*})\big]\bigg\|_{\infty}+\bigg\|\frac{1}{n}\sum_{i=1}^{n}S(\mathbf{x}_i)\epsilon_i\bigg\|_{\infty}.
	\end{align*}
	For simplicity, we denote $f(\mathbf{x}_j^{T}\beta^{*})$ as $f_j,j\in[n]$ and the $i$-th row of $\frac{1}{n} \sum_{i=1}^{n}S(\mathbf{x}_i)f_i-\frac{1}{n}\sum_{i=1}^{n}\EE[S(\mathbf{x}_i)f_i]$ as $W_i$, $i\in[p].$ Then we get the expression of $W_i$ as
	\begin{align*}
		W_i=\frac{S(\xb_1)_if_1+\cdots+S(\xb_n)_if_n}{n}-\mathbb{E}\bigg[\frac{S(\xb_1)_if_1+\cdots+S(\xb_n)_if_n}{n}\bigg],
	\end{align*}
	which can be regarded as a concentration of $n$ i.i.d. sub-exponential variables with sub-exponential norm $$\|S(\xb_1)_if_1-\mathbb{E}[S(\xb_1)_if_1]\|_{\psi_1}\le\sup_i\|S(\xb_1)_i\|_{\psi_2}\|f_1\|_{\psi_2}:=\|f_1\|_{\psi_2}/\sqrt{C_{\min}}:=K.$$ After applying Bernstein inequality given in Corollary 2.8.3 of \cite{vershynin_2018}, we have
	\begin{align}\label{bensteineq}	
		\mathbb{P}\Big(\max_{i\in[p]}|W_i|\ge t\Big)\le 2\exp\Big(-c\min\left\{{t^2}/{K^2},{t}/{K}\right\}\cdot n+\log p\Big),
	\end{align}
	in which $c$ is a universal constant. We further set $t=K\sqrt{2\log p/(cn)}$ in  \eqref{bensteineq}, then we claim
	\begin{align*}
		\max_{i\in[p]}{\left|W_i\right|}\le K\sqrt{ \frac{2\log p}{cn}}
	\end{align*}
	holds with probability $1-p^{-1}$. Similarly, we also get $\|n^{-1} \sum_{i=1}^{n}S(\xb_i)\epsilon_i\|_{\infty}\le \sigma\sqrt{2\log p/(cn)}/\sqrt{C_{\min}},$ with probability $1-p^{-1}$.
	After denoting $2\sqrt{2/c} \cdot \max\{\|f\|_{\psi_2},\sigma\}/\sqrt{C_{\min}}$ as $M_0$, we  obtain that
	\begin{align*}
		\big\|\Phi_n-\EE[\Phi_n]\big\|_{\infty}\le M_0\sqrt{\frac{\log p}{n}}
	\end{align*}
	holds with probability $1-2p^{-1}$. Thus, we conclude the proof   of Lemma \ref{lembernstein}.
\end{proof}
\subsubsection{Proof of Lemma \ref{monotone_lem}}\label{pf:monot}
\begin{proof}
	Without loss of generality, we assume $\beta_{i}^{'} >0$.   First, we prove that $\beta_{t,i}\le\beta_{t+1,i}$ holds for all $t\ge 0$ and $i\in S$ when $0\le\beta_{t,i}<\beta_i^{'}$.
	
	For any fixed $i\in S$, there exists an $m_2\ge 0$ such that we have $(1-{1}/{2^{m_2}})\beta_i^{'}\le\beta_{t,i}\le(1-{1}/{2^{m_2+1}})\beta_{i}^{'}.$ Then by \eqref{upu_i2} and \eqref{upv_i2} we obtain a lower bound of $\beta_{t+1,i}$ as
	\begin{align*}
		\beta_{t+1,i}&=w_{t+1,i}^2-v_{t+1,i}^2\ge\bigg(1+\frac{\eta\beta_{t,i}^{'}}{2^{m_2+1}}\bigg)^2w_{t,i}^2-\bigg(1-\frac{\eta\beta_{t,i}^{'}}{2^{m_2+1}}\bigg)^2v_{t,i}^2\\
		&=w_{t,i}^2+\frac{\eta\beta_{t,i}^{'}w_{t,i}^2}{2^{m_2}}+\frac{\eta^2\beta_{t,i}^{'2}w_{t,i}^2}{2^{m_2+2}}-v_{t,i}^2+\frac{\eta\beta_{i}^{'}v_{t,i}^2}{2^{m_2}}-\frac{\eta^2\beta_{i}^{'2}v_{t,i}^2}{2^{m_2+2}} \ge w_{t,i}^2-v_{t,i}^2=\beta_{t,i}.
	\end{align*}
	
	We get $\eta{\beta_{i}^{'}}v_{t,i}^2/{2^{m_2}}-\eta^2{\beta_{i}^{'2}}v_{t,i}^2/{2^{m_2+2}} \ge 0$ as long as $\eta\beta_i'\le 4$. In addition, we know $|\beta_i'|\le |\mu^{*}\beta_i^{*}|+M_0$ for all $i\in S$ and $M_0$ given in Lemma \ref{lembernstein}. Moreover, we obtain $\max_{i\in S}|\beta^{*}_i|\le \|\beta^{*}\|_2=1$, so if we set $\eta\le 4/(|\mu^{*}|+M_0)$, we then have the inequality hold  for all $i\in S$.
	
 For the second part of Lemma \ref{monotone_lem}, we prove  $\beta_{t,i}\le \beta_i^{'}$ for all $t\ge 0,\,i\in S$ by induction. First, for any fixed $i\in S$  we know $0=\beta_{0,i}<\beta_i'$ and we assume $\beta_{t',i}\le\beta_i^{'}$ for all $0\le t'\le t$. Then, we will verify this conclusion also holds for step $t+1$. Without loss of generality, for the $t$-th iterate $\beta_{t,i}$, we assume that
	\begin{align}\label{ineqbeta}
		\bigg(1-\frac{1}{2^{m_2}}\bigg) \beta_i^{'}\le\beta_{t,i}\le\bigg(1-\frac{1}{2^{m_2+1}}\bigg)\beta_{i}^{'},
	\end{align}
	holds for some $m_2\ge 0$. According to equation $\beta_{t,i}=w_{t,i}^2-v_{t,i}^2$ and  \eqref{ineqbeta},  we further have
	\begin{align*}
		\left(1-\frac{1}{2^{m_2}}\right)\beta_i^{'}\le w_{t,i}^2\le\left(1-\frac{1}{2^{m_2+1}}\right)\beta_{i}^{'}+v_{t,i}^2.
	\end{align*}

	Following updates of $w_{t,i}^2$ and $v_{t,i}^2$ given in \eqref{upu_i2} and \eqref{upv_i2}, we obtain an upper bound of $w_{t+1,i}^2$ as well as a lower bound of $v_{t+1,i}^2$ as
	\begin{align*}
		w_{t+1,i}^2&\le\bigg(1+ \frac{\eta\beta_i^{'}}{2^{m_2}}\bigg)^2\cdot w_{t,i}^2\le\bigg(1+\frac{\eta \beta_i^{'}}{2^{m_2-1}}+\frac{\eta^2\beta_i^{'2}}{2^{2m_2}}\bigg)\cdot\left[\left(1-\frac{1}{2^{m_2+1}}\right)\beta_i^{'}+v_{t,i}^2\right],\\
		v_{t+1,i}^2&\ge\bigg(1-\frac{\eta\beta_i^{'}}{2^{m_2}}\bigg)^2\cdot v_{t,i}^2=\bigg(1-\frac{\eta\beta_i^{'}}{2^{m_2-1}}+\frac{\eta^2\beta_i^{'2}}{2^{2m_2}}\bigg)\cdot v_{t,i}^2.
	\end{align*}

	Then we further get an upper bound of $\beta_{t+1,i}$ as
	\begin{align}
		\beta_{t+1,i}&=w_{t+1,i}^2-v_{t+1,i}^2\nonumber\\&\le\bigg(1-\frac{1}{2^{m_2+1}}\bigg)\beta_{i}^{'}+\frac{\eta\beta_{i}^{'2}}{2^{m_2-1}}
		+\frac{\eta\beta_{i}^{'}v_{t,i}^2}{2^{m_2-2}}+\left(1-\frac{1}{2^{m_2+1}}\right)\frac{\eta^2\beta_i^{'3}}{2^{2m_2}}.\label{betaupper}
	\end{align}

	By the updating rule on $v_{t,i}^2$ given in \eqref{upv_i2}, we obtain that as long as $\beta_{t',i}\le\beta_i^{'}$ for all $t'\le t$, we always have $v_{t,i}^2\le \alpha^2$.
	Then our goal is to make sure
	\begin{align}\label{suff_a17}
		\frac{\eta\beta_{i}^{'2}}{2^{m_2-1}}+\frac{\eta\beta_{i}^{'}v_{t,i}^2}{2^{m_2-2}}+\left(1-\frac{1}{2^{m_2+1}}\right)\cdot \frac{\eta^2\beta_i^{'3}}{2^{2m_2}}\le \frac{\beta_{i}^{'}}{2^{m_2+1}},
	\end{align}
	in order to prove $\beta_{t+1,i}\le \beta_i'$. Thus, when $p\ge 2M_0^4/(|\mu^{*}|+M_0)$ and after setting $\eta\le{1}/[12(|\mu^{*}|+M_0)]$ in \eqref{betaupper}, we have every element at the left hand side of \eqref{suff_a17} is smaller than ${\beta_{i}^{'}}/{(3\cdot2^{m_2+1})}$, so \eqref{suff_a17} is satisfied for all $i\in S$ and  we have $\beta_{t+1,i}\le\beta_{i}^{'}$ for all $i\in S$. Thus, we have finished our proof of the second part in   Lemma \ref{monotone_lem} above.
\end{proof}

\subsection{ Proof of Theorem \ref{thmpred}}\label{MSEproof}
\begin{proof}
 In this subsection, we will prove our results on the MSE of kernel regression with gaussian covariates.
As a reminder, in \S\ref{predriskvec} we define $Z^{*}=\mathbf{x}^\mathsf{T}\beta^{*}$, $Z=\mathbf{x}^\mathsf{T}\hat{\beta}$ and $Z_i=\mathbf{x}_i^\mathsf{T}\hat{\beta}$, and event $\{Z,|Z-\mu^\top\hat\beta|\le R \}$, where $R=2\sqrt{\log n}$  and $\xb$ is a new observation. We further define our prediction function $\hat{g}(Z)$ as
\begin{equation}  \label{predfunc}
\hat{g}(Z) =\left\{
\begin{aligned}
&\frac{\sum_{i=1}^{n}y_iK_h(Z-Z_i)}{\sum_{i=1}^{n}K_h(Z-Z_i)},|Z-\mu^\top\hat\beta|\le R,   \\  &\quad \quad \quad 0, \quad\quad\quad\quad\quad \text{otherwise}
\end{aligned}
\right.
\end{equation}
in which we assume $0/0=0$. Note that $Z-\mu^\top\hat\beta$ is a random variable which follows standard Gaussian distribution under our settings given in \S\ref{predriskvec}, then we get a tail bound for $Z$ as
\begin{align}\label{predineq}
\mathbb{P}\big( |Z-\mu^\top\hat\beta|\ge t\big)=2\exp\left(-t^2/2\right)
\end{align}
In other words, by letting $t=2\sqrt{\log n}$ in \eqref{predineq}, with probability $1-2/n^2$, we have
$
|Z-\mu^\top\hat\beta|\le2\sqrt{\log n}.
$
Next,  we separate our prediction error into two parts
\begin{align*}
\mathbb{E}\left[(\hat{g}(Z)-f(Z^{*}))^2\right]=\underbrace{\mathbb{E}\left[(\hat{g}(Z)-f(Z^{*}))^2 \cdot \mathbb{I}_{\{|Z-\mu^\top\hat\beta|\le R\}}\right]}_{(\mathbf{I})}+\underbrace{\mathbb{E}\left[(\hat{g}(Z)-f(Z^{*}))^2 \cdot \mathbb{I}_{\{|Z-\mu^\top\hat\beta|>R\}}\right]}_{(\mathbf{II})}.
\end{align*}
 For term $(\mathbf{II})$, by our definition of $\hat{g}(Z)$ given in \eqref{predfunc}, we have
\begin{align*}
\mathbb{E}\left[(\hat{g}(Z)-f(Z^{*}))^2 \mathbb{I}_{\{|Z-\mu^\top\hat\beta|>R\}}\right]&\le 2\mathbb{E}\left[f(Z^{*})^2\mathbb{I}_{\{|Z-\mu^\top\hat\beta|>R\}} \right]\\&\le2\sqrt{\mathbb{E}\left[f(Z^{*})^4\right]}\sqrt{\mathbb{P}(|Z-\mu^\top\hat\beta|>R)}  \lesssim \|f\|_{\psi_2}\cdot \frac{1}{n},
\end{align*}
where the second inequality follows from Cauchy-Schwartz Theorem. In addition, the third inequality above is given by our assumption on $f(Z^{*})$, in which we assumed $f(Z^{*})$ is a sub-Gaussian random variable with variance proxy $\|f\|_{\psi_2}$.

For term $(\mathbf{I})$, we  further separate it into $(\mathbf{III})$ and $(\mathbf{IV})$ which are  regarded as integrated mean square error and approximation error respectively.
\begin{align}\label{mse_err}
\mathbf{(I)}=\underbrace{\mathbb{E}\left[(\hat{g}(Z)-g(Z))^2\mathbb{I}_{ \{ |Z-\mu^\top\hat\beta|\le R \}}\right]}_{\mathbf{(III):}\text{(MSE)}}+\underbrace{\mathbb{E}\left[(g(Z)-f(Z^{*}))^2\mathbb{I}_{ \{ |Z-\mu^\top\hat\beta|\le R \}}\right].}_{\mathbf{(IV):}\text{(Approximation error)}}
\end{align}
For $(\mathbf{III})\,\text{(MSE)}$, we define $g_0(Z)$ as
\begin{align*}
g_0(Z)=\frac{\sum_{i=1}^{n}g(Z_i)K_h(Z-Z_i)}{\sum_{i=1}^{n}K_h(Z-Z_i)}.
\end{align*}
Then we see $(\mathbf{III})$ can also be controlled by two terms, namely variance and bias of our approximation
\begin{align}\label{var_bias}
\mathbf{(III)}\le 2\underbrace{\mathbb{E}\left[(\hat{g}(Z)-g_0(Z))^2\mathbb{I}_{ \{ |Z-\mu^\top\hat\beta|\le R \}}\right]}_{\mathbf{(V):}\text{(Variance)}}+2\underbrace{\mathbb{E}\left[(g_0(Z)-g(Z))^2\mathbb{I}_{ \{ |Z-\mu^\top\hat\beta|\le R \}}\right]}_{\mathbf{(VI):}\text{(Bias)}}.
\end{align}
Combining \eqref{mse_err} and \eqref{var_bias}, we see that the $\ell_2$-risk can be bounded by a sum of the approximation error, bias, and variance. In the sequel, we bound these three terms separately.

\vspace{5pt}
{\noindent{\bf Step I: Approximation error.}}
By our settings in \S\ref{predriskvec}, both $Z-\mu^\top\hat\beta$ and $Z^*-\mu^\top\beta^*$ are standard Gaussian random variables.
Moreover, we have
\$
Z^{*}&=\mu^\top\beta^{*}+\langle \Sigma^{1/2}\hat{\beta},\Sigma^{1/2}\beta^{*}\rangle\cdot(Z-\mu^\top\hat{\beta})+\sqrt{1-\langle \Sigma^{1/2}\hat{\beta},\Sigma^{1/2}\beta^{*}\rangle^2}\cdot\zeta\\
&:=\cos\alpha\cdot Z+\sin\alpha\cdot\zeta+\mu^\top\beta^{*}-\cos\alpha\cdot\mu^\top \hat{\beta},
\$
where  $\alpha  \in [0,\pi / 2 ]$ and $\zeta \sim N(0, 1)$ is  independent of  $Z$.
In addition, by Assumption \ref{ass1}-(a) and \eqref{eq:assumption}, it holds that
\begin{align*}
	\sin\alpha^2=1-\langle \Sigma^{1/2}\hat\beta,\Sigma^{1/2}\beta^{*} \rangle^2=o(n^{-2/3}).
\end{align*}
{Thus, the single index model can be equivalently written as}
\#\label{eq:transform}
Y = f(Z^*) + \epsilon, ~~~~ Z^*= \cos\alpha\cdot (Z-\mu^\top\hat\beta)+\sin\alpha\cdot\zeta+\mu^\top\beta^{*}.
\#
For simplicity, we denote $\tilde{Z}(z)$ as $\tilde{Z}(z)= \cos\alpha\cdot (z-\mu^\top\hat\beta)+\mu^\top\beta^{*}.$ 
{Then, according to \eqref{eq:transform}, the regression function is given by}
\#\label{eq:regression_func}
g(z) = \EE[ Y \given Z = z]  = \EE \big [ f\big (\tilde{Z}(z) + \sin \alpha \cdot  \zeta \big ) \given   Z = z  \big ] = \int_{\RR} f\big(\tilde{Z}(z) + \sin \alpha \cdot \zeta  \big) \cdot \phi(\zeta) \ud \zeta,
\#
where $\phi$ is the density   of the standard Gaussian distribution.
To bound the approximation error $(\mathbf{IV})$, we first use $f(\cos \alpha \cdot (Z-\mu^\top\hat\beta)+\mu^\top\beta^{*})$ to approximate $f(Z^*)$ as well as  $g(Z)$. For simplicity, we denote $\tilde{Z}$ as $\tilde{Z}=\cos \alpha \cdot (Z-\mu^\top\hat\beta)+\mu^\top\beta^{*}$ with $\cos \alpha = \la \Sigma^{1/2}\beta^*, \Sigma^{1/2}\hat \beta \ra $, then the approximation error is bounded as
\#\label{eq:approx1}
&\EE \left[ (f(Z^*) - g(Z))^2\mathbb{I}_{ \{|Z-\mu^\top\hat\beta|\le R\} } \right ] \\&\quad\le 2 \EE \left[ \{f(Z^*) - f(\tilde{Z})\}^2 \mathbb{I}_{ \{|Z-\mu^\top\hat\beta|\le R\}}  \right]+ 2\EE \left [ (f(\tilde{Z}) - g(Z))^2\mathbb{I}_{ \{|Z-\mu^\top\hat\beta|\le R\}}  \right ],\label{2approx}
\#
 For the first term on the right-hand side of  \eqref{2approx}, by Taylor expansion we have
\$
f(Z^*) - f(\tilde{Z}) = f( \tilde{Z}+ \sin \alpha \cdot \zeta ) - f(\tilde{Z}) = f'(\tilde{Z}+ t_1 \sin \alpha \cdot \zeta ) \cdot \sin \alpha \cdot\zeta,
\$
which implies that
\#
&\EE \left [ (f(Z^*) - f(\tilde{Z}))^2\mathbb{I}_{ \{|Z-\mu^\top\hat\beta|\le R\} } \right ]\nonumber\\&\quad  = \sin^2\alpha \int_{|Z-\mu^\top\hat\beta|\le R}\int_{\RR} f'^2(\tilde{Z} + t_1(Z,\zeta) \sin \alpha \cdot \zeta)\zeta^2\phi(\zeta)\ud\zeta \ud F(Z) \lesssim \sin^2\alpha,\label{eq:approx22}
\#
where $t_1(Z,\zeta)$ is a constant lines in $[0,1]$ which depends on $Z,\zeta$. For \eqref{eq:approx22} given above, we utilize Assumption \ref{assume:nonparam}. For the second term, by the definition of $g$ given in \eqref{eq:regression_func} we have
\$
& \left | f(\tilde{Z} ) - g(Z) \right | = \left | f(\tilde{Z} ) - \int_{\RR} f(\tilde{Z} + \sin \alpha \cdot \zeta)\phi(\zeta)\ud \zeta \right | \\
&\quad  =\left| \sin\alpha\cdot  \int_{\RR}   f'(\tilde{Z} +t_2(Z,\zeta) \sin \alpha \cdot \zeta) \zeta \phi(\zeta) \ud \eta\right|, 
\$
which implies that
\#
&\EE \left [ (f(\tilde{Z} ) - g(Z))^2\II_{\{|Z-\mu^\top\beta|\le R\}} \right ] \nonumber\\&\quad\le \sin^2\alpha \int_{|Z-\mu^\top\hat\beta|\le R} \left(\int_{\RR}  f'(\tilde{Z} +t_2(Z,\zeta) \sin \alpha \cdot \zeta) \zeta \phi(\zeta) \ud \zeta \right)^2\ud F(Z) \nonumber\\&\quad\le\sin^2\alpha \int_{|Z-\mu^\top\hat\beta|\le R}\int_{\RR} f'^2(\tilde{Z} + t_2(Z,\zeta) \sin \alpha \cdot \zeta)\zeta^2\phi(\zeta)\ud\zeta \ud F(Z) \lesssim \sin^2\alpha.\label{eq:approx33}
\#
Combining  \eqref{eq:approx1}, \eqref{eq:approx22}, and \eqref{eq:approx33} we bound the approximation error term by
\#\label{eq:approx_final}
\mathbf{(IV)}=\EE \left [\left ( f(Z^* ) - g(Z) \right )^2 \mathbb{I}_{ \{ |Z-\mu^\top\hat\beta|\le R \}} \right ]  \lesssim  \sin^2 \alpha\lesssim o(n^{-2/3}).
\#
Next, we control the strength of term $(\mathbf{V}),$ which is regarded as the variance of our approximation.

\vspace{5pt}
{\noindent{\bf Step II: Variance control.}}
For term $(\mathbf{V})$, by definition, we obtain
\begin{align*}
\mathbf{(V)}=\int_{|Z-\mu^\top\hat\beta|\le R}\int\mathbb{E}\left[(\hat{g}(Z)-g_0(Z))^2\given Z_1,\dots, Z_n\right]\ud F(Z_1,\dots,Z_n)\ud F(Z).
\end{align*}
For any fixed $Z$, we let $B_n(z):=\{ Z:nP_n(B(Z,h))>0\}$, where $\PP_n(B(Z,h))=\frac{1}{n}\sum_{i=1}^{n}\mathbb{I}_{(\|Z_i-Z\|_2\le h)}.$
Then we further have
\begin{align*}
\mathbb{E}\left[(\hat{g}(Z)-g_0(Z))^2\given Z_1,\dots, Z_n\right]&=\mathbb{E}\left[\left[\frac{\sum_{i=1}^{n}(y_i-g(z_i))\mathbb{I}_{\{\|Z_i-Z\|_2\le h\}}}{
	\sum_{i=1}^{n}\mathbb{I}_{ \{\|Z_i-Z\|_2\le h  \}  }}\right ]^2 \Big{|}\,Z_1,\dots, Z_n\right]\\
&=\frac{\sum_{i=1}^n\text{Var}(Y_i\given Z_i)\mathbb{I}_{\{ \|Z_i-Z\|_2\le h   \}}}{n^2\PP_n(B(Z,h))^2}\le \frac{\sigma^2}{n\mathbb{P}_n(B(Z,h))}\cdot\mathbb{I}_{B_n(Z)}.
\end{align*}
For the last inequality, we have that $\text{Var}(Y_i\given Z_i)\le\mathbb{E}[Y_i^2\given Z_i]\le\sigma^2\lesssim \text{polylog}(n)$ holds by our following Lemma \ref{lemlipschitzg}-\textbf{(ii)}.
\begin{lemma}\label{lemlipschitzg}
	Under our settings given in \S\ref{predriskvec}, under  Assumption \ref{assume:nonparam},  the following arguments hold true.
	\begin{itemize}
		\item [\textbf{(i)}.] $g(z)$ function defined in  \eqref{eq:regression_func} is Lipschitz over area $\{|z-\mu^\top\hat\beta|\le R \}$, whose Lipschitz constant $L$ is bounded by $\textrm{poly}(R)$.
		\item [\textbf{(ii)}.] The variance of $Y$ given $Z=z$ with $|z-\mu^\top\hat\beta|\le R+h,h=o(1)$  is bounded by $\text{poly}(R)$.
		\item [\textbf{(iii)}.]$\sup_{|z-\mu^\top\hat\beta|\le R} g(z)\le \textrm{ploy}(R).$
		\end{itemize}
\end{lemma}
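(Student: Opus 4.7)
The proof plan is to rely entirely on the integral representation
\[
g(z)=\int_{\mathbb{R}} f\bigl(\tilde Z(z)+\sin\alpha\cdot\zeta\bigr)\,\phi(\zeta)\,\ud\zeta,\qquad \tilde Z(z)=\cos\alpha\cdot(z-\mu^\top\hat\beta)+\mu^\top\beta^{*},
\]
established in \eqref{eq:regression_func}, together with the polynomial growth condition $|f(x)|,|f'(x)|\le C+|x|^{\alpha_1}$ in Assumption \ref{assume:nonparam} and the finiteness of all Gaussian absolute moments. The three parts will be proved in the order (iii), (i), (ii), since each one builds on the previous bound in an essentially mechanical way.

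For \textbf{(iii)}, on the event $|z-\mu^\top\hat\beta|\le R$ we immediately get $|\tilde Z(z)|\le R+|\mu^\top\beta^{*}|\le R+C_0$ for an absolute constant $C_0$ coming from Assumption \ref{ass1}. Applying the growth bound pointwise,
\[
|g(z)|\le C+\int_{\mathbb{R}}\bigl|\tilde Z(z)+\sin\alpha\cdot\zeta\bigr|^{\alpha_1}\phi(\zeta)\,\ud\zeta
\lesssim 1+(R+C_0)^{\alpha_1}+\sin^{\alpha_1}\alpha\cdot\mathbb{E}[|\zeta|^{\alpha_1}],
\]
where I use $(a+b)^{\alpha_1}\lesssim a^{\alpha_1}+b^{\alpha_1}$ and the fact that $\mathbb{E}[|\zeta|^{\alpha_1}]$ is finite. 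Since $\sin\alpha\le 1$, this is $\mathrm{poly}(R)$ uniformly on the slab, proving (iii).

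For \textbf{(i)}, I plan to differentiate under the integral, which is justified by the same polynomial-growth dominating function used in (iii). This gives
\[
g'(z)=\cos\alpha\cdot\int_{\mathbb{R}} f'\bigl(\tilde Z(z)+\sin\alpha\cdot\zeta\bigr)\,\phi(\zeta)\,\ud\zeta,
\]
and the exact same estimate as in (iii)---now applied with $f'$ in place of $f$---yields $\sup_{|z-\mu^\top\hat\beta|\le R}|g'(z)|\lesssim \mathrm{poly}(R)$. The mean value theorem then turns this into a Lipschitz bound with Lipschitz constant $L=\mathrm{poly}(R)$, giving (i).

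For \textbf{(ii)}, because $\epsilon$ is independent of $\xb$ (hence of $Z$), I would decompose $\mathrm{Var}(Y\mid Z=z)=\mathrm{Var}(f(Z^{*})\mid Z=z)+\sigma^{2}$ and bound the first term by $\mathbb{E}[f(Z^{*})^{2}\mid Z=z]=\int f(\tilde Z(z)+\sin\alpha\cdot\zeta)^{2}\phi(\zeta)\,\ud\zeta$. Squaring the growth condition and integrating against $\phi$ as in (iii), now over $|z-\mu^\top\hat\beta|\le R+h$ with $h=o(1)$, produces a bound of the form $\mathrm{poly}(R+h)=\mathrm{poly}(R)$, which is exactly the claim.

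The only mildly delicate point is the non-integer exponent $\alpha_1$ from Assumption \ref{assume:nonparam}, which I handle by the elementary inequality $(a+b)^{\alpha_1}\lesssim a^{\alpha_1}+b^{\alpha_1}$ that holds for all $\alpha_1>0$ up to a constant depending only on $\alpha_1$; no step in the plan is genuinely hard.
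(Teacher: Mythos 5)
Your proposal is correct and follows essentially the same route as the paper: the paper also applies the mean value theorem to $g(z_1)-g(z_2)$ (rather than first forming $g'$ and then invoking MVT, as you do), and it proves all three bounds by integrating the growth bound $|f|,|f'|\le C+|x|^{\alpha_1}$ against the standard Gaussian density. The one cosmetic difference is that the paper splits into the cases $\alpha_1\le 1$ and $\alpha_1>1$ (using $|x|^{\alpha_1}\le 1+|x|$ in the former and convexity of $|x|^{\alpha_1}$ in the latter), whereas you handle both at once with $(a+b)^{\alpha_1}\lesssim a^{\alpha_1}+b^{\alpha_1}$; both are fine.
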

\begin{proof}
	See \S\ref{sect:predlemma} for a detailed proof.
	\end{proof} 
So we obtain
\begin{align*}
\mathbf{(V)}\le\int_{|Z-\mu^\top\beta|\le R}\int\frac{\sigma^2\mathbb{I}_{B_n(Z)}}{n\mathbb{P}_n(B(Z,h))}\ud F(Z_1,\dots,Z_n)\ud F(Z).
\end{align*}
As we have $n\mathbb{P}_n(B(Z,h))=\sum_{i=1}^{n}\mathbb{I}_{(\|Z_i-Z\|_2\le h)}\sim\text{Binomial}(n,q)$, with $q=\mathbb{P}(Z_1\in B(Z,h))$,  we then obtain
\begin{align*}
\int\frac{\sigma^2\II_{B_n(Z)}}{n\mathbb{P}_n(B(Z,h))}\ud F(Z_1,\dots, Z_n)&=\int \frac{\sigma^2\mathbb{I}_{B_n(Z)}}{n\mathbb{P}_n(B(Z,h))}{\ud F(Z_1,\dots,Z_n)}\\
&=\mathbb{E}\bigg[ \frac{\sigma^2\mathbb{I}_{(n\mathbb{P}_n(B(Z,h))>0)}}{n\mathbb{P}_n(B(Z,h))}\bigg]\le \frac{2\sigma^2}{nq}.
\end{align*}
The last inequality follows from Lemma 4.1 in \cite{Gyorfi2002}. Then we further get an upper bound for $(\mathbf{V})$ as
\begin{align*}
\mathbf{(V)}&\le2\sigma^2\int_{|Z-\mu^\top\hat\beta|\le R}\frac{\ud F(Z)}{n\mathbb{P}(Z_1\in B(Z,h))} .
\end{align*}
 As $\{|Z-\mu^\top\hat\beta|\le R\}$ is a bounded area, we choose $x_1,\dots,x_m$ such that $\{|Z-\mu^\top\hat\beta|\le R\}$ is covered by $\cup_{j=1}^{M}B(x_i,h/2)$ with $M\le cR/h$.
Then we finally bound term $(\mathbf{V})$ as
\begin{align}
\mathbf{(V)}&\le 2\sigma^2\int_{|Z-\mu^\top\hat\beta|\le R}\frac{\ud F(Z)}{n\mathbb{P}(B(Z,h))} \le\sum_{j=1}^{M}2\sigma^2\int \frac{\mathbb{I}_{\{Z\in B(x_j,h/2)\}}\ud F(Z)}{n\mathbb{P}(B(Z,h))}\nonumber\\
&\le\sum_{j=1}^{M}2\sigma^2\int \frac{\mathbb{I}_{\{Z\in B(x_j,h/2)\}}\ud F(Z)}{n\mathbb{P}(B(x_j,h/2))} \le \frac{2\sigma^2M}{n}\le\frac{C\sigma^2R}{nh}.\label{var_control}
\end{align}
In the next step, we will get an upper bound for the bias term of our approximation.

\vspace{5pt}
{\noindent{\bf Step III: Bias control.}}
For term $(\mathbf{VI})$, we  first bound the difference between $g_0(Z)$ and $g(Z)$
\begin{align*}
\left|g_0(Z)-g(Z)\right|^2&=\left|  \frac{\sum_{i=1}^{n}(g(Z_i)-g(Z))K_h(Z-Z_i)}{\sum_{i=1}^{n}K_n(Z-Z_i)}    \right|^2 \le L^2h^2+g^2(Z)\cdot \mathbb{I}_{B_n(Z)^{c}},
\end{align*}
where the last inequality follows from Lemma \ref{lemlipschitzg}-\textbf{(i)}, which yields $g$ is a Lipschitz function with Lipschitz constant $L$ bounded by polylog$(n)$. Then we obtain
\begin{align}
&\mathbb{E}\left[\left| g_0(Z)-g(Z)\right|^2\mathbb{I}_{\{|Z-\mu^\top\hat\beta|\le R\}} \right]\nonumber\\&\quad\le L^2h^2+\int_{|Z-\mu^\top\hat\beta|\le R} g^2(Z)\mathbb{E}\left[\mathbb{I}_{B_n(Z)^c}\right]\ud F(Z)\nonumber\\&\quad\le L^2h^2+\sup_{|Z-\mu^\top\beta|\le R}g^2(Z)\int_{|Z-\mu^\top\hat\beta|\le R}[1-\PP(Z_1\in B(Z,h))]^n \ud F(Z)
\nonumber\\&\quad\le L^2h^2+\sup_{|Z-\mu^\top\hat\beta|\le R} g^2(Z)\int_{|Z-\mu^\top\hat\beta|\le R} \exp(-n\mathbb{P}(Z_1\in B(Z,h)))\cdot\frac{n\mathbb{P}(Z_1\in B(Z,h))}{n\mathbb{P}(Z_1\in B(Z,h))}\ud F(Z)
\nonumber\\&\quad\le L^2h^2+\sup_{|Z-\mu^\top\hat\beta|\le R}g^2(Z)\sup_u\{u e^{-u}\}\int_{|Z-\mu^\top\hat\beta|\le R}\frac{\ud F(Z)}{n\mathbb{P}(B(Z,h))}
\nonumber\\&\quad\le L^2h^2+\frac{\text{polylog}(n)}{nh}. \label{biascotrol}
\end{align}
The last inequality \eqref{biascotrol} also follows from our Lemma \ref{lemlipschitzg}-\textbf{(iii)}. Thus, combining our conclusions from  \eqref{eq:approx_final}, \eqref{var_control} and \eqref{biascotrol}, and by letting $h=n^{-1/3}$, we   bound the  $\ell_2$-error as
\begin{align*}
\mathbb{E}\left[(\hat{g}(Z)-f(Z^{*}))^2\right]\lesssim \frac{\text{polylog}(n)}{n^{2/3}},
\end{align*}
which concludes the proof of   of Theorem \ref{thmpred}.
\end{proof}
\subsubsection{Proof of Lemma \ref{lemlipschitzg}}\label{sect:predlemma}
\begin{proof}
For term \textbf{(i)}, by mean value theorem, we have
\begin{align}
&|g(z_1)-g(z_2)|\nonumber\\&\quad\le \bigg \{ \int_{\RR} \big |f' \bigl (\cos\alpha \cdot  [ z_1-\mu^\top\hat\beta+t(\zeta)\cdot(z_2-z_1)] +\sin\alpha\cdot \zeta+\mu^\top\beta^{*} \bigr ) \big| \cdot \phi(\zeta) \ud \zeta \bigg \} \cdot|z_1-z_2|,\label{lipschitzg}
\end{align}
where $t(\zeta)$ is a constant inside $[0,1]$ that depends on $\zeta$. Here, if $\alpha_1\le 1$, the right hand side of \eqref{lipschitzg} is bounded as
\begin{align*}
&\bigg\{\int_{\RR}|f'(\cos\alpha \cdot  [ z_1-\mu^\top\hat\beta+t(\zeta)\cdot(z_2-z_1)] +\sin\alpha\cdot \zeta+\mu^\top\beta^{*})|\phi(\zeta)\ud\zeta \bigg\}\cdot|z_1-z_2|\\
&\quad \le \bigg\{\int_{\RR} C+|\cos\alpha \cdot  [ z_1-\mu^\top\hat\beta+t(\zeta)\cdot(z_2-z_1)] +\sin\alpha\cdot \zeta+\mu^\top\beta^{*}|^{\alpha_1}\phi(\zeta)\ud\zeta\bigg\}\cdot|z_1-z_2|\\&\quad\le (C+1)\cdot|z_1-z_2|+\bigg\{\int_{\RR}|\cos\alpha \cdot  [ z_1-\mu^\top\hat\beta+t(\zeta)\cdot(z_2-z_1)] +\sin\alpha\cdot \zeta+\mu^\top\beta^{*}|\phi(\zeta)\ud\zeta\bigg\}\cdot|z_1-z_2|\\&\quad\le (C+1+ R\cdot|\cos\alpha|+\mu^\top\beta^{*})\cdot|z_1-z_2|+|\sin\alpha|\cdot |z_1-z_2|\cdot\int_{\RR}|\zeta|\phi(\zeta)\ud\zeta\\&\quad\le (C_1+R)\cdot |z_1-z_2|,
\end{align*}
in which $C_1$ is a constant. The second inequality follows from $|x|^{\alpha_1}\le1+|x|$ with $\alpha_1\le1.$ The third inequality follows from $| \cos\alpha \cdot  [ z_1-\mu^\top\hat\beta+t(\zeta)\cdot(z_2-z_1)]|\le R\cdot |\cos\alpha|$ by definition of $z_1$ and $z_2$.

In addition, if $\alpha_1>1,$ by Assumption \ref{assume:nonparam}, i.e. $|f'(x)|\le C+|x|^{\alpha_1}$ and the convexity property of function $f(x)=|x|^{\alpha_1},$ we then have
\begin{align*}
&\bigg\{\int_{\RR}|f'(\cos\alpha \cdot  [ z_1-\mu^\top\hat\beta+t(\zeta)\cdot(z_2-z_1)] +\sin\alpha\cdot \zeta+\mu^\top\beta^{*})|\phi(\zeta)\ud\zeta\bigg\}\cdot|z_1-z_2|\\&\quad\le [C+3^{\alpha_1-1}|\cos\alpha|^{\alpha_1} R^{\alpha_1}+3^{\alpha_1-1}(\mu^\top\beta^{*})^{\alpha_{1}}]\cdot |z_1-z_2|+3^{\alpha_1-1}|\sin\alpha|^{\alpha_1}\cdot|z_1-z_2|\cdot\int_{\RR}|\zeta|^{\alpha_1}\phi(\zeta)\ud\zeta\\&\quad\le(C_2+3^{\alpha_1-1}R^{\alpha_1})\cdot |z_1-z_2|.
\end{align*}
The second inequality follows from inequality $[(x+y+z)/3]^{\alpha_1}\le [x^{\alpha_1}+y^{\alpha_1}+z^{\alpha_1}]/3$ with $\alpha_1>1$.
Thus, we claim that our $g$ function is Lipschitz over area $\{|z-\mu^\top\hat\beta|\le R\}.$

For terms \textbf{(ii)} and \textbf{(iii)}, by definitions, we know
\begin{align*}
\text{Var}(Y\given Z=z)&\le \mathbb{E}[Y^2\given Z=z] \le \int_{\RR} f^2(\tilde{Z}(z)+\sin\alpha \cdot \zeta)\phi(\zeta)\ud\zeta+\sigma^2,\textrm{\,\,and\,\,}\\
g(z)&=\EE[Y|Z=z]=\int_{\RR}f(\tilde{Z}+\sin\alpha\cdot \zeta)\cdot\phi(\zeta)\ud\zeta.
\end{align*}
in which $\sigma^2$ denotes the variance of $\epsilon.$  By our assumption on $f$ given in Assumption \ref{assume:nonparam}, after following similar procedures given by us of proving part \textbf{(i)}, we claim our conclusion for terms \textbf{(ii)} and \textbf{(iii)}.
\end{proof}

\subsection{Proof of Theorem \ref{thmvec2}}\label{secheavy}
\begin{proof}
	The proof of Theorem \ref{thmvec2} is almost the same with the proof of Theorem \ref{thmvec1}. The major differences between them are two folds. Firstly, we need to replace the estimator $\Phi_n:=\frac{1}{n}\sum_{i=1}^{n}y_i\mathbf{x}_i$ by $\frac{1}{n}\sum_{i=1}^{n}\widecheck{y_i}\widecheck{S}(\mathbf{x}_i)$ in  \eqref{perr1} and \eqref{upu_i2}-\eqref{upv_i2}. In addition, we  establish a new concentration inequality between $\frac{1}{n}\sum_{i=1}^{n}\widecheck{y_i}\widecheck{S}(\mathbf{x}_i)$ and $\mu^{*}\beta^{*}$ in the following Lemma \ref{lemgenconc}.
	\begin{lemma}\label{lemgenconc}
		Under Assumption \ref{genass1}, by choosing threshold $\tau=(M\cdot n/\log p)^{1/4}/2$, we have
		\begin{align}\label{genconc}
			\bigg\| \frac{1}{n}\sum_{i=1}^{n}\widecheck{y_i}\widecheck{S}(\mathbf{x}_i)-\mu^{*}\beta^{*} \bigg\|_{\infty}\le M_{g}\sqrt{\frac{\log p}{n}}
		\end{align}
		holds with probability $1-2/p^2$ with $M_g$ being a constant that only depending on $M$ given in Assumption \ref{genass1}.
	\end{lemma}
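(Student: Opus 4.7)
The plan is to decompose the error into a deterministic \emph{truncation bias} plus a stochastic \emph{fluctuation}, and bound each of them at the rate $\sqrt{\log p/n}$ uniformly in the coordinate $j\in[p]$. Fix $j\in[p]$ and write $Z_i^{(j)} := \widecheck{y}_i\cdot \widecheck{S}(\mathbf{x}_i)_j$ and $W_i^{(j)} := y_i\cdot S(\mathbf{x}_i)_j$, so that $\mathbb{E}[W_i^{(j)}] = \mu^*\beta^*_j$ by Stein's identity. Then
\[
\frac{1}{n}\sum_{i=1}^n Z_i^{(j)} - \mu^*\beta^*_j \;=\; \underbrace{\mathbb{E}[Z_i^{(j)}] - \mathbb{E}[W_i^{(j)}]}_{\text{bias}} \;+\; \underbrace{\frac{1}{n}\sum_{i=1}^n\bigl(Z_i^{(j)} - \mathbb{E}[Z_i^{(j)}]\bigr)}_{\text{fluctuation}}.
\]

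For the bias term, the event $\{Z_i^{(j)}\neq W_i^{(j)}\}$ is contained in $\{|y_i|>\tau\}\cup\{|S(\mathbf{x}_i)_j|>\tau\}$, and on this event the difference is at most $|y_iS(\mathbf{x}_i)_j|+\tau^2$. Using Cauchy--Schwarz together with the fourth-moment bounds in Assumption~\ref{genass1} and Markov's inequality $\mathbb{P}(|y_i|>\tau)\vee\mathbb{P}(|S(\mathbf{x}_i)_j|>\tau)\le M/\tau^4$, one obtains
\[
\bigl|\mathbb{E}[Z_i^{(j)} - W_i^{(j)}]\bigr| \;\lesssim\; \sqrt{\mathbb{E}[y_i^2 S(\mathbf{x}_i)_j^2]}\cdot\sqrt{M/\tau^4} \;\lesssim\; M/\tau^2 \;\asymp\; \sqrt{\log p/n},
\]
by our choice $\tau=(Mn/\log p)^{1/4}/2$. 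This already matches the target rate.

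For the fluctuation term, the key observation is that truncation turns $Z_i^{(j)}$ into a bounded random variable: $|Z_i^{(j)}|\le \tau^2 \asymp \sqrt{n/\log p}$. Moreover $\mathrm{Var}(Z_i^{(j)})\le \mathbb{E}[(Z_i^{(j)})^2]\le \mathbb{E}[(W_i^{(j)})^2]\le \sqrt{\mathbb{E}[y_i^4]\mathbb{E}[S(\mathbf{x}_i)_j^4]}\le M$. Applying Bernstein's inequality to the centered, independent, bounded summands with $\sigma^2\le M$ and $b\asymp\sqrt{n/\log p}$, and choosing $t = C\sqrt{\log p/n}$ with $C$ large enough, both regimes of Bernstein give exponent of order $-c\log p$, so each coordinate concentrates with probability at least $1-2p^{-3}$. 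A union bound over $j\in[p]$ then yields
\[
\Bigl\|\tfrac{1}{n}\sum_{i=1}^n Z_i - \mathbb{E}[Z_i]\Bigr\|_\infty \;\lesssim\; \sqrt{\log p/n}
\]
with probability at least $1-2p^{-2}$. Combining with the bias bound gives the claim with an absolute constant $M_g$ depending only on $M$.

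The main obstacle, and the only nontrivial calibration, is the choice of $\tau$: it must be large enough to push the truncation bias down to $\sqrt{\log p/n}$ (requiring $\tau^{-2}\lesssim\sqrt{\log p/n}$, i.e.\ $\tau\gtrsim(n/\log p)^{1/4}$) while remaining small enough that the Bernstein bound on the fluctuation is not degraded by the $b=\tau^2$ term (requiring $\tau^2\sqrt{\log p/n}\lesssim\sigma^2$, i.e.\ $\tau\lesssim (n/\log p)^{1/4}$). The exponent $1/4$ in the choice $\tau=(Mn/\log p)^{1/4}/2$ is exactly the sweet spot where these two constraints meet, which is possible precisely because only four moments of $y$ and $S(\mathbf{x})_j$ are assumed.
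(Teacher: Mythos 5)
Your proposal is correct and follows essentially the same route as the paper: decompose into truncation bias plus stochastic fluctuation, bound the bias by $\lesssim M/\tau^2$ via Cauchy--Schwarz and Markov using the fourth moments, bound the fluctuation via Bernstein's inequality with truncation bound $\tau^2$ and variance $\le M$, and balance both at the rate $\sqrt{\log p/n}$ by choosing $\tau \asymp (n/\log p)^{1/4}$. The calibration discussion at the end is exactly the insight the paper uses to set the threshold.
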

	\begin{proof}
	See  \S\ref{sect:prooflemmagen} for a detailed proof.
		\end{proof}
	By our conclusion from Lemma \ref{lemgenconc} and following the proof procedure of Lemma \ref{prop1.3} and \ref{prop1.4} above,  with probability $1-2/p^2$, there exists a constant $a_4$ such that we obtain
	\begin{align*}
	&\|\mathbf{e}_{1,t}\|_{\infty}\le \sqrt{\alpha}\le \frac{M_g}{\sqrt{p}}, \qquad	\|\mathbf{e}_{2,t}\|_{\infty}\le \sqrt{\alpha}\le \frac{M_g}{\sqrt{p}},
	\end{align*}
	for all $t\le T:=a_4\log(1/\alpha)\sqrt{n/\log p}/\eta$. Similarly, for signal parts, with probability $1-2/p^2$, there also exists a constant $a_3$ such that we have
	\begin{align*}
	\big\|\beta_{t}\odot\mathbf{1}_{S_0}-\mu^{*}\beta^{*}\odot\mathbf{1}_{S_0}\big\|_{\infty}&\lesssim \sqrt{\frac{\log p}{n}},
	\end{align*}
 for all $t\ge a_3\log({1}/{\alpha})/[\eta(\mu^{*} s_m-M_g\sqrt{\log p/n})]$ and
 \begin{align*}
	\big\|\beta_{t}\odot\mathbf{1}_{S_1}-\mu^{*}\beta^{*}\odot\mathbf{1}_{S_1}\big\|_{\infty}&\lesssim \sqrt{\frac{\log p}{n}},
	\end{align*}
  for all $t\ge 0$ with probability $1-2/p^2$. The way of choosing $a_3$, $a_4$ is the same with choosing $a_1$ and $a_2$, for simplicity, we omit the details here.	Combining  two conclusions above, we claim our proof  of Theorem \ref{thmvec2}.
	Next, we will prove Lemma \ref{lemgenconc} which we have applied in the process of proving Theorem \ref{thmvec2}.
	\end{proof}
\subsubsection{Proof of Lemma \ref{lemgenconc}}\label{sect:prooflemmagen}
\begin{proof}
	We separate the left hand side of  \eqref{genconc} into two parts, namely
	\begin{align*}
	&\bigg\|\frac{1}{n}\sum_{i=1}^{n}\widecheck{y_i}\widecheck{S}(\mathbf{x}_i)-\mu^{*}\beta^{*}\bigg\|_{\infty}\\&\quad=\bigg\| \frac{1}{n}\sum_{i=1}^{n}\widecheck{y_i}\widecheck{S}(\mathbf{x}_i)-\mathbb{E}[\widecheck{y_1}\widecheck{S}(\mathbf{x_1})]+\mathbb{E}[\widecheck{y_1}\widecheck{S}(\mathbf{x_1})]-\mathbb{E}[y_1\cdot S(\mathbf{x}_1)]     \bigg\|_{\infty}\\
	&\quad\le \bigg\| \frac{1}{n}\sum_{i=1}^{n}\widecheck{y_i}\widecheck{S}(\mathbf{x}_i)-\mathbb{E}[\widecheck{y_1}\widecheck{S}(\mathbf{x_1})] \bigg\|_{\infty}+\Big\|   \mathbb{E}[\widecheck{y_1}\widecheck{S}(\mathbf{x_1})]-\mathbb{E}[y_1\cdot S(\mathbf{x}_1)] \Big\|_{\infty}.
	\end{align*}
To simplify the notations, within this proof, we define
\$
\frac{1}{n}\sum_{i=1}^{n}\widecheck{y_i}\widecheck{S}(\mathbf{x}_i)-\mathbb{E}[\widecheck{y_1}\widecheck{S}(\mathbf{x_1})] = \tilde{\Psi}, \qquad \mathbb{E}[\widecheck{y_1}\widecheck{S}(\mathbf{x_1})]-\mathbb{E}[y_1\cdot S(\mathbf{x}_1)] = \tilde{\Phi}.
\$
In addition, we define event $C_j$ as $C_j=\{|y_1|\le\tau,|S(\mathbf{x}_1)_j|\le\tau\}$, then we are able to control $j$-th entry of $\tilde{\Phi}$ as
	\begin{align*}
	\tilde{\Phi}_j&=\mathbb{E}\big[\widecheck{y_1}\widecheck{S}(\mathbf{x_1})_j\big]-\mathbb{E}\big[y_1\cdot S(\mathbf{x}_1)_j\big] \\
	&\le\mathbb{E}\big[(|y_1|-\tau)\cdot (|S(\mathbf{x}_1)_j|-\tau) \cdot \mathbb{I}_{C_j^{c}}\big]\\
	&\le\sqrt{\mathbb{E}\big[y_1^2S(\mathbf{x}_1)^2_j\big]\cdot \big[\mathbb{P}(|y_1|>\tau)+\mathbb{P}(|S(\mathbf{x}_1)|>\tau)\big]}\\
	&\le \left\{\mathbb{E}\big[y_1^4\big]\cdot \mathbb{E}\big[S(\mathbf{x}_1)^4_j\big]     \right\}^{1/4}\cdot \sqrt{2}M^{1/2}/\tau^2 \leq \sqrt{2}M/\tau^2.
			\end{align*}
	The third and fourth inequalities are established by Cauchy Schwartz inequality and Chebyshev inequality respectively. In addition, the last inequality follows from   Assumption \ref{genass1}. Note that the inequality above holds for any $j\in[d]$ so that we have
	$
	 \| {\tilde{\Phi}} \|_{\infty}\le  \sqrt{2}M / \tau^2.
$
	For term $\tilde{\Psi}$, by definition, we know that $|\widecheck{y}_i\widecheck{S}(\mathbf{x}_i)_j|\le\tau^2$ and $\sum_{i=1}^{n}\text{Var}(\widecheck{y_i}\widecheck{S}(\mathbf{x}_i)_j)\le n\cdot M$ with $j\in[p]$.  After directly applying Bernstein inequality and we further obtain
	\begin{align}\label{trunbernst}
	\mathbb{P}\left(\bigg\|\frac{1}{n}\sum_{i=1}^{n}\widecheck{y_i}\widecheck{S}(\mathbf{x}_i)-\mu^{*}\beta^{*}\bigg\|_{\infty}\ge\frac{\sqrt{2}M}{\tau^2}+t\right)\le 2p\cdot\exp\bigg(-\frac{nt^2}{M+\tau^2 t/3}\bigg).
	\end{align}
 We set $t=m_1\sqrt{\log p/n}$ and $\tau=m_2^{1/2}(n/\log p)^{1/4}$ in \eqref{trunbernst}, where  $m_1$ and $m_2$ are constants that we will specify later.  We aim at establishing the following inequality
 \begin{align*}
 2 p\cdot\exp\bigg(-\frac{nt^2}{M+\tau^2 t/3}\bigg)=2p\cdot\exp\bigg(-\frac{3m_1^2\log p}{3M+m_1m_2}\bigg)\le \frac{2}{p^2}.
 \end{align*}
 Then by setting $m_1=2\sqrt{M}$ and $m_2=\sqrt{M}/4$, we  obtain
 \begin{align*}
 \frac{3m_1^2}{3M+m_1m_2}\ge 3.
 \end{align*}
 Thus, we obtain that
 \begin{align*}
\bigg\| \frac{1}{n}\sum_{i=1}^{n}\widecheck{y_i}\widecheck{S}(\mathbf{x}_i) -\mu^{*}\beta^* \bigg\|\le (4\sqrt{2}+2)\sqrt{M}\sqrt{\frac{\log p}{n}}
 \end{align*}
holds with probability $1-2/p^2,$ and we conclude the proof  of Lemma \ref{lemgenconc}.
	\end{proof}

\subsection{Algorithm in \S\ref{secgenvec}}\label{alggenvec}
\begin{algorithm}[H]
	\KwData{Training covariates $\{\xb_{i}\}_{i=1}^{n},$  response vector $\{y_i\}_{i=1}^{n}$, truncating parameter $\tau$, initial value $\alpha$, step size $\eta$;}
	Initialize variables $\wb_0=\alpha\cdot\mathbf{1}_{p\times 1}$, $\vb_0=\alpha\cdot\mathbf{1}_{p\times 1}$ and set iteration number $t=0$;\\
	\While{$t<T_1$}{
		$\mathbf{w}_{t+1}=\mathbf{w}_{t}-\eta\big(\mathbf{w}_t\odot \mathbf{w}_t-\mathbf{v}_t\odot \mathbf{v}_t-\frac{1}{n}\sum_{i=1}^{n}\widecheck{S}(\xb_i)\widecheck{y_i}\big)\odot \mathbf{w}_t$;\\
		$\, \mathbf{{v}}_{t+1}=\mathbf{v}_{t}\,+\eta\big(\mathbf{w}_t\odot \mathbf{w}_t-\mathbf{v}_t\odot \mathbf{v}_t-\frac{1}{n}\sum_{i=1}^{n}\widecheck{S}(\xb_i)\widecheck{y_i}\big)\odot \mathbf{v}_t$; \\
		$\beta_{t+1}=\mathbf{w}_t\odot \mathbf{w}_t-\mathbf{v}_t\odot \mathbf{v}_t;$\\
		$\, t=t+1$;\\
	}
	\KwResult{ 
		Output the final estimate $\widehat \beta^{*}=\beta_{T_1}$.
	}\label{alg5}
	\caption{ Algorithm for  Vector SIM with General Design}
\end{algorithm}

\section{Proof of General Theorems in  \S\ref{sectionmat}}\label{proofmat}

\subsection{Algorithm in \S\ref{secgaussmat}}\label{gauss_mat_alg}
\begin{algorithm}[htpb]
	\KwData{Training design matrix $\mathbf{X}_i \in \mathbb{R}^{d\times d}$, $i\in[n]$, response variables $\{y_i\}_{i=1}^{n}$, initial value $\alpha$ and step size $\eta$;}
	Initialize  $\Wb_0=\alpha\cdot\mathbb{I}_{d\times d}$, $\Vb_0=\alpha\cdot\mathbb{I}_{d\times d}$ and set iteration number $t=0$;\\
	\While{$t<T_1$}{
		$\mathbf{W}_{t+1}=\mathbf{W}_{t}-\eta(\mathbf{W}_t \mathbf{W}_t^\top-\mathbf{V}_t \mathbf{V}_t^\top-\frac{1}{2n}\sum_{i=1}^{n}\mathbf{X}_{i}y_i-\frac{1}{2n}\sum_{i=1}^{n}\mathbf{X}_{i}^\top y_i) \mathbf{W}_t$;\\
		$\,\mathbf{V}_{t+1}\,=\mathbf{V}_{t}\,\,+\eta(\mathbf{W}_t \mathbf{W}_t^\top-\mathbf{V}_t \mathbf{V}_t^\top-\frac{1}{2n}\sum_{i=1}^{n}\mathbf{X}_{i}y_i-\frac{1}{2n}\sum_{i=1}^{n}\mathbf{X}_{i}^\top y_i) \mathbf{V}_t$; \\
		$\,\,\,\beta_{t+1}\,=\mathbf{W}_t \mathbf{W}_t^\top-\mathbf{V}_t \mathbf{V}_t^\top$;\\
		$\quad t\,=t+1$;\\
	}
	\KwResult{ Output the final estimate $\widehat \beta=\beta_{T_1}$.
	}\label{alg2}
	\caption{ Algorithm for Low Rank Matrix SIM with Gaussian Design}
\end{algorithm}

\subsection{Proof of Theorem \ref{thmmat1}}
 As we assume $\mu^{*}\beta^{*}:=E[f'(\langle \Xb,\beta^{*} \rangle)]\beta^{*}$ is symmetric in \S\ref{sectionmat}, so we over-parameterize $\mu^{*}\beta$ as $\Wb\Wb^\top-\Vb\Vb^\top,$ in which $\Wb$ and $\Vb$ are matrices with dimension $d\times d.$ Then our loss function related to $\Wb,\Vb$ becomes
	\begin{align*}
\min_{\Wb,\Vb}\,L(\Wb,\Vb):=\, \langle \Wb\Wb^\top-\Vb\Vb^\top,\Wb\Wb^\top-\Vb\Vb^\top\rangle-2\Big\langle\Wb\Wb^\top-\Vb\Vb^\top,\frac{1}{n}\sum_{i=1}^{n}y_i\Xb_i \Big\rangle.
\end{align*}
The gradient updates with respect to $\Wb, \Vb$ and $\beta$ are given by
 	\begin{align}
\Wb_{t+1}&=\Wb_t-\eta\Big(\Wb_t\Wb_t^\top-\Vb_t\Vb_t^\top-\frac{1}{2n}\sum_{i=1}^{n}y_i\Xb_i-\frac{1}{2n}\sum_{i=1}^{n}y_i\Xb_i^\top\Big)\Wb_t,\label{matup1}\\
\Vb_{t+1}&=\Vb_t+\eta\Big(\Wb_t\Wb_t^\top-\Vb_t\Vb_t^\top-\frac{1}{2n}\sum_{i=1}^{n}y_i\Xb_i-\frac{1}{2n}\sum_{i=1}^{n}y_i\Xb_i^\top\Big)\Vb_t,\label{matup2}\\
\beta_{t+1}&=\Wb_{t+1}\Wb_{t+1}^\top-\Vb_{t+1}\Vb_{t+1}^\top.\label{matup3}
 \end{align}
For simplicity, let $\Mb^{*} = \frac{1}{2n}\sum_{i=1}^{n}y_i\Xb_i+\frac{1}{2n}
\sum_{i=1}^{n}y_i\Xb_i^\top,$ whose spectral decomposition is $\Mb^{*}:=\Qb^{*}\Sigma^{*}\Qb^{*\top}.$  Here for identifiability, through this section, we always assume eigenvalues are sorted in order of decreasing value in the diagonal matrix for any spectral decomposition. We then define $\Wb_{1,t}$ and $\Vb_{1,t}$ as $\Wb_{1,t}=\Qb^{*\top}\Wb_t\Qb^{*}$ and $\Vb_{1,t}=\Qb^{*\top}\Vb_t\Qb^{*}$, meanwhile, the corresponding gradient updates with respect to $\Wb_{1,t}$ and $\Vb_{1,t}$ are given by
\begin{align*}
\Wb_{1,t+1}&=\Wb_{1,t}-\eta\big(\Wb_{1,t}\Wb_{1,t}^\top-\Vb_{1,t}\Vb_{1,t}^\top-\Sigma^{*}\big)\Wb_{1,t},\\
\Vb_{1,t+1}&=\Vb_{1,t}+\eta\big(\Wb_{1,t}\Wb_{1,t}^\top-\Vb_{1,t}\Vb_{1,t}^\top-\Sigma^{*}\big)\Vb_{1,t},\\
\beta_{1,t+1}&=\Wb_{1,t+1}\Wb_{1,t+1}^\top-\Vb_{1,t+1}\Vb_{1,t+1}^\top.
\end{align*}
If we initialize $\Wb_{1,0}$ and $\Vb_{1,0}$ as diagonal matrices, then all of their following updates will keep being diagonal matrices. In this case, our analysis on symmetric low rank matrices can be relaxed to the analysis on sparse vectors. Likewise, we also remind readers of the notations before formally proving Theorem \ref{thmmat1}.

Similar to the vector case, here we also divide eigenvalues of $\beta^{*}$ into different groups by their strengths. We let $r_i^{*},i\in[n]$ be the $i$-th eigenvalue of $\beta^{*}$. The support set $R$ of our eigenvalues is defined as $R:=\{i:\left|r_i^{*}\right|>0   \}.$  In addition, the set $R_0$ that  contains strong signals is defined as $R_0:=\{i: |r_i^{*}|\ge C_{ms}\sqrt{{d\log d}/{n}} \}$, and the set $R_1:=\{i: 0<|r_i^{*}|< C_{ms}\sqrt{ {d\log d}/{n}  }  \}$  denotes the collection of weak signals. The constant $C_{ms}$ will be specified  in the proof.  Likewise, pure error parts of $\Wb_{1,t}$ and $\Vb_{1,t}$ are denoted by $\Eb_{w,t}:=\mathbb{I}_{R^{c}}\Wb_{1,t}$ and $\Eb_{v,t}:=\mathbb{I}_{R^{c}}\Vb_{1,t}$ respectively.  Here, $\mathbb{I}_{R_0}$ is the  diagonal matrix with ones  in the index set $R_0$ and zeros elsewhere.   Moreover, strong signal parts of $\Wb_{1,t}$ and $\Vb_{1,t}$ are denoted by $\mathbf{S}_{w,t}=\mathbb{I}_{R_0}\Wb_{1,t}$ and $\mathbf{S}_{v,t}=\mathbb{I}_{R_0}\Vb_{1,t}$ and at the same time, weak signal parts are written as $\Ub_{w,t}:=\mathbb{I}_{R_1}\Wb_{1,t}$ and $\Ub_{v,t}:=\mathbb{I}_{R_1}\Vb_{1,t}$. The cardinality of set $R_0$ and $R_1$ are denoted by $r_0$ and $r_1$ respectively. For simplicity, we denote $\gamma^*$ as $\gamma^*=\sqrt{{n}/{d\log d}}$ through our proof in \S\ref{proofmat}. Next, we   formally prove  Theorem \ref{thmmat1}. 

\begin{proof}The proof idea behind Theorem \ref{thmmat1} is similar to  that of Theorem \ref{thmvec1}.
	We first prove that the strength of pure error part of the eigenvalues stay  small  for a large number of iterations.
	\begin{lemma}\label{matrixerror}(Error Dynamics)
		Under assumptions in Theorem~\ref{thmmat1}, there exists an absolute constant $a_6$ such that,
		with probability $1-1/(2d)-3/n^2$,
		we obtain
 	\begin{align*}
			&\left\|\Eb_{w,t}\right\|_{\oper}\le  \sqrt{\alpha}\le \frac{M_m}{\sqrt{d}},\qquad 
			\left\|\Eb_{v,t}\right\|_{\oper}\le \sqrt{\alpha}\le \frac{M_m}{\sqrt{d}}, 
		\end{align*}
		 for all $t \geq 0$ with $ t\le T\colon=a_6\log(1/\alpha)\gamma^*/(\eta M_m)$,
	 where $M_m$ is an absolute  constant that  proportional to $\max\{\sigma, \|f\|_{\psi_2}\}$.
	\end{lemma}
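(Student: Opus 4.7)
}
The plan is to reduce the matrix case to a diagonalized, entrywise problem that mirrors the vector analysis of Lemma \ref{prop1.3}, and then to supply the one genuinely new ingredient, namely an operator-norm concentration bound for $\Mb^{*}$ around $\mu^{*}\beta^{*}$. First I would exploit the invariance built into the change of basis: since $\Sigma^{*}$ is diagonal and $\Wb_{1,0}=\Vb_{1,0}=\alpha\II_{d\times d}$ is also diagonal, the updates
\begin{align*}
\Wb_{1,t+1}&=\Wb_{1,t}-\eta\bigl(\Wb_{1,t}\Wb_{1,t}^{\top}-\Vb_{1,t}\Vb_{1,t}^{\top}-\Sigma^{*}\bigr)\Wb_{1,t},\\
\Vb_{1,t+1}&=\Vb_{1,t}+\eta\bigl(\Wb_{1,t}\Wb_{1,t}^{\top}-\Vb_{1,t}\Vb_{1,t}^{\top}-\Sigma^{*}\bigr)\Vb_{1,t}
\end{align*}
preserve diagonality for every $t\ge 0$. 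Consequently, $\|\Eb_{w,t}\|_{\oper}=\max_{i\in R^{c}}|[\Wb_{1,t}]_{ii}|$ and similarly for $\Eb_{v,t}$, so that the whole argument collapses to a collection of scalar recursions indexed by $i\in R^{c}$.

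The second step is to control $\Sigma^{*}$ on the noise indices. I would prove that, with probability at least $1-1/(2d)-3/n^{2}$,
\begin{align*}
\bigl\|\Mb^{*}-\mu^{*}\beta^{*}\bigr\|_{\oper}\le M_m\sqrt{d\log d/n},
\end{align*}
via a matrix Bernstein inequality applied to the symmetrized sum $\Mb^{*}=(2n)^{-1}\sum_{i}y_i(\Xb_i+\Xb_i^{\top})$ after the standard truncation step that turns the sub-Gaussian summands into effectively bounded ones, combined with a $\psi_1$-tail control for the products $y_i\Xb_i$ arising from Assumption \ref{ass1mat}. Weyl's inequality then gives $|\sigma^{*}_{i}|\le M_m\sqrt{d\log d/n}$ for every $i\in R^{c}$, because $\mu^{*}\beta^{*}$ vanishes on those indices. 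This is the matrix analogue of Lemma \ref{lembernstein}, and I expect this concentration step to be the main technical obstacle, since the operator-norm rate $\sqrt{d\log d/n}$ must absorb both the unbounded heavy tails (after symmetrization) and the $d$-dimensional covering.

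With these bounds in hand, I would run the induction from Lemma \ref{prop1.3} on each diagonal entry. For fixed $i\in R^{c}$, set $w_{t}=[\Wb_{1,t}]_{ii}$ and $v_{t}=[\Vb_{1,t}]_{ii}$, so that
\begin{align*}
w_{t+1}&=w_{t}\bigl[1-\eta(w_{t}^{2}-v_{t}^{2}-\sigma^{*}_{i})\bigr],\\
v_{t+1}&=v_{t}\bigl[1+\eta(w_{t}^{2}-v_{t}^{2}-\sigma^{*}_{i})\bigr].
\end{align*}
The induction hypothesis $|w_{s}|,|v_{s}|\le\sqrt{\alpha}$ for $s\le t$ combined with the bound $|\sigma^{*}_{i}|\le M_m\sqrt{d\log d/n}$ and $\alpha\le M_m^{2}/d$ gives $|w_{t}^{2}-v_{t}^{2}-\sigma^{*}_{i}|\le 2\alpha+M_m\sqrt{d\log d/n}\le 3M_m\sqrt{d\log d/n}$. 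Iterating and using $\log(1+x)\le x$ for $x>0$ yields
\begin{align*}
|w_{t+1}|\le(1+3\eta M_m/\gamma^{*})^{t+1}\cdot\alpha\le\exp\bigl(T\cdot\eta/(c'\gamma^{*})\bigr)\cdot\alpha\le\sqrt{\alpha},
\end{align*}
with $c'=1/(3M_m)$ and $a_{6}=1/6$, and analogously for $v_{t+1}$. This closes the induction for every $t\le T=a_{6}\log(1/\alpha)\gamma^{*}/(\eta M_m)$, and taking a maximum over $i\in R^{c}$ upgrades the entrywise bound to the claimed operator-norm bound, completing the proof.
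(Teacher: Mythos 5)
Your proposal matches the paper's approach almost exactly: the paper also reduces to the diagonal entrywise recursion via the change of basis $\Qb^*$, invokes an operator-norm concentration of $\Mb^*$ around $\mu^*\beta^*$ (Lemma \ref{propgauss}, obtained by a truncation-plus-matrix-Bernstein argument, followed by Weyl's inequality) to bound $|\sigma_i^*|$ on $R^c$, and then re-runs the induction from Lemma \ref{prop1.3} with $\sqrt{\log p / n}$ replaced by $\sqrt{d\log d/n}$. You have filled in the induction details that the paper leaves implicit by referring to Lemma \ref{prop1.3}, and your arithmetic (including $a_6=1/6$ and the check that $2\alpha\le 2M_m\sqrt{d\log d/n}$ under $\alpha\le M_m^2/d$ and $n=o(d^2)$) is consistent with the paper's vector-case proof.
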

\begin{proof}
	See  \S\ref{prooflemmamat} for a detailed proof.
	\end{proof}
For the t-th iterate $\beta_t$, we separate it into three parts, namely, $\Qb^*\mathbb{I}_{R_0}\beta_{1,t}\Qb^{*\top},$ $\Qb^*\mathbb{I}_{R_1}\beta_{1,t}\Qb^{*\top}$ and $\Qb^*\mathbb{I}_{R^c}\beta_{1,t}\Qb^{*\top}.$
	By our conclusion from Lemma \ref{matrixerror}, with probability $1-1/(2d)-3/n^2,$ we  obtain
	\begin{align}\label{errordymat}
	\Big\|\Qb^*\mathbb{I}_{R^c}\beta_{1,t}\Qb^\mathsf{*T}\Big\|_{\oper}\lesssim \frac{1}{d},
	\end{align}
	for all $t$ with  $t\le T=a_6\log(1/\alpha)\gamma^*/(\eta M_m)$.

  Next, we  analyze the dynamics of strong signal components of $\{\beta_t\}_{t\ge 0}$ in the following Lemma \ref{matrixsignal}, which shows that  the  strong signals converges rapidly  to their corresponding ground truths.
	\begin{lemma}\label{matrixsignal}(Strong Signal Dynamics)
		Let the spectral decomposition of $\mu^{*}\beta^{*}$ be $\mu^{*}\beta^{*}=\Pb^{*}\Rb^{*}\Pb^{*\top}.$ We denote the minimum absolute value of our strong signals $\beta^{*}$ as $r_m$. Under assumptions in Theorem~\ref{thmmat1}, if we further choose   $0<\eta \le{1}/[12(|\mu^{*}|+M_m)]$, and $C_{ms}\ge 2M_m/|\mu^{*}|$, then there exists an absolute constant $a_5$ such that,
		with probability at least $1-1/(2d)-3/n^2$, we have
		\begin{align}\label{singdymat}
			\Big\|\Qb^*\beta_{1,t}\mathbb{I}_{R_0}\Qb^\mathsf{*T}-\Pb^*\Rb^{*}\II_{R_0\cup R_1}\Pb^{*\top}\Big\|_{\oper}\lesssim \sqrt{\frac{d\log d}{n}}
		\end{align}
 for all $t\ge a_5/[\eta(|\mu^{*}|r_m-M_m\sqrt{d\log d/n })]\cdot \log(1/\alpha)$.
	\end{lemma}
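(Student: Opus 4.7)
The plan is to reduce the matrix analysis to the diagonal/vector analysis already carried out in Lemma \ref{prop1.4}, exploiting the key fact that if $\Wb_{1,0}=\Vb_{1,0}=\alpha\cdot\II_{d\times d}$ are diagonal, then the updates
\$
\Wb_{1,t+1}=\Wb_{1,t}-\eta(\Wb_{1,t}\Wb_{1,t}^\top-\Vb_{1,t}\Vb_{1,t}^\top-\Sigma^{*})\Wb_{1,t},\qquad \Vb_{1,t+1}=\Vb_{1,t}+\eta(\Wb_{1,t}\Wb_{1,t}^\top-\Vb_{1,t}\Vb_{1,t}^\top-\Sigma^{*})\Vb_{1,t}
\$
keep $\Wb_{1,t}$ and $\Vb_{1,t}$ diagonal for every $t\ge 0$, since $\Sigma^{*}$ is diagonal by construction. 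Consequently $\beta_{1,t}=\Wb_{1,t}\Wb_{1,t}^\top-\Vb_{1,t}\Vb_{1,t}^\top$ is diagonal, and each diagonal entry evolves independently according to exactly the same scalar recursion studied in \S\ref{genersig}.

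First, I would establish a matrix-concentration analogue of Lemma \ref{lembernstein}, namely
\$
\bigl\|\Mb^{*}-\mu^{*}\beta^{*}\bigr\|_{\oper}\le M_m\sqrt{d\log d/n}
\$
with probability at least $1-1/(2d)-3/n^2$, via a standard matrix Bernstein inequality applied to the symmetrized sum $\Mb^{*}=(2n)^{-1}\sum_{i}y_i(\Xb_i+\Xb_i^{\top})$, using the sub-Gaussianity in Assumption \ref{ass1mat}. Combining this with Weyl's inequality gives $|\sigma_i^{*}-\mu^{*}r_i^{*}|\le M_m\sqrt{d\log d/n}$ for every $i\in[d]$, where $\sigma_i^{*}$ denotes the $i$-th diagonal of $\Sigma^{*}$. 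Under the choice $C_{ms}\ge 2M_m/|\mu^{*}|$, this ensures that the diagonal entry $\sigma_i^{*}$ with $i\in R_0$ is bounded away from zero and of the same sign as $\mu^{*}r_i^{*}$; these are precisely the scalar counterparts of the quantity $\beta_i^{'}=\mu^{*}\beta_i^{*}+\xi_i$ appearing in the vector proof.

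Next, for each fixed $i\in R_0$, I would run the three-stage argument of Lemma \ref{prop1.4} verbatim on the $i$-th diagonal entry: (i) a geometric-growth stage showing the diagonal entry of $\beta_{1,t}$ first exceeds $\sigma_i^{*}/2$ within $\mathcal{O}(\log(\sigma_i^{*}/\alpha^2)/(\eta\sigma_i^{*}))$ iterations, (ii) a dyadic refinement stage giving $\epsilon$-accuracy within $\mathcal{O}(\log(1/\alpha)/(\eta\sigma_i^{*}))$ further iterations with $\epsilon\asymp M_m\sqrt{d\log d/n}$, and (iii) the monotonicity/stability control of Lemma \ref{monotone_lem}, whose diagonal form applies unchanged under the stepsize constraint $\eta\le 1/[12(|\mu^{*}|+M_m)]$. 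Since $\min_{i\in R_0}\sigma_i^{*}\ge|\mu^{*}|r_m-M_m\sqrt{d\log d/n}$, a uniform stopping time $t\ge a_5\log(1/\alpha)/[\eta(|\mu^{*}|r_m-M_m\sqrt{d\log d/n})]$ guarantees that every diagonal entry indexed by $R_0$ lies within $2M_m\sqrt{d\log d/n}$ of the corresponding $\sigma_i^{*}$ simultaneously.

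Finally, I would convert this entrywise $\ell_\infty$-type control on the diagonal of $\beta_{1,t}\II_{R_0}$ into the operator-norm conclusion by noting that the diagonal difference matrix has at most $r_0\le r$ non-zero entries, each bounded by $3M_m\sqrt{d\log d/n}$ after absorbing the Weyl perturbation; thus its operator norm satisfies the claimed bound of order $\sqrt{d\log d/n}$ (and its Frobenius norm the order $\sqrt{rd\log d/n}$ used in Theorem \ref{thmmat1}). Rotating back by $\Qb^{*}$ preserves operator norm, yielding the stated bound on $\|\Qb^{*}\beta_{1,t}\II_{R_0}\Qb^{*\top}-\Pb^{*}\Rb^{*}\II_{R_0\cup R_1}\Pb^{*\top}\|_{\oper}$ once we absorb the spectral gap between $\Sigma^{*}$ and $\Rb^{*}$ through Davis--Kahan together with the perturbation bound above. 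The main obstacle I anticipate is the last step: carefully reconciling the eigenbasis $\Qb^{*}$ of $\Mb^{*}$ with the eigenbasis $\Pb^{*}$ of $\mu^{*}\beta^{*}$ in the presence of possibly close or coincident eigenvalues of $\beta^{*}$, so that the perturbation can be charged to the $\sqrt{d\log d/n}$ budget rather than blown up by an inverse eigengap. I expect this to be handled either by working directly with the projector $\Qb^{*}\II_{R_0}\Qb^{*\top}$ (which depends only on the well-separated block of eigenvalues above the noise floor $C_{ms}\sqrt{d\log d/n}$) or by invoking a sin--$\Theta$ bound for the invariant subspace associated with $R_0$, where the gap is guaranteed by the construction of $C_{ms}$.
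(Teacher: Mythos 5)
Your structure agrees with the paper for most of the argument: you correctly observe that the updates in the $\Qb^{*}$-eigenbasis of $\Mb^{*}$ stay diagonal, that each diagonal entry obeys the same scalar recursion as in the vector case so that the three-stage analysis of Lemma~\ref{prop1.4} applies entrywise, and that the role of Lemma~\ref{lembernstein} is played by a matrix Bernstein bound (the paper's Lemma~\ref{propgauss}) giving $\|\Mb^{*}-\mu^{*}\beta^{*}\|_{\oper}\lesssim\sqrt{d\log d/n}$ together with Weyl's inequality. Up to that point you recover, in essence, the paper's $\|\beta_{1,t}\II_{R_0}-\Sigma^{*}\II_{R_0}\|_{\oper}\lesssim\sqrt{d\log d/n}$.

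The genuine gap is the final step, where you propose to reconcile $\Qb^{*}$ with $\Pb^{*}$ via Davis--Kahan or a $\sin\Theta$ bound. That route does not give the claimed rate: the perturbation is of order $\sqrt{d\log d/n}$, and the eigengap separating the $R_0$ block from the rest is of order $C_{ms}|\mu^{*}|\sqrt{d\log d/n}$, i.e.\ the \emph{same} order. A $\sin\Theta$ bound therefore controls the subspace distance only by a \emph{constant}, not by $\sqrt{d\log d/n}$, and would leave you with an $\cO(1)$ error when you try to transfer the diagonal control into $\|\Qb^{*}\beta_{1,t}\II_{R_0}\Qb^{*\top}-\Pb^{*}\Rb^{*}\II_{R_0\cup R_1}\Pb^{*\top}\|_{\oper}$. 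The paper avoids any eigenvector comparison altogether. Since $\II_{R_0}+\II_{R_1\cup R^{c}}=\II_{d\times d}$ and $\Pb^{*}\Rb^{*}\II_{R_0\cup R_1}\Pb^{*\top}=\mu^{*}\beta^{*}$, one simply writes
\begin{align*}
\Qb^{*}\Sigma^{*}\II_{R_0}\Qb^{*\top}-\mu^{*}\beta^{*}
=\bigl(\Mb^{*}-\mu^{*}\beta^{*}\bigr)-\Qb^{*}\Sigma^{*}\II_{R_1\cup R^{c}}\Qb^{*\top},
\end{align*}
and bounds the two terms separately: the first by the concentration of $\Mb^{*}$ and the second by Weyl's inequality, which forces $|\sigma_i^{*}|\lesssim\sqrt{d\log d/n}$ for every $i\in R_1\cup R^{c}$. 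This add-and-subtract decomposition is the missing ingredient in your proposal; once it is in place, no invariant-subspace perturbation bound is needed, and in fact none would be sharp enough here.
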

	\begin{proof}
		See  \S\ref{prooflemmamat} for a detailed proof.
	\end{proof}

 Finally, we characterize the dynamics of weak signal parts in the following lemma, which shows that  the     weak signals  is always bounded by  $\cO(\sqrt{d\log d/n})$ when the stepsize $\eta$ is properly chosen.
	\begin{lemma}\label{matrixsignal_w}(Weak Signal Dynamics)
		Under assumptions in Theorem~\ref{thmmat1}, if we further choose  the stepsize $\eta$ such that  $0<\eta \le{1}/[12(|\mu^{*}|+M_m)]$,
		then with probability at least $1-1/(2d)-3/n^2$,  for all $t\geq 0$
		 we have
		\begin{align}\label{singdymat_w}
			\Big\|\Qb^*\beta_{1,t}\mathbb{I}_{R_1}\Qb^\mathsf{*T}\Big\|_{\oper}\lesssim \sqrt{\frac{d\log d}{n}}.
		\end{align}
	\end{lemma}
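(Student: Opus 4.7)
The plan is to leverage the key observation that, after rotating by $Q^*$, the gradient descent dynamics decouple entrywise on the diagonal, so that the analysis reduces to the scalar (vector) case already carried out in Lemma \ref{prop1.5} and Lemma \ref{monotone_lem}. Specifically, since $\Sigma^*$ is diagonal and we initialize $W_{1,0} = V_{1,0} = \alpha \cdot \II_{d\times d}$, a quick induction on $t$ using the updates
\[
W_{1,t+1} = W_{1,t} - \eta(W_{1,t}W_{1,t}^\top - V_{1,t}V_{1,t}^\top - \Sigma^*) W_{1,t}, \quad V_{1,t+1} = V_{1,t} + \eta(\cdots)V_{1,t}
\]
shows that $W_{1,t}$ and $V_{1,t}$ remain diagonal for every $t \ge 0$, and hence $\beta_{1,t} = W_{1,t}W_{1,t}^\top - V_{1,t}V_{1,t}^\top$ is also diagonal. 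Consequently, each diagonal entry $(\beta_{1,t})_{ii}$ evolves exactly according to the scalar recursion studied in \S\ref{gensig}, with effective ``signal'' given by $\Sigma^*_{ii}$, the $i$-th eigenvalue of $M^*$.

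Next, I would quantify $\Sigma^*_{ii}$ for $i \in R_1$. By the concentration of $M^*$ around $\mu^*\beta^*$ in operator norm with rate $M_m \sqrt{d\log d/n}$ (which is exactly what is used in the proof of Lemma \ref{matrixerror} to control the error dynamics), Weyl's inequality yields
\[
|\Sigma^*_{ii} - \mu^* r_i^*| \le \|M^* - \mu^*\beta^*\|_{\oper} \le M_m \sqrt{d\log d/n}, \qquad \forall\, i \in [d],
\]
with probability at least $1 - 1/(2d) - 3/n^2$. Combined with the definition $R_1 = \{i : 0 < |r_i^*| < C_{ms}\sqrt{d\log d/n}\}$, this gives the uniform bound $|\Sigma^*_{ii}| \le (C_{ms}|\mu^*| + M_m)\sqrt{d\log d/n}$ for every $i \in R_1$.

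Having reduced to the scalar case, I would invoke (the straightforward analog of) Lemma \ref{monotone_lem}: under $0 < \eta \le 1/[12(|\mu^*| + M_m)]$, the scalar iterate $(\beta_{1,t})_{ii}$ satisfies $|(\beta_{1,t})_{ii}| \le |\Sigma^*_{ii}|$ for all $t \ge 0$. Since $\beta_{1,t}$ is diagonal, orthogonality of $Q^*$ implies
\[
\big\|Q^* \beta_{1,t}\II_{R_1} Q^{*\top}\big\|_{\oper} = \big\|\beta_{1,t}\II_{R_1}\big\|_{\oper} = \max_{i \in R_1} |(\beta_{1,t})_{ii}| \le (C_{ms}|\mu^*| + M_m)\sqrt{d\log d/n},
\]
which is the desired bound.

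The only real obstacle is verifying carefully that the scalar monotonicity argument in Lemma \ref{monotone_lem} transfers without modification: the argument there uses nothing beyond the sign pattern and magnitude of the effective signal $\beta_i'$ and the stepsize assumption, both of which are available here with $\beta_i'$ replaced by $\Sigma^*_{ii}$. I would therefore state and prove a matrix-flavored version of Lemma \ref{monotone_lem} in passing, essentially copying the induction from \S\ref{pf:monot} with $\beta_i'$ replaced by $\Sigma^*_{ii}$ and the bound $|\Sigma^*_{ii}| \le |\mu^*| + M_m$ (which holds uniformly in $i$ on the high-probability event above). This conservative, entrywise argument avoids any delicate issue of eigenvector misalignment between $Q^*$ and $P^*$, since we only use operator-norm concentration of $M^*$ rather than any relationship between their eigenvectors.
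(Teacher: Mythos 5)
Your proposal matches the paper's own argument: the paper (in \S\ref{prooflemmamat}) likewise observes that after rotating by $\Qb^*$ the iterates $\Wb_{1,t},\Vb_{1,t}$ stay diagonal so the analysis reduces to the scalar recursion of \S\ref{gensig}, combines Lemma \ref{propgauss} with Weyl's inequality to bound $|\Sigma^*_{ii}-\mu^* r_i^*|$ by $M_m\sqrt{d\log d/n}$, and then invokes the diagonal analog of Lemma \ref{monotone_lem} exactly as in the proof of Lemma \ref{prop1.5}. The only point the paper makes explicit that you should not lose is the uniform bound $|\Sigma^*_{ii}|\le|\mu^*|+M_m$ (from $\|\beta^*\|_F=1$) needed for the stepsize condition in the monotonicity lemma, which you do note.
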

	\begin{proof}
		See  \S\ref{prooflemmamat} for a detailed proof.
	\end{proof}

	Combining  \eqref{errordymat},  \eqref{singdymat} and \eqref{singdymat_w} above, we control the difference between $\beta_t$ and  $\mu^{*}\beta^{*}$ as
	\begin{align*}
	\big\|\beta_{t}-\mu^{*}\beta^{*}\big\|^2_{F}&\le 2\Big\|\Qb^*\mathbb{I}_{R_0}\beta_{1,t}\Qb^{*\top}-\Pb^*\Rb^{*}\mathbb{I}_{R_0\cup R_1}\Pb^{*\top}\Big\|_{F}^2+2\Big\|\Qb^*\mathbb{I}_{R_1}\beta_{1,t}\Qb^{*\top}\Big\|_{F}^2\\&\quad+2\Big\|\Qb^*\mathbb{I}_{R^c}\beta_{1,t}\Qb^{*\top}\Big\|_{F}^2\\
	&\lesssim \frac{(r_0+r_1)\cdot d\log d}{n}.
	\end{align*}
	Here the inequality holds for all $t$ in the interval
$$ \bigl [a_5/ \big ( \eta(|\mu^{*}|r_m-M_m\sqrt{d\log d/n })\bigr )\cdot \log(1/\alpha),~ a_6\log(1/\alpha)\cdot \gamma^*/(\eta M_m ) \bigr ].
$$
In addition, in order to make sure such a time interval exists, it suffices to choose $C_{ms}$ such that  $C_{ms}\ge \max\{(a_5/a_6+1)M_m/|\mu^{*}| ,2M_m/|\mu^{*}|\}.$
	The proof   of statistical rate of    the normalized estimator is almost the same as  that in \S\ref{gensig}, so we  omit the  details for brevity.
	\end{proof}

\subsection{Proof of Theorem \ref{rankconsist}}
\begin{proof}
	\revise{The conclusion of Theorem \ref{rankconsist} follows directly from results of Lemma \ref{matrixerror}, Lemma \ref{matrixsignal} and Lemma \ref{matrixsignal_w}.}
	\end{proof}

 In the following subsection, we will prove Lemmas \ref{matrixerror}, \ref{matrixsignal},  and   \ref{matrixsignal_w} respectively. 
\subsubsection{Proof Idea of Lemma \ref{matrixerror}, Lemma \ref{matrixsignal} and Lemma \ref{matrixsignal_w}}\label{prooflemmamat}
\begin{proof}
	As $\Eb_{w,t}, \Eb_{v,t}, \Ub_{w,t}, \Ub_{v,t},\mathbf{S}_{w,t},\mathbf{S}_{v,t}$ are all diagonal matrices, then our proof of Lemma \ref{matrixerror}, Lemma \ref{matrixsignal} and Lemma \ref{matrixsignal_w} are relaxed to the proof of Lemma \ref{prop1.3}, Lemma \ref{prop1.4} and Lemma \ref{prop1.5}. The only difference between them lies on the concentration in spectral norm between   $\Mb^{*}:=\frac{1}{2n}\sum_{i=1}^{n}y_i\Xb_i+\frac{1}{2n}\sum_{i=1}^{n}y_i\Xb_i^\top=\Qb^{*}\Sigma^{*}\Qb^{*}$ and the true signal $\mu^{*}\beta^{*}:=\Pb^*\Rb^*\Pb^{*\top}$. We will depict this concentration upper bound in the following Lemma \ref{propgauss}.
	\begin{lemma}\label{propgauss}
		With probability $1-1/(2d)-3/n^2$, we have
		\begin{align*}
			\bigg\|\frac{1}{n}\sum_{i=1}^{n} \bigl ( \Xb_iy_i-\mathbb{E}[\Xb_iy_i] \bigr )\bigg\|_{\oper}\le M_m\sqrt{\frac{d\log d}{n}} ,
		\end{align*}
		 where $M_m$ is proportional to $\max\{\|f\|_{\psi_2},\sigma\}.$ 
	\end{lemma}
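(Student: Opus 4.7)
\medskip
\noindent\textbf{Proof plan for Lemma \ref{propgauss}.}
The goal is a matrix Bernstein-type bound for the zero-mean sum $\frac{1}{n}\sum_{i=1}^{n}(\Xb_i y_i - \EE[\Xb_i y_i])$ under the Gaussian design with sub-Gaussian $y_i$. My plan is a standard truncation-plus-matrix-Bernstein argument, combined with the two tail bounds that control what is lost in truncation. First, I would write $y_i = f(\la \Xb_i,\beta^*\ra) + \epsilon_i$, so by Assumption~\ref{ass1mat} the $y_i$ are sub-Gaussian with $\|y_i\|_{\psi_2}\lesssim \max\{\|f\|_{\psi_2},\sigma\}$. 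Standard sub-Gaussian tail bounds give $|y_i|\le C_y\sqrt{\log n}$ for all $i\in[n]$ with probability at least $1-2/n^2$; moreover, Gaussian random matrix theory (e.g.\ Theorem~4.4.5 in \citealp{vershynin_2018}) yields $\|\Xb_i\|_{\oper}\le C_X\sqrt{d}$ for all $i\in[n]$ with probability at least $1-1/n^2$. Denote by $\cE$ the intersection of these events; on $\cE$ we have $\|\Xb_i y_i\|_{\oper}\le B := C_X C_y\sqrt{d\log n}$ uniformly.

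Next, I define truncated variables $\widetilde{\Zb}_i := (\Xb_i y_i)\,\mathbb{I}_{\{\|\Xb_i\|_{\oper}\le C_X\sqrt d,\,|y_i|\le C_y\sqrt{\log n}\}}$ and apply the matrix Bernstein inequality (Theorem 5.4.1 in \citealp{vershynin_2018}) to $\frac{1}{n}\sum_{i=1}^n (\widetilde{\Zb}_i - \EE[\widetilde{\Zb}_i])$. Here the deterministic bound on $\|\widetilde{\Zb}_i\|_{\oper}$ is $B$, and the variance proxy
\[
v := \Big\|\sum_{i=1}^n\EE[\widetilde{\Zb}_i\widetilde{\Zb}_i^\top]\Big\|_{\oper}\vee\Big\|\sum_{i=1}^n\EE[\widetilde{\Zb}_i^\top\widetilde{\Zb}_i]\Big\|_{\oper}
\]
is controlled by $\|\EE[y_i^2\Xb_i\Xb_i^\top]\|_{\oper}\cdot n\lesssim n d$, since $\EE[\Xb_i\Xb_i^\top]=dI_{d\times d}$ and $y_i$ is sub-Gaussian (a short Cauchy--Schwarz argument on $\EE[y_i^2 \Xb_{i,jk}\Xb_{i,j'k}]$ gives this). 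Matrix Bernstein then yields, for some absolute constant $c>0$,
\[
\PP\Big(\Big\|\tfrac{1}{n}\sum_{i=1}^n(\widetilde{\Zb}_i - \EE[\widetilde{\Zb}_i])\Big\|_{\oper}\ge t\Big)\le 2d\exp\Big(-c\cdot\frac{n t^2}{d + B t}\Big),
\]
and plugging in $t = M_m\sqrt{d\log d/n}$ with $M_m$ a sufficiently large multiple of $\max\{\|f\|_{\psi_2},\sigma\}$ gives failure probability at most $1/(2d)$, provided the sub-Gaussian regime $nt^2/d \gtrsim \log d$ dominates the sub-exponential regime, which amounts to $B\sqrt{\log d/n}\lesssim\sqrt{d}$, i.e.\ $n\gtrsim (\log d)(\log n)$.

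Finally, I would reconcile the truncated and original sums. On $\cE$ we have $\widetilde{\Zb}_i = \Xb_i y_i$ for all $i$, so the empirical averages agree. The bias $\|\EE[\Xb_i y_i] - \EE[\widetilde{\Zb}_i]\|_{\oper}$ is controlled by Cauchy--Schwarz: it is bounded by $\sqrt{\EE\|\Xb_i y_i\|_{\oper}^2}\cdot\sqrt{\PP(\cE^c)}\lesssim \sqrt{d}\cdot(1/n)$, which is of smaller order than $\sqrt{d\log d/n}$ and hence absorbed into $M_m\sqrt{d\log d/n}$. A union bound over $\cE$, the matrix Bernstein event, and this bias estimate then delivers the claimed $1-1/(2d)-3/n^2$ probability.

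The main technical obstacle is the variance calculation: unlike the scalar or vector case, $\EE[y_i^2\Xb_i\Xb_i^\top]$ couples the (sub-Gaussian) response $y_i = f(\la \Xb_i,\beta^*\ra)+\epsilon_i$ with the same Gaussian matrix $\Xb_i$ through the link, so $f$ and $\Xb_i$ are not independent. I would handle this by splitting $y_i^2\le 2 f(\la \Xb_i,\beta^*\ra)^2 + 2\epsilon_i^2$, exploiting independence of $\epsilon_i$ for the second piece (which contributes $\sigma^2 \cdot \EE[\Xb_i\Xb_i^\top]$), and, for the first piece, using sub-Gaussianity of $f(\la \Xb_i,\beta^*\ra)$ via the bound $\EE[f(\la \Xb_i,\beta^*\ra)^2 \Xb_i\Xb_i^\top]\preccurlyeq (\EE[f^4])^{1/2}(\EE\|\Xb_i\Xb_i^\top\|_{\oper}^2)^{1/2}\cdot I$, which collapses everything to an $O(d)$ operator-norm bound per term. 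Once this variance estimate is in hand, the remainder of the argument is routine.
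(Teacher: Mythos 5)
Your proposal is correct and follows essentially the same route as the paper: truncation based on high-probability deterministic bounds for $|y_i|$ (sub-Gaussian tail) and $\|\Xb_i\|_{\oper}$ (Gaussian-matrix operator-norm concentration), matrix Bernstein applied to the truncated variables with a variance proxy of order $\sigma_y^2\,d$, and Cauchy--Schwarz to show the truncation bias is of lower order. The paper bounds the variance proxy by applying Cauchy--Schwarz column-by-column, whereas you apply it globally to $\EE\bigl[f^2\|\Xb\Xb^\top\|_{\oper}\bigr]$ — these give the same $O(d)$ answer — and the paper keeps the $\sqrt{\log d}$ correction in $\|\Xb_i\|_{\oper}\lesssim\sqrt{d}+\sqrt{\log d}+\sqrt{\log n}$, which your $C_X\sqrt d$ suppresses but your $B\asymp\sqrt{d\log n}$ still captures; neither difference is substantive.
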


	Combining our result in Lemma \ref{propgauss} and Wely's inequality, we have $\Sigma^{*}$ is an entrywise perturbation of $\Rb^*$ with a perturbation upper bound of order $\cO(\sqrt{d\log d/n})$.
	
Then for Lemma \ref{matrixerror}, by using similar induction hypothesis given in proving Lemma \ref{prop1.3}, we verify that there exists an absolute constant $a_6$ such that we obtain upper bounds in spectral norm  for error components as $\|\Eb_{w,T}\|_{\oper}\lesssim \sqrt{\alpha},$ $\|\Eb_{v,T}\|_{\oper}\lesssim \sqrt{\alpha},$ for all $t\le T= a_6\log(1/\alpha){\gamma^*}/{(\eta M_m)}$.   Then we claim our conclusion of Lemma \ref{matrixerror}.

	 For Lemma \ref{matrixsignal}, by following similar proof procedures given in Lemma \ref{prop1.4} and our definition of set $R_0$, we have
	 \begin{align}\label{stconcen}
	  \|\Qb^*\beta_{1,t}\mathbb{I}_{R_0}\Qb^{*\top}-\Qb^*\Sigma^{*}\mathbb{I}_{R_0}\Qb^{*\top}\|_{\oper}=\|\beta_{1,t}\mathbb{I}_{R_0}-\Sigma^{*}\mathbb{I}_{R_0}\|_{\oper}\lesssim \sqrt{\frac{d\log d}{n}},
	  \end{align}
  for all   $t\ge a_5\log(1/\alpha)/[\eta(\mu^{*}r_m-M_m\sqrt{d\log d/n})]$. Then  we further  have
  \begin{align}
   \|\Qb^*\Sigma^*\II_{R_0}\Qb^{*\top}-\Pb^*\Rb^*\II_{R_0\cup R_1}\Pb^{*\top}\|_{\oper}&=\|\Qb^*\Sigma^{*}\Qb^{*\top}-\Pb^*\Rb^{*}\Pb^{*\top}-\Qb^*\Sigma^{*}\II_{R_1\cup R^{c}}\Qb^{*\top}\|_{\oper}\nonumber\\
   &\le\|\Qb^*\Sigma^{*}\Qb^{*\top}-\Pb^*\Rb^{*}\Pb^{*\top}\|_{\oper}+\|\Qb^*\Sigma^{*}\II_{R_1\cup R^{c}}\Qb^{*\top}\|_{\oper}\nonumber\\
   &\lesssim \sqrt{\frac{d\log d}{n}}, \label{weakconcen2}
   \end{align}
where the last inequality follows from Lemma \ref{propgauss} and Wely's inequality and our definition on $R,R_0,R_1$. After combining our results in \eqref{stconcen}-\eqref{weakconcen2}, we  complete our proof of Lemma \ref{matrixsignal}.

 In terms of Lemma \ref{matrixsignal_w}, the proof   is similar to that of Lemma \ref{prop1.5}, so we omit the corresponding details here for brevity.
\end{proof}

\subsubsection{Proof of Lemma \ref{propgauss}}
\begin{proof}
	For any fixed $n$ and $d$, first, we denote event $C_i,i\in[n]$ as
	\begin{align}\label{defici}
	C_i:=\mathbb{I}\left\{ |y_i|\le\sigma_y\sqrt{6\log n},\|\Xb_i\|_{\oper}\le 3\left(\sqrt{d}+3\sqrt{{\log d}/{\log(3/2)}}\right)+2\sqrt{3\log n}  \right\}.
	\end{align}
In order to illustrate that with high probability, $|y_i|$, and $\|\Xb_i\|_{\oper}$ lie in the support set of $C_i$ for all $i\in[n]$, we first introduce the following two Lemmas, namely Lemma \ref{subgvec} and Lemma \ref{randmat}.
	
\begin{lemma} \label{subgvec}
	We get a union upper bound for $\{|y_i|\}_{i=1}^{n}$, to be more specific, with probability $1-2/n^2$ we obtain
	$
		\max_i \left|y_i\right|\le \sigma_y\sqrt{6\log n}
	$, where $\sigma_y$ is the sub-Gaussian norm of $y_i$ and it is proportional to $\max\{\|f\|_{\psi_2},\sigma\}$.
\end{lemma}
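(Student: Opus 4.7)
The plan is to establish this via a standard sub-Gaussian tail bound combined with a union bound over the $n$ observations. Recall that a sub-Gaussian random variable $Y$ with sub-Gaussian norm $\|Y\|_{\psi_2} = \sigma_y$ satisfies a concentration inequality of the form $\mathbb{P}(|Y| \ge t) \le 2\exp(-c t^2 / \sigma_y^2)$ for some absolute constant $c > 0$ (see, e.g., Proposition 2.5.2 of \cite{vershynin_2018}). Since $y_i = f(\langle \xb_i, \beta^* \rangle) + \epsilon_i$ is the sum of two independent sub-Gaussian random variables by Assumption \ref{ass1}-(b), the sub-Gaussian norm $\sigma_y$ of $y_i$ is bounded by an absolute constant multiple of $\max\{\|f\|_{\psi_2}, \sigma\}$, which yields the claimed proportionality.

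For a single $y_i$, I would set $t = \sigma_y \sqrt{6 \log n}$ in the sub-Gaussian tail bound to obtain
\begin{align*}
\mathbb{P}\bigl(|y_i| \ge \sigma_y \sqrt{6 \log n}\bigr) \le 2 \exp(-6c \log n) = 2 n^{-6c}.
\end{align*}
Applying a union bound across $i \in [n]$ gives $\mathbb{P}(\max_{i \in [n]} |y_i| \ge \sigma_y \sqrt{6 \log n}) \le 2 n^{1-6c}$. Choosing the constant inside the square root (namely, $6$) appropriately relative to the absolute constant $c$ in the sub-Gaussian tail yields the desired $2/n^2$ probability; concretely, with the normalization conventions of \cite{vershynin_2018} where one can take $c = 1/2$ (after absorbing constants into the definition of $\sigma_y$), the factor $6$ produces exponent $3 - 1 = 2$.

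There is no real obstacle here; the result is a textbook maximal inequality. The only mild bookkeeping point is tracking how the absolute constant $c$ interacts with the sub-Gaussian norm conventions, so that the numerical constant $6$ inside $\sqrt{6 \log n}$ indeed yields probability at least $1 - 2/n^2$. The argument extends verbatim if the definition of sub-Gaussian norm differs by a constant factor, at the cost of adjusting the $6$; the stated form is chosen to be convenient for later applications in the proof of Lemma \ref{propgauss}, where $y_i$ will appear multiplied by $\|\Xb_i\|_{\oper}$ after truncation to the event $C_i$ defined in \eqref{defici}.
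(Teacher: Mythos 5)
Your proof is correct and is precisely the argument the paper has in mind: the paper's own proof simply cites the sub-Gaussian tail bound (Proposition 2.5.2 of Vershynin 2018) and calls the rest straightforward, which is the union-bound calculation you carry out explicitly.
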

\begin{proof}
	The proof is straight forward by sub-Gaussian tail bound, please refer to Proposition 2.5.2 in \cite{vershynin_2018} for more details.
\end{proof}
\begin{lemma}\label{randmat}
	For $n$ independent random matrices $\Xb_i\in\mathbb{R}^{d\times d},i\in[n]$ with independent standard normal entries, 	   with probability $1-1/n^2$, we have
	\begin{align*}
		\max_{i\in [n]}\left\|\Xb_i\right\|_{\oper}\le 3\Big( \sqrt{d}+3 \sqrt{{\log d}/{\log(3/2)}} \Big)+2\sqrt{3\log  n}.
	\end{align*}

\end{lemma}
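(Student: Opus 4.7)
The plan is to establish the stated bound via a standard $\epsilon$-net argument on the unit sphere, combined with Gaussian concentration of the Euclidean norm for each fixed test vector and a final union bound over the $n$ matrices. This is the textbook route for controlling operator norms of Gaussian matrices, so no conceptually new idea is required; all that is needed is careful bookkeeping of constants to match the form claimed in the lemma.

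To begin, I would fix a single index $i \in [n]$ and bound $\|\Xb_i\|_{\oper}$ with high probability. Using the variational characterization $\|\Xb_i\|_{\oper} = \sup_{\ub \in S^{d-1}} \|\Xb_i \ub\|_2$, I introduce an $\epsilon$-net $\mathcal{N}_\epsilon \subseteq S^{d-1}$ with cardinality at most $(1+2/\epsilon)^d$, which exists by the usual volumetric argument. A one-line approximation calculation then yields the standard net-inflation inequality
\begin{align*}
\|\Xb_i\|_{\oper} \leq (1-\epsilon)^{-1} \max_{\ub \in \mathcal{N}_\epsilon} \|\Xb_i \ub\|_2,
\end{align*}
valid for any $\epsilon \in (0,1)$, which reduces the supremum over the continuum $S^{d-1}$ to a maximum over the finite net $\mathcal{N}_\epsilon$.

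Next, for each fixed $\ub \in \mathcal{N}_\epsilon$, the vector $\Xb_i \ub$ has independent $N(0,1)$ entries since the rows of $\Xb_i$ are i.i.d.\ standard Gaussians and $\|\ub\|_2 = 1$; hence $\|\Xb_i \ub\|_2^2 \sim \chi_d^2$, and the Gaussian concentration of the Euclidean norm (equivalently, a chi-square tail bound) gives $\PP(\|\Xb_i \ub\|_2 \geq \sqrt{d} + s) \leq e^{-s^2/2}$ for every $s > 0$. Taking a union bound over $\ub \in \mathcal{N}_\epsilon$ and over $i \in [n]$, the event $\{\max_{i \in [n]}\max_{\ub \in \mathcal{N}_\epsilon} \|\Xb_i \ub\|_2 \geq \sqrt{d} + s\}$ has probability at most $n \cdot (1+2/\epsilon)^d \cdot e^{-s^2/2}$. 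Forcing this quantity to be at most $1/n^2$ gives $s^2 \geq 2\bigl(d \log(1+2/\epsilon) + 3\log n\bigr)$, and applying $\sqrt{a+b} \leq \sqrt{a} + \sqrt{b}$ yields $s \leq \sqrt{2d\log(1+2/\epsilon)} + \sqrt{6\log n}$.

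Combining the two pieces, I obtain a uniform bound of the form $\max_{i \in [n]}\|\Xb_i\|_{\oper} \leq (1-\epsilon)^{-1}\bigl(\sqrt{d} + \sqrt{2d\log(1+2/\epsilon)} + \sqrt{6\log n}\bigr)$. Choosing $\epsilon = 2/3$ produces the prefactor $(1-\epsilon)^{-1} = 3$ matching the leading $3\sqrt{d}$ in the statement; the middle term of the stated bound, $9\sqrt{\log d/\log(3/2)}$, can be matched by either a slightly different choice of $\epsilon$ or a geometric refinement of the nets (a mild chaining step), and the final $2\sqrt{3\log n}$ corresponds to the $\sqrt{6\log n}$ contribution from the union bound over $n$. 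There is no substantive obstacle: the entire argument is a routine $\epsilon$-net plus Gaussian concentration calculation, and the main care is simply to track constants so as to reproduce the precise form of the claimed inequality.
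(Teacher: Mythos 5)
Your $\epsilon$-net argument will not produce the bound stated in the lemma, and the discrepancy is not a matter of constant bookkeeping. Work through your own calculation: with an $\epsilon$-net of size at most $(1+2/\epsilon)^d$ and the union bound you write down, you land on
\begin{align*}
\max_{i\in[n]}\|\Xb_i\|_{\oper} \le \frac{1}{1-\epsilon}\Big(\sqrt{d} + \sqrt{2d\log(1+2/\epsilon)} + \sqrt{6\log n}\Big),
\end{align*}
and for every fixed $\epsilon\in(0,1)$ the second term is $\Theta(\sqrt{d})$, not $\Theta(\sqrt{\log d})$. With $\epsilon=2/3$ this is roughly $3\sqrt{d}+3\sqrt{2\log 4}\,\sqrt{d}+3\sqrt{6\log n}\approx 8\sqrt{d}+7.35\sqrt{\log n}$, which for large $d$ strictly exceeds the claimed $3\sqrt{d}+9\sqrt{\log d/\log(3/2)}+2\sqrt{3\log n}$. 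No choice of $\epsilon$ removes the extra $\sqrt{d\log(1/\epsilon)}$ factor, because it comes directly from the cardinality of the net, so the sentence asserting the middle term ``can be matched by a slightly different choice of $\epsilon$ or a mild chaining step'' is precisely where the proof breaks down.

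What the paper actually invokes is Corollary~3.11 of \cite{vanhandel2016} (Bandeira--van Handel), which gives the much sharper tail
\begin{align*}
\PP\Big[\|\Xb_i\|_{\oper}\ge (1+\epsilon)\big(2\sqrt{d}+6\sqrt{\log d/\log(1+\epsilon)}\big)+t\Big]\le e^{-t^2/4},
\end{align*}
with a second-order term of size $O(\sqrt{\log d})$ rather than $O(\sqrt{d})$; after setting $\epsilon=1/2$, union bounding over $i\in[n]$, and taking $t=2\sqrt{3\log n}$, the stated inequality follows exactly. That result is not obtained by a net argument but by the nonasymptotic moment/trace method developed in that paper, so it is not a ``mild refinement'' of your route. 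If you want to keep the net argument, you would have to weaken the lemma to a bound of order $C\sqrt{d}+c\sqrt{\log n}$ with larger constants (which would suffice for the downstream application but is not what the lemma asserts); to reproduce the precise stated bound you need to cite and apply the Bandeira--van Handel inequality as the paper does.
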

\begin{proof}
	By Corollary 3.11 in \cite{vanhandel2016}, we have
	\begin{align*}
	\PP\Big[\|\Xb_i\|_{\oper}\ge\left(1+\epsilon\right)\Big( 2\sqrt{d}+{6}\sqrt{{\log d}/{\log(1+\epsilon)}}\Big)+t\Big]\le e^{-t^2/4},
	\end{align*}
	for any $0<\epsilon\le 1/2$, $t\ge 0$ and $i\in[n]$. Taking $\epsilon=1/2$, we get a tail bound for  $\max_{i\in[n]}\|\Xb_i\|_{\oper}$~as
	\begin{align*}
		&\PP\Big[\max_{i\in[n]} \|\Xb_i\|_{\oper}\ge   \frac{3}{2}\Big( 2\sqrt{d}+6 \sqrt{{\log d}/{\log(3/2)}}\Big)+t     \Big]\\
		&\quad \le n\cdot\PP\Big[\|\Xb_i\|_{\oper}\ge 3\Big( \sqrt{d}+3 \sqrt{{\log d}/{\log(3/2)}}\Big)+t\Big] \le n\cdot e^{-t^2/4}=e^{-t^2/4+\log n}.
	\end{align*}
	By choosing $t=2\sqrt{3\log n}$, we have
	\begin{align*}
		\max_{i\in [n]}\left\|\Xb_i\right\|_{\oper}\le 3\bigg( \sqrt{d}+3 \sqrt{{\log d}/{\log(3/2)}}\bigg)+2\sqrt{3\log  n},
	\end{align*}
	with probability $1-1/n^2$, which completes the proof of Lemma \ref{randmat}.
\end{proof}
From the Lemma \ref{subgvec} and Lemma \ref{randmat} given above, we obtain
\begin{align}
\PP\left(C_i^{c}\right)&\le\PP\Big(\bigcup_{i}C_i^{c}\Big)\nonumber\\&\le\PP\left[\max_{i\in [n]}\left\|\Xb_i\right\|_{\oper}\ge 3\left(\sqrt{d}+3\sqrt{{\log d}/{\log(3/2)}}\right)+2\sqrt{3\log n}\right]\nonumber\\
&\qquad+\PP\left(\max_i \left|y_i\right|\ge \sigma_y\sqrt{6\log n}\right)\le\frac{3}{n^2}\label{ineq3}.
\end{align}
We further denote event $A$ as
	\begin{align*}
	A=\mathbb{I}\left\{\bigg\|\frac{1}{n}\sum_{i=1}^{n}\Xb_iy_i\cdot\mathbb{I}_{C_i}-\mathbb{E}[\Xb_1y_1\cdot\mathbb{I}_{C_1}]\bigg\|_{\oper}\ge \frac{t}{2}\right\}.
	\end{align*}
	Then we have
	\begin{align*}
\PP\left(\bigg\|\frac{1}{n}\sum_{i=1}^{n}\Xb_iy_i-\mathbb{E}[\Xb_1y_1]\bigg\|_{\oper}\ge t    \right)&\le \PP\left(A\right)+\PP\bigg(\frac{1}{n}\sum_{i=1}^{n}\Xb_iy_i\cdot \mathbb{I}_{C_i}\neq\frac{1}{n}\sum_{i=1}^{n}\Xb_iy_i\bigg)\\&\quad+\PP\left(  \left\|\mathbb{E}[\Xb_1y_1\cdot \mathbb{I}_{C_1^c}]\right\|_{\oper}\ge \frac{t}{2}\right)\\&:=\mathbf{(I)}+\mathbf{(II)}+\mathbf{(III)}.
	\end{align*}
First, we obtain an upper bound for term $\mathbf{(II)}$ according to \eqref{ineq3} as
	\begin{align}\label{term2}
	\mathbf{(II)}=\PP\bigg(\frac{1}{n}\sum_{i=1}^{n}\Xb_iy_i\cdot \mathbb{I}_{C_i}\neq\frac{1}{n}\sum_{i=1}^{n}\Xb_iy_i\bigg)=\PP\bigg(\bigcup_i C_i^{c}\bigg)\le \frac{3}{n^2}.
	\end{align}
Next, in order to bound term \textbf{(I) }, $\PP(A)$, we first figure out the spectral upper bound of $$\frac{1}{n}\sum_{i=1}^{n}\Xb_iy_i\cdot \mathbb{I}_{C_i}-\mathbb{E}[\Xb_1y_1\cdot \mathbb{I}_{C_1}].
	$$
	 By the definition of $C_i$ given in \eqref{defici}, for any fixed $n,d$, with probability 1 we have
	\begin{align*}
	&\left\|\Xb_iy_i\cdot \mathbb{I}_{C_i}\right\|_{\oper}\\&\quad\le U:={3\sqrt{6}\sigma_y}\left( \sqrt{d\log n}+3\sqrt{{\log d\cdot \log n}/{\log(3/2)}}\right)+6\sqrt{2}\sigma_y\log n.
	\end{align*}
	By denoting $\Zb_i$ as $\Zb_i=\Xb_iy_i\cdot\mathbb{I}_{C_i}-\mathbb{E}[\Xb_iy_i\cdot\mathbb{I}_{C_i}],$ we have $\|\mathbb{E}[\Zb_i\Zb_i^\mathsf{T}]\|_{\oper}\lesssim\|\mathbb{E}[y_i^2\Xb_i\Xb_i^\mathsf{T}]\|_{\oper}\lesssim \sigma_y^2 d$.
	Furthermore, by letting
	\begin{align*}
	\sigma_z=\max\bigg\{\bigg\|\frac{1}{n}\sum_{i=1}^{n}\mathbb{E}[\Zb_i\Zb_i^\mathsf{T}]\bigg\|_{\oper}^{1/2},  \bigg\|\frac{1}{n}\sum_{i=1}^{n}\mathbb{E}[\Zb_i^\mathsf{T}\Zb_i]\bigg\|_{\oper}^{1/2}\bigg\},
	\end{align*}
	we  get $\sigma_z\lesssim \sigma_y\cdot \sqrt{d}$.
	Then, after applying matrix Bernstein inequality from Proposition 1 in \cite{KLT2011}, we have that
	\begin{equation}\label{matbern}
	\bigg\|\frac{1}{n}\sum_{i=1}^{n}\Xb_iy_i\cdot \mathbb{I}_{C_i}-\mathbb{E}[\Xb_1y_1\cdot \mathbb{I}_{C_1}]\bigg\|_{\oper}\le 2 \max\bigg\{\sigma_z\sqrt{\frac{2\log(2d)}{n}} , U\cdot\frac{2\log(2d)}{n}\bigg\}
	\end{equation}
	holds with probability $1-1/(2d)$.
	
	For term $(\mathbf{III})$, likewise, we first get the spectral norm of $\mathbb{E}[\Xb_1y_1\cdot \mathbb{I}_{C_1^c}].$ For any unit vector $\ub,\vb\in \mathbb{R}^{d_1}$, by Cauchy-Schwartz inequality we have
	\begin{align*}
		\mathbb{E}\big[\ub^\mathsf{T}\Xb_1\vb y_1\cdot \mathbb{I}_{C_1^c}\big]\le\sqrt{\mathbb{E}\big[\big(\ub^\mathsf{T}\Xb_1 \vb\big)^2\big]\cdot\mathbb{E}\left[y_1^2\mathbb{I}_{C_1^c}\right]}.
	\end{align*}
	As all elements of $\Xb_1$ are independent standard Gaussian variables, then we get
	\begin{align}\label{third_term1}
		\mathbb{E}\big[(\ub^\mathsf{T}\Xb_1\vb)^2\big]=\sum_{i,j=1}^nu_i^2v_j^2=\left\|\ub\right\|_2^2\cdot\left\|\vb\right\|_2^2=1.
	\end{align}
		In addition, as we have assumed $\{y_i\}_{i=1}^{n}$ are i.i.d. sub-Gaussian random variables with sub-Gaussian norm $\sigma_y$, so we obtain
	\begin{align}\label{third_term2}
		\sqrt{\mathbb{E}\left[y_1^2\cdot \mathbb{I}_{C_1^c}\right]}\le (\mathbb{E}\left[y_1^4\right])^{1/4}\cdot \mathbf{P}(C_1^c)^{1/4}\lesssim \frac{\sigma_y}{\sqrt{n}} .
	\end{align}

	Next, after setting
	\begin{equation}\label{t}
	t=4 \max\bigg\{\sigma_z\sqrt{\frac{2\log(2d)}{n}} , U\cdot\frac{2\log(2d)}{n}\bigg\},
	\end{equation}
	 we have $ t=\Omega(\sqrt{d}\sigma_y/\sqrt{n})$. So for term $(\mathbf{III})$, after combining \eqref{third_term1}, \eqref{third_term2} we obtain
	\begin{align}\label{term3}
	\PP\left(\left\|\mathbf{E}\left[\Xb_1y_1\cdot\mathbb{I}_{C_1^c}\right]\right\|_{\oper}\ge t/2\right)=0.
	\end{align}
	For term $(\mathbf{I})$, by \eqref{matbern} and the definition of $t$ given in  \eqref{t}, we  get
	\begin{align}\label{p_a}
\PP(A)\le \frac{1}{2d}.
	\end{align}
	Thus,  combining our conclusions from \eqref{term2}, \eqref{term3} and \eqref{p_a}, we finally obtain that
	\begin{align*}
	\bigg\|\frac{1}{n}\sum_{i=1}^{n}\Xb_iy_i-\mathbb{E}[\Xb_1y_1]\bigg\|_{\oper}\le 4 \max\bigg\{\sigma_z\sqrt{\frac{2\log(2d)}{n}} , U\cdot\frac{2\log(2d)}{n}\bigg\}
	\end{align*}
	holds with probability $1-1/(2d)-3/n^2$ with $U={3\sqrt{6}\sigma_y}\left( \sqrt{d\log n}+3\sqrt{{\log d\cdot \log n}/{\log(3/2)}}\right)+6\sqrt{2}\sigma_y\log n$, $\sigma_z\lesssim \sqrt{d}\sigma_y$ and $\sigma_y$  being proportional to $\max\{\|f\|_{\psi_2},\sigma\}$.
	By our assumption that when $\sigma_yd\log d\lesssim n=o(d^2)$, we have $\max\{\sigma_z\sqrt{{2\log(2d)}/{n}} , U\cdot{2\log(2d)}/{n}\}\lesssim \sigma_y\sqrt{d\log d/n}$, so we conclude the proof of Lemma \ref{propgauss} after setting $M_m$ to be proportional to $\sigma_y$.
\end{proof}
\subsection{Proof of Theorem \ref{genthmmat1}}
\begin{proof}
		The proof of Theorem \ref{genthmmat1} is similar to the proof of Theorem \ref{thmmat1}. We need to replace $\frac{1}{2n}\sum_{i=1}^{n}y_i\Xb_i$ with $\frac{1}{2n}\sum_{i=1}^{n}
		\mathcal{H}( y_iS(\Xb_i),\kappa)$ in \eqref{matup1}-\eqref{matup3}. The definition of $\mathcal{H}( y_iS(\Xb_i),\kappa)$ is given \S\ref{secgenmat}. In this case, we define $\Mb_2^{*}$ as
		 $$ \Mb_2^{*}=\frac{1}{2n}\sum_{i=1}^{n}\mathcal{H}( y_iS(\Xb_i),\kappa)+\frac{1}{2n}\sum_{i=1}^{n}\mathcal{H}( y_iS(\Xb_i),\kappa)^\top  $$
		  and the spectral decomposition of $\Mb_2^{*}$ as $\Mb_2^{*}:=\Qb_2^{*}\Sigma_2^{*}\Qb_2^{*\top}.$  We then let $\Wb_{2,t}=\Qb_2^{*\top}\Wb_t\Qb_2^{*}$ and $\Vb_{2,t}=\Qb_2^{*\top}\Vb_t\Qb_2^{*}.$ The corresponding gradient updates with respect to $\Wb_{2,t}$ and $\Vb_{2,t}$ are given~by
\begin{align*}
\Wb_{2,t+1}&=\Wb_{2,t}-\eta\Big(\Wb_{2,t}\Wb_{2,t}^\top-\Vb_{2,t}\Vb_{2,t}^\top-\Sigma_2^{*}\Big)\Wb_{2,t},\\
\Vb_{2,t+1}&=\Vb_{2,t}+\eta\Big(\Wb_{2,t}\Wb_{2,t}^\top-\Vb_{2,t}\Vb_{2,t}^\top-\Sigma_2^{*}\Big)\Vb_{2,t},\\
\beta_{2,t+1}&=\Wb_{2,t+1}\Wb_{2,t+1}^\top-\Vb_{2,t+1}\Vb_{2,t+1}^\top.
\end{align*}
By selecting $\kappa$ properly, the following Lemma \ref{matrixconc} gives a concentration between our new estimator and $\mu^{*}\beta^{*}$.
 \begin{lemma}\label{matrixconc} Suppose $y_i=f(\langle \Xb_i,\beta^{*}\rangle)+\epsilon$, $\Xb_i\in \RR^{d\times d},$ and entries of $\Xb_i$ are i.i.d.\,\,random variables with density function $p_0(x)$. Under assumptions in Theorem \ref{genthmmat1} we have
	\begin{align*}
		\left\|\frac{1}{n}\sum_{i=1}^{n}\mathcal{H}( y_iS(\Xb)_i,\kappa)-\mathbb{E}[Y\cdot S(\Xb_1)]\right\|_{\oper}\le M_{mg} \cdot \sqrt{\frac{d\log d}{n}}
	\end{align*}
	holds with probability $1-(2d)^{-2}$ where $M_{mg}$ is an absolute constant only depending on $M$ given in Assumption \ref{genass1mat}.
\end{lemma}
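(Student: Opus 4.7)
\medskip

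\noindent\textbf{Proof proposal for Lemma \ref{matrixconc}.}
The plan is to mirror the robust matrix-mean estimation strategy of \cite{fan2016shrinkage, Minsker2018} adapted to the score-transformed observations, splitting the total operator-norm error into a \emph{bias} piece coming from the nonlinearity of $\phi$ and a \emph{stochastic} piece that we control via a matrix Chernoff-type bound. Writing $Z_i := y_i S(\mathbf{X}_i)$ and $M^* := \mathbb{E}[Z_1] = \mu^{*}\beta^{*}$, the triangle inequality gives
\begin{align*}
\bigg\|\frac{1}{n}\sum_{i=1}^{n}\mathcal{H}(Z_i,\kappa) - M^{*}\bigg\|_{\oper}
\;\le\;
\underbrace{\bigg\|\frac{1}{n}\sum_{i=1}^{n}\mathcal{H}(Z_i,\kappa) - \mathbb{E}[\mathcal{H}(Z_1,\kappa)]\bigg\|_{\oper}}_{\text{stochastic}}
\;+\;
\underbrace{\big\|\mathbb{E}[\mathcal{H}(Z_1,\kappa)] - M^{*}\big\|_{\oper}}_{\text{bias}}.
\end{align*}

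For the bias, I would use that $|\phi(x) - x| \le x^{2}/2$ for all $x\in\mathbb{R}$, which after passing through the Hermitian dilation and extracting the $(1,2)$-block yields
$\|\mathbb{E}[\mathcal{H}(Z_1,\kappa)] - M^{*}\|_{\oper} \lesssim \kappa \cdot \|\mathbb{E}[Z_1^{*\,2}]\|_{\oper}$,
where $Z_1^{*}$ is the dilation of $Z_1$. The Cauchy--Schwarz step $\mathbb{E}[(Z_1)_{i,j}^{2}] \le \sqrt{\mathbb{E}[Y^{4}]\,\mathbb{E}[S(\mathbf{X})_{i,j}^{4}]} \le M$ together with the obvious bound $\|\mathbb{E}[Z_1^{*\,2}]\|_{\oper} \le dM$ (each row/column sum of squared entries has at most $d$ terms of size $M$) shows the bias is at most of order $\kappa dM$.

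For the stochastic term, I would pass to the dilated symmetric matrices $Z_i^{*}\in\mathbb{R}^{2d\times 2d}$ and apply the Minsker--Catoni style master inequality: for any symmetric $H_i$ with $\|\psi(\cdot)\| \le \log(1+x+x^{2}/2)$,
\begin{align*}
\mathbb{P}\!\left(\lambda_{\max}\!\Big(\tfrac{1}{n\kappa}\sum_{i}\phi(\kappa Z_i^{*}) - \mathbb{E}[Z_1^{*}]\Big) > t\right)
\le 2d\cdot \exp\!\big(-n\kappa t + n\kappa^{2}\|\mathbb{E}[Z_1^{*\,2}]\|_{\oper}/2 + n\kappa^{2}\, t^{2}/2\big),
\end{align*}
via Lieb's concavity combined with the inequality $\mathrm{tr}\exp(A+\psi(B)) \le \mathrm{tr}\exp(A + \mathbb{E}[\kappa B + \kappa^{2} B^{2}/2])$. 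Repeating for the negative direction and taking a union over the dilation yields
$$
\|\hat{M}-\mathbb{E}[\mathcal{H}(Z_1,\kappa)]\|_{\oper} \lesssim \sqrt{\frac{\|\mathbb{E}[Z_1^{*\,2}]\|_{\oper}\log(4d)}{n}} + \frac{\log(4d)}{n\kappa}
\lesssim \sqrt{\frac{dM\log d}{n}} + \frac{\log d}{n\kappa}
$$
with probability at least $1-(4d)^{-2}$.

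Finally, plugging in the prescribed choice $\kappa = \sqrt{\log(4d)/(ndM)}$ balances the bias $\kappa dM$ and the second stochastic term $\log d/(n\kappa)$, both of which then match the leading term $\sqrt{dM\log d/n}$; absorbing $M$ into the absolute constant $M_{mg}$ yields the claimed bound. The main obstacle, and the step where I would spend the most care, is the matrix MGF computation behind the Chernoff step: one must verify that $\phi$ applied entrywise to the eigenvalues of the dilation preserves the Lieb-type trace inequality, and that the off-diagonal extraction $\phi_1(\cdot)$ only costs a harmless constant in the resulting operator-norm bound (since $\|\tilde{\mathbf{X}}_{12}\|_{\oper} \le \|\tilde{\mathbf{X}}\|_{\oper}$). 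Once this is in place, the bias--variance tradeoff and the optimization over $\kappa$ are routine.
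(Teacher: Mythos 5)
Your overall strategy is the same as the paper's: pass to the Hermitian dilation of $y_i S(\Xb_i)$, control the variance proxy $\big\|\EE[y_1^2 S(\Xb_1)S(\Xb_1)^\top]\big\|_{\oper}$ (and its transpose counterpart) via Cauchy--Schwarz and the bounded fourth moment Assumption \ref{genass1mat}, then invoke a Minsker/Catoni-type matrix concentration bound and balance $\kappa$. The one structural difference is that you split $\|\hat M - M^*\|_{\oper}$ explicitly into a bias piece and a fluctuation piece; the paper instead applies Corollary 3.1 of \cite{Minsker2018} as a black box, which already bounds the deviation of $\frac{1}{n}\sum_i \cH(y_i S(\Xb_i),\kappa)$ from the \emph{true} mean $\EE[y_1 S(\Xb_1)]$ and thus subsumes both terms at once. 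Your decomposition is therefore redundant, though not incorrect.

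Two technical points need attention. First, your stated Chernoff bound contains a spurious $n\kappa^2 t^2/2$ term in the exponent; the correct inequality (as in Minsker's Corollary 3.1, also used in the paper) is
\begin{align*}
\mathbb{P}\Big(\big\|\tfrac{1}{n}\textstyle\sum_i \cH(Z_i,\kappa) - \EE Z_1\big\|_{\oper} \ge t\Big) \le 4d\exp\Big(-\kappa t\, n + \tfrac{\kappa^2 \sigma_n^2}{2}\Big),
\end{align*}
with no $t^2$-dependence. This is harmless for the specific $\kappa$ and $t$ you ultimately choose (the extra term is $O(\log^2 d / n)$, far below $\log d$), but it would produce the wrong rate at larger deviation scales, so it should be corrected. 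Second, and more substantively, your justification that $\|\EE[Z_1^{*2}]\|_{\oper} \le dM$ ``since each row/column sum of squared entries has at most $d$ terms of size $M$'' is not rigorous. The entries of $\EE[Z_1 Z_1^\top]$ are $\sum_k \EE[y_1^2 S(\Xb_1)_{ik} S(\Xb_1)_{jk}]$, which for $i\neq j$ are cross-products, not squared entries, and they do not vanish because both factors share the common $y_1$. The paper's argument — bounding the quadratic form $\ub^\top \EE[y_1^2 S S^\top]\ub$ via Cauchy--Schwarz to reduce to $\EE\big[(\sum_k u_k S(\Xb_1)_{k1})^4\big]$ and then expanding the fourth moment using the independence and zero-mean property of the entries of $S(\Xb_1)$ — is the step that actually delivers the clean $dM$ bound, and it is where the heavy-tail hypothesis earns its keep. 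You should carry out this computation explicitly rather than rely on a row-sum heuristic.
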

\begin{proof} Please see \S\ref{proofmatconc} for the detailed proof.\end{proof}
Thus, after following the same proof procedures of Lemma \ref{matrixerror}, \ref{matrixsignal} and Lemma \ref{matrixsignal_w}, we  claim our conclusion of Theorem \ref{genthmmat1}. Next, we will give a detailed proof of Lemma \ref{matrixconc}.
	\end{proof}
 \subsubsection{Proof of Lemma \ref{matrixconc}}\label{proofmatconc}
 \begin{proof}
 	Before applying results in \cite{Minsker2018}, we need to get an upper bound of $$ \|\mathbb{E}[y_1S(\Xb_1)-\mu^{*}\beta^{*}][y_1S(\Xb_1)-\mu^{*}\beta^{*}]^\mathsf{T}\|_{\oper} $$ and it is sufficient for us to bound $\|\mathbb{E}[y_1^2\cdot S(\Xb_1)S(\Xb_1)^{\mathsf{T}}]\|_{\oper} .$
 Then for any unit vector $\mathbf{u}\in \RR^{d\times 1}$ we have
 	\begin{align*}
 	\mathbb{E}\Big[y_1^2 \ub^{\mathsf{T}}\cdot S(\Xb_1)S(\Xb_1)^{\mathsf{T}}\cdot \ub\Big]&=\mathbb{E}\Big[y_1^2\sum_{i=1}^{d}(\ub^\mathsf{T}S(X_1)_{[:,i]})^2\Big]=\sum_{i=1}^{d}\mathbb{E}\left[y_1^2(u^\mathsf{T}S(\Xb_1)_{[:,i]})^2\right]\\
 	&\le\sum_{i=1}^{d}\sqrt{\mathbb{E}\left[y_1^4\right]\cdot \mathbb{E}\Big[\big(\mathbf{u}^\mathsf{T}S(\Xb_1)_{[:,i]}\big)^4\Big] }\le d_1\cdot \sqrt{M}\cdot \sqrt{\mathbb{E}\Big[\big(\sum_{k=1}^{d}u_kS(\Xb_1)_{[k,1]}\big) ^4\Big] }.
 	\end{align*}
 	In order to get an upper bound of term $\mathbb{E}\big[(\sum_{k=1}^{d}u_kS(\Xb_1)_{[k,1]}) ^4 \big]$, we need to take advantage of the independence property between entries of $\Xb_1$, so that we get
 	\begin{align*}
 	\mathbb{E}\Big[(\sum_{k=1}^{d}u_kS(\Xb_1)_{[k,1]}) ^4 \Big]&=\sum_{i,j=1}^{d}u_i^2u_j^2\mathbb{E}\Big[S(\Xb_1)_{[i,1]}^2S(\Xb_1)_{[j,1]}^2\Big]\\
 	&\le\sum_{i,j=1}^{d}u_i^2u_j^2\sqrt{\mathbb{E}\Big[S(\Xb_1)_{[i,1]}^4\Big]\mathbb{E}\Big[S(\Xb_1)_{[j,1]}^4\Big]} \le M\sum_{i,j=1}^{d}u_i^2u_j^2=M.
 	\end{align*}
 	The last inequality follows from our Assumption \ref{genass1mat}.
 	Then we get an upper bound for $\|\mathbb{E}(y_1^2\cdot \Xb_1\Xb_1^{\mathsf{T}})\|_{\oper}$ as
 	\begin{align*}
 	\left\|\mathbb{E}\Big[y_1^2\cdot S(\Xb_1)S(\Xb_1)^{\mathsf{T}}\Big]\right\|_{\oper}\le d\cdot M.
 	\end{align*}
 	Similarly, we also get an upper bound for term
 	$ \|\mathbb{E}[y_1^2\cdot S(\Xb_1)^\mathsf{T} S(\Xb_1)]\|_{\oper}$ as $$\|\mathbb{E}[y_1^2\cdot S(\Xb_1)^\mathsf{T}S(\Xb_1)]\|_{\oper}\le d\cdot M.$$
 	By applying Corollary 3.1 in \cite{Minsker2018}, we get the following inequality
 	\begin{align}\label{genineqmat}
 	\mathbb{P}\bigg( \bigg\| \frac{1}{n}\sum_{i=1}^{n}\mathcal{H}( y_iS(\Xb_i), \kappa)-\mathbb{E}[y_1S(\Xb_1)]    \bigg\|_{\oper}\ge t\bigg)\le 4d\exp\left(-\kappa t\cdot n+\frac{\kappa^2\sigma_n^2}{2}\right),
 	\end{align}
 where  $$\sigma_n^2=\max \left (\|\sum_{i=1}^{n}\mathbb{E}[y_i^2\cdot S(\Xb_i)S(\Xb_i)^\mathsf{T}]\|_{\oper},\| \sum_{j=1}^{n}\mathbb{E}[y_j^2\cdot S(\Xb_j)^\mathsf{T}S(\Xb_j)]\|_{\oper} \right )\le 2d \cdot M\cdot n.
 $$ Here we choose $t=4\sqrt{(d\cdot M\log(4d))/n} $, and we further let $\kappa=\sqrt{{\log(4d)}/({n\cdot d\cdot M})}$ in \eqref{genineqmat}, so that we obtain
 	\begin{align*}
 	\left\| \frac{1}{n}\sum_{i=1}^{n}\mathcal{H}( y_iS(\Xb_i),\kappa)-\mathbb{E}\left[y_1S(\Xb_1)\right]    \right\|_{\oper}\le 4\sqrt{M}\sqrt{\frac{d\log(4d)}{n}},
 	\end{align*}
 	 with probability $1-(4d)^{-2}$ . Then we complete our proof of Lemma \ref{matrixconc}.
 \end{proof}
\subsection{Algorithm in \S\ref{secgenmat}}\label{alggenmat}
\begin{algorithm}[H]
	\KwData{Training design matrix $\mathbf{X}_i \in \mathbb{R}^{d\times d}$, $i\in[n]$, response variables $\{y_i\}_{i=1}^{n}$, truncating parameter $\kappa$, initial value $\alpha$ and step size $\eta$;}
	Initialize  $\Wb_0=\alpha\cdot\mathbb{I}_{d\times d}$, $\Vb_0=\alpha\cdot\mathbb{I}_{d\times d}$ and set iteration number $t=0$;\\
	\While{$t<T_1$}{
		$\mathbf{W}_{t+1}=\mathbf{W}_{t}-\eta(\mathbf{W}_t \mathbf{W}_t^\top-\mathbf{V}_t \mathbf{V}_t^\top-\frac{1}{2n}\sum_{i=1}^{n}\mathcal{H}(y_i S(\Xb_i),\kappa)-\frac{1}{2n}\sum_{i=1}^{n}\mathcal{H}(y_i S(\Xb_i),\kappa)^\top) \mathbf{W}_t$;\\
		$\,\mathbf{V}_{t+1}\,=\mathbf{V}_{t}\,+\eta(\mathbf{W}_t \mathbf{W}_t^\top-\mathbf{V}_t \mathbf{V}_t^\top-\frac{1}{2n}\sum_{i=1}^{n}\mathcal{H}(y_i S(\Xb_i),\kappa)-\frac{1}{2n}\sum_{i=1}^{n}\mathcal{H}(y_i S(\Xb_i),\kappa)^\top) \mathbf{V}_t$; \\
		$\,\,\,\beta_{t+1}\,=\mathbf{W}_t \mathbf{W}_t^\top-\mathbf{V}_t \mathbf{V}_t^\top$;\\
		$\quad t\,=t+1$;\\
	}
	\KwResult{ Output the final estimate $\widehat \beta=\beta_{T_1}$.
	}\label{alg6}
	\caption{ Algorithm for Low Rank Matrix SIM with General Design}
\end{algorithm}
\subsection{Comparison with \cite{Li2017AlgorithmicRI}}\label{compare_li}
\revise{Compared  with \cite{Li2017AlgorithmicRI}, our strengths are four-fold. First, under the setting of standard Gaussian design with signals at constant level, our sample complexity is only at the order of $\tilde{\cO}(rd)$ whereas they need at least $\tilde{\cO}(r^2d)$ samples so as to establish their RIP condition \citep{candes2008restricted}.
Second, our results also hold under the existence of  weak signals, i.e. $0<\min_{i\in R}|r_i^{*}|\lesssim\tilde{\cO}(\sqrt{d/n})$.  When we fix $d$ and $r$, in order to meet the RIP condition with parameter $\delta$, the sample size  $n$ needs to satisfy $n\gtrsim \cO(1/\delta^2)$ according to Theorem 4.2 in \cite{MinirankRecht2010}. As \cite{Li2017AlgorithmicRI} requires an RIP parameter $\delta$ with $\delta\lesssim {\cO}(\min_{i\in R}|{r_i^{*}}|^3/\sqrt{r})$ in its Theorem 1, the corresponding minimum signal strength $\min_{i\in R}|r_i^{*}|$ should satisfy $\min_{i\in R}|r_i^{*}|\gtrsim \cO((1/n)^{1/6})$ which brings a stronger assumption than us. }\revise{Third, when we study the noisy statistical model,
	compared with the result in  \cite{Li2017AlgorithmicRI},  our statistical rate is independent with the conditional number and is optimal up to logarithmic factors.  To be more specific, when they study $y=\langle \Xb,\beta^{*}\rangle+\sigma$, with $\sigma\sim N(0,1)$, \cite{Li2017AlgorithmicRI}'s final statistical convergence rate is $\|\beta_{T_1}-\beta^{*}\|_F^2\lesssim \sigma^2rd\log d/(nr_m)$ by following their methodology, which is suboptimal when $r_m$ is close to the lower bound of the strong signal set $R_0$, i.e. $\sqrt{d\log d/n}$.  Fourth, as mentioned above, we allow  a more general class of statistical models and symmetric signal matrices. }

\revise{Next, we briefly describe the proof difference between our work with \cite{Li2017AlgorithmicRI}. In terms of the RIP condition, we leverage the score transformation and study the loss function given in \eqref{matoverpara}, whose gradient not necessarily requires us to approximate the design matrix to identity matrix, thus we do not need the RIP condition and reduce sample complexity in \cite{Li2017AlgorithmicRI} to $\tilde{\cO}(rd).$ In terms of the optimal statistical rate under the noisy setting, the major difference lies in the methodology of trajectory analysis for eigenvalues. To be more clear, \cite{Li2017AlgorithmicRI} divide their analysis into two stages. In the first stage, they prove their minimum absolute eigenvalue of $\beta_t$ in the strong signal set $R_0$ grows exponentially until exceeding a threshold $r_m/2$, meanwhile the maximum eigenvalue of $\beta_t$ in $R^{c}$ remains bounded with order $\cO(1/d).$ In the second stage, they prove that their $\beta_t$ falls in a contraction region
	\begin{align*}
		\|\beta_{t+1}-\beta^{*}\|^2_F\le(1-\eta r_m) \|\beta_t-\beta^{*}\|^2_F+\cO(\eta \sigma^2{{dr \log d}/{n}}).
	\end{align*}
	This results in $\|\beta_{t+1}-\beta^{*}\|^2_F=\cO(\sigma^2dr\log d/(r_m n))$ after logarithmic iterations. The second stage ends until the maximum magnitude of eigenvalues in set $R^c$ exceeds $\cO(1/d)$.  Instead of only dividing the trajectory analysis into only two stages and studying the dynamics of the minimum eigenvalue in $R_0$, we study the dynamics of eigenvalues entrywisely in $R_0$ with multiple stages. To be more clear,  in the $k$-th stage with $k \ge 1$ and $\epsilon=\cO(\sqrt{d\log d/n})$, for all $i$ in $R_0$, we prove that the $i$-th eigenvalue of $\beta_t$ in magnitude, $|r_{t,i}|$, exceeds $|\mu^{*}r_i^{*}(1-1/2^k)+\epsilon|$ after logarithmic steps. Thus, when $k\ge \log_2(|r_i^{*}|/\epsilon)$ we obtain $|r_{t,i}|\ge |\mu^{*}r_i^{*}-\epsilon|$. After that, we use mathematical induction to prove that our iterates $\beta_t$ falls in the region with $\|\beta_t-\mu^{*}\beta^{*}\|_F^2\le r\epsilon^2$ as long as maximum magnitude of eigenvalues in set $R^c$ does not exceed $\cO(1/d)$. This finally gives a tighter $\ell_2$-statistical rate which is independent with the condition number $(1/r_m)$ compared with \cite{Li2017AlgorithmicRI}'s error bound under the noisy setting. Please kindly refer to \S\ref{proofmat} for more details.
}

 \section{Extension to One-bit Compressed Sensing}\label{secexamp}

 As a concrete example, in the following, we consider the one-bit compressed sensing model
 \citep{Jacques2013onebit, plan2013onebit}.
The response variables and the covariates satisfy
 \begin{align*}
 	y_i=\text{sign}(\langle \mathbf{x}_i, \beta^{*}\rangle)+\epsilon,\qquad \forall i\in[n],
 \end{align*}
 where $\text{sign}(x)=1$ for all  $x\ge 0$ and  $\text{sign}(x)=-1,\text{ for } x<0$,  and $n$ is the number of our observations.
 Moreover, for both the vector and matrix settings, we assume that each entry of $\xb_i$ are i.i.d. $N(0, 1)$ random variables and $\{\epsilon_i\}_{i\in[n]}$ are i.i.d. sub-Gaussian random variables. As $\{y_i\}_{i\in[n]}$ doesn't convey any information about the length of our signal $\beta^{*}$, we are only able to recover the direction of $\beta^{*}$ by utilizing measurements $\{\xb_i,y_i\}_{i\in [n]}$.
 By following iterating procedures in Algorithm \ref{alg3} and Algorithm \ref{alg4}, we next summarize our theoretical results into the following Corollary \ref{signcor}.
 \begin{corollary}\label{signcor}
 	In the scenario of vector SIM,
 	in   Algorithm \ref{alg3},  we  let the  initial value $\alpha$ satisfy $0<\alpha\le  {M^2_{\textrm{sgn}}}/{p}$ and set stepsize $\eta$ as such that    $0<\eta \le{1}/[12(\sqrt{\pi/2}+M_{\textrm{sgn}})]$   with $M_{\textrm{sgn}}$ being a constant proportional to $\max\{\|\textrm{sign}(\langle \xb_i,\beta^{*} \rangle)\|_{\psi_2}, \sigma\}$.
 	Then there exist absolute constants $a_9,a_{10}$ such that,
 	with probability at least
 	$1-2p^{-1}-2n^{-1},$
 	we have
 	\begin{align*}
 		\left \|\frac{\beta_{t}}{\|\beta_t\|_2}-\beta^{*}\right\|^2_2\lesssim \frac{s_0\log n }{n}+\frac{s_1\log p }{n}
 	\end{align*}
 	for all $t\in[a_9{\log(1/\alpha)}/(\eta(\sqrt{\pi/2}s_m-M_{\textrm{sgn}}\sqrt{\log p/n})) ,a_{10}\log(1/\alpha)\sqrt{n/\log p}/(\eta M_{\textrm{sgn}})]$.

Moreover, for the case of low rank matrix recovery,
in  Algorithm \ref{alg4},
we choose $\alpha$ with $0<\alpha\le {M^2_{\textrm{sgn}}}/{d}$ and the  stepsize $\eta$ satisfying   $0<\eta \le{1}/[12(\sqrt{\pi/2}+M_{\textrm{sgn}})]$. Then there exist absolute constants $a_{11},a_{12}$ such that,
with probability at least   $1-1/(2d)-3/n^2$,
we have
 	\begin{align*}
 		\left\|\frac{\beta_{t}}{\|\beta_t\|_F}-\beta^{*}\right\|^2_F\lesssim \frac{rd \log d}{n}
 	\end{align*}
 for all  $t\in[a_{11}{\log(1/\alpha)}/ ( \eta({\sqrt{\pi/2}r_m}-M_{\textrm{sgn}}\sqrt{d\log d/n}) ) , a_{12}\log(1/\alpha)\sqrt{n/(d\log d)}/(\eta M_{\textrm{sgn}})]$. 
 \end{corollary}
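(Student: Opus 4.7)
The plan is to recognize that the one-bit compressed sensing model is a special case of the single index model of Definitions \ref{vectorSIM} and \ref{matSIM} with link function $f(x) = \mathrm{sign}(x)$ and standard Gaussian design, so the results will follow by verifying the hypotheses of Theorems \ref{thmvec1} and \ref{thmmat1} and then invoking those theorems directly for Algorithms \ref{alg3} and \ref{alg4}.

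First, I would compute the key constant $\mu^{*}$. Since $\mathrm{sign}(\cdot)$ is not differentiable at $0$, the formula $\mu^{*}=\EE[f'(\langle\xb,\beta^{*}\rangle)]$ cannot be read pointwise, so I would bypass the differential Stein identity and use a direct moment calculation. For $\xb\sim N(0,I_p)$ and $\|\beta^{*}\|_2=1$, write $\xb=Z\beta^{*}+\xb_{\perp}$ with $Z=\langle\xb,\beta^{*}\rangle\sim N(0,1)$ independent of $\xb_{\perp}$, giving
\[
\EE[Y\cdot\xb]=\EE[\mathrm{sign}(Z)\cdot Z]\,\beta^{*}+\EE[\mathrm{sign}(Z)]\EE[\xb_{\perp}]=\EE[|Z|]\,\beta^{*}=\sqrt{2/\pi}\,\beta^{*}.
\]
The analogous identity for the matrix case follows by the same decomposition in the Frobenius geometry: $\EE[Y\cdot\Xb]=\sqrt{2/\pi}\,\beta^{*}$. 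In both settings $\mu^{*}$ is a fixed nonzero absolute constant, which is all the proofs of Theorems \ref{thmvec1} and \ref{thmmat1} actually use.

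Next, I would verify Assumptions \ref{ass1} and \ref{ass1mat}. Boundedness of $\mathrm{sign}(\cdot)\in\{-1,+1\}$ immediately gives that $\{\mathrm{sign}(\langle\xb_i,\beta^{*}\rangle)\}_{i=1}^{n}$ and $\{\mathrm{sign}(\langle\Xb_i,\beta^{*}\rangle)\}_{i=1}^{n}$ are i.i.d.\ sub-Gaussian with $\|f\|_{\psi_2}=\mathcal{O}(1)$; the noise is sub-Gaussian by hypothesis with $\sigma=\mathcal{O}(1)$; and the signal-to-noise ratio condition $|\mu^{*}|/\sigma=\Omega(1)$ and $|\mu^{*}|/\|f\|_{\psi_2}=\Theta(1)$ hold because $\mu^{*}$ is an explicit positive absolute constant. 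In the vector case $\Sigma=I_p$, so Assumption \ref{ass1}-(a) holds with $C_{\min}=C_{\max}=1$ and the identifiability normalization $\|\Sigma^{1/2}\beta^{*}\|_2=1$ reduces to $\|\beta^{*}\|_2=1$. Thus Algorithm \ref{alg3} is precisely Algorithm \ref{alg1} specialized to $\Sigma=I_p$ and $\mu=0$, and Algorithm \ref{alg4} is Algorithm \ref{alg2}, so the hypotheses of Theorems \ref{thmvec1} and \ref{thmmat1} are in force.

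Finally, I would invoke Theorem \ref{thmvec1} to obtain, on the stated time interval and with the stated probability, $\|\beta_{T_1}/\|\beta_{T_1}\|_2 - \beta^{*}\|_2^2\lesssim s_0\log n/n + s_1\log p/n$ (since $\Sigma^{1/2}=I_p$ makes the $\Sigma^{1/2}$-normalized iterate coincide with the Euclidean-normalized one, and $\mu^{*}\beta^{*}/|\mu^{*}|=\beta^{*}$ because $\mu^{*}>0$), and similarly invoke Theorem \ref{thmmat1} for the matrix conclusion with rate $rd\log d/n$. The constants $M_{\mathrm{sgn}}$ play the roles of $M_0$ and $M_m$ (proportional to $\max\{\|\mathrm{sign}\|_{\psi_2},\sigma\}$), and $\sqrt{\pi/2}$ (or equivalently the fixed $\mu^{*}$ up to our normalization) enters the stepsize and stopping-time intervals exactly as predicted. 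The only mildly delicate point is the non-differentiability of $\mathrm{sign}(\cdot)$, and as shown above this is handled once and for all by the direct moment identity for $\EE[Y\cdot\xb]$; after that, no new analysis is needed and the corollary follows.
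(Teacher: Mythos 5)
Your proposal is correct and takes essentially the same route as the paper: the corollary follows by re-running the proofs of Theorems~\ref{thmvec1} and~\ref{thmmat1} with the direct Gaussian moment identity $\EE[Y\cdot\xb]=\sqrt{2/\pi}\,\beta^{*}$ substituting for the Stein-identity computation (which is unavailable because $\mathrm{sign}(\cdot)$ is not differentiable), and you correctly observe that those proofs only ever use $\EE[Y\cdot S(\xb)]=\mu^{*}\beta^{*}$ rather than the differential form of $\mu^{*}$. The only cosmetic difference is that you derive the identity $\EE[|Z|]=\sqrt{2/\pi}$ from scratch, whereas the paper cites Lemma 4.1 of \cite{plan2012robust} for the same fact.
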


 \begin{proof}
 	The proof of Corollary \ref{signcor} is straight forward by following the proof procedures of Theorem \ref{thmvec1} and Theorem \ref{thmmat1}, so we just omit relevant details here. The only difference between them is that we have $Y\cdot X$ as an unbiased estimator of $\sqrt{2/\pi}\beta^{*}$ by using properties of standard Gaussian distribution instead of Stein's lemma since $f(x)=\text{sign}(x)$ is not a differentiable function. The proof of this property can be found in Lemma 4.1 in \cite{plan2012robust}.
 \end{proof}
 Comparing to existed works on high dimensional one-bit compressed sensing \citep{plan2013onebit,goldstein2018structured,Thrampoulidis2018TheGL}, instead of adding $\ell_1$-regularizers and tuning parameters, here we are able to achieve minimax optimal (up to logarithmic terms) $\ell_2$-statistical rates under both settings of sparse vector and low rank matrix by simply running gradient descent on over-parameterized loss functions \eqref{loss1bk}, \eqref{matoverpara} and adopting early  stopping via out-of-sample prediction.

  \begin{algorithm}[htpb]
 	\KwData{Training data $\{\xb_i\}_{i=1}^{n}$  $\{y_i\}_{i=1}^{n}$, testing data $\{\xb'_{i}\}_{i=1}^{n}$, $\{y_i'\}_{i=1}^{n}$, initial value $\alpha$, step size $\eta$ and maximal iteration number $T_m$;}
 	Initialize variables $\wb_0=\alpha\cdot\mathbf{1}_{p\times 1}$, $\vb_0=\alpha\cdot\mathbf{1}_{p\times 1}$ and set iteration number $t=0$;\\
 	\While{$t<T_m$}{
 		$\mathbf{w}_{t+1}=\mathbf{w}_{t}-\eta(\mathbf{w}_t\odot \mathbf{w}_t-\mathbf{v}_t\odot \mathbf{v}_t-\frac{1}{n}\sum_{i=1}^{n}\xb_{i}y_i)\odot \mathbf{w}_t$;\\
 		$\,\mathbf{{v}}_{t+1}=\mathbf{v}_{t}\,+\eta(\mathbf{w}_t\odot \mathbf{w}_t-\mathbf{v}_t\odot \mathbf{v}_t-\frac{1}{n}\sum_{i=1}^{n}\xb_{i}y_i)\odot \mathbf{v}_t$; \\
 		$\,\beta_{t+1}=\mathbf{w}_t\odot \mathbf{w}_t-\mathbf{v}_t\odot \mathbf{v}_t;$\\
 		
 		$\,\, t=t+1$;\\
 	}
 	\KwResult{ Choose $\tilde{t}$ such that $\frac{1}{n}\sum_{i=1}^n[y'_{i}-f(\mathbf{x'}_{i}^\mathsf{T}\beta_t/\|\beta_t\|_2)]^2$$<\frac{1}{n}\sum_{i=1}^n[y'_{i}-f(\mathbf{x'}_{i}^\mathsf{T}\beta_{t+1}/\|\beta_{t+1}\|_2)]^2$ or $\frac{1}{n}\sum_{i=1}^n[y'_{i}-f(\mathbf{x'}_{i}^\mathsf{T}\beta_t/\|\beta_t\|_2)]^2$ is minimized over all iterations, then
 		output the final estimate $\widehat \beta=\beta_{\tilde{t}}$.
 	}\label{alg3}
 	\caption{ Algorithm for  Vector SIM with Known Link Function}
 \end{algorithm}

 \begin{algorithm}[htpb]
 	\KwData{Training data $\mathbf{X}_i \in \mathbb{R}^{d\times d}$, $i\in[n]$, $\mathbf{y} \in\mathbb R^n$, testing data $\Xb'_{i}\in\RR^{d\times d},\,i\in[n]$, $\mathbf{y}'\in\RR^{n}$, initial value $\alpha$, step size $\eta$ and maximal iteration number $T_m'$;}
 	Initialize  $\Wb_0=\alpha\cdot\mathbb{I}_{d\times d}$, $\Vb_0=\alpha\cdot\mathbb{I}_{d\times d}$ and set iteration number $t=0$;\\
 	\While{$t<T_m'$}{
 		$\mathbf{W}_{t+1}=\mathbf{W}_{t}-\eta(\mathbf{W}_t \mathbf{W}_t^\mathsf{T}-\mathbf{V}_t \mathbf{V}_t^\mathsf{T}-\frac{1}{2n}\sum_{i=1}^{n}\mathbf{X}_{i}y_i-\frac{1}{2n}\sum_{i=1}^{n}\mathbf{X}_{i}^\mathsf{T}y_i) \mathbf{W}_t$;\\
 		$\,\mathbf{V}_{t+1}\,=\mathbf{V}_{t}\,\,+\eta(\mathbf{W}_t \mathbf{W}_t^\mathsf{T}-\mathbf{V}_t \mathbf{V}_t^\mathsf{T}-\frac{1}{2n}\sum_{i=1}^{n}\mathbf{X}_{i}y_i-\frac{1}{2n}\sum_{i=1}^{n}\mathbf{X}_{i}^\mathsf{T}y_i) \mathbf{V}_t$; \\
 		$\,\,\,\beta_{t+1}\,=\mathbf{W}_t \mathbf{W}_t^\mathsf{T}-\mathbf{V}_t \mathbf{V}_t^\mathsf{T}$;\\
 		$\quad t\,=t+1$;\\
 	}
 	\KwResult{ Choose $\tilde{t}$ such that {
 		$\frac{1}{n}\sum_{i=1}^n[y'_{i}-f(\tr(\mathbf{X'}_{i}^\mathsf{T}\beta_t/\|\beta_t\|_F))]^2<\frac{1}{n}\sum_{i=1}^n[y'_{i}-f(\tr(\mathbf{X'}_{i}^\mathsf{T}\beta_{t+1}/\|\beta_{t+1}\|_F))]^2$ or
 		$\frac{1}{n}\sum_{i=1}^n[y'_{i}-f(\tr(\mathbf{X'}_{i}^\mathsf{T}\beta_t/\|\beta_t\|_F))]^2$} is minimized over all iterations, then
 		output the final estimate $\widehat \beta=\beta_{\tilde{t}}$..
 	}\label{alg4}
 	\caption{ Algorithm for Low Rank Matrix SIM with Known Link Function}
 \end{algorithm}

\clearpage

\newpage
\bibliographystyle{ims}
\bibliography{implicit}
\end{document}